%% file: Neurips/arxiv_version.tex
\documentclass{article}

\usepackage[main, final]{neurips_2026}

\usepackage[utf8]{inputenc} 
\usepackage[T1]{fontenc}    
\usepackage{hyperref}       
\usepackage{url}            
\usepackage{booktabs}       
\usepackage{amsfonts}       
\usepackage{nicefrac}       
\usepackage{microtype}      
\usepackage{xcolor}         

\usepackage{amsmath}
\usepackage{amssymb}
\usepackage{mathtools}
\usepackage{amsthm}
\usepackage{graphicx}
\usepackage{subcaption}
\usepackage{wrapfig}

\usepackage{cleveref}

\allowdisplaybreaks

\theoremstyle{plain}
\newtheorem{theorem}{Theorem}[section]

\newtheorem{lemma}[theorem]{Lemma}
\newtheorem{corollary}[theorem]{Corollary}
\theoremstyle{definition}

\newtheorem{assumption}[theorem]{Assumption}
\theoremstyle{remark}

\newcommand{\E}{\mathbb{E}}

\newcommand{\qry}[0]{{\text{query}}}
\newcommand{\Attn}[0]{\textbf{Attn}}

\newcommand{\attn}{\Attn}

\title{Theory on Attention Dynamics for Out-of-Distribution In-Context Learning}

\author{%
  Junze Deng \\
  Ohio State University \\
  Ohio, USA \\
  \texttt{deng.942@osu.edu}
  \And
  Daouda Sow \\
  Ohio State University \\
  Ohio, USA \\
  \texttt{sow.53@osu.edu} 
  \AND
  Sen Lin \\
  University of Houston \\
  Houston, USA  \\
  \texttt{slin50@central.uh.edu} 
  \And
  Yingbin Liang \\
  Ohio State University \\
  Ohio, USA \\
  \texttt{liang.889@osu.edu} \\
}

\begin{document}

\maketitle

\begin{abstract}
    \input{Neurips/abstract}
\end{abstract}

\section{Introduction}
\input{Neurips/introduction}

\input{Neurips/related_work}
\input{Neurips/problem_setup}

\input{Neurips/main_results}

\begin{ack}
    The work of J. Deng and Y. Liang is supported in part by the U.S. National Science Foundation via the grants ECCS-2515482, MAFI 2601957, and AI-EDGE Institute CNS-2112471.
\end{ack}

\newpage

\bibliography{ref,ref_cl}
\bibliographystyle{plain}

\newpage
\appendix

\input{Neurips/appendix}


\end{document}

%% file: Neurips/abstract.tex
Transformers have demonstrated remarkable in-context learning (ICL) capabilities, enabling them to perform new tasks without additional fine-tuning. However, their performance often deteriorates when encountering out-of-distribution (OOD) inputs that deviate from the training distribution, and the underlying theory remains poorly understood. To fill this gap, we characterize the OOD error under the input distribution shift through the interplay between the dynamics of the so-called $\alpha$-type and $\beta$-type attention weights, which represent the transformer’s confidence in identifying the correct and incorrect features, respectively. Our results indicate that the OOD error for each feature depends on all pairwise interactions between the training features and OOD features, and under certain cases the transformer performs no better than random guessing. To improve the OOD generalization performance, we next investigate the impact of model finetuning with the OOD data, and particularly, characterize the model forgetting performance on the source domain. Interestingly, the performance on the source domain may not always degrade after finetuning, which highly depends on the nature of the feature shift: finetuning on OOD domain keeps enhancing the confidence of identifying correct features from the original distribution, while the interference from other incorrect features may either increase or decrease. Extensive experiments on both synthetic and real data are conducted to corroborate the theoretical insights.

%% file: Neurips/introduction.tex
Large language models (LLMs) built on transformers have demonstrated a remarkable capability known as {\em in-context learning} (ICL). In this paradigm, a transformer is pre-trained on a wide range of tasks and can solve a new task without any fine-tuning, relying solely on inference from task-specific prompts~\cite{brown2020language}. This remarkable success has spurred extensive efforts to empirically investigate the underlying mechanisms of ICL from various perspectives~\cite{garg2022can,min2022rethinking,wei2023larger,von2023transformers,xie2021explanation,guo2023transformers}. In particular, \cite{garg2022can} introduced a useful framework to study ICL through both simple linear regression and more complex nonlinear tasks. A notable result from their work shows that models pretrained on linear regression tasks can, without any fine-tuning, solve new linear regression problems with performance matching that of conventional least-squares regression trained specifically for the task.

This framework of ICL has further inspired a growing body of theoretical research on characterizing the training dynamics of transformers, i.e., understanding how transformers acquire ICL capabilities through training~\cite{zhang2024trained,chen2024training,li2024training,nichani2024transformers} (see \Cref{relatedwork} for additional references). These studies aim to explain how training transformers with gradient descent can lead to emergent ICL behavior. A common approach is to analyze the structure of attention mechanisms at or near convergence, showing how such structures encode learning algorithms capable of solving new tasks from context alone.

Though transformers have demonstrated remarkable ICL capabilities across a wide range of tasks, their performance can drop dramatically in OOD scenarios \cite{zhang2022delving,xu2023ain,hendrycks2021many}. 
Existing theoretical studies are predominantly  limited to linear self-attention \cite{zhang2021understanding,kwon2025out,anwar2025understanding} or softmax self-attention in some special problems \cite{wang2024transformers,li2024one}. Hence, a general theoretical understanding of ICL under distribution shift remains elusive. In this work, we aim to fill this gap by studying the OOD generalization of softmax attention within a linear regression framework, following recent theoretical analyses \cite{garg2022can,huang2024in-context}. Also, studies in domain adaptation \cite{xu2021cdtrans,sun2022safe} have shown
that finetuning the model on OOD data can improve the OOD generalization performance. 
Thus inspired, we further analyze the dynamics of fine-tuning and theoretically characterize the fine-tuned model’s performance on the source domain, with particular attention to the emergence of catastrophic forgetting.

Our main contributions can be summarized as follows:

1) We provide an explicit expression of the OOD generalization error for ICL 
with a single-layer \textit{softmax} transformer model,
revealing its dependence on the geometric separation between the training and OOD subspaces and on the embedding locations of tokens within these subspaces. A key insight here is that the performance is governed by the interplay between two types of bilinear attention weights: (i) the $\alpha$-type weights, which reflect the model’s confidence in identifying the correct feature, and (ii) the $\beta$-type weights, which quantify the interference from other incorrect features. 

2) To improve performance on the OOD domain, we finetune the trained transformer on the shifted features. By characterizing the training dynamics, we establish explicit upper and lower bounds on the model performance drop on the original training domain (i.e., forgetting\cite{wang2024comprehensive,deng2025unlocking}). The changing rates of $\alpha$ and $\beta$ type attention, which depend on the nature of the distribution shift, determine whether the model exhibits positive or negative forgetting. To our best knowledge, this provides the first theoretical performance characterization of transformer trained under nonstationary data. 


3) We conduct comprehensive experiments to support our theoretical findings. Simulations on synthetic data that closely adhere to the theoretical assumptions validate the correctness of our analysis. We further extend our evaluation to more realistic settings, including \textit{TinyTransformer} model with varying depths, BERT model on real-world datasets and natural distribution shift on real-world data. These demonstrate that the theoretical insights persist beyond idealized conditions.


%% file: Neurips/related_work.tex
\section{Related Work}\label{relatedwork}

{\bf In-context learning:} {\em ICL} has been identified as a superior capability in LLMs~\cite{brown2020language}, which has inspired extensive research to {\em empirically} understand its underlying mechanisms from various aspects~\cite{garg2022can,min2022rethinking,wei2023larger,von2023transformers,xie2021explanation,guo2023transformers,wies2024learnability,bertsch2024context,olsson2022context,chan2022data,zhao2024probing}. 
Recent studies have also started to explore the theoretical properties of ICL from multiple perspectives, including Bayesian learning~\cite{xie2021explanation,ahuja2023context,han2023context,jiang2023latent,wang2023large,wies2024learnability,zhang2023and,jeon2024information,hahn2023theory,lu2025transformer}, expressive power~\cite{akyurek2022learning,giannou2023looped,bai2023transformers,hsu2026understanding}, generalization and stability~\cite{li2023transformers,jin2024generalization,liang2024transformers,cole2024context,shen2026understanding}, and internal mechanisms~\cite{dai2022can,von2023transformers,bai2023transformers}. In particular, an important line of work that bridges theory and practice has focused on characterizing the training dynamics of transformers~\cite{garg2022can,zhang2024trained,mahankali2023one,ahn2023transformers,huang2024in-context,yang2024in-context,li2024training,nichani2024transformers,wang2024transformers,wang2024how,shen2024on,chen2024unveiling,frei2024trained,ren2024towards}, aiming to explain how transformers trained via gradient descent acquire ICL capabilities.
However, the above studies on training dynamics have mainly focused on static settings. In stark contrast, we advance the theoretical analysis to a dynamic setting in which input features undergo opposing rotations, and investigate the forgetting that may arise in such scenarios. 

\textbf{OOD generalization of transformer: } In recent years, a bunch of empirical works study the OOD generalization of transformer-based models\cite{song2025out,zhang2025out,zhang2024out,liao2026invariant,hosseini2022compositional,hendrycks2020pretrained,goddard2025can,ahuja2023closer}. A growing body of theoretical work seeks to understand OOD in-context learning in \textit{linear} transformers from multiple perspectives, including low-dimensional subspace perspective \cite{kwon2025out}, different types of distribution shift \cite{zhang2024trained}, local adaptation dynamics \cite{wang2024can}, and sensitivity to adversarial perturbations \cite{anwar2025understanding}. However, the understanding of \textit{softmax} self-attention remains underexplored. \cite{li2024one} demonstrated that transformers can behave like 1-nearest-neighbor algorithms and discussed their OOD behavior. \cite{wang2024transformers} investigated the length generalization on sparse token selection problem. \cite{collins2024context} shows that transformers generalize across Lipschitz functions only when their Lipschitz constants are similar.


In this paper, we analyze a single-layer \textit{softmax} attention model under a covariate shift setting, in which the OOD features deviate from those in the training feature set.
More importantly, we characterize the behavior of the model during finetuning on the OOD domain and adopt the concept of forgetting to evaluate whether finetuning the transformer improves or degrades its performance on the original training domain.

%% file: Neurips/problem_setup.tex
\section{Preliminaries}\label{sec:pre}
\textbf{In-Context Learning:}
Consider the following standard framework of ICL introduced in \cite{garg2022can}, which has been adopted widely in the recent line of theoretical studies  \cite{huang2024in-context,ahn2023transformers,nichani2024transformers,zhang2024trained}.
More specifically, consider a task distribution over linear regression tasks, denoted by $\mathcal{F} = \{ f: \mathcal{X}\rightarrow \mathbb{R} \ |\  f(x) = \langle w,x \rangle \}$, where the weight vector $w$ is drawn from a standard Gaussian distribution. Here $x$ is the input uniformly sampled 
from a feature set $V = \{v_k\in\mathbb{R}^d,k=1,2,..,K\}$ with $K$ orthonormal feature vectors. During ICL, each training prompt $P$ is generated based on the following procedure: 1) randomly sampling a function $f$ from  $\mathcal{F}$, 2) uniformly sampling a set of $N$ inputs $\{x_1, x_2, ..., x_N\}$ and a query $x_{\qry}$ from the feature set $V$, 3) generating the model response $y_i$ by computing the value of $f$ given any input $x_i$ for $i\in [1, N]$, i.e., $y_i=f(x_i)$ \footnote{We focus on the noiseless setting in the main text. In Appendix~\ref{app: noisy setting}, we extend the analysis to noisy observations of the form $y_i=f(x_i)+\epsilon_i$.},  and 4) constructing the prompt $P = (x_1,y_1,x_2,y_2,...,x_N,y_N,x_\qry)$. Based on this prompt, a well-trained model should be able to generate a prediction $\hat{y}_\qry \approx f(x_\qry)$ for the query $x_\qry$. 

\textbf{Self-Attention Layer with Reparameterization:} We consider a one-layer transformer as an in-context learner, with the standard single-head self-attention layer \citep{vaswani2017attention} parameterized by a key matrix $W_K\in\mathbb{R}^{d\times d}$, a query matrix $W_Q\in\mathbb{R}^{d\times d}$, and a value matrix $W_V\in\mathbb{R}^{d\times d}$. 
Given a prompt $P = (x_1,y_1,x_2,y_2,...,x_N,y_N,x_\qry)$, we adopt the following natural embedding, which arranges the corresponding input and label as columns:
\begin{align}\label{eq:ori}
    E  = \left(\begin{array}{ccccc}
         x_1 & x_2 & ... & x_N & x_\qry  \\
         y_1 & y_2 & ...& y_N & 0 
    \end{array}  \right) = \left(\begin{array}{c}
         E_x  \\
         E_y 
    \end{array}  \right),
\end{align}
where $E\in \mathbb{R}^{(d+1) \times (N+1)}$. Here we denote the first $d$ rows and the last row of $E$ as $E_x$ and $E_y$, respectively. Then 
the output of the self-attention layer can be characterized by
\begin{equation*}
\textstyle
    F_{\text{SA}}(W_K,W_Q,W_V;E) = W_VE\cdot \text{softmax} \left( (W_KE)^\top  W_Q E\right)
\end{equation*}
with a column-wise $\text{softmax}(\cdot)$ function, where the $i^{th}$ entry of $\text{softmax}(z)$ is $\exp(z_i)/\sum_s \exp(z_s)$.

We further modify \Cref{eq:ori} based on two widely used techniques in previous theoretical studies: 1) We remove the last column of the embedding matrix $E$ when it is multiplied to the key matrix $W_K$ and the value matrix $W_V$, in order to prevent the query token from attending to itself \cite{huang2024in-context,nichani2024transformers}. 2) We consolidate the query and key matrices into one matrix $W_{KQ}$, and further consider the following forms of $W_V$ and $W_{KQ}$, which are commonly used in recent theoretical studies and have been shown to achieve the optimum or nearly optimum loss value \cite{zhang2024trained,ahn2023transformers,huang2024in-context}:
\begin{equation*}
    W_V = \left(\begin{array}{cc}
        0_{d\times d} & 0_{d\times 1} \\
        0_{1\times d} & 1
    \end{array}\right),\ \ \ \ 
    W_{KQ} = \left(\begin{array}{cc}
        Q_{d\times d} & 0_{d\times 1} \\
        0_{1\times d} & 0
    \end{array}\right).
\end{equation*}
The above reparameterization leads to a simplified attention layer with the output given by
\begin{equation*}
    F_{\text{SA}}(Q;E) = (E_y)_{1:N} \cdot \text{softmax}\left( (E_x)_{1:N} ^{\top} Q E_x \right).
\end{equation*}
The model prediction for the query $x_\qry$ is given by the last entry of $F_{\text{SA}}$, i.e.,
\begin{equation}\label{eq:def of prediction haty}
    \hat{y}_\qry (Q;E) = [F_{\text{SA}}(Q;E)]_{N+1}.
\end{equation}

\section{OOD Generalization and Finetuning}

\begin{wrapfigure}{r}{0.48\columnwidth} 
    \centering
    \includegraphics[width=\linewidth]{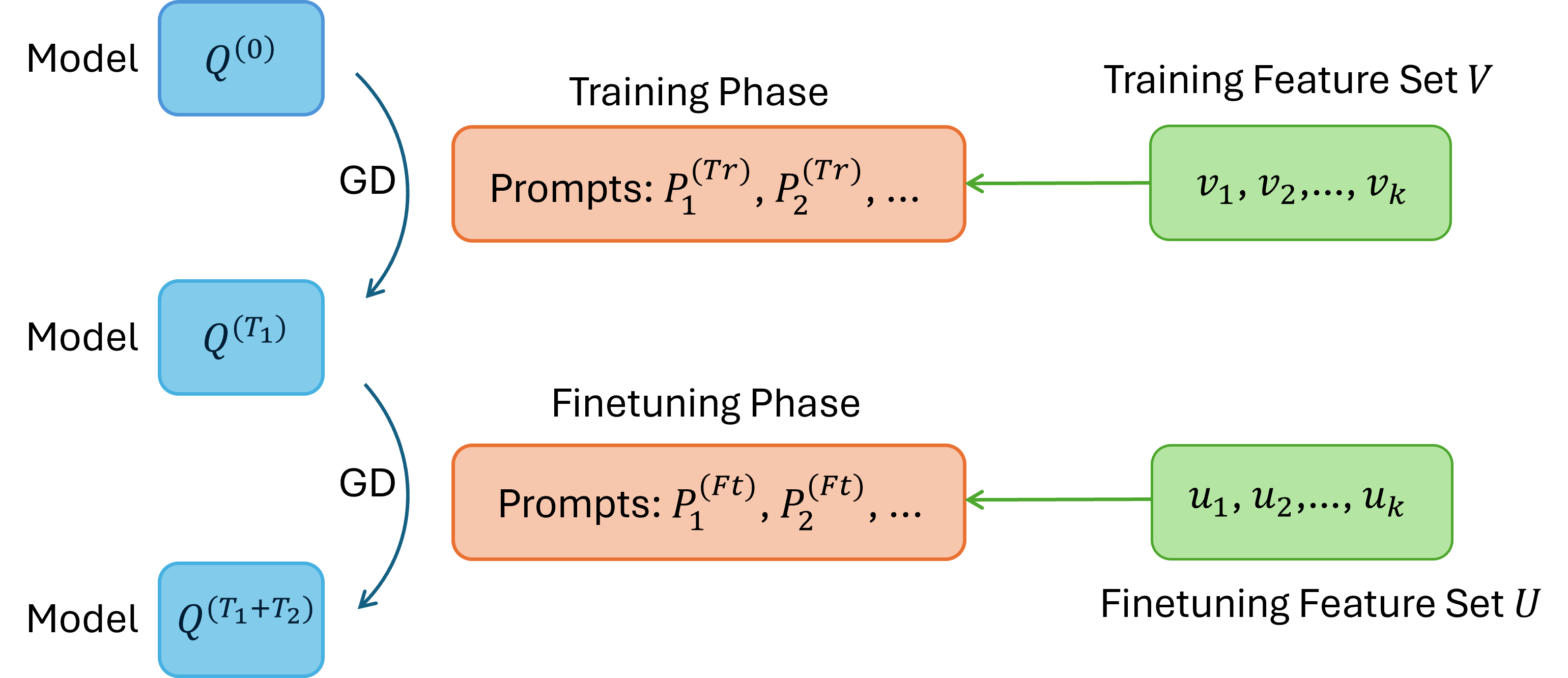}
    \caption{Illustration: Training and Finetuning}
    \label{fig:training and finetune illustration}
\end{wrapfigure}
We denote the training and OOD feature sets as $V = \{v_k\}_{k=1}^K$ and  $U = \{u_k\}_{k=1}^K$ with  orthonormal features, respectively. In the training phase, the inputs are sampled from the  feature set $V$. Once the transformer is trained, it will be tested on the OOD data, where the inputs are sampled from the feature set $U$.  We assume the task distribution is same for training and OOD evaluation. For simplicity, we further assume that the training prompts consist of the same number of features from the corresponding feature sets.\footnote{This assumption characterizes the asymptotic behavior of the prompts when the inputs are uniformly sampled and the prompt length $N$ is large, as discussed in \Cref{app:extension of assumption}. }


{\bf Training Loss and Algorithm:} 
We define the training loss with respect to (w.r.t.) any query feature $v$ as the expected value of the squared prediction error as follows:
\begin{equation*}
  L_v^{\text{(tr)}}(Q)=\frac{1}{2} \mathbb{E}
  \Big[\mathbf{1}_{\{x_\qry = v\}} \Big(\widehat{y}_{\text {query }}(Q;E)- w^\top x_{\text {query }}\Big)^2\Big],
\end{equation*}
where the expectation is taken over the task distribution of $w$, the distribution of training prompt $E$ and the distribution of query. The overall training loss can be given by 
\begin{equation}\label{eq:def of loss}
\textstyle    L^{\text{(tr)}}(Q) = \sum_{v\in V} L^{\text{(tr)}}_v(Q).
\end{equation}
The gradient descent (GD) is then applied to minimize the above overall training loss with the learning rate $\eta_t$ by the following iterative updates:
\begin{equation}\label{eq:update of Q}
    Q^{(t+1)} = Q^{(t)} - \eta_t\nabla_Q L^{\text{(tr)}}(Q^{(t)}).
\end{equation}
We note that the gradient $\nabla_Q L$ is calculated entry-wisely. To avoid situations where only a subset of features is well learned, 
we apply the condition \( L_k(Q) \le \frac{\epsilon}{K} \) for all \( k \in [K] \) as the stopping criterion for training. This is stronger than requiring the total loss to be below \( \epsilon \), 
and is introduced solely to ensure a uniform level of feature learning in our analysis.

\textbf{OOD Testing Data:} 
Recall that the testing feature set $U = \{u_k\}_{k=1}^K$ are a set of orthonormal vectors. 
Once the transformer is trained on the  feature set $V$, 
we will evaluate the model performance on OOD data using 
\begin{equation}\label{def: finetune loss}
    L^{\text{(ft)}}(Q) = \sum_{u\in U} L^{\text{(ft)}}_u(Q).
\end{equation}
Here $L^{\text{(ft)}}_u(Q)$ is the OOD error  w.r.t. the query feature $u$:
\begin{equation*}
 L^{\text{(ft)}}_u(Q)=\frac{1}{2} \mathbb{E}
  \Big[\mathbf{1}_{\{x_\qry = u\}} \Big(\widehat{y}_{\text {query }}(Q;E)- w^\top x_{\text {query }}\Big)^2\Big],
\end{equation*}
where the expectation is taken over the task distribution of
$w$, the distribution of testing prompt $E$ and the query.

\textbf{Finetune and Forgetting:} 
To enhance the model performance on OOD data, 
we finetune the attention model on the OOD data to minimize the error $L^{\text{(ft)}}$ by using GD:
\begin{equation}\label{eq:finetune of Q}
    Q^{(t+1)} = Q^{(t)} - \eta_t\nabla_Q L^{\text{(ft)}}(Q^{(t)}).
\end{equation}
The finetuning process shares the same stopping criteria for the model training in \Cref{eq:update of Q}, which implies that the model can achieve good performance on each OOD feature after finetuning. 
Suppose the training process converges at $T_1$ and the finetune process stops after another period of time $T_2$, as illustrated in \cref{fig:training and finetune illustration}.
Inspired by the studies in continual learning, an interesting question to explore here is whether the model will forget the knowledge of the original training domain after finetuning on OOD data. To answer this question, we define the following measure to evaluate the forgetting w.r.t. the training feature $v$
\begin{align*}
    F_v 
    & = L_{v}^{\text{(tr)}}\left(Q^{(T_1 + T_2)}\right) - L_{v}^{\text{(tr)}}\left(Q^{(T_1)}\right)
\end{align*}
Note that the training loss converges simultaneously for all features due to the symmetry property of training features in the training phase; specifically, $
L_{v}^{\text{(tr)}}\!\left(Q^{(T_1)}\right) = \frac{\epsilon}{K}, \forall\, v \in V.$ 
A positive forgetting \cite{li2024revisiting} indicates a performance drop after fine-tuning, whereas a negative forgetting\cite{tiwari2026turning} corresponds to positive backward transfer (BWT), where finetuning improves performance on the retained capability. Moreover, we define the overall forgetting as the sum of all the individual forgetting: $F = \sum_{v\in V} F_v.$

%% file: Neurips/main_results.tex
\section{Main Results}\label{sec:main results}

In this section, we present our main theoretical results.  Before that, we first introduce several important notations that serve as the foundations of our theoretical analysis.

{\bf Attention scores.}  For a given prompt, the attention score for the $i^{th}$ input $x_i$ under model parameter $Q$ is given by
\begin{equation*}  \textstyle \textbf{attn}_i(Q;E) = [ F_{\text{SA}}(Q;E)]_i = \frac{{\exp(x_i} ^\top Q x_{\qry})}{\sum_n {\exp(x_n} ^\top Q x_{\qry})}.
\end{equation*}
Then, the attention score of feature $v$ is computed by aggregating the attention scores of tokens that are $v$:
\begin{equation}\label{eq:def of Attnk original}
\textstyle    \textbf{Attn}_v(Q;E) = \sum_{i:x_i = v}  \textbf{attn}_i(Q;E).
\end{equation}
The prediction in \cref{eq:def of prediction haty} can be rewritten as
\begin{equation}\label{eq: prediction}
    \hat{y}_{\qry} = \sum\nolimits_{v \in W} \textbf{Attn}_v(Q;E)\langle v, w \rangle,
\end{equation}
where $W = V$ during the training phase and $W = U$ during the finetuning phase. Since $\sum_{v \in W} \textbf{Attn}_v(Q;E) = 1$, if $x_\qry = v$, the attention weight $\textbf{Attn}_v(Q;E)$ reflects the probability that the model identifies the correct feature.

{\bf Two types of attention weights.} 
We further define two types of attention weights, i.e., $\alpha$-type weights $\alpha_{v}(Q)$ and $\beta$-type weights $\beta_{v',v}(Q)$, as follows:
\begin{align}\label{eq:def of weights}
    \alpha_{v}(Q) = {v} ^\top Q v, ~~
    \beta_{v',v}(Q) &= {v'} ^\top Q v, \ \ \ \ v\not = v'. 
\end{align}
The $\alpha$-type weight captures the projection of the attention matrix onto the key feature that is the same as the query feature, while the $\beta$-type weight captures the projection onto the key feature that is different from the query feature. During the training phase, we observe that if $x_\qry = v$, \cref{eq:def of Attnk original} can be rewritten as
\begin{equation}\label{eq:approximate of Attnk}
\textstyle    \textbf{Attn}_v(Q;E) = \frac{\exp(\alpha_v(Q))}{\exp(\alpha_v(Q)) + \sum\limits_{v'\in V,v'\neq v}\exp(\beta_{v,v'}(Q))}.
\end{equation}
Clearly, the attention score for the query feature $v$ depends on both the $\alpha$-type weights, reflecting the model's confidence to identify the correct feature, and the $\beta$-type weights, characterizing the interference from incorrect features. The attention score during the finetuning phase can be captured similarly. As we will show later, the evolution of these two types of weights under feature shifts plays a key role in determining the model's performance. To simplify notation, we denote $  \alpha_{v}^{(t)} = \alpha_{v}(Q^{(t)})$ and $  \beta_{v,v'}^{(t)} = \beta_{v,v'}(Q^{(t)})$. 
In what follows, we will present our results on OOD generalization, and then characterize the training dynamics during finetuning, followed by the forgetting analysis.

\subsection{OOD generalization}
Prior work on ICL \cite{huang2024in-context} has shown that, after a training duration of $T_1 = \text{poly}(\frac{1}{\epsilon}, K, \frac{1}{\eta})$, GD will yield a model with prediction error $L^{\text{(tr)}}_v(Q^{(T_1)}) \le \frac{\epsilon}{K}$ for all $v\in V$. 
Therefore, we evaluate the obtained transformer at time $T_1$ on the OOD features $U$. We use boldface letters to denote matrices formed by concatenating the training (or OOD) features: $\mathbf{V} = [v_1, v_2, \dots, v_K]$ and $\mathbf{U} = [u_1, u_2, \dots, u_K]$. Then, we can have the following theorem.
\begin{theorem}\label{thm: ood error}
    Suppose the training phase ends at time $T_1$.  The attention scores associated with OOD features are:
    {\small
    \begin{align*}
    \textstyle
        \alpha_{u_k}^{(T_1)} = \frac{K\alpha_{K,\epsilon}^*}{K-1}\left( [\mathbf{M}^\top \mathbf{M}]_{k,k} - \langle u_k,\bar{v} \rangle^2 \right),
        ~\beta_{u_{k'},u_{k}}^{(T_1)} = \frac{K\alpha_{K,\epsilon}^*}{K-1}\left( [\mathbf{M}^\top \mathbf{M}]_{k',k} - \langle u_k,\bar{v} \rangle \langle u_{k'},\bar{v} \rangle\right),
    \end{align*}}%
    where $\alpha_{K,\epsilon}^* = \left( 1-\frac{1}{K} \right) \log \left( \frac{K-1}{\sqrt{\epsilon}} \left(\sqrt{\frac{K}{2(K-1)}} -\sqrt{\epsilon}\right)\right)$, $\mathbf{M} = \mathbf{V}^\top \mathbf{U}$, $[\cdot]_{k',k}$ denotes the element at $(k')^{th}$ row and $k^{th}$ column, and $\bar{v} = \frac{1}{\sqrt{K}}\sum_{i=1}^Kv_i$. Then, the OOD error can be characterized by:
\begin{align}\label{eq: ft loss by ab}
        \textstyle
        L^{\text{(ft)}}_{u_k}(Q^{(T_1)}) = \frac{(\sum\limits_{n\not = k} \exp(\beta_{u_n,u_k}^{(T_1)}))^2 + \sum\limits_{n\not = k} \exp(\beta_{u_n,u_k}^{(T_1)})^2}{2(\exp(\alpha_{u_k}^{(T_1)}) + \sum\limits_{n\not = k} \exp(\beta_{u_n,u_k}^{(T_1)}))^2}. 
    \end{align}
\end{theorem}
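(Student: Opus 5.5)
Two things have to be established: the form of $Q^{(T_1)}$, which gives the $\alpha$/$\beta$ expressions by simple projection, and the OOD loss, which follows from a Gaussian second-moment computation.

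\textbf{Step 1: structure of $Q^{(t)}$ during training.} I would show by induction on $t$ that, starting from $Q^{(0)}=0$ (as in \cite{huang2024in-context}), every iterate has the form
\[
Q^{(t)}=a_t\sum_{k}v_kv_k^\top+b_t\sum_{k\neq k'}v_{k'}v_k^\top .
\]
The gradient $\nabla_Q L^{(\mathrm{tr})}$ is a combination of outer products $v'v^\top$ with coefficients depending only on the $\alpha$- and $\beta$-weights. By the permutation symmetry of $V$ and the balanced-prompt assumption, these coefficients are equal for all diagonal pairs and equal for all off-diagonal pairs. This gives $\alpha_v^{(t)}=a_t$ and $\beta^{(t)}_{v',v}=b_t$ for every pair.

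Next I would rewrite the iterate as
\[
Q^{(t)}=(a_t-b_t)\,\mathbf V\mathbf V^\top+Kb_t\,\bar v\bar v^\top .
\]
Along the dynamics the balanced gradient also preserves a linear relation between $a_t$ and $b_t$. Summing the gradient over entries, the $\bar v\bar v^\top$ component appears to have zero net drive because attention sums to one, so $a_t+(K-1)b_t$ is conserved at $0$. Hence $b_t=-a_t/(K-1)$. Then $a_t-b_t=\frac{K a_t}{K-1}$ and $Kb_t=-\frac{K a_t}{K-1}$, so
\[
Q^{(t)}=\frac{K a_t}{K-1}\left(\mathbf V\mathbf V^\top-\bar v\bar v^\top\right).
\]
Verifying this invariant carefully is the main obstacle: it requires expanding the gradient and checking that the coefficient sums cancel. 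I would do this explicitly using the per-feature loss formula derived in Step 3.

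\textbf{Step 2: the value $a_{T_1}=\alpha^*_{K,\epsilon}$ and the OOD weights.} With $\beta=-\alpha/(K-1)$, the per-feature training loss $L_v$ from Step 3, with $U$ replaced by $V$, is a monotone decreasing function of $\alpha$. Setting $L_v=\epsilon/K$ and solving gives the stated $\alpha^*_{K,\epsilon}$. The $\sqrt\epsilon$ and $\sqrt{K/(2(K-1))}$ terms come from solving the quadratic in $e^{\beta-\alpha}$; I would treat the stopping condition as attained exactly in order to get the equality. Then I compute $u_{k'}^\top Q^{(T_1)}u_k$ directly, using $u_{k'}^\top \mathbf V\mathbf V^\top u_k=[\mathbf M^\top\mathbf M]_{k',k}$. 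This yields both displayed formulas for $\alpha^{(T_1)}_{u_k}$ and $\beta^{(T_1)}_{u_{k'},u_k}$.

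\textbf{Step 3: the OOD loss.} For query $u_k$ with balanced prompts, \eqref{eq: prediction} gives $\hat y=\sum_n p_n\langle u_n,w\rangle$. Here
\[
p_k=\frac{e^{\alpha}}{Z},\qquad p_n=\frac{e^{\beta_{u_n,u_k}}}{Z}\ (n\neq k),\qquad Z=e^{\alpha}+\sum_{n\neq k}e^{\beta_{u_n,u_k}} .
\]
Since the $u_n$ are orthonormal and $w\sim N(0,I)$, the coordinates $\langle u_n,w\rangle$ are i.i.d.\ standard normal. Therefore
\[
\E(\hat y-\langle u_k,w\rangle)^2=(1-p_k)^2+\sum_{n\neq k}p_n^2 .
\]
Using $1-p_k=\sum_{n\neq k}e^{\beta_{u_n,u_k}}/Z$ and including the factor $\tfrac12$ (with the query indicator normalized as in the training loss) gives exactly \eqref{eq: ft loss by ab}.
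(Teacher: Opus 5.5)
Your proposal is correct and is the paper's argument in matrix form. The paper gets the values $\alpha^*_{K,\epsilon}$ and $-\alpha^*_{K,\epsilon}/(K-1)$ on $V\times V$ from \Cref{lemma: A1star}, uses that $Q^{(T_1)}$ vanishes outside $\mathrm{span}(V)$, expands $u_k$ in a basis extending $V$, and sums. Your closed form $Q^{(T_1)}=\frac{K\alpha^*_{K,\epsilon}}{K-1}\bigl(\mathbf V\mathbf V^\top-\bar v\bar v^\top\bigr)$ packages that computation once. It also shows $Q^{(T_1)}$ is a multiple of the projector onto $\mathrm{span}(V)\cap\bar v^{\perp}$, which makes \Cref{cor: random guess} immediate.

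The invariant you flag as the main obstacle, and leave at ``appears to,'' needs no expansion. Since $\partial\,\mathrm{attn}_i/\partial Q=\mathrm{attn}_i\bigl(x_i-\sum_l \mathrm{attn}_l x_l\bigr)x_{\qry}^\top$ and $\bar v^\top x=1/\sqrt{K}$ for every $x\in V$, attention summing to one gives $\bar v^\top\nabla L^{(\mathrm{tr})}(Q)=0$ for every $Q$. Starting from $Q^{(0)}=0$, this gives $\bar v^\top Q^{(t)}=0$ for all $t$, which is exactly $a_t+(K-1)b_t=0$. It is also the paper's per-query identity $\alpha_v=-\sum_{v'\neq v}\beta_{v',v}$, which the paper checks by algebra on the gradient-projection formulas.

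One claim you should not make as stated: that Step~3 gives the display ``exactly'' under the same normalization as the training loss. With the query indicator weighted as in the training loss, $\Pr(x_{\qry}=u_k)=1/K$, so Step~3 yields the denominator $2K(\cdot)^2$, as in \Cref{lemma: loss by ab}. That factor $1/K$ is precisely what Step~2 needs. Solving $\frac{(K-1)r^2}{2(1+(K-1)r)^2}=\frac{\epsilon}{K}$ with $r=e^{-K\alpha/(K-1)}$ returns $\alpha^*_{K,\epsilon}$, whereas dropping the $1/K$ gives a different root. So under one consistent normalization you obtain $\alpha^*_{K,\epsilon}$ together with $L^{(\mathrm{ft})}_{u_k}$ equal to the displayed expression divided by $K$. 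The display as printed is the loss conditional on $x_{\qry}=u_k$. This mismatch comes from the statement and the paper's own proof, not from your method, but you should state it explicitly rather than hedge with ``normalized as in the training loss.''
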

We can have the following remarks about this theorem.

(1) Element-wise, the $(i,j)$-element of $\mathbf{M}^\top \mathbf{M}$ is $u_i ^\top \mathbf{V}\mathbf{V}^\top u_j$, which captures the inner product between the components of $u_i$ and $u_j$ projected onto $\text{span}(V)$. Geometrically, the matrix $\mathbf{M}^\top \mathbf{M}$ characterizes the alignment between the training subspace $\text{span}(V)$ and the OOD subspace $\text{span}(U)$: its eigenvalues are the squared cosines of the principal angles between these two subspaces. When  $\text{span}(V) = \text{span}(U)$, the principal angles are $0$ and we have $\mathbf{M}^\top \mathbf{M} = I_K$. Since $u_k^\top \mathbf{V}\mathbf{V}^\top u_k \le 1$, the diagonal elements of $\mathbf{M}^\top \mathbf{M}$ attain their maximal values, indicating large $\alpha$-type attention weights. Meanwhile, the off-diagonal elements are zero, implying low $\beta$-type attention. Together, these lead to a small OOD error. When $\text{span}(V) \perp \text{span}(U)$, the principal angles are $\frac{\pi}{2}$ and we have $\mathbf{M}^\top \mathbf{M} = \textbf{0}_K$. In this case, the training subspace and the OOD subspace are extremely distinct, making the OOD generalization error as poor as a random guessing, as presented in \Cref{cor: random guess}.

(2) The OOD generalization is not determined solely by the distance between the training subspace and the OOD subspace; it also depends on the embedding positions of tokens within these subspaces. 
When the query is $u_k$, the $\alpha$-type attention weight decreases as $u_k$ becomes closer to $\bar{v}$, indicating that the model's confidence in identifying the correct feature decreases when the test feature $u_k$ resembles the average representation of the training features $\bar{v}$. The $\beta$-type attention weight depends on the inner products $\langle u_k, \bar{v} \rangle \langle u_{k'}, \bar{v} \rangle$. When the directions of projections from the query feature $u_k$ and another feature $u_n$ onto the training average representation are inconsistent, i.e., $\langle u_k, \bar{v} \rangle \langle u_{k'}, \bar{v} \rangle < 0$, the interference from this incorrect feature $u_n$ becomes significant, leading to a larger OOD error. Note that even if the training space and the OOD space coincide, i.e., $\text{span}(V) = \text{span}(U)$, there still exists an OOD scenario in which the model performs no better than random guessing when the query token is a certain OOD feature.

(3) \Cref{thm: ood error} indicates that the OOD error for each feature depends on all pairwise interactions between the training features and the OOD features. Based on this, we further discuss the OOD error under different distribution shift levels as follows.
Suppose $\text{span}(U) = \text{span}(V)$ and $x_\qry = u_k$. When the OOD features are slightly shifted from the training features, i.e., $\langle u_n, \bar{v} \rangle = \frac{1}{\sqrt{K}} + \delta_n$ with $\delta_n \ll \frac{1}{\sqrt{K}}$, the OOD error is tolerated by $L^{\text{(ft)}}_{u_k}(Q^{(T_1)})  \le \mathcal{O}(\epsilon)$. When the shift gets larger, e.g., $\delta_k \ge \frac{2}{\sqrt{K}}$ and $\delta_n \ll \frac{1}{\sqrt{K}}$ for $n\neq k$, we have  $L^{\text{(ft)}}_{u_k}(Q^{(T_1)})  \ge \Omega(\epsilon)$.
We further observe that the trained transformer performs no better than random guessing in two distinct regimes: (1) an in-space distribution shift, where $\operatorname{span}(V) = \operatorname{span}(U)$, and (2) a severe out-space distribution shift, where $\operatorname{span}(V) \perp \operatorname{span}(U)$. This observation is formalized in the following corollary.
\begin{corollary}\label{cor: random guess}
The following two statements hold. (1) Suppose $\text{span}(V) = \text{span}(U)$. If the query token is $u_k$ and $u_k = \bar{v}$, we have $ \textbf{Attn}_{u} = \frac{1}{K}, \forall\, u \in U.$
(2) Suppose $\text{span}(V) \perp \text{span}(U)$. For any query token, we have $ \textbf{Attn}_{u} = \frac{1}{K}, \forall\, u \in U.$
\end{corollary}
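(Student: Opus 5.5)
The plan is to read both statements off \Cref{thm: ood error}, using the OOD analogue of \cref{eq:approximate of Attnk}. When $x_\qry = u_k$, the prompt has the same number of tokens of each OOD feature, so aggregating over tokens gives
\begin{equation*}
\textbf{Attn}_{u_k} = \frac{\exp(\alpha_{u_k}^{(T_1)})}{\exp(\alpha_{u_k}^{(T_1)}) + \sum_{n\neq k}\exp(\beta_{u_n,u_k}^{(T_1)})}, \qquad \textbf{Attn}_{u_{n}} = \frac{\exp(\beta_{u_{n},u_k}^{(T_1)})}{\exp(\alpha_{u_k}^{(T_1)}) + \sum_{n'\neq k}\exp(\beta_{u_{n'},u_k}^{(T_1)})}
\end{equation*}
for $n\neq k$. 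It therefore suffices to show that $\alpha_{u_k}^{(T_1)} = \beta_{u_{n},u_k}^{(T_1)}$ for all $n\neq k$. Then all $K$ exponentials coincide and every attention score equals $1/K$.

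\emph{Statement (2).} If $\text{span}(V)\perp\text{span}(U)$, then $\mathbf{M}=\mathbf{V}^\top\mathbf{U}=0$. Since $\bar v\in\text{span}(V)$, we also have $\langle u_j,\bar v\rangle = 0$ for every $j$. By \Cref{thm: ood error}, every $\alpha_{u_k}^{(T_1)}$ and $\beta_{u_{k'},u_k}^{(T_1)}$ vanishes. All exponentials equal $1$, so $\textbf{Attn}_u = 1/K$ for every query.

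\emph{Statement (1).} If $\text{span}(V)=\text{span}(U)$, then $\mathbf{M}$ is a $K\times K$ orthogonal matrix, so $\mathbf{M}^\top\mathbf{M}=I_K$. The formulas of \Cref{thm: ood error} reduce to
\begin{equation*}
\alpha_{u_k}^{(T_1)} = c\,(1-\langle u_k,\bar v\rangle^2), \qquad \beta_{u_n,u_k}^{(T_1)} = -c\,\langle u_k,\bar v\rangle\langle u_n,\bar v\rangle,
\end{equation*}
with $c = \frac{K\alpha_{K,\epsilon}^*}{K-1}$. Take $u_k=\bar v$. Because $\bar v$ is a unit vector, $\langle u_k,\bar v\rangle = 1$. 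Orthonormality of $U$ gives $\langle u_n,\bar v\rangle = \langle u_n,u_k\rangle = 0$ for $n\neq k$. Hence $\alpha_{u_k}^{(T_1)}=0$ and $\beta_{u_n,u_k}^{(T_1)}=0$, and again all scores equal $1/K$.

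\emph{Main care point.} I do not expect a real obstacle. The one step to justify is the OOD aggregation formula displayed above, i.e.\ that $\textbf{Attn}_u$ depends on the prompt only through the $\alpha$- and $\beta$-type weights at $Q^{(T_1)}$. This follows by applying the derivation of \cref{eq:approximate of Attnk} with $U$ in place of $V$, using the assumption that prompts contain equally many copies of each feature, so the multiplicities cancel in the softmax.
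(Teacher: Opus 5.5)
Your proposal is correct and matches the paper's proof: both statements are read off \Cref{thm: ood error}. In each case you show that $\mathbf{M}^\top\mathbf{M}$ equals $I_K$ (resp.\ $0$) and that the $\bar v$-terms cancel it (resp.\ vanish), so every $\alpha$- and $\beta$-type weight is zero and the softmax is uniform. Your explicit check that equal feature multiplicities cancel in the OOD aggregation formula is a step the paper leaves implicit.
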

The first statement of this corollary implies that if the OOD subspace coincides with the training subspace and the query token equals the average representation of the training features, the trained transformer assigns uniform attention across all features, effectively behaving like random guessing. The second statement is even stronger: when the OOD subspace is orthogonal to the training subspace, the transformer behaves like random guessing for \textbf{any} query from the OOD feature set. This corollary highlights that trained transformers can perform poorly on OOD data. 

\subsection{Finetuning Analysis}
To build a comprehensive understanding of model finetuning on OOD data by following from \cref{eq:finetune of Q},
we first have the following lemma to capture the relationship between two types of attention weights.
\begin{lemma}\label{lemma: alpha = sum beta}
    Let $\Delta_{k,\alpha}^{(t)}$ and $\Lambda_{k,n,\beta}^{(t)}$ denote the updates of $\alpha$-type and $\beta$-type attention weights during the finetuning time $t$, respectively, such that $\alpha_{u_k}^{(T_1 + t)} = \alpha_{u_k}^{(T_1)} + \Delta_{k,\alpha}^{(t)}$ and $\beta_{u_n,u_k}^{(T_1 + t)} = \beta_{u_n,u_k}^{(T_1)} -  \Lambda_{n,k,\beta}^{(t)}$ for $n\neq k$.
    Then, we have $\Delta_{k,\alpha}^{(t)} = \sum_{n\not = k} \Lambda_{k,n,\beta}^{(t)}$ at time $t$.
\end{lemma}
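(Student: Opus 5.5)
The plan is to exploit the shift invariance of the softmax. Every finetuning gradient step changes $Q$ only through rank-one directions $u_n u_k^\top$, and for each fixed query $u_k$ the coefficients of these directions sum to zero. Orthonormality of $U$ then converts this zero-sum property into the claimed identity between the $\alpha$- and $\beta$-updates.

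\textbf{Step 1: dependence on logits.} First I rewrite $L^{\text{(ft)}}$ as a function of the logits $z_{k,n}(Q) := u_n^\top Q u_k$. When $x_\qry = u_k$, the analogue of \cref{eq:approximate of Attnk} for $U$ shows that $\textbf{Attn}_{u_n}$ is a softmax over $(z_{k,n})_{n}$. Here I use the assumption that each feature appears equally often in the prompt, so the multiplicities cancel in the softmax. Taking the expectation over $w\sim N(0,I)$ and using orthonormality of $U$, I would get
\[
L^{\text{(ft)}}_{u_k}(Q) = c\Bigl[(1-\textbf{Attn}_{u_k})^2 + \sum_{n\neq k}\textbf{Attn}_{u_n}^2\Bigr]
\]
for a constant $c$ (this is consistent with \cref{eq: ft loss by ab}). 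Hence $L^{\text{(ft)}}_{u_k}$ depends on $Q$ only through the row $(z_{k,n})_{n=1}^K$, and it is invariant under adding the same constant to all $z_{k,n}$. Differentiating along that direction gives
\[
\sum_{n=1}^K g_{k,n} = 0, \qquad g_{k,n} := \frac{\partial L^{\text{(ft)}}_{u_k}}{\partial z_{k,n}}.
\]

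\textbf{Step 2: gradient in matrix form.} By the chain rule, $\nabla_Q z_{k,n} = u_n u_k^\top$, and therefore
\[
\nabla_Q L^{\text{(ft)}}(Q) = \sum_{k}\sum_{n} g_{k,n}\, u_n u_k^\top .
\]
Plugging this into the update \cref{eq:finetune of Q} and using $u_i^\top u_j = \delta_{ij}$ gives the one-step changes
\[
\alpha_{u_k}: \ -\eta_t\, g_{k,k}, \qquad \beta_{u_n,u_k}: \ -\eta_t\, g_{k,n} \quad (n\neq k).
\]
From Step 1, $-\eta_t g_{k,k} = \sum_{n\neq k} \eta_t g_{k,n}$. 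So at every step the increase of $\alpha_{u_k}$ equals the total decrease of the $\beta$-weights that share the query $u_k$.

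\textbf{Step 3: accumulate over time.} I sum the per-step identity from $T_1$ to $T_1+t$. With $\Lambda$ defined as the cumulative decrease of $\beta_{u_n,u_k}$, this yields $\Delta_{k,\alpha}^{(t)} = \sum_{n\neq k}\Lambda^{(t)}$. The $\Lambda$ terms are the ones indexed by the pairs (key $u_n$, query $u_k$); I would state explicitly that the lemma's index ordering $\Lambda_{k,n,\beta}$ refers to these same pairs.

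The only delicate point is Step 1, namely justifying that the loss depends on $Q$ solely through the feature-level logits with a shift-invariant softmax. This requires the equal-count prompt assumption, so that token multiplicities enter as common factors and cancel. Once that is established, the rest is bookkeeping with orthonormality.
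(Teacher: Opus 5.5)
Your argument is correct, and it reaches the per-step identity by a different route than the paper. The paper takes the explicit gradient-projection formulas for $\delta_{u,\alpha}^{(t)}$ and $\delta_{u',u,\beta}^{(t)}$ from \Cref{lemma: update of attention weights}, which are imported from \cite{huang2024in-context}. It then sums the $\beta$-projections over $u'\neq u$ and simplifies algebraically inside the expectation to obtain $\delta_{u,\alpha}^{(t)}=-\sum_{u'\neq u}\delta_{u',u,\beta}^{(t)}$ (\Cref{lemma: alpha = -sum beta}). Summing over steps then gives the lemma, exactly as in your Step~3.

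You instead observe that the loss for query $u_k$ depends on $Q$ only through the logit row $z_{k,\cdot}$ via a softmax, so $\sum_n g_{k,n}=0$. Orthonormality of $U$ then turns this into a statement about $u_n^\top\nabla_Q L^{\text{(ft)}}u_k=g_{k,n}$.

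Your route is more general. It uses neither the squared loss nor the Gaussian task distribution. It also does not need the equal-count assumption, so the step you flag as the delicate one is not delicate. Unequal multiplicities $N_n$ only rescale $\exp(z_{k,n})$ by $Q$-independent factors, which leaves the shift invariance intact. Orthonormality of $U$ is the only essential ingredient.

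Your route also keeps the gradient-descent sign straight. The paper writes the update as $\alpha_u^{(t+1)}=\alpha_u^{(t)}+\eta_t\,u^\top\nabla L^{\text{(ft)}}u$, so its $\delta$'s are effectively negative gradient projections. This is harmless here because the identity is invariant under a global sign flip.

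What the paper's route buys is the explicit expressions for $\delta_{u,\alpha}^{(t)}$ and $\delta_{u',u,\beta}^{(t)}$. It needs these anyway for the sign and ordering results in \Cref{lemma: delta > 0} and \Cref{lemma: beta decreasing speed}; your argument yields only the identity.

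Finally, your reading of the indices, $\sum_{n\neq k}\Lambda_{n,k,\beta}^{(t)}$ over pairs (key $u_n$, query $u_k$), is the version the paper actually proves in \Cref{lemma: alpha = sum beta app}. The $\Lambda_{k,n,\beta}$ in the main-text statement is a typo.
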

This lemma implies that the update rate of the $\alpha$-type attention weight equals the total update rate of the corresponding $\beta$-type attention weights. In fact, this property can be also observed from the training phase. As shown in \Cref{lemma: delta > 0} in the appendix, we have $\Delta_{k,\alpha}^{(t)}, \Lambda_{k,n,\beta}^{(t)} > 0$, indicating that the $\alpha$-type attention weights consistently increase, while the $\beta$-type attention weights steadily decrease. This simultaneously enhances the model’s ability to identify the correct features and reduces interference from incorrect features. To simplify our analysis of finetuning phase, we adopt the following assumption on the feature shifts. We assume the training and testing features have the following parallel relationship.
\begin{assumption}\label{ass:finetune}
     For all $k\in[K]$, we assume $\langle v_k, u_k\rangle = \gamma_k$, where $\gamma_k\in[\gamma_{\min}, \gamma_{\max}]$. For all $n\not = k$, we assume $\langle v_k, u_n\rangle = 0$.
\end{assumption}

We note that this assumption is set to get a clean result to analyze, while our proof idea can be extended beyond this assumption, as discussed in \Cref{app: discussion about assumption}.
Under \Cref{ass:finetune}, we establish the following theorem to characterize the training dynamics in the finetuning phase.

\begin{theorem}\label{thm: finetune of ab}
    Suppose \Cref{ass:finetune} holds and the stepsize is constant $\eta = \mathcal{O}(1)$. During the finetuning phase, there exists $t_1 \le t_2 \le ... \le t_K$, such that at each time $t_m = \mathcal{O}(\text{poly}(K,\frac{1}{\epsilon},\frac{1}{\eta}))$, we have $L^{\text{(ft)}} _{u_{k(m)}}(Q^{(T_1 + t_m)}) \le \frac{\epsilon}{K}$. 
    Moreover, for $\Delta_{k,\alpha}^{(t)}$ and $\Lambda_{k,n,\beta}^{(t)}$ defined in \Cref{lemma: alpha = sum beta}, we can have:
    {\small
    \begin{align*}   
    \Delta_{k(m),\alpha}^{(t_m)} = \Theta\left( (1 - \gamma_{k(m)}^2 - \frac{\gamma_{k(m)}\gamma_\alpha}{K}) \log \frac{K}{\sqrt{\epsilon}} \right),~
        \Lambda_{n,k(m),\beta}^{(t_m)} =\Theta \left( \frac{\Delta_{k(m),\alpha}^{(t_m)}}{K} + \frac{\gamma_{k(m)} \gamma_\beta}{K} \log \frac{K}{\sqrt{\epsilon}} \right)
    \end{align*}
    }%
    with some  constant $\gamma_\alpha,\gamma_\beta \in [ \gamma_{\min}, \gamma_{\max}]$. 
\end{theorem}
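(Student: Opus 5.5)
The plan is to reduce the finetuning dynamics to a low-dimensional system of scalar recursions for $\alpha_{u_k}^{(T_1+t)}$ and $\beta_{u_n,u_k}^{(T_1+t)}$. I will then analyze each feature's system separately, in the style of the training-phase analysis of \cite{huang2024in-context} that yields $\alpha_{K,\epsilon}^*$.

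\textbf{Step 1: the gradient lies in a rank-one family.} Under the balanced-prompt assumption, the attention of a query $u_k$ is given by the analogue of \eqref{eq:approximate of Attnk}, with $\alpha_{u_k}$ and $\beta_{u_k,u_n}$ in place of the training weights. Differentiating $L^{(\text{ft})}_{u_k}$ with respect to $Q$ gives a gradient of the form $\sum_{n} c_{k,n}(Q)\, u_n u_k^\top$, where the coefficients satisfy $\sum_n c_{k,n}=0$ (softmax invariance). Since $u_k^\top u_n u_k^\top u_k=\delta_{kn}$ and $U$ is orthonormal, the update of $\alpha_{u_k}$ is $-\eta c_{k,k}$ and the update of each $\beta$ is $-\eta c_{k,n}$. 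The zero-sum property gives $\Delta=\sum\Lambda$; this is \Cref{lemma: alpha = sum beta}, which I take as given. The key consequence is that, in the $U$-basis, the finetuning dynamics of column $k$ decouple across $k$. Combined with \Cref{lemma: delta > 0}, $\alpha$ is monotonically increasing and each $\beta$ is monotonically decreasing.

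\textbf{Step 2: initial conditions from \Cref{thm: ood error} under \Cref{ass:finetune}.} Here $\mathbf M=\mathrm{diag}(\gamma_k)$ and $\langle u_k,\bar v\rangle=\gamma_k/\sqrt K$. Substituting these gives
\[
\alpha_{u_k}^{(T_1)}=\tfrac{K\alpha^*}{K-1}\bigl(\gamma_k^2-\tfrac{\gamma_k^2}{K}\bigr)=\gamma_k^2\alpha^*,\qquad
\beta_{u_n,u_k}^{(T_1)}=-\tfrac{\alpha^*\gamma_k\gamma_n}{K-1}.
\]
Here $\alpha^*=\Theta(\log(K/\sqrt\epsilon))$.

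\textbf{Step 3: stopping time via the loss formula.} The stopping condition $L^{(\text{ft})}_{u_k}\le \epsilon/K$ in \eqref{eq: ft loss by ab} holds essentially when the $\alpha$–$\beta$ gaps reach the training-phase target. I will first establish symmetry of the $\beta$ updates, showing $\Lambda_{n,k}$ is nearly uniform over $n$ up to the initial asymmetry $\gamma_n$. By the mean value theorem applied to the $\gamma_n$ spread, I replace the individual values by an averaged constant $\gamma_\beta$. The stopping condition then reads
\[
\alpha_{u_k}^{(T_1+t)}-\beta^{(T_1+t)}\approx \alpha^*\cdot\tfrac{K}{K-1}.
\]
Writing $\alpha^{(T_1+t)}=\gamma_k^2\alpha^*+\Delta$ and $\beta=-\alpha^*\gamma_k\gamma_\beta/(K-1)-\Lambda$, I solve using $\Lambda\approx\Delta/(K-1)$ plus a correction from the nonuniform part. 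This yields $\Delta=\Theta\big((1-\gamma_k^2-\gamma_k\gamma_\alpha/K)\log(K/\sqrt\epsilon)\big)$, where $\gamma_\alpha$ is another intermediate-value average, and consequently the stated $\Lambda$. The times $t_m$ are obtained by ordering features by their required $\Delta$ (equivalently by $\gamma_k$). The polynomial bound on $t_m$ follows as in \cite{huang2024in-context}: there is a phase where the gradient coefficient is bounded below by $\mathrm{poly}(1/K)$ until the loss is $O(1/K)$, followed by a phase where $c_{k,k}\gtrsim$ loss so that $\Delta$ grows logarithmically in time, and summing these gives $\mathrm{poly}(K,1/\epsilon,1/\eta)$.

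\textbf{Main obstacle.} The hard part is controlling the non-uniformity of the $\beta_{u_n,u_k}$ across $n$. The initial values depend on $\gamma_n$, so the softmax weights on incorrect features are not equal, and the updates $\Lambda_{n,k}$ differ. I would first show that the differences $\beta_{u_n,u_k}-\beta_{u_{n'},u_k}$ contract, or at least stay bounded by $O(\alpha^*\gamma_{\max}^2/K)$, since larger $\beta$ receive larger decrements. This boundedness would justify both the uniform $\Theta$ statements and the existence of the averaged constants $\gamma_\alpha,\gamma_\beta\in[\gamma_{\min},\gamma_{\max}]$.
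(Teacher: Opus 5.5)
Your plan is essentially the paper's proof. It uses the same initial values from \Cref{thm: ood error} under \Cref{ass:finetune}, and the step you flag as the main obstacle is exactly \Cref{lemma: beta decreasing speed}: a larger $\beta_{u_n,u_k}$ receives a larger decrement while the order of the $\beta$'s is preserved, which together with $\Delta_{k,\alpha}^{(t)}=\sum_{n\neq k}\Lambda_{n,k,\beta}^{(t)}$ keeps every $\Lambda_{n,k,\beta}^{(t)}$ within the initial spread $\alpha^*_{K,\epsilon}|\gamma_k|(\gamma_{\max}-\gamma_{\min})/(K-1)$ of $\Delta_{k,\alpha}^{(t)}/(K-1)$, after which the stopping condition is solved in a symmetric configuration. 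Where you average by a mean-value argument, the paper sandwiches the loss between the two symmetric configurations with all $\beta$'s equal to the largest or smallest one, using that the loss is increasing in each $e^{\beta}$ when $e^{\alpha}\ge\max e^{\beta}$ (\Cref{lemma: support prove Loss upper bound}) and requiring $\eta\le\min_n e^{-\beta^{(T_1)}_{u_n,u_k}}$ for order preservation; its time bound is the one-line estimate $\delta_{u,\alpha}^{(t)}\ge\Attn_u^{(T_1)}\cdot\epsilon/K$ before stopping, and $t_1\le\dots\le t_K$ is just a sorting of per-feature stopping times, so neither your two-phase argument nor an ordering by $\gamma_k$ is needed.
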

\Cref{thm: finetune of ab} indicates that there exists a sequence of time $t_m$ such that the finetuning loss for the feature $u_{k(m)}$ converges to $\frac{\epsilon}{K}$, so that the model performance on OOD data can be guaranteed. At each $t_m$, we characterize two types of attention weights associated with the feature $k(m)$ by providing upper and lower bounds on the attention weights. 
At time $T_2 :=t_K$, all the individual finetuning loss converges to $\frac{\epsilon}{K}$ and the finetuning phase stops.

More importantly, as the model has been modified during finetuning, understanding whether it still preserves the knowledge of the original training domain is critical. To this end, we characterize the loss of the finetuned model on the training domain in the following theorem: 
\begin{theorem}\label{thm: performance on original set}
    Suppose \Cref{ass:finetune} holds and the stepsize is of constant order $\eta = \mathcal{O}(1)$. After finetuning, the model performance on the original feature set can be captured by 
    {\small
    \begin{equation*}
    \textstyle
        L^{\text{(tr)}}_{v_k}(Q^{(T_1 +T_2)}) = \frac{(\sum\limits_{n\not = k} \exp(\beta_{v_n,v_k}^{(T_1+T_2)}))^2 + \sum\limits_{n\not = k} \exp(\beta_{v_n,v_k}^{(T_1+T_2)})^2}{2(\exp(\alpha_{v_k}^{(T_1+T_2)}) + \sum\limits_{n\not = k} \exp(\beta_{v_n,v_k}^{(T_1+T_2)}))^2},
    \end{equation*}}%
    where 
        $\alpha_{v_k}^{(T_1+ T_2)} = \alpha_{v_k}^{(T_1)} + \gamma_k^2 \Delta_{k,\alpha}^{(T_2)}$ and $  \beta_{v_n,v_k}^{(T_1+T_2)} = \beta_{v_n,v_k}^{(T_1)} + \gamma_k\gamma_n \Lambda_{n,k,\beta}^{(T_2)}$, for $ n\neq k$.
\end{theorem}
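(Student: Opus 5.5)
The proof splits into two parts. First we express the loss formula in terms of the attention weights, which is a computation valid for any $Q$. Then we track how $\alpha_{v_k}$ and $\beta_{v_n,v_k}$ change during finetuning, using the structure of the finetuning gradient under \Cref{ass:finetune}.

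\textbf{Step 1: the loss formula.} I would reuse the computation behind \eqref{eq: ft loss by ab} in \Cref{thm: ood error}, now with $W=V$. Fix $x_\qry=v_k$. The prompt contains the same number of copies of each feature, so by \eqref{eq:approximate of Attnk} the attention on feature $v_n$ is $\exp(\beta_{v_n,v_k})/Z_k$, where
\[
Z_k=\exp(\alpha_{v_k})+\sum_{n\neq k}\exp(\beta_{v_n,v_k}).
\]
The prediction error is then
\[
\hat y_\qry-\langle w,v_k\rangle=\sum_{n\neq k}\mathbf{Attn}_{v_n}\,\langle w,v_n\rangle-(1-\mathbf{Attn}_{v_k})\,\langle w,v_k\rangle .
\]
Since $w\sim N(0,I)$ and the $v_n$ are orthonormal, the coordinates $\langle w,v_n\rangle$ are i.i.d.\ standard normal. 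The expected squared error is therefore $(1-\mathbf{Attn}_{v_k})^2+\sum_{n\neq k}\mathbf{Attn}_{v_n}^2$. Substituting $1-\mathbf{Attn}_{v_k}=\sum_{n\neq k}\exp(\beta_{v_n,v_k})/Z_k$ gives the displayed formula. The factor $\frac12$ and the query probability $1/K$ are absorbed in the same way as in \Cref{thm: ood error}.

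\textbf{Step 2: structure of the finetuning gradient.} I would show that every finetuning gradient lies in $\mathrm{span}\{u_n u_k^\top\}$. The attention logits only involve $x_i^\top Q x_\qry$ with $x_i,x_\qry\in U$. By the chain rule, $\nabla_Q L^{(\mathrm{ft})}(Q)=\sum_{n,k}c_{n,k}(Q)\,u_n u_k^\top$, where $c_{n,k}=\partial L^{(\mathrm{ft})}/\partial(u_n^\top Q u_k)$. Summing over finetuning steps gives
\[
Q^{(T_1+T_2)}=Q^{(T_1)}+\sum_{k}\Delta_{k,\alpha}^{(T_2)}\,u_k u_k^\top-\sum_{n\neq k}\Lambda_{n,k,\beta}^{(T_2)}\,u_n u_k^\top .
\]
The coefficients here must match the definitions in \Cref{lemma: alpha = sum beta}. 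This holds because the $u$'s are orthonormal, so $u_k^\top(\cdot)u_k$ and $u_n^\top(\cdot)u_k$ pick out exactly these coefficients.

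\textbf{Step 3: project onto the training features.} Take $v_k^\top(\cdot)v_k$ and $v_n^\top(\cdot)v_k$ of the update. By \Cref{ass:finetune}, $v_k^\top u_j=\gamma_k\delta_{jk}$. Hence
\[
v_k^\top u_j u_l^\top v_k=\gamma_k^2\,\delta_{jk}\delta_{lk},\qquad v_n^\top u_j u_l^\top v_k=\gamma_n\gamma_k\,\delta_{jn}\delta_{lk}.
\]
This yields $\alpha_{v_k}^{(T_1+T_2)}=\alpha_{v_k}^{(T_1)}+\gamma_k^2\Delta_{k,\alpha}^{(T_2)}$. It also yields a $\beta$ update of magnitude $\gamma_k\gamma_n\Lambda_{n,k,\beta}^{(T_2)}$.

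\textbf{Main obstacle: the sign of the $\beta$ update.} Literally the $\beta$ shift is $-\gamma_k\gamma_n\Lambda$, given the convention $\beta^{(T_1+t)}=\beta^{(T_1)}-\Lambda$. The stated "$+$" presumably absorbs a sign convention, or uses $\Lambda$ as the signed change, so the main care is reconciling this with \Cref{lemma: alpha = sum beta}. I would double-check that indices line up, namely $\Lambda_{n,k,\beta}$ corresponds to $u_n^\top Q u_k$, and I would state the sign consistently.

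I would also verify that $Q^{(T_1)}$ itself needs no assumption beyond its values at $v$'s, which are known from training. No further structure of $Q^{(T_1)}$ is needed, because the loss formula only depends on the $v$-projections.
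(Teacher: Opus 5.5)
Your proposal is correct and is essentially the paper's proof: the loss formula follows from \Cref{lemma:calculation of L} together with the equal-count attention expression \eqref{eq:approximate of Attnk}. The weight updates then come from the fact that every finetuning gradient lies in $\mathrm{span}\{u_n u_k^\top\}$, projected onto the $v$'s through \Cref{ass:finetune} exactly as in \Cref{lemma: gradient projection}.

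Your sign check is right: the paper's appendix restatement and proof give $\beta_{v_n,v_k}^{(T_1+T_2)}=\beta_{v_n,v_k}^{(T_1)}-\gamma_k\gamma_n\Lambda_{n,k,\beta}^{(T_2)}$, so the ``$+$'' in the main-text statement is a typo. Likewise, the query probability $1/K$ is not absorbed; it leaves $2K$ in the denominator, consistent with \Cref{lemma: loss by ab} and with the formula for $\alpha_{K,\epsilon}^*$.
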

\begin{wrapfigure}{r}{0.5\columnwidth} 
    \centering
    \begin{subfigure}{0.49\linewidth}
        \centering
        \includegraphics[width=\linewidth]{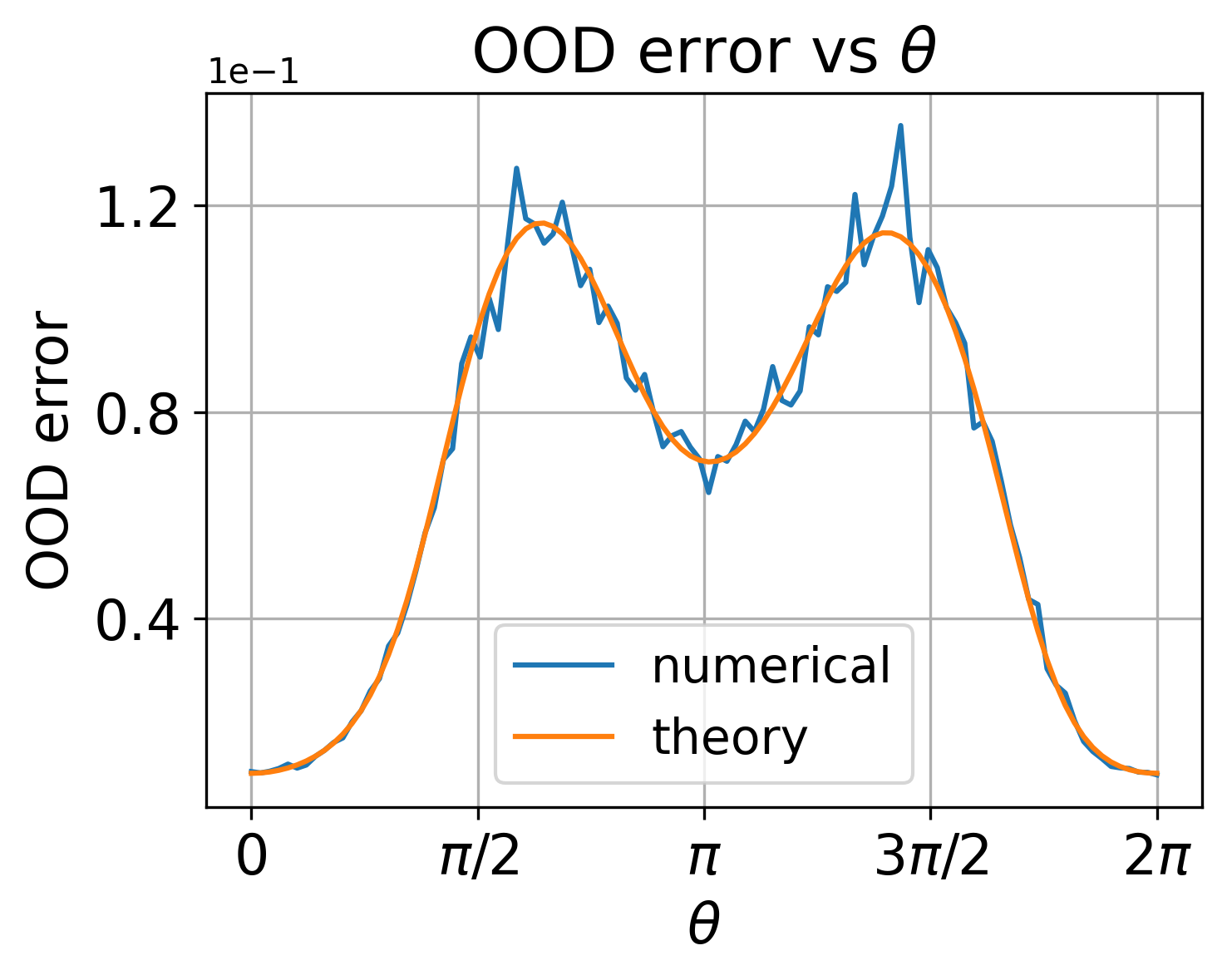}
        \caption{OOD evaluation}
        \label{subfig: ood}
    \end{subfigure}
    \hfill
    \begin{subfigure}{0.49\linewidth}
        \centering
        \includegraphics[width=\linewidth]{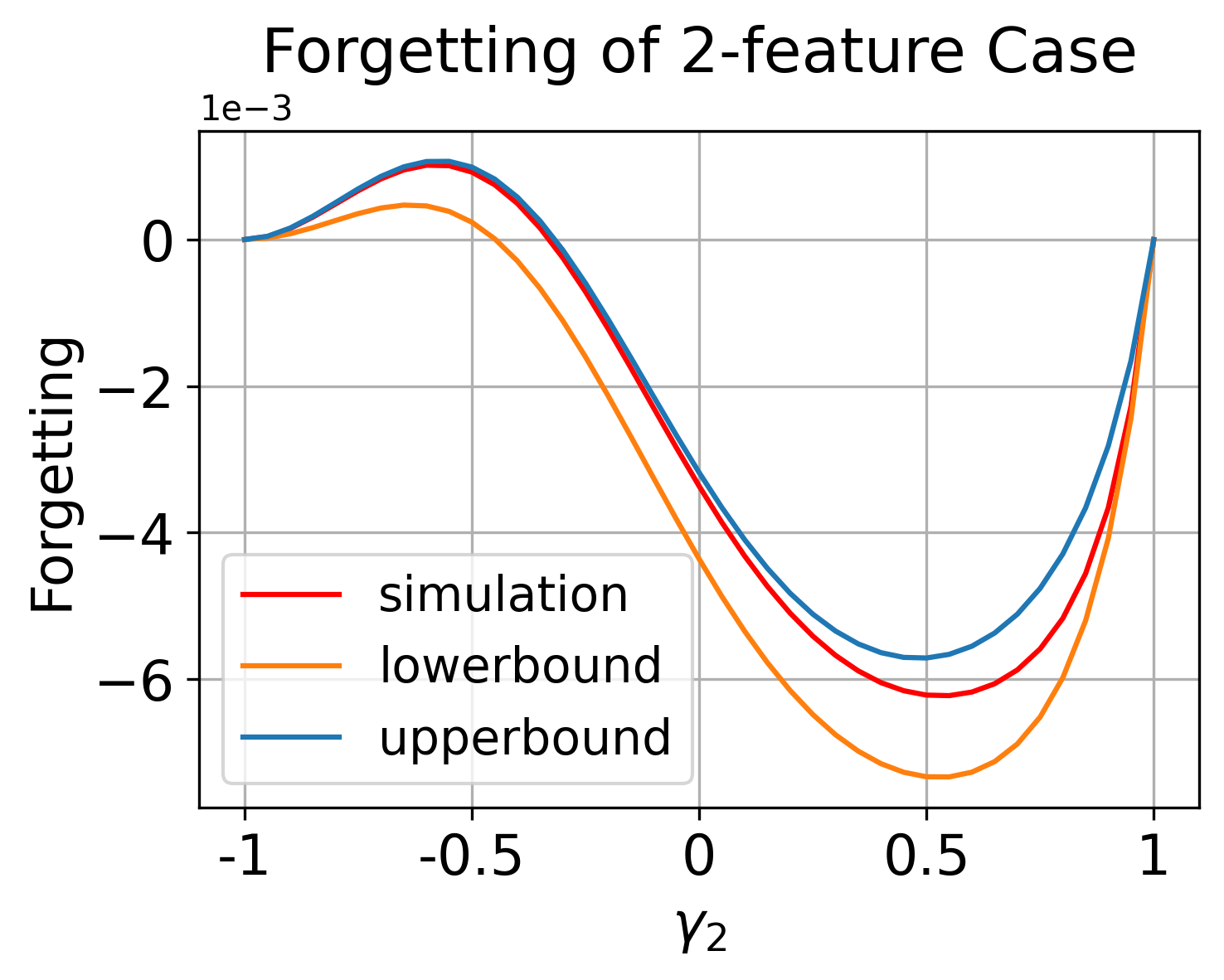}
        \caption{2-Feature Forgetting}
        \label{subfig: 2 features}
    \end{subfigure}
    \caption{Numerical Simulation Results}
    \label{fig:numerical_results}
\end{wrapfigure}
To explain, we first note that \Cref{thm: finetune of ab} characterizes the cumulative updates of the $\alpha$-type and $\beta$-type attention weights with respect to finetuning features up to time $t$ during the finetuning phase, denoted by $\Delta_{k,\alpha}^{(t)}$ and $\Lambda_{k,n,\beta}^{(t)}$, respectively. These quantities can be interpreted as the cumulative projection of the gradient updates onto the finetuning feature directions under gradient descent. Since the same gradient updates simultaneously affect the attention weights associated with the training features through the same update rule, the resulting changes in training-feature attention weights can be obtained by re-expressing these accumulated updates in the training feature basis. In other words, the cumulative updates with respect to training features are equivalently given by projecting $\Delta_{k,\alpha}^{(t)}$ and $\Lambda_{k,n,\beta}^{(t)}$ onto the directions of the training features.


One interesting observation here is that the model’s performance on the training domain may either improve or deteriorate, depending on the nature of the distribution shift. By analyzing the two-feature case and the general $K$-feature case, we explicitly characterize the regions in which forgetting is positive or negative, as discussed in \Cref{app: case study}.

\section{Experimental Results}
\subsection{Simulations on Synthetic Data}
In this section, we conduct numerical simulations on synthetic data to validate our theoretical results. Follow our theoretical setup, set $K=4$ and $d=8$, $\epsilon = 0.01$, and $\eta = 0.1$. We randomly generate orthonormal training features $\mathbf{V}$ and construct OOD features as $\mathbf{U} = R_{1,2} \mathbf{V}$. Here,
\begin{equation}\label{eq: roration matrix}
    R_{i,j}(\theta) = I + (\cos\theta - 1)(e_i e_i^\top + e_j e_j^\top) + \sin\theta (e_i e_j^\top - e_j e_i^\top),
\end{equation}

where $e_i$ is the $i^{th}$ orthonormal basis. As shown in \cref{subfig: ood}, the simulation result is consistent with \Cref{thm: ood error}. We observe that the OOD error $L^{\text{(ft)}}(Q^{(T_1)})$ is consistently higher than the training error $L^{\text{(tr)}}(Q^{(T_1)})$, which implies that the model performance drops under distribution shifts. We then investigate the forgetting behavior in a special two-feature case, where $\gamma_1 = 1$ and $\gamma_2 \in [-1,1]$. Following our theoretical setup, we finetune the transformer on OOD features and evaluate the model performance on original training domain. As shown in \cref{subfig: 2 features}, the performance on the training domain degrades after finetuning when $\gamma_2$ is close to $-1$. Conversely, the performance improves when $\gamma_2$ is slightly negative or positive. These results validate our discussion of the forgetting behavior in the two-feature case, as presented in \Cref{app: two feature case}.

As an extension of our synthetic-data experiments, we further investigate the OOD generalization and forgetting behavior of a standard multi-layer \textit{TinyTransformer}, as discussed in \cref{sec: tinytransformer}. We observe that the OOD error exhibits two distinct patterns depending on the direction of rotation, both of which are consistent with our theoretical predictions. We also analyze the forgetting behavior and track several representative cases to examine how performance evolves during fine-tuning. A more detailed discussion is provided in \Cref{sec: tinytransformer}.

\subsection{Experiments with BERT on Real-World Data}\label{sec:mrpc}

In this section, we present experimental results using a BERT-base model on the Microsoft Research Paraphrase Corpus (MRPC) to validate the implications of our theory. We construct distribution shifts by applying a rotation to the hidden states before the last encoder layer. Specifically, we select a subset of dimensions $D_0 = \{d_{s_1}, d_{s_2}, \dots, d_{s_{2n}}\}$. The rotation matrix is then defined as $R = \prod_{m=1}^{n} R_{s_{2m-1}, s_{2m}}$, where $R_{i,j}$ is defined in \cref{eq: roration matrix}. More details are provided in \Cref{app: bert setup}.

To construct the rotation, we select 258 dimensions (one third of the total dimension) based on the magnitude of $|W_0 - W_1|$, where $W_0$ and $W_1$ denote the weight vectors of the binary classifier. In Scenario~1 and Scenario~2, we respectively select the 258 dimensions with the largest and the smallest values of $|W_0 - W_1|$. As shown in \cref{subfig:bert ood double peak,subfig:bert ood one peak}, these two rotation constructions induce distinct OOD generalization behaviors as the rotation angle $\theta$ varies. Notably, these behaviors match the two patterns predicted by our theoretical analysis in \cref{subfig:single peak,subfig:double peak} of \Cref{sec: tinytransformer}, further validating our theoretical insights beyond the theoretical assumptions.

\begin{figure}[ht]
    \centering

    \begin{subfigure}{0.32\linewidth}
        \centering
        \includegraphics[width=\linewidth]{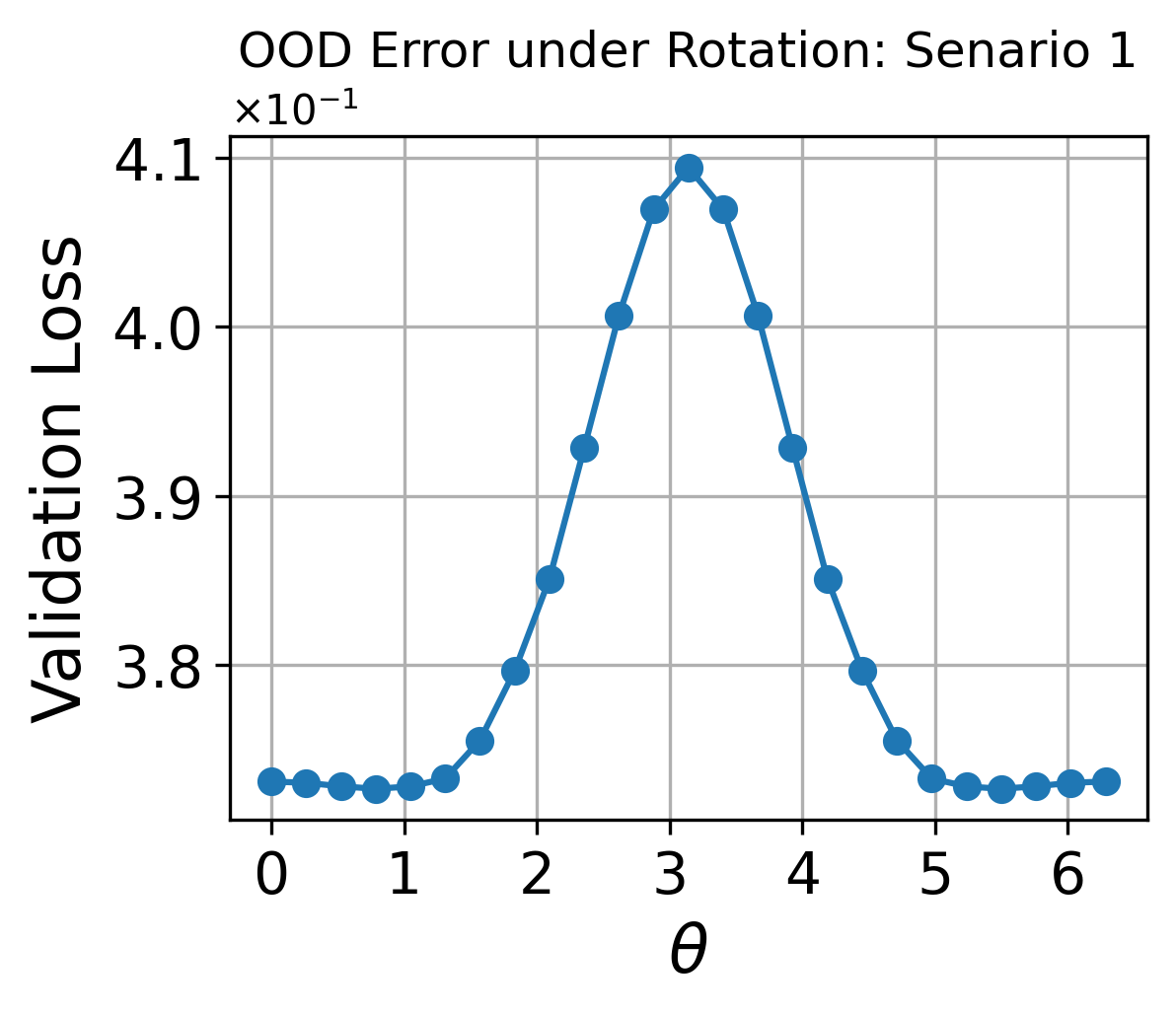}
        \caption{OOD generalization of BERT}
        \label{subfig:bert ood one peak}
    \end{subfigure}
    \hfill
    \begin{subfigure}{0.32\linewidth}
        \centering
        \includegraphics[width=\linewidth]{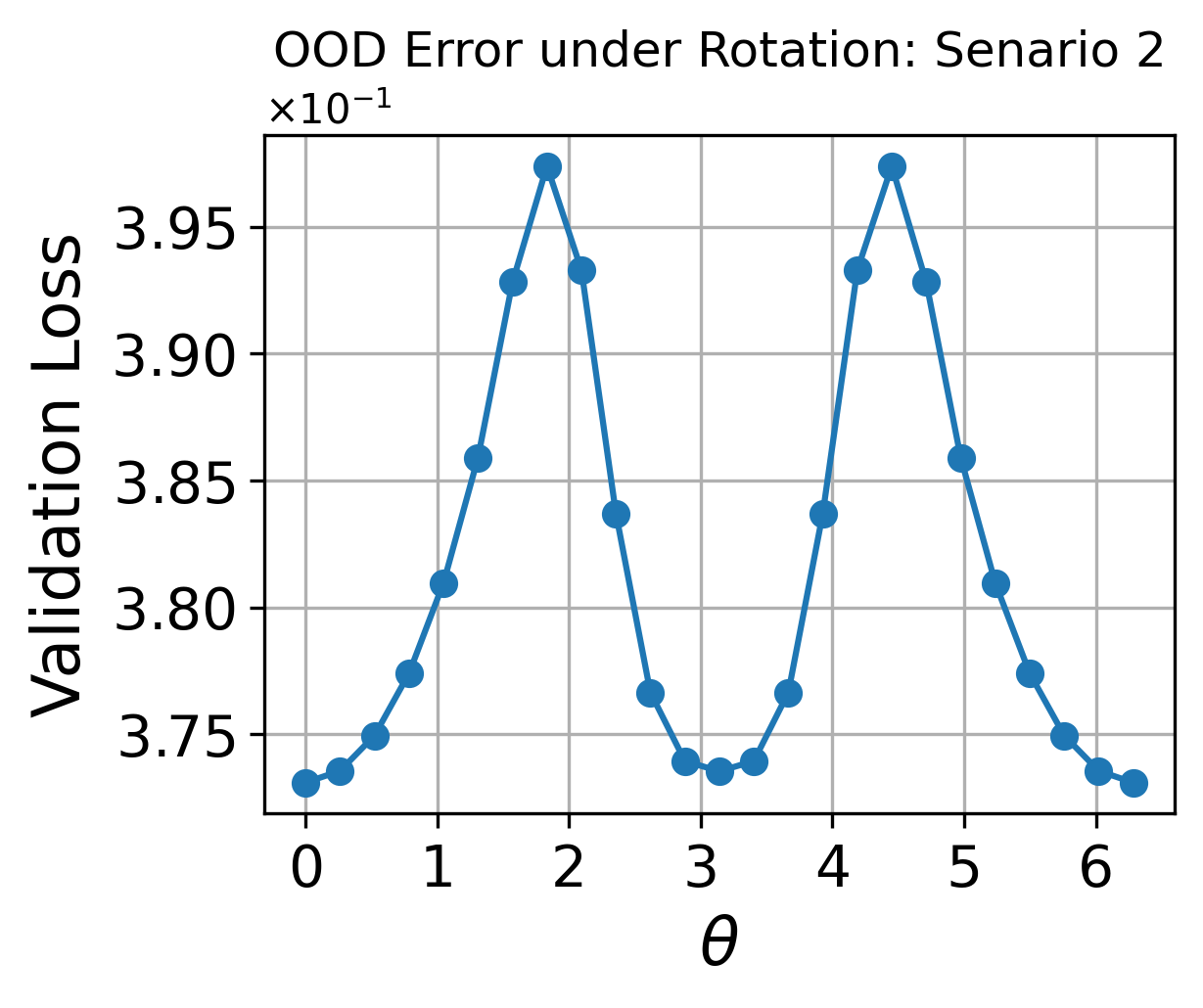}
        \caption{OOD generalization of BERT}
        \label{subfig:bert ood double peak}
    \end{subfigure}
    \hfill
    \begin{subfigure}{0.32\linewidth}
        \centering
        \includegraphics[width=\linewidth]{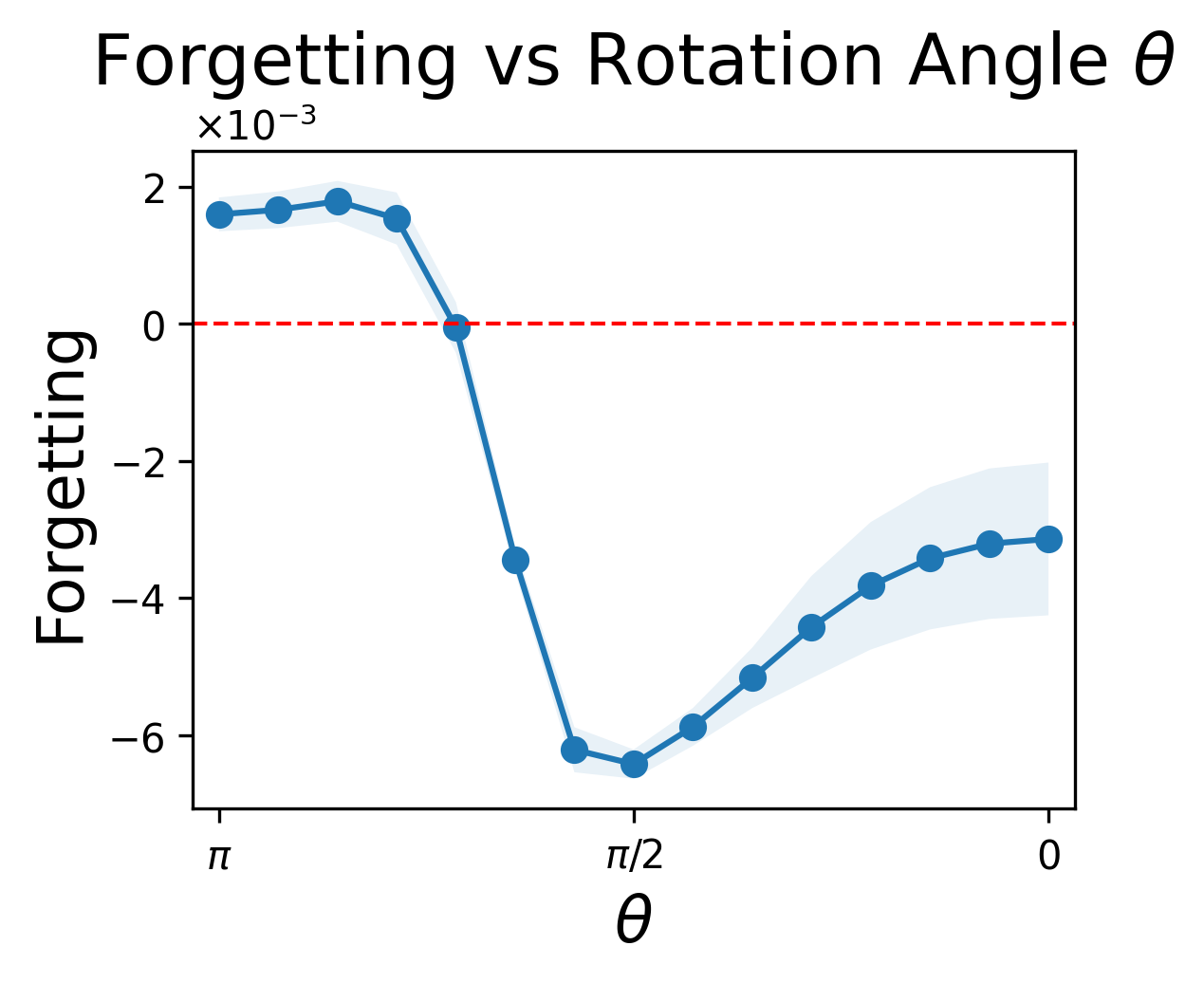}
        \caption{BERT Model Forgetting}
        \label{fig:bert_forgetting}
    \end{subfigure}
    \caption{OOD generalization and Forgetting of BERT model on MRPC}
    \label{fig:bert ood}
\end{figure}

To evaluate forgetting, we randomly select 384 dimensions to rotate before the last layer and finetune only the last layer. After finetuning, we evaluate the cross-entropy loss on the source validation data. As shown in \cref{fig:bert_forgetting}, the reported forgetting is averaged over 10 runs with error bars indicating standard deviation. For comparison with \cref{subfig: 2 features}, we vary the rotation angle from $\pi$ to $0$. As predicted by our theory, the model exhibits positive forgetting when the rotation angle approaches $\pi$. We observe that forgetting is minimal around $\frac{\pi}{2}$, suggesting that finetuning in an approximately orthogonal subspace has little negative impact on, and may even improve, generalization on the source domain.

\subsection{Experiments of Natural Distribution Shift}\label{sec:natural shift}
To provide evidence on a more natural distribution shift and support our conclusion and theoretical observation, we use BERT-base model and conduct experiments on two datasets: Multi-Genre Natural Language Inference (MNLI) and Stanford Natural Language Inference (SNLI). For dataset MNLI, there are five genres in total: \textit{`fiction', `government', `telephone', `travel', `slate'}, with around 80k samples in each genre. During the training phase, the model is trained over \textit{`fiction'} genre. Unlike our synthetic experiments, where we can arbitrarily rotate the latent space, real-world distribution shifts are inherently limited. Therefore, we construct different levels of distribution shift as follows (with full experimental details provided in \Cref{app: bert setup}).

\textbf{Small OOD Shift}: Since the model is trained on the \textit{fiction} genre, we treat the remaining genres in the MNLI dataset (i.e., \textit{government, telephone, travel, slate}) as exhibiting relatively small OOD shifts. Within each OOD genre, we further partition the data into groups (with size 10K) using a sliding-window procedure based on the cosine similarity between each sample’s embedding and the average embedding (center) of the fiction genre.

\textbf{Large OOD Shift}: We treat SNLI as the domain with large OOD shift. Similar to MNLI, we partition the data into groups (with size 20K) based on the cosine similarity between each sample’s embedding and the average embedding (center) of the fiction genre.

To align with \cref{subfig: ood}, we sort the groups in \cref{fig:ood_mnli,fig:ood_snli} in increasing order of cosine similarity, so that the x-axis reflects distribution shift from low to high. As shown in \cref{fig:ood_mnli}, the OOD error for \textit{government}, \textit{telephone}, and \textit{slate} exhibits a clear increasing trend as the distribution shift becomes more pronounced, whereas the OOD error for \textit{travel} fluctuates within a relatively narrow range. This behavior is consistent with our synthetic results (\cref{subfig: ood}) under small rotation angles. For larger distribution shifts, we observe a “decreasing–increasing–decreasing” pattern, as illustrated in \cref{fig:ood_snli}, which also aligns with \cref{subfig: ood} when the rotation angle is large.

To analyze forgetting behavior, we instead sort the groups in descending order of cosine similarity in \cref{fig:forgetting_snli,fig:forgetting_mnli}, such that the x-axis represents distribution shift from high to low, consistent with \cref{subfig: 2 features}. Under large distribution shifts, we observe a clear decreasing trend, as shown in \cref{fig:forgetting_snli}, which matches the pattern in \cref{subfig: 2 features} when the correlation coefficient $\gamma$ is negative. Furthermore, as shown in \cref{fig:forgetting_mnli}, the forgetting for \textit{government}, \textit{telephone}, and \textit{slate} increases steadily as the distribution shift becomes more pronounced, while the forgetting for \textit{travel} again remains relatively stable. This observation is consistent with \cref{subfig: 2 features} when the correlation coefficient $\gamma$ approaches 1.

\begin{figure}[ht]
    \centering
    \begin{subfigure}{0.24\textwidth}
        \centering
        \includegraphics[width=\linewidth]{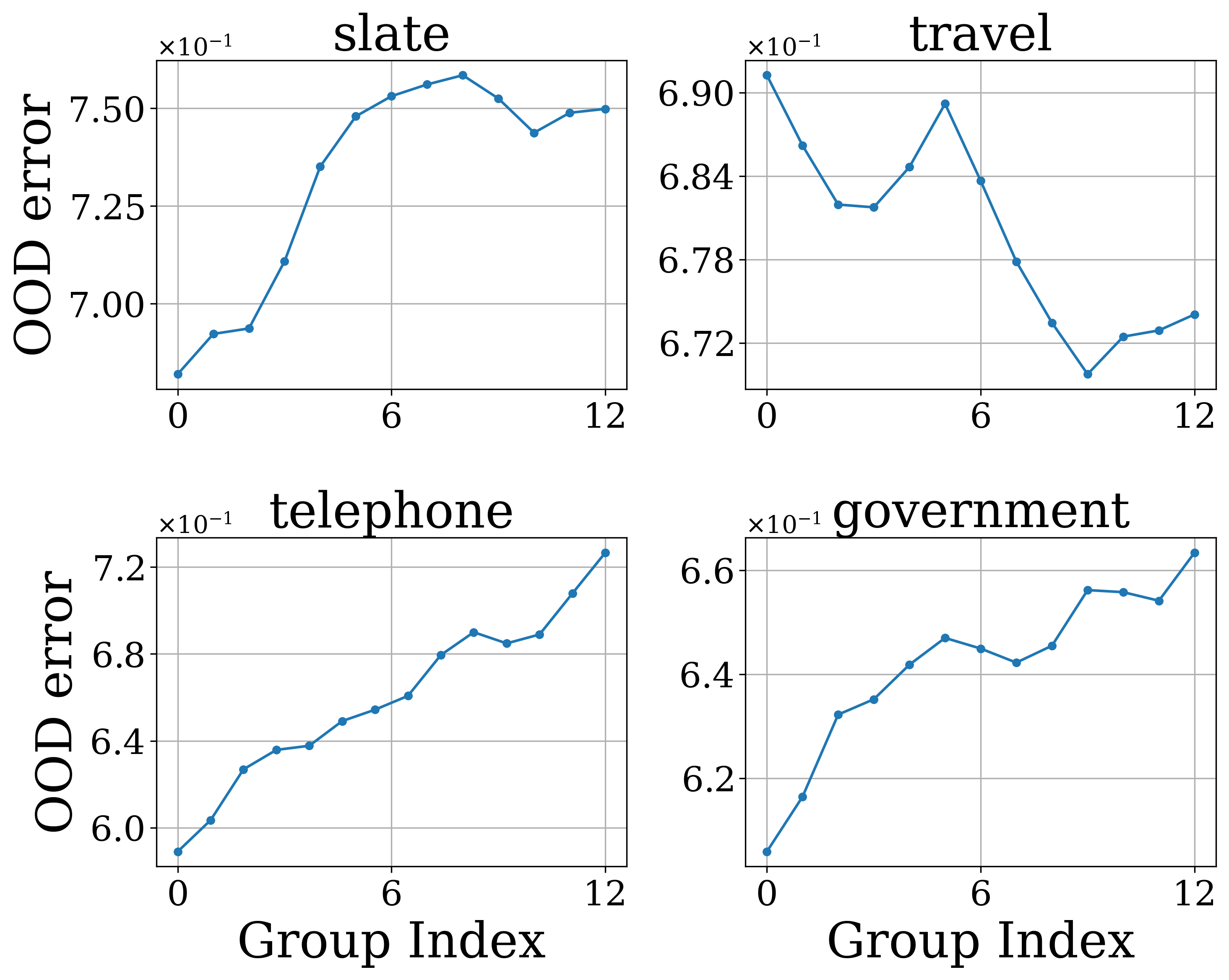}
        \caption{OOD error on MNLI}
        \label{fig:ood_mnli}
    \end{subfigure}
    \hfill
    \begin{subfigure}{0.24\textwidth}
        \centering
        \includegraphics[width=\linewidth]{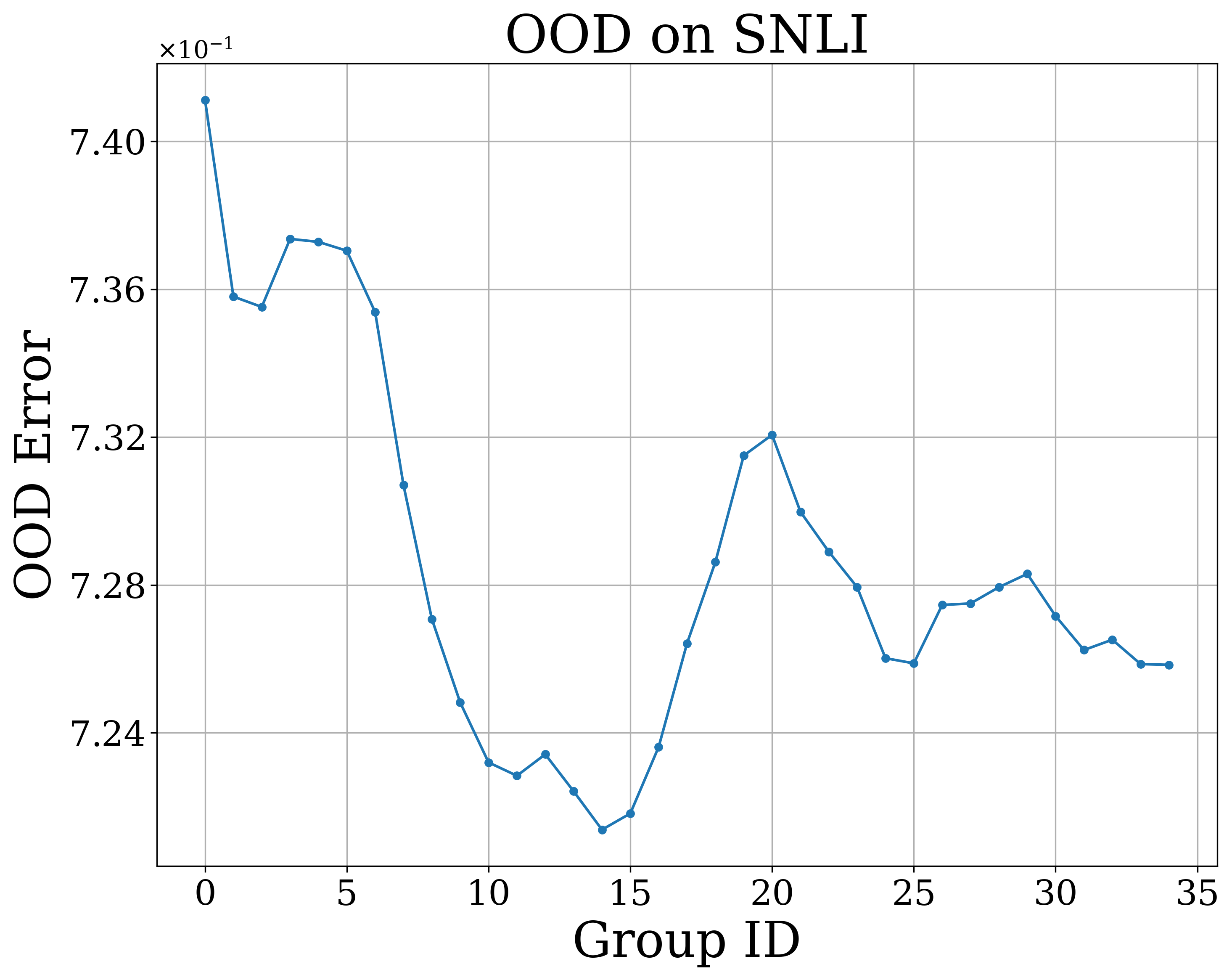}
        \caption{OOD error on SNLI}
        \label{fig:ood_snli}
    \end{subfigure}
    \hfill
    \begin{subfigure}{0.24\textwidth}
        \centering
        \includegraphics[width=\linewidth]{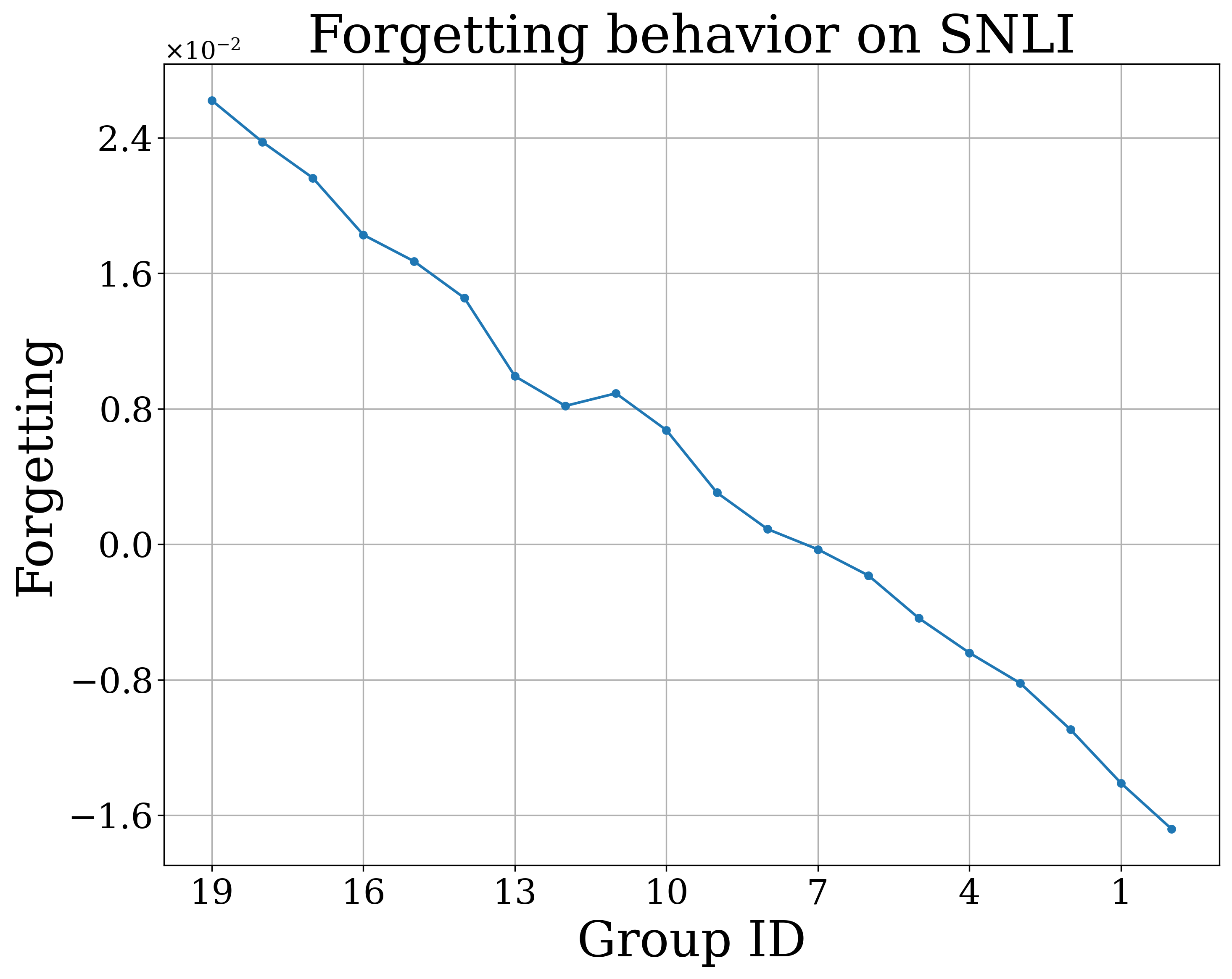}
        \caption{Forgetting on SNLI}
        \label{fig:forgetting_snli}
    \end{subfigure}
    \hfill
    \begin{subfigure}{0.24\textwidth}
        \centering
        \includegraphics[width=\linewidth]{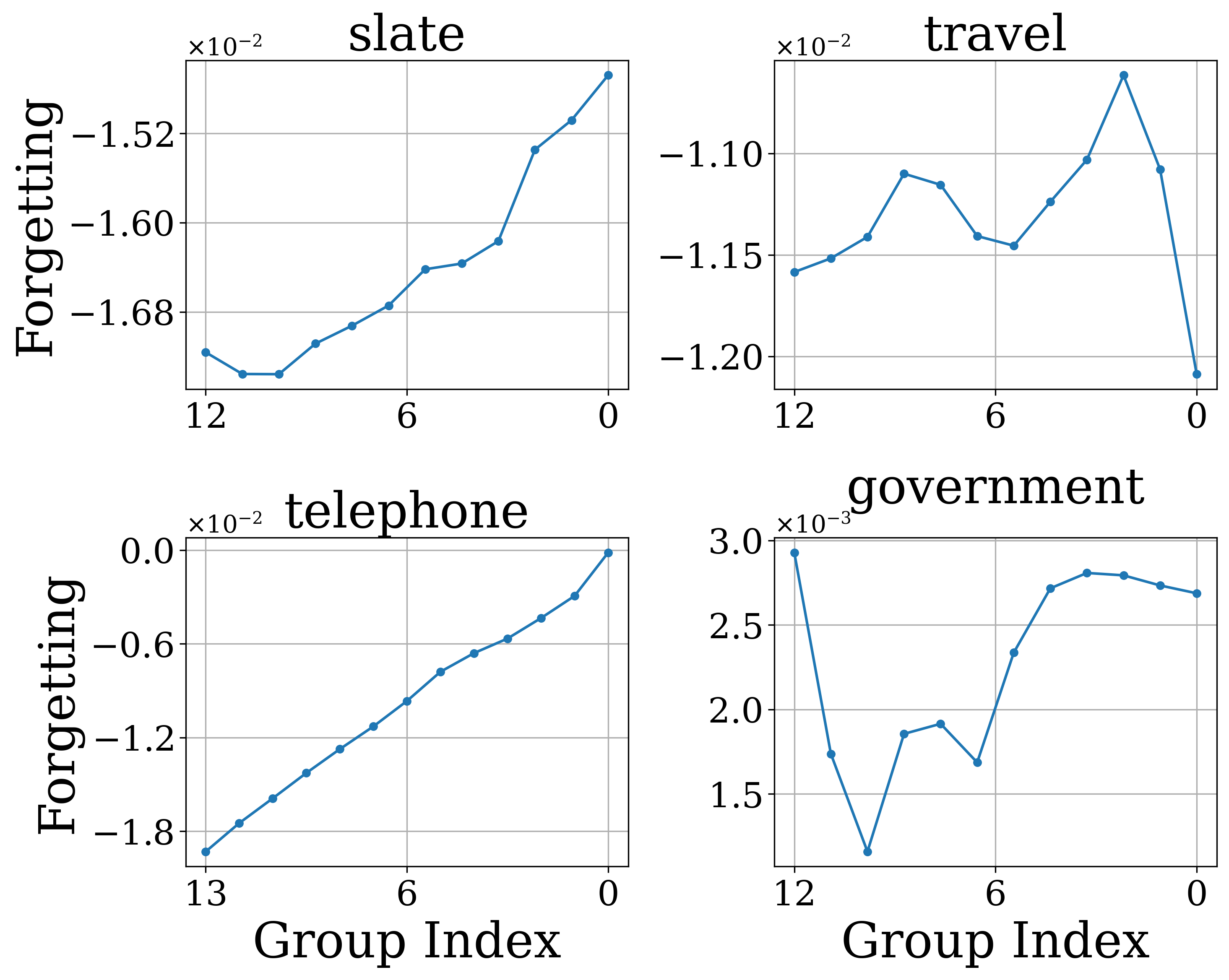}
        \caption{Forgetting on MNLI}
        \label{fig:forgetting_mnli}
    \end{subfigure}

    \caption{OOD and forgetting behavior on MNLI and SNLI. 
    Curves use moving average (MNLI: window=2, SNLI: window=5)}
    \label{fig:combined_1x4}
\end{figure}

A potential implication of this work is that finetuning data in domain adaptation should be calibrated to the magnitude of distribution shift. When the shift is small, fine-tuning on the target domain can potentially improve generalization on the sorce domain; when the shift is large, catastrophic forgetting becomes more severe and degrades source-domain performance. A practical strategy is therefore to account for shift severity when constructing fine-tuning datasets, for example by selecting or reweighting samples according to their similarity to the source distribution, so as to better balance target adaptation and knowledge retention.

\section{Conclusion}
In this work, we derive an explicit expression for the OOD error of a one-layer softmax transformer and characterize how the geometric structure of distribution shifts influences OOD generalization. We further analyze the finetuning dynamics of the transformer on the OOD domain and quantify performance degradation caused by forgetting. We design both synthetic and real-world experiments to demonstrate that our insights can be extended beyond the theoretical regime. \textbf{Limitations:} Our theoretical analysis is restricted to one-layer softmax transformers, and the characterization of forgetting relies on a structural assumption on feature correlations. Extending the theory toward more practical settings is an interesting but open future direction for the community.



%% file: Neurips/appendix.tex
\allowdisplaybreaks

\vspace{1cm}
\section{Experiments with TinyTransformer on Synthetic Data}\label{sec: tinytransformer}
To evaluate whether our theory generalizes to more practical Transformer architectures, we study a standard 1-head \textit{TinyTransformer} with 16-dimensional embeddings and varying depths. Using the same rotation matrix $R_{1,2}$, we generate random embeddings $\mathbf{V}$ to simulate rotations along different directions. Among various results, we observe two distinct patterns of OOD behaviors, as illustrated in \cref{fig:ood on standard transformer}.
\begin{figure}[ht]
    \centering
    \begin{subfigure}{0.47\linewidth}
        \centering
        \includegraphics[width=\linewidth]{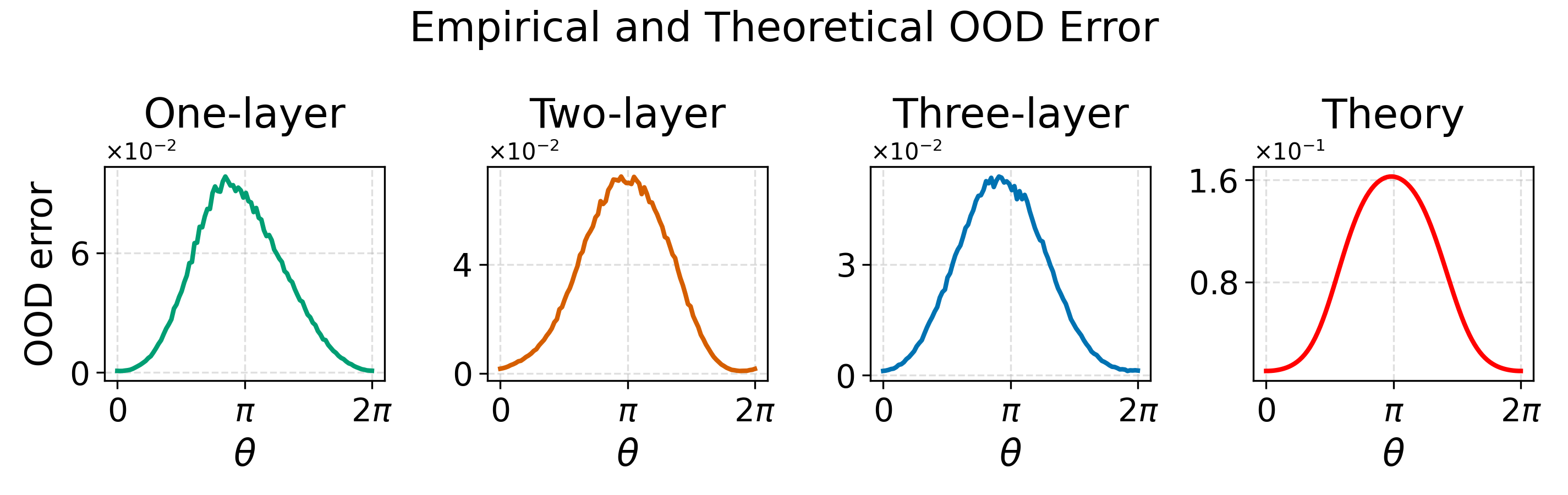}
        \caption{Scenario 1}
        \label{subfig:single peak}
    \end{subfigure}
    \hfill
    \begin{subfigure}{0.47\linewidth}
        \centering
        \includegraphics[width=\linewidth]{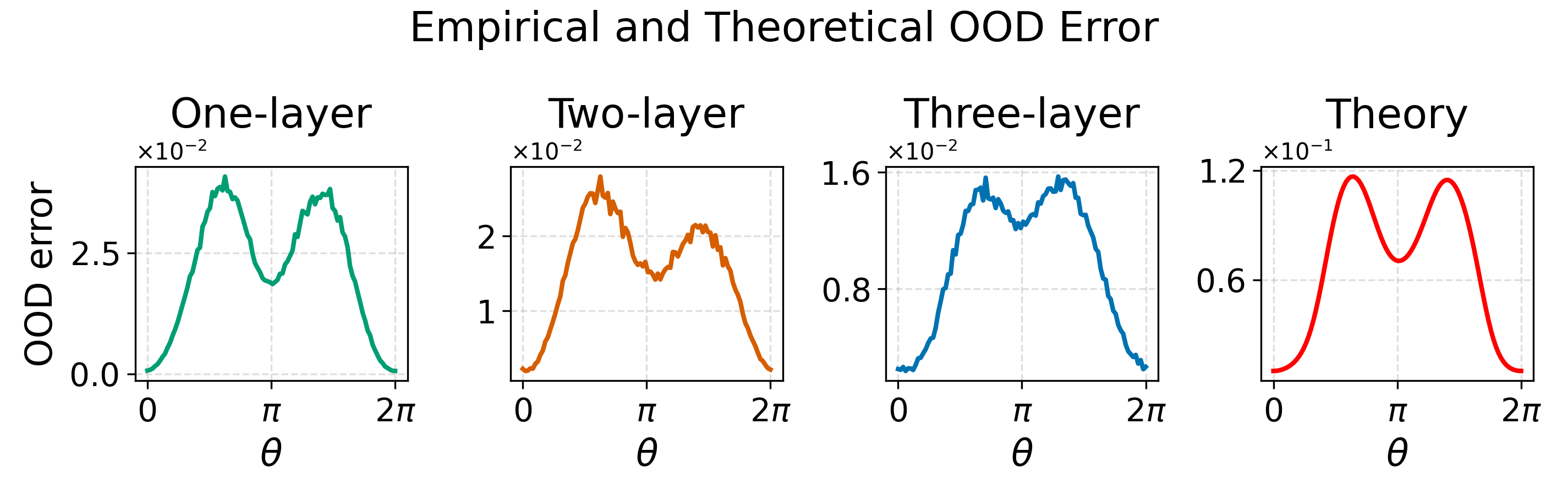}
        \caption{Scenario 2}
        \label{subfig:double peak}
    \end{subfigure}
    \caption{OOD Generalization on Standard Transformer}
    \label{fig:ood on standard transformer}
\end{figure}

In Scenario~1, the OOD error increases monotonically as the rotation angle ranges from $0$ to $\pi$, which aligns with the intuition that larger distribution shifts lead to higher OOD error. In contrast, Scenario~2 exhibits a ``double-peak'' phenomenon when the rotation is applied along specific directions. To explain this, we note that the OOD error is closely related to $\alpha$-type attention, whose form is of $u^\top Q u$. Our analysis reveals that the eigenvectors of the parameter matrix $Q$ are trained to align with training features, which indicates that the $\alpha$-type attention can be approximated by $\sum_{k} \lambda_k(v_k^\top u)^2$ for some eigenvalues $\lambda_k$. An example of Scenario~2 arises when $u$ is rotated within $\operatorname{span}(v_1, v_{\mathrm{o}})$, where $v_{\mathrm{o}} \notin V$. Under this rotation, the projection of $u$ onto $v_1$ is $v_1 \cos(\theta)$, and its projections onto all other training features are zero. Then, the $\alpha$-type attention has the same trend as $\cos^2 (\theta)$ and the trend of OOD error is opposite. 

\begin{figure}[ht]
    \centering
    \begin{subfigure}{0.49\linewidth}
        \centering
        \includegraphics[width=\linewidth]{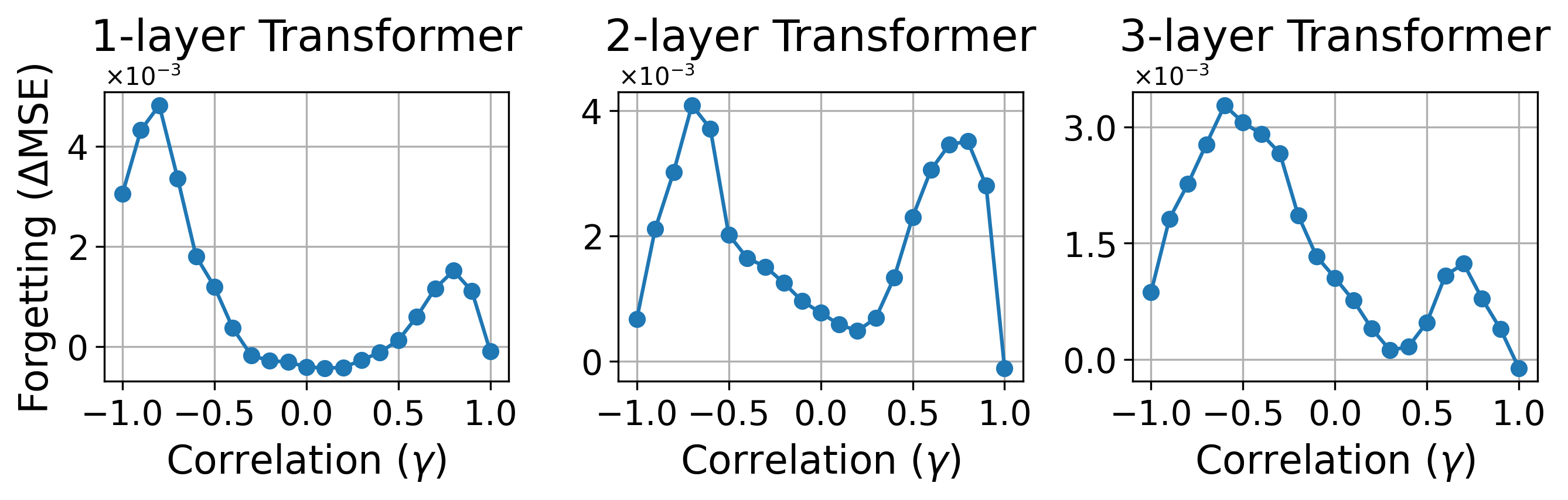}
        \caption{Forgetting vs. Correlation for 1-layer, 2-layer, and 3-Layer Transformers}\label{subfig: forgetting layer123}
    \end{subfigure}
   \hfill
   \begin{subfigure}{0.49\linewidth}
        \centering
        \includegraphics[width=\linewidth]{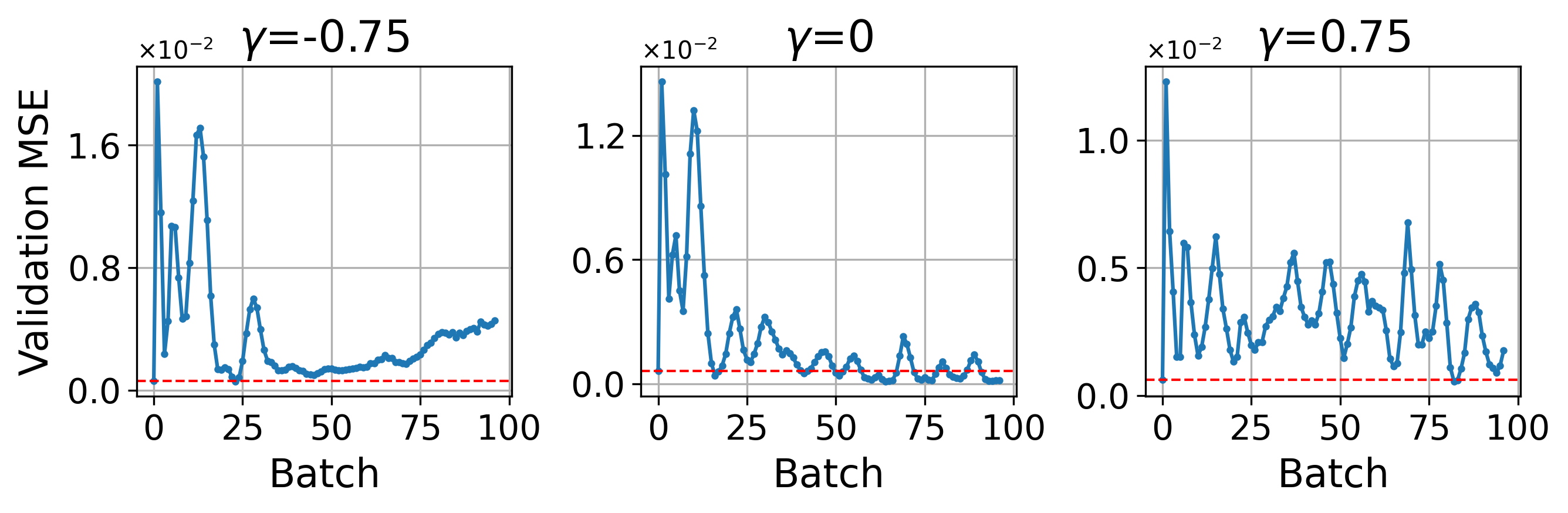}
        \caption{Performance on Training Domain During Finetuning under $\gamma = -0.75, 0, 0.75$}\label{subfig: forgetting -0.75 0 0.75}
    \end{subfigure}
    \caption{Forgetting Behavior on Standard Transformer}
    \label{fig: forgetting of tinytransformer}
\end{figure}

Moreover, we investigate the behavior of forgetting of standard transformers with two features to validate our observation in \cref{subfig: 2 features}. We investigate one, two and three-layers transformers and finetune on their last layer. As shown in \cref{subfig: forgetting layer123}, we observe that the changing behaviors of the forgetting are similar to those captured by our theory, i.e., \cref{subfig: 2 features}. When the correlation coefficient approaches $1$, the forgetting is not increasing as expected because the model is finetuned on almost the same data distribution, which leads to a forgetting near $0$. To have a closer look at the 1-layer transformer in \cref{subfig: forgetting layer123}, we track the performance on the training domain  under three distinct correlation condition: $-0.75$, $0$ and $0.75$, as shown in \cref{subfig: forgetting -0.75 0 0.75}. The forgetting is positive when $\gamma = \pm 0.75$, whereas it becomes negative when $\gamma = 0$.

\section{Extension of Multi-layer Transformer}\label{sec: multi layer transformer}
As an extension, we further investigate the behavior of forgetting under two distinct function classes, as shown in \cref{sec: multi layer transformer}. We conduct experiments on one, two and three-layer standard 16-dim \textit{TinyTransformer}. The data is generated the same as \cref{subfig: 2 features}. Clearly, the performance of forgetting reveals a similar behavior as two linear function classes, which further validates  our theory. 
\begin{figure}[ht]
    \centering

    \begin{subfigure}{0.48\linewidth}
        \centering
        \includegraphics[width=\linewidth]{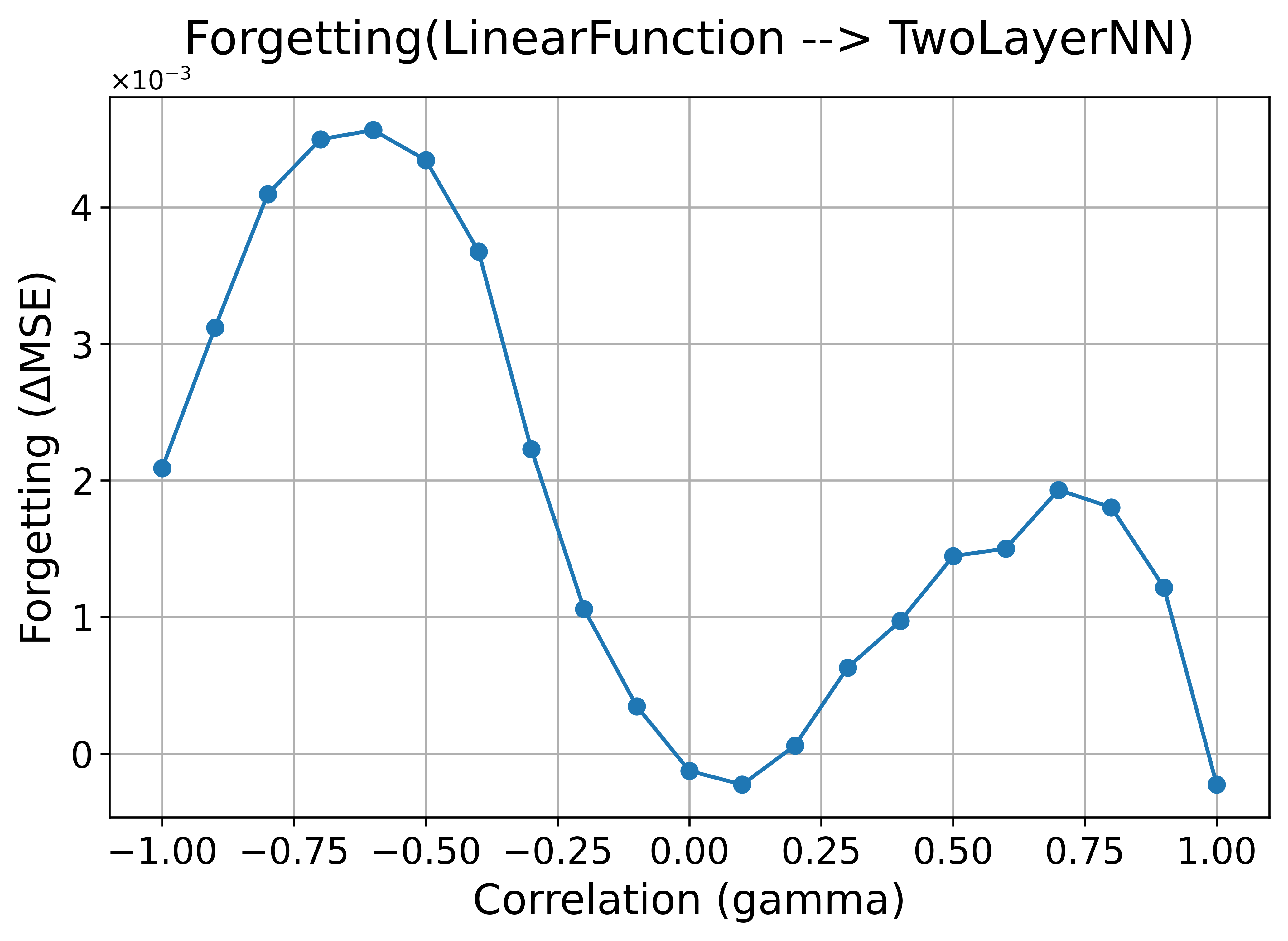}
        \caption{Linear function class to two-layer NN class}
        \label{subfig:linear_to_twolayerNN}
    \end{subfigure}
    \hfill
    \begin{subfigure}{0.48\linewidth}
        \centering
        \includegraphics[width=\linewidth]{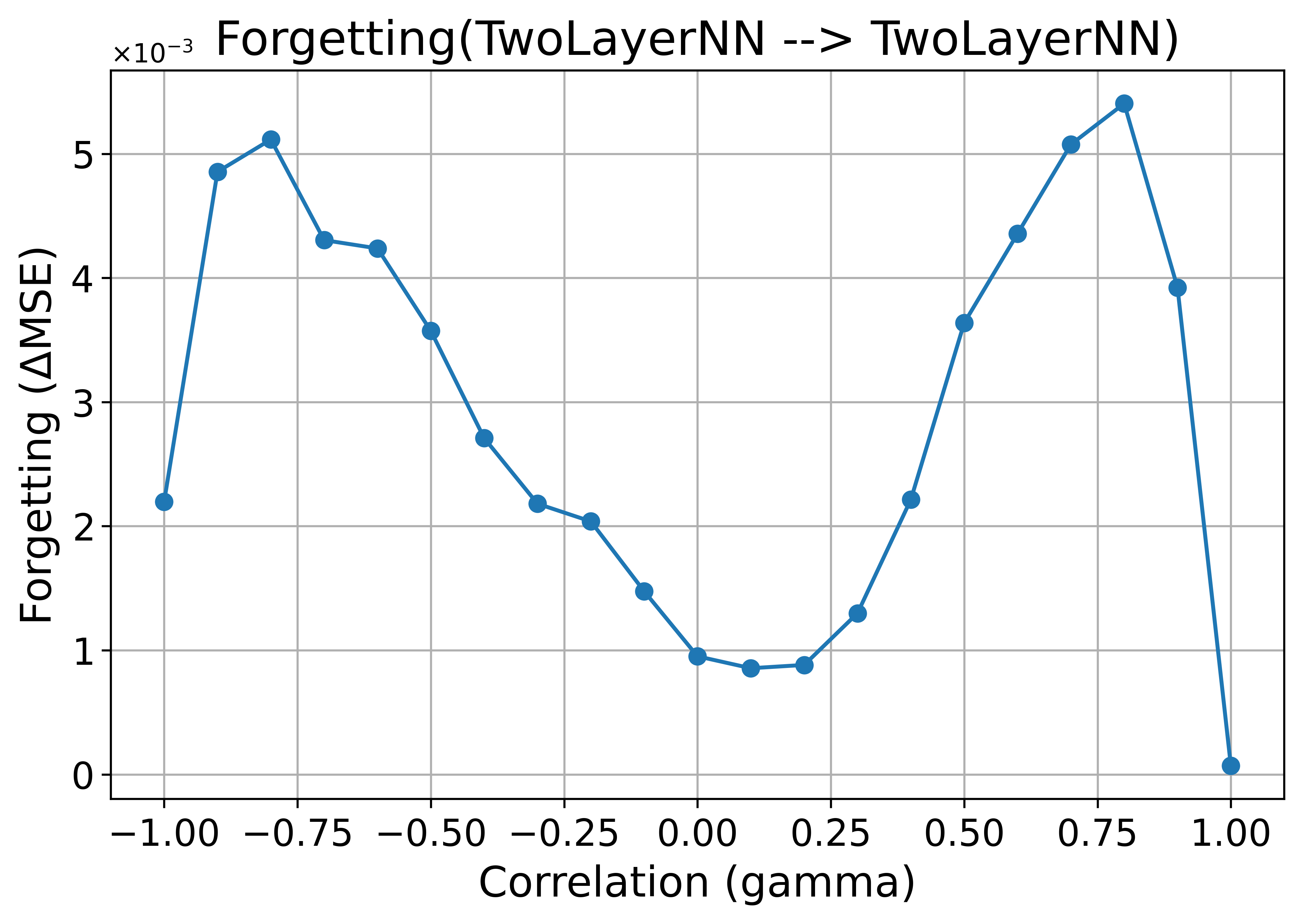}
        \caption{Two-layer NN class to two-layer NN class}
        \label{subfig:twolayerNN2twolayerNN}
    \end{subfigure}

    \caption{Forgetting Behavior Across Different Function Classes}
    \label{fig:function_class_transfer}
\end{figure}

\section{Setup of Bert Experiments}\label{app: bert setup}

\paragraph{Dataset.}
We conduct experiments on the Microsoft Research Paraphrase Corpus (MRPC), which contains 3,668 training pairs, 408 validation pairs, and 1,725 test pairs. The training data is further split into two subsets: 90\% for the training phase and the remaining 10\% for the finetuning phase. Unless otherwise stated, all evaluations are performed on the validation set.

\paragraph{Model.}
We use the \texttt{BERT-base-uncased} model, which consists of 12 Transformer encoder layers with a hidden dimension of 768. A linear classification head is applied to the \texttt{[CLS]} token representation for binary classification.

\paragraph{Preprocessing.}
We tokenize inputs using the BERT tokenizer with a maximum sequence length of 128. All sequences are padded or truncated to this length.

\paragraph{GPU information.}
All experiments were conducted on a high-performance computing cluster equipped with NVIDIA Volta V100 GPUs (16GB/32GB memory per GPU) and Intel Xeon CPU processors. We used 1 GPU per job for training and evaluation.

\paragraph{Configuration of training phase in \Cref{sec:mrpc}}
The model is trained using the AdamW optimizer with a learning rate of
$5\times10^{-5}$ and a batch size of 8. We initialize our model from the pretrained BERT-base-uncased checkpoint and train it on the last six encoder layers. Training is performed for 2 epochs using the cross-entropy loss. 

\paragraph{Configuration of finetuning phase in \Cref{sec:mrpc}}
The model is trained using the AdamW optimizer with a learning rate of $1 \times 10^{-5}$ and a batch size of 32. We initialize the model with the weights obtained from the training phase and finetune only the last encoder layer. Finetuning is performed for 1 epoch using the cross-entropy loss.

\paragraph{Configuration of training phase in \Cref{sec:natural shift}}
The model is trained using the AdamW optimizer with a learning rate of
$5\times10^{-5}$ and a batch size of 32. We initialize our model from the pretrained BERT-base-uncased checkpoint and train it on all the 12 encoder layers. Training is performed for 3 epochs using the cross-entropy loss. 

\paragraph{Configuration of finetuning phase in \Cref{sec:natural shift}}
The model is trained using the AdamW optimizer with a learning rate of $2 \times 10^{-6}$ and a batch size of 32. We initialize the model with the weights obtained from the training phase and finetune only the last encoder layer. Finetuning is performed for only 1 epoch using the cross-entropy loss.

\section{Supporting Lemmas}\label{sec: supporting lemmas}
\textbf{Notations.} 
For simplification, we adopt the following notations. For any feature $v, v'$ ($v,v'\in V$ or $v,v'\in U$, $v\not = v'$), we have:
\begin{align*}
    \alpha_{v}^{(t)} &= \alpha_{v}(Q^{(t)}), \quad \quad
    \beta_{v',v}^{(t)} = \beta_{v',v}(Q^{(t)})\\
    \delta_{v,\alpha}^{(t)} &= {v} ^\top \nabla L^{\text{(ft)}}(Q^{(t)}) v, \\
    \delta_{v',v,\beta}^{(t)} &= {v'} ^\top \nabla L^{\text{(ft)}}(Q^{(t)}) v.
\end{align*}
For all $k\in[K]$, we denote $\Attn_v^{(t)} = \Attn_v( Q^{(t)};E)$. 
Additionally, we denote 
\begin{equation}\label{def of a1star}
    \alpha^{(*)} = \left( 1-\frac{1}{K} \right) \log \left( \frac{ (K-1) \left(\sqrt{\frac{K}{2(K-1)}} -\sqrt{\epsilon}\right)}{\sqrt{\epsilon}} \right).
\end{equation}

In this section, we present our supporting lemmas. First of all, we derive the expression of the prediction formally, which is introduced in \cref{eq: prediction}.

\begin{lemma}\label{lemma:calculate haty}
    Given a model parameter $Q$, for any embedding matrix $E$ with task parameter $w$, the prediction output is given by:
    \begin{equation*}
        \hat{y}(Q;E) = \sum_{v\in W}^K \Attn_v(Q;E)\langle w, v \rangle,
    \end{equation*}
    where $W = V$ if the prompt is constructed based on $V$ and $W = U$ if the prompt is constructed based on $U$.
\end{lemma}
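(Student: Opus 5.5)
The plan is to unfold the definitions directly. By \Cref{eq:def of prediction haty}, $\hat{y}(Q;E)$ is the $(N+1)$-th entry of $F_{\text{SA}}(Q;E) = (E_y)_{1:N}\cdot \text{softmax}\big((E_x)_{1:N}^\top Q E_x\big)$. Its $(N+1)$-th column is obtained by applying the softmax to the vector $(E_x)_{1:N}^\top Q x_{\qry}$, whose $i$-th entry is $x_i^\top Q x_{\qry}$. Hence
\begin{equation*}
\hat{y}(Q;E) = \sum_{i=1}^N y_i\,\frac{\exp(x_i^\top Q x_{\qry})}{\sum_{n=1}^N \exp(x_n^\top Q x_{\qry})} = \sum_{i=1}^N y_i\,\textbf{attn}_i(Q;E),
\end{equation*}
where $\textbf{attn}_i$ is the per-token attention score defined at the start of \Cref{sec:main results}.

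Next I would use the noiseless labels $y_i = \langle w, x_i\rangle$. Every input $x_i$ lies in the feature set $W$, which is $V$ for training prompts and $U$ for OOD prompts. I would therefore partition the index set $\{1,\dots,N\}$ according to which feature $v\in W$ each $x_i$ equals. On the block $\{i : x_i = v\}$ the label is the constant $\langle w, v\rangle$. Factoring it out leaves $\sum_{i:x_i=v}\textbf{attn}_i(Q;E)$, which is exactly $\Attn_v(Q;E)$ by \Cref{eq:def of Attnk original}. Summing over $v\in W$ gives the claimed identity. If some feature does not appear in the prompt, its block is empty and it contributes zero, which is consistent with $\Attn_v=0$.

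There is no real obstacle here. The only point needing care is bookkeeping: after the reparameterization the query column is excluded from keys and values, so the softmax normalizes only over the $N$ context tokens. I would also note that $\sum_{v\in W}\Attn_v = 1$, which the paper uses afterwards.
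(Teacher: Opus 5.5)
Your proposal is correct and follows the paper's proof, which consists only of combining the definition of $\hat{y}_\qry$ as the $(N+1)$-th entry of $F_{\text{SA}}$ with the aggregation $\Attn_v(Q;E)=\sum_{i:x_i=v}\textbf{attn}_i(Q;E)$. Your explicit grouping of the context tokens by feature, using the noiseless labels $y_i=\langle w,x_i\rangle$, spells out the bookkeeping that the paper leaves implicit.
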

\begin{proof}
The proof follows by combining \cref{eq:def of Attnk original,eq:def of prediction haty}. 
\end{proof}
 
We next provide the following lemma on the expression of the entry-wise gradient in \cref{eq:update of Q}.
\begin{lemma}\label{lemma:gradient}
    Given a model parameter $Q$ and an embedding $E$ defined in \cref{eq:ori}, let $\hat{y}_\qry = \hat{y}_\qry (Q;E)$, $E_{x,i}$ be the $i^{th}$ column of $E_x$, and $ \attn_v = \Attn_v(Q;E)$. The gradient of the finetuning loss defined in \cref{def: finetune loss} w.r.t. $Q$ is given by
    \begin{equation*}
        \nabla L^{\text{(ft)}}(Q) = \E \left[ (\hat{y}_\qry  - \langle w, x_\qry \rangle) \sum_{i,l\in[N]}  \attn_{u_i} \attn_{u_l} (E_{x,i} - E_{x,l}) {E_{x,{N+1}} } ^ \top  y_i\right],
    \end{equation*}
    where the expectation is taken over the distribution of the embedding matrix $E$ during finetuning phase and the corresponding functions $w$. Similarly, the gradient of the training loss defined in \cref{eq:def of loss} w.r.t. $Q$ is:
    \begin{equation*}
        \nabla L^{\text{(tr)}}(Q) = \E \left[ (\hat{y}_\qry  - \langle w, x_\qry \rangle) \sum_{i,l\in[N]}  \attn_{v_i} \attn_{v_l} (E_{x,i} - E_{x,l}) {E_{x,{N+1}} } ^ \top  y_i\right],
    \end{equation*}
    where the expectation is taken over the distribution of the training embedding matrix $E$ and the corresponding functions $w$. 
\end{lemma}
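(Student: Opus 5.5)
The gradient formula follows from the chain rule applied to the squared loss, using the explicit softmax form of the prediction. I will treat the finetuning loss; the training-loss case is identical with $U$ replaced by $V$.

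\textbf{Step 1: reduce to the gradient of the prediction.} Write $L^{\text{(ft)}}(Q)=\frac12\E[(\hat y_\qry(Q;E)-\langle w,x_\qry\rangle)^2]$. This uses the fact that summing the indicators $\mathbf{1}_{\{x_\qry=u\}}$ over $u\in U$ gives $1$, because the query always lies in $U$. The integrand is smooth in $Q$, and $w$ is Gaussian with finite moments while $E$ ranges over a finite set given $w$. Hence differentiation under the expectation is justified, and
\[
\nabla L^{\text{(ft)}}(Q)=\E\big[(\hat y_\qry-\langle w,x_\qry\rangle)\nabla_Q\hat y_\qry\big].
\]

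\textbf{Step 2: differentiate the softmax.} Write $\hat y_\qry=\sum_{i\in[N]} y_i\,\mathbf{attn}_i$ with $\mathbf{attn}_i=\exp(z_i)/\sum_n\exp(z_n)$ and $z_i=E_{x,i}^\top Q E_{x,N+1}$. Then $\nabla_Q z_i=E_{x,i}E_{x,N+1}^\top$. The softmax Jacobian is $\partial\mathbf{attn}_i/\partial z_l=\mathbf{attn}_i(\mathbf 1_{i=l}-\mathbf{attn}_l)$, which gives
\[
\nabla_Q\mathbf{attn}_i=\mathbf{attn}_i\Big(E_{x,i}-\sum_l\mathbf{attn}_lE_{x,l}\Big)E_{x,N+1}^\top
=\sum_{l\in[N]}\mathbf{attn}_i\mathbf{attn}_l(E_{x,i}-E_{x,l})E_{x,N+1}^\top .
\]
The second equality uses $\sum_l\mathbf{attn}_l=1$. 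Multiplying by $y_i$ and summing over $i$ gives the double sum in the statement, with token-level attention weights.

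\textbf{Step 3: match the paper's notation.} The statement writes $\attn_{u_i}$, where $u_i$ denotes the feature equal to token $x_i$. I will read it at the token level: by \cref{eq:def of Attnk original}, $\mathbf{attn}_i$ summed over the tokens sharing a feature gives $\Attn$ of that feature, and I will identify the notation accordingly.

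The main obstacle is this notational step. Taken literally, each pair $(i,l)$ would be weighted by feature-level scores $\Attn_{x_i}\Attn_{x_l}$, which are $\Attn$ evaluated at the features of tokens $i$ and $l$. The prompts contain equal counts of each feature, say $N/K$ copies. The exact derivation yields token-level products $\mathbf{attn}_i\mathbf{attn}_l$, and tokens sharing a feature have equal $\mathbf{attn}$. Reading the statement with feature-level scores therefore rescales the sum by the constant $(K/N)^2$. I would state the identity with token-level weights, or equivalently regroup it as $\sum_{v,v'}\Attn_v\Attn_{v'}(v-v')E_{x,N+1}^\top\langle w,v\rangle$. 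In that regrouped form the pairs with $v=v'$ vanish, which is what the later lemmas actually use.
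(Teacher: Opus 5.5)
Your derivation is correct and is the same direct chain-rule computation (reduce to $\nabla_Q\hat y_\qry$, apply the softmax Jacobian $\partial\mathbf{attn}_i/\partial z_l=\mathbf{attn}_i(\mathbf{1}_{i=l}-\mathbf{attn}_l)$, then use $\sum_l\mathbf{attn}_l=1$) that underlies Lemma A.1 of \cite{huang2024in-context}; the paper simply cites that lemma instead of giving a proof. Your token-level reading of the weights is the right one: with $N/K$ copies of each feature, the literal feature-level reading $\Attn_{x_i}\Attn_{x_l}$ differs from the true gradient by the constant factor $(N/K)^2$, whereas your regrouped form $\sum_{v,v'}\Attn_v\Attn_{v'}(v-v')E_{x,N+1}^\top\langle w,v\rangle$ is exact and is the form from which the later feature-level projections are computed.
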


\begin{proof}
    The proof follows from Lemma A.1 in \cite{huang2024in-context}. 
\end{proof}

By combining \cref{eq:def of weights,eq:update of Q}, the updates of the two types of attention weights in the finetuning phase are given as follows. For any $u,u'\in U \ (u\not= u')$, we have
\begin{align}\label{eq:alpha and beta dynamics}
    \alpha_{u}^{(t+1)} &= \alpha_{u}^{(t)} + \eta_t \cdot u ^\top \nabla L^{\text{(ft)}}(Q^{(t)}) u, \nonumber \\
    &= \alpha^{(t)}_{k} + \eta_t \cdot \delta_{u,\alpha}^{(t)},\nonumber\\
    \beta_{u',u}^{(t+1)} &= \beta_{u',u}^{(t)} + \eta_t \cdot {u'} ^\top \nabla L^{\text{(ft)}}(Q^{(t)}) u \nonumber\\
    &= \beta_{u',u}^{(t)} + \eta_t \cdot \delta_{u',u,\beta}^{(t)}.
\end{align}

We present the following lemma to calculate the projection of gradients onto feature directions.

\begin{lemma}\label{lemma: update of attention weights}
    Suppose $u,u'\in U$ where $u\neq u'$. For any $t\ge T_1$,  we have:
    \begin{align*}
        \delta_{u,\alpha}^{(t)} &= \E \left[ \mathbf{1}_{\{x_\qry = u \}  } \Attn_u^{(t)} \left( \sum_{ w\in U,w\not = u}{\Attn_{w}^{(t)}}^2 + {(1-\Attn^{(t)}_{u})}^2 \right)\right],\\
        \delta_{u',u,\beta}^{(t)} &= \E \left[ \mathbf{1}_{\{x_\qry = u \}  } \Attn^{(t)}_{u'} \left( \sum_{w\in U, w\not = u}{\Attn^{(t)}_{w}}^2 - \Attn^{(t)}_{u'} - \Attn^{(t)}_{u}{(1-\Attn^{(t)}_{u})} \right)\right],
    \end{align*}
    where the expectation is taken on the distribution of the embedding matrix $E$.
\end{lemma}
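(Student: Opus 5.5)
Starting from \Cref{lemma:gradient}, I will project $\nabla L^{\text{(ft)}}(Q^{(t)})$ onto the directions $u\,u^\top$ and $u'\,u^\top$. Fix the query $x_\qry = u$ through the indicator, so $E_{x,N+1}=u$ and the right factor of the gradient gives $E_{x,N+1}^\top u = 1$. On the left, $u^\top(E_{x,i}-E_{x,l})$ is $1$ when $E_{x,i}=u\neq E_{x,l}$, $-1$ in the reverse case, and $0$ otherwise, by orthonormality of $U$. Similarly, $u'^\top(E_{x,i}-E_{x,l})$ picks out tokens equal to $u'$. Grouping tokens by feature turns the sum over pairs $i,l$ of $\attn_i\attn_l$ into sums of the aggregated scores $\Attn_w$. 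Using $\sum_{w\in U}\Attn_w = 1$, this yields $\sum_{l}\attn_l(\mathbf{1}_{x_i=u}-\mathbf{1}_{x_l=u}) = \mathbf{1}_{x_i=u} - \Attn_u$.

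For the $\alpha$ case the inner sum becomes $\sum_i \attn_i y_i(\mathbf{1}_{x_i=u} - \Attn_u)$. Writing $y_i = \langle w, x_i\rangle$ and grouping by feature, this equals $\Attn_u\langle w,u\rangle - \Attn_u\sum_{w'}\Attn_{w'}\langle w,w'\rangle$. For the $\beta$ case it is $\Attn_{u'}\langle w,u'\rangle - \Attn_{u'}\sum_{w'}\Attn_{w'}\langle w,w'\rangle$. The residual is $\hat y_\qry - \langle w,u\rangle = \sum_{w'}\Attn_{w'}\langle w,w'\rangle - \langle w,u\rangle$ by \Cref{lemma:calculate haty}.

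The key observation is that under the ``same number of each feature'' assumption, the scores $\Attn_{w'}$ depend only on $Q$ and the query, not on the task vector $w$. So I can take the expectation over $w\sim N(0,I)$ first, using $\E[\langle w,a\rangle\langle w,b\rangle] = \langle a,b\rangle$ together with orthonormality of $U$. Set $c_{w'} = \Attn_{w'} - \mathbf{1}_{w'=u}$, so the residual is $\sum_{w'} c_{w'}\langle w,w'\rangle$.

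For $\alpha$, the other factor is $\Attn_u\sum_{w'}(\mathbf{1}_{w'=u}-\Attn_{w'})\langle w,w'\rangle = -\Attn_u\sum_{w'} c_{w'}\langle w,w'\rangle$. The product therefore has $w$-expectation $-\Attn_u\sum_{w'}c_{w'}^2 = -\Attn_u\bigl((1-\Attn_u)^2+\sum_{w'\neq u}\Attn_{w'}^2\bigr)$.

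For $\beta$, the other factor is $\Attn_{u'}\sum_{w'}(\mathbf{1}_{w'=u'}-\Attn_{w'})\langle w,w'\rangle$. Its $w$-expectation against the residual is $\Attn_{u'}\bigl(c_{u'} - \sum_{w'}\Attn_{w'}c_{w'}\bigr)$. Here $c_{u'}=\Attn_{u'}$ and $\sum_{w'}\Attn_{w'}c_{w'} = \sum_{w'}\Attn_{w'}^2 - \Attn_u$. Combining gives
\begin{equation*}
\Attn_{u'}\Bigl(\Attn_{u'} + \Attn_u(1-\Attn_u) - \sum_{w'\neq u}\Attn_{w'}^2\Bigr),
\end{equation*}
which is the negative of the claimed bracket.

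The remaining issue is the overall sign: both results come out as minus the displayed quantities. The statement matches the convention in \cref{eq:alpha and beta dynamics}, where the update is written with a ``$+$'' sign, so $\delta$ denotes the negative gradient projection. I will therefore state explicitly that the expressions are the projections of $-\nabla L^{\text{(ft)}}$, consistent with that update rule. The main obstacle is bookkeeping: carefully collapsing the double sum over token pairs into feature-level aggregates and tracking these signs. The expectation over $E$ then remains as stated.
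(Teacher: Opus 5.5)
Your derivation is correct, and it is essentially the computation behind the paper's proof, which simply cites Lemma A.2 of \cite{huang2024in-context}. Your sign resolution is also right: read literally as $\delta_{u,\alpha}^{(t)}=u^\top\nabla L^{\text{(ft)}}(Q^{(t)})u$, the definition yields the negatives of the displayed formulas, and the displayed formulas are really the projections of $-\nabla L^{\text{(ft)}}$, as in \cref{eq:alpha and beta dynamics}.

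One small correction to your justification concerns why you can integrate out the task vector first. The scores depend only on $Q$ and the inputs $x_1,\dots,x_N,x_\qry$, because the key--query block acts only on the input rows of $E$, and the inputs are drawn independently of the task vector. So conditioning on the inputs works for arbitrary token counts, not only under the equal-count assumption. Also, it is the right projection $x_\qry^\top u=\mathbf{1}_{\{x_\qry=u\}}$ that produces the indicator, not a fixed query.
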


\begin{proof}
    The proof follows from Lemma A.2 in \cite{huang2024in-context}.
\end{proof}

In the following lemma, we establish the connection between $\delta_{u,\alpha}^{(t)}$ and $\delta_{u',u,\beta}^{(t)}$, which is introduced in \Cref{lemma: alpha = sum beta}.

\begin{lemma}\label{lemma: alpha = -sum beta}
    For any time $t$ and any $u\in U$, we have:
    \begin{equation*}
        \delta_{u,\alpha}^{(t)} = - \sum_{u'\in U, u'\not = u }\delta_{u',u,\beta}^{(t)} .
    \end{equation*}
\end{lemma}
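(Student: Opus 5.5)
The plan is to read the identity off the explicit formulas in \Cref{lemma: update of attention weights}, working pointwise inside the expectation. Fix $u\in U$ and a prompt $E$ with $x_\qry = u$. Abbreviate $a_w = \Attn_w^{(t)}$ for $w\in U$. These attention scores satisfy $\sum_{w\in U} a_w = 1$, because the aggregated attention scores form a softmax distribution over the $K$ features. Write $S = \sum_{w\neq u} a_w^2$.

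The key step is to sum the integrand of $\delta_{u',u,\beta}^{(t)}$ over all $u'\neq u$. Since $\sum_{u'\neq u} a_{u'} = 1-a_u$, we obtain
\begin{align*}
\sum_{u'\neq u} a_{u'}\bigl(S - a_{u'} - a_u(1-a_u)\bigr)
&= (1-a_u)S - S - a_u(1-a_u)^2 \\
&= -a_u S - a_u(1-a_u)^2 \\
&= -a_u\bigl(S + (1-a_u)^2\bigr).
\end{align*}
This last expression is exactly the negative of the integrand of $\delta_{u,\alpha}^{(t)}$ in \Cref{lemma: update of attention weights}.

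Finally, multiply by the indicator $\mathbf{1}_{\{x_\qry=u\}}$ and take expectations. The sum over $u'$ is finite, so it exchanges with the expectation by linearity. This gives $\sum_{u'\neq u}\delta^{(t)}_{u',u,\beta} = -\delta^{(t)}_{u,\alpha}$.

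The only point that needs care, and the one I expect to be the main obstacle, is confirming that the formulas in \Cref{lemma: update of attention weights} hold at every time $t$ and not only for $t\ge T_1$. Their derivation uses only the softmax structure, not the particular value of $Q$, so they should hold for all $t$. The same computation with $V$ in place of $U$ covers the training phase. Beyond that check, the argument is purely algebraic.
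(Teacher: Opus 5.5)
Your proposal is correct and is essentially the paper's proof. Both sum the $\beta$-integrands over $u'\neq u$, use $\sum_{u'\neq u}\Attn_{u'}^{(t)} = 1-\Attn_u^{(t)}$ to collapse the cross term, and regroup to $-\Attn_u^{(t)}\bigl(\sum_{w\neq u}{\Attn_w^{(t)}}^2 + (1-\Attn_u^{(t)})^2\bigr)$, which is the negative of the $\alpha$-integrand; your worry about the range of $t$ is harmless, since those formulas depend on $Q^{(t)}$ only through the softmax scores.
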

\begin{proof}
The proof is developed by directly deriving $\sum_{u'\in U, u'\not = u }\delta_{u',u,\beta}^{(t)}$ as follows.
    {\small
    \begin{align*}
        &\sum_{u'\in U, u'\not = u }\delta_{u',u,\beta}^{(t)} \\
        =& \sum_{u'\in U, u'\not = u } \E \left[ \mathbf{1}_{\{x_\qry = u \}  } \Attn^{(t)}_{u'} \left( \sum_{w\in U, w\not = u}{\Attn^{(t)}_{w}}^2 - \Attn^{(t)}_{u'} - \Attn^{(t)}_{u}{(1-\Attn^{(t)}_{u})} \right)\right] \\
        =&  \mathbb{E}\left[\mathbf{1}_{\left\{x_{\text {query }}=u\right\} }  \left( \sum_{u'\in U, u'\not = u}  \Attn_{u'}^{(t)} \right) \cdot \left(\sum_{w\in U, w\not = u} \Attn_w^{(t)^2}-\Attn_u^{(t)}\left(1-\Attn_u^{(t)}\right)\right)\right]\\
        &-   \mathbb{E}\left[\mathbf{1}_{\left\{x_{\text {query }}=u\right\} }{\sum_{u'\in U,u' \neq u}\Attn_{u'}^{(t)}}^2 \right]\\
        =&  \mathbb{E}\left[\mathbf{1}_{\left\{x_{\text {query }}=u\right\}} (1- \Attn_u^{(t)}) \cdot\left(\sum_{w\in U, w\not = u} \Attn_w^{(t)^2}-\Attn_u^{(t)}\left(1-\Attn_u^{(t)}\right)\right)\right] \\
        &-   \mathbb{E}\left[\mathbf{1}_{\left\{x_{\text {query }}=u\right\} }{\sum_{u'\in U,u' \neq u}\Attn_{u'}^{(t)}}^2 \right]\\
        =& \mathbb{E}\left[\mathbf{1}_{\left\{x_{\text {query }}=u\right\} }\left(- \Attn_u^{(t)}\right)\cdot\sum_{w\in U, w\neq u} \Attn_w^{(t)^2}-\Attn_u^{(t)}\left(1-\Attn_u^{(t)}\right)^2\right]\\
        =& -\mathbb{E}\left[\mathbf{1}_{\left\{x_{\text {query }}=u\right\}} \left(\Attn_u^{(t)}\right)\cdot \left(\sum_{w\in U, w\neq u} \Attn_w^{(t)^2}+\left(1-\Attn_u^{(t)}\right)^2\right)\right]\\
        =& -\delta_{u,\alpha}^{(t)}.
    \end{align*}
    }%
\end{proof}

This property can be also observed during the training phase by following from the same argument. We present it formally as follows.
\begin{lemma}\label{lemma: alpha = -sum beta tr}
    For any time $t$ and any $v\in V$, we have:
    \begin{equation*}
         {v} ^\top \nabla L^{\text{(tr)}}(Q^{(t)}) v = - \sum_{v'\in V, v'\not = v } {v'} ^\top \nabla L^{\text{(tr)}}(Q^{(t)}) v.
    \end{equation*}
\end{lemma}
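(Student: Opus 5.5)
The argument mirrors the proof of \Cref{lemma: alpha = -sum beta}; only the feature set changes from $U$ to $V$. We first need the training-phase analogue of \Cref{lemma: update of attention weights}. Apply the training-loss gradient from \Cref{lemma:gradient} and project it as $v^\top \nabla L^{\text{(tr)}}(Q) v$ and $v'^\top \nabla L^{\text{(tr)}}(Q) v$. Every prompt token lies in the orthonormal set $V$, so $v'^\top E_{x,i}$ is an indicator of $x_i = v'$, and $E_{x,N+1}^\top v$ restricts the expectation to $x_\qry = v$. The same computation as in Lemma A.2 of \cite{huang2024in-context} then gives
\begin{align*}
v^\top \nabla L^{\text{(tr)}}(Q) v &= -\E\Big[\mathbf{1}_{\{x_\qry=v\}}\Attn_v\Big(\sum_{w\in V, w\neq v}\Attn_w^2+(1-\Attn_v)^2\Big)\Big],\\
v'^\top \nabla L^{\text{(tr)}}(Q) v &= -\E\Big[\mathbf{1}_{\{x_\qry=v\}}\Attn_{v'}\Big(\sum_{w\in V, w\neq v}\Attn_w^2-\Attn_{v'}-\Attn_v(1-\Attn_v)\Big)\Big].
\end{align*}
Here $\Attn$ denotes $\Attn(Q^{(t)};E)$ for training prompts. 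The overall sign convention does not matter, because both sides of the claimed identity carry the same sign. The expectation over $w\sim N(0,I)$ uses $\E[\langle w,v\rangle\langle w,v'\rangle]=\mathbf{1}_{\{v=v'\}}$, which again relies on orthonormality of $V$.

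Next, sum the $\beta$-type expression over $v'\in V\setminus\{v\}$ inside the expectation. The key fact is the softmax normalization $\sum_{v'\neq v}\Attn_{v'} = 1-\Attn_v$ from \eqref{eq:approximate of Attnk}. Writing $S=\sum_{w\neq v}\Attn_w^2$, the sum becomes
\[
(1-\Attn_v)\big(S-\Attn_v(1-\Attn_v)\big)-S.
\]
This simplifies to $-\Attn_v S-\Attn_v(1-\Attn_v)^2$, which is exactly the negative of the bracketed $\alpha$-type integrand. Taking expectations, and using linearity to exchange the finite sum with the expectation, gives the claim.

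The only step needing care is the first one: obtaining the projected gradient formulas in the training domain. This is the step I would check most carefully. It must hold for every $t$, not only $t\ge T_1$, which it does because no assumption on $Q$ was used. It also requires matching the sign convention used in \Cref{lemma: update of attention weights}. Once those formulas are in hand, the remaining work is the purely algebraic identity above, identical to the finetuning case.
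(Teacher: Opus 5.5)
Your proposal is correct and takes the paper's route: the paper proves this lemma by invoking the same argument as \Cref{lemma: alpha = -sum beta}, namely summing the $\beta$-type expressions over $v'\neq v$ and using $\sum_{v'\neq v}\Attn_{v'}=1-\Attn_v$, with $V$ in place of $U$. Your projected-gradient formulas are right, including the minus signs that the paper's \Cref{lemma: update of attention weights} drops. Your simplification of the summed integrand to $-\Attn_v\big(S+(1-\Attn_v)^2\big)$ is also right.
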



According to \Cref{lemma: update of attention weights}, we observe that the update of $\beta$-type attention during the training phase has a symmetric property because their initialization condition remains the same. We formally present our observation in the following two lemmas.
\begin{lemma}\label{lemma: beta1 are equal}
    Consider the problem formulated in this work. Suppose the model is in the training phase. Given any $v\in V$, we have:
    \begin{equation*}
        \beta_{v',v}^{(t)} = \beta_{v'',v}^{(t)}.
    \end{equation*}
\end{lemma}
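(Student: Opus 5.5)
We argue by induction on the training iteration $t$. The induction hypothesis is that all $\beta$-type weights are symmetric: $\beta_{v',v}^{(t)}=\beta_{v'',v}^{(t)}$ for every query $v$ and every pair $v',v''\neq v$.

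\textbf{Base case.} We use the initialization of the training phase, following \cite{huang2024in-context}. There $Q^{(0)}$ has $v^\top Q^{(0)}v'=0$ for all $v\neq v'$ (e.g.\ $Q^{(0)}=0$, or any initialization under which every off-diagonal feature projection is equal). So the claim holds at $t=0$.

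\textbf{Inductive step.} By \cref{eq:update of Q}, the update is $\beta_{v',v}^{(t+1)}=\beta_{v',v}^{(t)}-\eta_t\, v'^\top\nabla L^{\text{(tr)}}(Q^{(t)})v$, so it suffices to show that the gradient projection $v'^\top\nabla L^{\text{(tr)}}(Q^{(t)})v$ does not depend on $v'\neq v$. The training-phase analogue of \Cref{lemma: update of attention weights} (same computation, from Lemma A.2 of \cite{huang2024in-context}) gives
\[
v'^\top\nabla L^{\text{(tr)}}(Q^{(t)})v=\E\Big[\mathbf{1}_{\{x_\qry=v\}}\Attn_{v'}^{(t)}\Big(\textstyle\sum_{w\neq v}{\Attn_w^{(t)}}^2-\Attn_{v'}^{(t)}-\Attn_v^{(t)}(1-\Attn_v^{(t)})\Big)\Big].
\]
Under the assumption that every training prompt contains each feature the same number of times, $\Attn_w^{(t)}$ is given by \cref{eq:approximate of Attnk} with query $v$. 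By the induction hypothesis, all $\beta_{v,w}^{(t)}$ with $w\neq v$ are equal, so $\Attn_w^{(t)}$ takes the same value for every $w\neq v$. Hence $\Attn_{v'}^{(t)}$ is the same for every $v'\neq v$, and so is the bracketed expression, which is symmetric in the non-query features. The expectation over $w\sim\mathcal N(0,I)$ has already been carried out in that lemma, and only the indicator on the query remains, so the gradient projection is identical for all $v'\neq v$. The equal updates then preserve equality, completing the induction.

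\textbf{Main obstacle.} There is an index mismatch. The attention formula \cref{eq:approximate of Attnk} uses $\beta_{v,v'}=v^\top Qv'$ with the query in the second slot, while the lemma concerns $\beta_{v',v}=v'^\top Qv$. To handle this I will strengthen the induction hypothesis to full permutation symmetry. The hypothesis becomes: all diagonal projections $\alpha_v^{(t)}$ are equal across $v$, and all off-diagonal projections $v'^\top Q^{(t)}v$ are equal across ordered pairs $v\neq v'$. Then \Cref{lemma: update of attention weights} shows that the $\alpha$-updates are equal for all $v$ and the $\beta$-updates are equal for all ordered pairs. The feature set is uniform, every query is equally likely, and the expressions are invariant under relabeling the features, which also rules out any asymmetry between different queries. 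The stated lemma is a special case of this stronger hypothesis.
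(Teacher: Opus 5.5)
Your proof is correct and is essentially the paper's induction: equal per-feature counts make $\Attn_{v'}^{(t)}$ identical for every non-query feature $v'$, so by the training analogue of \Cref{lemma: update of attention weights} the projections $v'^\top\nabla L^{\text{(tr)}}(Q^{(t)})v$ coincide, and equality propagates from $Q^{(0)}=0$. The ``index mismatch'' is only a typo in \cref{eq:approximate of Attnk}: by the definition of $\textbf{attn}_i$, a key $v'$ under query $v$ has logit $v'^\top Q v=\beta_{v',v}$, which is what the paper's proof uses, so your per-query hypothesis already closes the induction and the permutation-symmetric strengthening is not needed here (it is valid given $Q^{(0)}=0$ and a uniform query, and it additionally gives $\alpha_v^{(t)}$ equal across $v$, which the paper later appeals to only through ``symmetry'').
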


\begin{proof}
    We prove this lemma by mathematical induction. When $t = 0$, we have $\beta_{v',v}^{(0)} = \beta_{v'',v}^{(0)}$ according to the problem condition. We assume that $\beta_{v',v}^{(t)} = \beta_{v'',v}^{(t)}$ holds and prove that it also holds for time $t+1$. Equivalently, we only need to prove $\delta_{v',v,\beta}^{(t)} = \delta_{v'',v,\beta}^{(t)}$. Under our assumption that the prompts consist of the same number of features from their corresponding feature sets, we have:
    \begin{align*}
        \Attn^{(t)}_{v'} &= \frac{\exp(\beta_{v',v}^{(t)})}{\exp(\alpha_{v}^{(t)}) + \sum_{v'\in v',v\neq v}\exp(\beta_{v',v}^{(t)}) }, \\
        \Attn^{(t)}_{v''} &= \frac{\exp(\beta_{v'',v}^{(t)})}{\exp(\alpha_{v}^{(t)}) + \sum_{v''\in V,v''\neq v}\exp(\beta_{v'',v}^{(t)}) }. 
    \end{align*} 
    Then, we have $\Attn^{(t)}_{v'} = \Attn^{(t)}_{v''}$. By combining this observation and \Cref{lemma: update of attention weights}, we complete the proof.

\end{proof}

\begin{lemma} \label{lemma: beta = -1/K alpha}
   Consider the problem formulated in this work.  Suppose the model is in the training phase. Given $v\in V$, for all $v'\in V (v\neq v')$, we have:
    \begin{equation*}
        \beta_{v',v}^{(t)} = -\frac{\alpha_{v}^{(t)}}{K-1}.
    \end{equation*}
\end{lemma}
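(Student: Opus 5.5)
We argue by induction on the training iteration $t$, combining the two structural facts already available for the training phase: the gradient identity of \Cref{lemma: alpha = -sum beta tr} and the equality of all $\beta$-type weights in \Cref{lemma: beta1 are equal}. The base case is the initialization. We take the initialization used in \cite{huang2024in-context}, namely $Q^{(0)}=0$ (or any initialization with $\alpha_v^{(0)}=\beta_{v',v}^{(0)}=0$). Then $\beta_{v',v}^{(0)}=0=-\alpha_v^{(0)}/(K-1)$, so the claim holds at $t=0$.

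For the inductive step, assume $\beta_{v',v}^{(t)}=-\alpha_v^{(t)}/(K-1)$ for all $v'\neq v$. By the training-phase version of \cref{eq:alpha and beta dynamics}, the updates are
\begin{align*}
\alpha_v^{(t+1)}-\alpha_v^{(t)} &= -\eta_t\, v^\top\nabla L^{\text{(tr)}}(Q^{(t)})v,\\
\beta_{v',v}^{(t+1)}-\beta_{v',v}^{(t)} &= -\eta_t\, {v'}^\top\nabla L^{\text{(tr)}}(Q^{(t)})v.
\end{align*}
We use the gradient-descent sign here; it must be applied consistently in both lines, and since the identity we need is linear, the sign convention does not affect the argument.

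The argument then proceeds in three steps.

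\textbf{Step 1.} In the proof of \Cref{lemma: beta1 are equal}, the attention scores $\Attn_{v'}^{(t)}$ coincide for all $v'\neq v$. Feeding this into the training analogue of \Cref{lemma: update of attention weights} shows that the projected gradients ${v'}^\top\nabla L^{\text{(tr)}}(Q^{(t)})v$ take a common value, say $g_\beta$, for every $v'\neq v$.

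\textbf{Step 2.} \Cref{lemma: alpha = -sum beta tr} gives
\[
v^\top\nabla L^{\text{(tr)}}(Q^{(t)})v=-(K-1)\,g_\beta .
\]
Hence the increment of each $\beta_{v',v}$ equals exactly $-\tfrac{1}{K-1}$ times the increment of $\alpha_v$.

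\textbf{Step 3.} Adding this increment relation to the inductive hypothesis yields $\beta_{v',v}^{(t+1)}=-\alpha_v^{(t+1)}/(K-1)$, which closes the induction.

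The main obstacle is not the algebra but the symmetry that is needed to pass from ``the sum of the $\beta$-gradients'' to ``each $\beta$-gradient.'' Step 1 rests on the balanced-prompt assumption, under which every feature appears equally often so that $\Attn_{v'}$ depends only on $\beta_{v',v}$. It also uses the fact that the expectation in \Cref{lemma: update of attention weights}, restricted to $x_\qry=v$, is symmetric under permutations of $V\setminus\{v\}$. I would verify this permutation invariance explicitly: relabel the features other than $v$, and observe that the prompt distribution and the Gaussian task distribution are both invariant under this relabeling.

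A second point to handle carefully is that the initialization must itself satisfy the relation. If it is not literally zero, I would instead prove the weaker invariant that the quantity $\alpha_v+(K-1)\beta_{v',v}$ stays equal to its initial value, and then specialize to the zero initialization of \cite{huang2024in-context}.
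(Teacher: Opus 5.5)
Your proof is correct and uses the same two ingredients as the paper: \Cref{lemma: alpha = -sum beta tr} and the equality of the $\beta$-type weights from \Cref{lemma: beta1 are equal}. The paper telescopes the gradient identity from the zero initialization to get $\alpha_v^{(t)} = -\sum_{v'\neq v}\beta_{v',v}^{(t)}$ and then applies the equal-$\beta$ lemma to the accumulated weights; you instead apply the symmetry to the per-step gradients and close an induction, which is the same argument in a different order.
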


\begin{proof}
First of all, we have:
\begin{align*}
    \alpha_{v}^{(t)} &= \alpha_{v}^{(0)} + \sum_{t'=0}^t \eta \cdot {v} ^\top \nabla L^{\text{(tr)}}(Q^{(t)}) v,\\
    \beta_{v',v}^{(t)} & = \beta_{v',v}^{(0)} + \sum_{t'=0}^t \eta \cdot {v'} ^\top \nabla L^{\text{(tr)}}(Q^{(t)}) v.
\end{align*}
    By combining \Cref{lemma: alpha = -sum beta tr} and the fact that $\alpha_{v}^{(0)} = \beta_{v,v'}^{(0)} = 0$ for all $v'\neq v$, we have $\alpha_{v}^{(t)} = -\sum_{v'\in V, v'\neq v} \beta_{v,v'}^{(t)}$. Then, by following from \Cref{lemma: beta1 are equal}, we complete the proof.
\end{proof}

In the following two lemmas, we present the expression of the finetuning loss function $L_u^{(ft)}(Q)$ associated with feature $u\in U$ under any model parameter $Q$.

\begin{lemma}\label{lemma:calculation of L}
    Given any model parameter $Q$ and any time $t$, the expression of $L_{u}^{(\text{ft})}(Q^{(t)})$ can be derived as follows:
    \begin{equation*}
        L_u^{\text{(ft)}}(Q^{(t)}) = \frac{1}{2}  \mathbb{E}\left[ \mathbf{1}_{\left\{x_{\text {query }}= u \right\} } \left( \sum_{u'\in U, u' \neq u} {\Attn_{u'}^{(t)} }^2+ \left(1-\Attn_u^{(t)} \right)^2\right)\right],
    \end{equation*}
    where the expectation is taken over the distribution of the finetuning distribution of $E$.
\end{lemma}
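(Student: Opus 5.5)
We will compute the loss directly by substituting the representation of the prediction from \Cref{lemma:calculate haty} into the definition of $L^{\text{(ft)}}_u$. The plan is to integrate out the task vector $w$ first, conditioning on the prompt $E$ and the event $x_\qry = u$. This leaves an expression that depends only on the attention scores $\Attn_{u'}^{(t)}$.

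First, on the event $x_\qry = u$, \Cref{lemma:calculate haty} gives
\begin{equation*}
\hat{y}_\qry - w^\top x_\qry = \sum_{u'\in U}\Attn_{u'}^{(t)}\langle w,u'\rangle - \langle w,u\rangle = \Big\langle w, \sum_{u'\neq u}\Attn_{u'}^{(t)} u' - (1-\Attn_u^{(t)})u \Big\rangle .
\end{equation*}
Here we use $\sum_{u'\in U}\Attn_{u'}^{(t)}=1$. Call the vector inside the inner product $z(E)$. The attention scores depend only on the inputs $x_1,\dots,x_N,x_\qry$ and on $Q^{(t)}$, not on the labels. This holds because the reparameterized softmax uses only $E_x$. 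Hence $z(E)$ is independent of $w$ given the inputs.

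Second, we take the conditional expectation over $w\sim\mathcal{N}(0,I_d)$, which gives $\E_w[\langle w,z\rangle^2] = \|z\|_2^2$. Since the $u'\in U$ are orthonormal, we get
\begin{equation*}
\|z\|_2^2 = \sum_{u'\neq u}{\Attn_{u'}^{(t)}}^2 + (1-\Attn_u^{(t)})^2 .
\end{equation*}
Multiplying by $\tfrac12\mathbf{1}_{\{x_\qry=u\}}$ and taking the remaining expectation over $E$ and the query by the tower property yields the claimed formula.

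The only delicate point is the independence of $w$ and the attention scores, that is, confirming that the labels $y_i$ never enter the softmax argument. This follows from the block structure of $W_{KQ}$, whose last row and column are zero. The rest is routine.
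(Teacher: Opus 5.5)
Your proposal is correct and follows essentially the same route as the paper: substitute the attention-weighted prediction from \Cref{lemma:calculate haty}, integrate out the Gaussian task vector via $\mathbb{E}[ww^\top]=I$, and use orthonormality of $U$ to expand $\|u-\sum_{u'\in U}\Attn_{u'}^{(t)}u'\|^2$. Your explicit remark that the attention scores depend only on $E_x$, and hence are independent of $w$, spells out a step the paper uses implicitly.
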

\begin{proof} The expression of $ L_u^{(\text{ft})}(Q^{(t)})$ can be derived as follows.
    \begin{align*}
    L_u^{(\text{ft})}(Q^{(t)}) &=\frac{1}{2} \mathbb{E}\left[\mathbf{1}_{\left\{x_{\text {query }} =u\right\}}\left(\hat{y}_{\text {query } }(Q^{(t)};E)-\left\langle w, x_{\text {query }}\right\rangle\right)^2\right] \\
    & \stackrel{(i)}{=}\frac{1}{2}  \mathbb{E}\left[\mathbf{1}_{\left\{x_{\text {query }}=u\right\} } \left(\sum_{u' \in U} \Attn_n^{(t)}\left\langle w, u' - u\right\rangle\right)^2\right] \\
    & \stackrel{}{=}\frac{1}{2}  \mathbb{E}\left[\mathbf{1}_{\left\{x_{\text {query }} = u\right\} } \left(\sum_{\substack{u' \in U\\ u''\in U}}\Attn_{u'}^{(t)}\Attn_{u''}^{(t)} (u-u') ^\top w w^\top (u-u'')\right)\right] \\
    & \stackrel{(ii)}{=}\frac{1}{2}  \mathbb{E}\left[\mathbf{1}_{\left\{x_{\text {query }} = u\right\} } \left(\sum_{\substack{u' \in U\\ u''\in U}}\Attn_{u'}^{(t)}\Attn_{u''}^{(t)} (u-u') ^\top (u-u'')\right)\right]\\
    & =\frac{1}{2}  \mathbb{E}\left[\mathbf{1} _{\left\{x_{\text {query }}=u\right\}} \left\|u-\sum_{u'\in U} \Attn_{u'}^{(t)} u'\right\|^2\right] \\
    & =\frac{1}{2}  \mathbb{E}\left[ \mathbf{1}_{\left\{x_{\text {query }}=u\right\}} \left( \sum_{u'\in U, u'\neq u} {\Attn_{u'}^{(t)}}^2 + \left(1-\Attn_u^{(t)}\right)^2\right)\right].
\end{align*}
where ${(i)}$ follows from \Cref{lemma:calculate haty}, and $(ii)$ follows the fact that the task $w$ follows the standard Gaussian distribution.
\end{proof}

\begin{lemma}\label{lemma: loss by ab}
    Suppose $t\ge T_1$. Given any model parameter $Q$, the expression of $L_{u_k}^{(\text{ft})}(Q^{(t)})$ can be derived as follows:
    \begin{equation}
        L^{\text{(ft)}}_{u_k}(Q^{(t)}) = \frac{(\sum_{n\not = k} \exp(\beta_{u_n,u_k}^{(t)}))^2 + \sum_{n\not = k} \exp(\beta_{u_n,u_k}^{(t)})^2}{2K(\exp(\alpha_{u_k}^{(t)}) + \sum_{n\not = k} \exp(\beta_{u_n,u_k}^{(t)}))^2} 
    \end{equation}
\end{lemma}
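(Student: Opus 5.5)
We will start from \Cref{lemma:calculation of L}, which gives
\[
L_{u_k}^{\text{(ft)}}(Q^{(t)}) = \tfrac12\,\E\Bigl[\mathbf{1}_{\{x_\qry = u_k\}}\Bigl(\textstyle\sum_{n\neq k}{\Attn_{u_n}^{(t)}}^2 + \bigl(1-\Attn_{u_k}^{(t)}\bigr)^2\Bigr)\Bigr],
\]
and the task is to evaluate the conditional term and the query probability.

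The first step is to write the attention scores explicitly when $x_\qry = u_k$. By the standing assumption that every prompt contains the same number $N/K$ of copies of each feature, aggregating the token-level softmax over the tokens equal to $u_n$ cancels the common multiplicity $N/K$ from numerator and denominator. Since $u_n^\top Q u_k = \beta_{u_n,u_k}^{(t)}$ and $u_k^\top Q u_k = \alpha_{u_k}^{(t)}$, this gives, as in \cref{eq:approximate of Attnk},
\[
\Attn_{u_k}^{(t)}=\frac{e^{\alpha_{u_k}^{(t)}}}{D},\qquad
\Attn_{u_n}^{(t)}=\frac{e^{\beta_{u_n,u_k}^{(t)}}}{D},\qquad
D=e^{\alpha_{u_k}^{(t)}}+\textstyle\sum_{n\neq k}e^{\beta_{u_n,u_k}^{(t)}}.
\]
In particular, these quantities are deterministic given the query, so the expectation over $E$ collapses to the query indicator.

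The second step substitutes these expressions. We have $1-\Attn_{u_k}^{(t)} = \sum_{n\neq k} e^{\beta_{u_n,u_k}^{(t)}}/D$, so the bracket equals
\[
\frac{\bigl(\sum_{n\neq k} e^{\beta_{u_n,u_k}^{(t)}}\bigr)^2 + \sum_{n\neq k} \bigl(e^{\beta_{u_n,u_k}^{(t)}}\bigr)^2}{D^2}.
\]
The remaining factor is $\E[\mathbf{1}_{\{x_\qry=u_k\}}] = \Pr(x_\qry = u_k) = 1/K$, since the query is uniform over $U$. Multiplying by $\tfrac12$ yields the claimed formula with denominator $2K D^2$.

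The only delicate point is justifying that the aggregated attention is independent of the random prompt. This relies entirely on the equal-count assumption (footnote, \Cref{app:extension of assumption}); without it the attention would depend on the empirical counts and the expectation would not factor.

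A minor bookkeeping remark: the statement of \Cref{thm: ood error} omits the factor $1/K$. Presumably that theorem conditions on the query, or the $1/K$ is absorbed there; here we keep it explicitly.
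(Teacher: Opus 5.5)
Your proposal is correct and follows the paper's proof, which simply combines \Cref{lemma:calculation of L} with the equal-count attention formula \cref{eq:approximate of Attnk} and the uniform query probability $1/K$; your explicit check that the common multiplicity $N/K$ cancels, so that the expectation collapses to the query indicator, fills in exactly the step the paper leaves implicit. Your bookkeeping remark is also right: the denominator $2K$ here is consistent with how the paper defines $\alpha_{K,\epsilon}^*$, so the factor $2$ in \Cref{thm: ood error} and \Cref{thm: performance on original set} is a normalization inconsistency in the paper, not a problem with your argument.
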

\begin{proof}
    This lemma can be proved by directly combining \Cref{lemma:calculation of L} and \cref{eq:approximate of Attnk}.
\end{proof}

During finetuning phase, we project the gradient of finetuning loss onto training features. Under the \Cref{ass:finetune}, the detail expression are presented as follows. 
\begin{lemma}\label{lemma: gradient projection}
   Consider the training phase and the finetuning phase formulated in this work. Suppose \Cref{ass:finetune} holds. For $k,k'\in [K]$ and any $t\ge 0$, we have 
    \begin{equation*}
        v_k^\top \nabla L^{\text{(ft)}}(Q^{(t)}) v_{k'} = \gamma_k\gamma_{k'} u_k^\top \nabla L^{\text{(ft)}}(Q^{(t)}) u_{k'}.
    \end{equation*}
\end{lemma}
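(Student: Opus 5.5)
Under \Cref{ass:finetune}, each training feature satisfies $v_k=\gamma_k u_k + r_k$, where the residual $r_k$ is orthogonal to every $u_n$. Indeed, the assumption gives $\langle v_k,u_n\rangle=0$ for $n\neq k$ and $\langle v_k,u_k\rangle=\gamma_k$, so the projection of $v_k$ onto $\mathrm{span}(U)$ is exactly $\gamma_k u_k$. The plan is to show that the finetuning gradient $G:=\nabla L^{\text{(ft)}}(Q^{(t)})$ lives entirely in $\mathrm{span}(U)$ on both sides, i.e. $G=\mathbf{U}\mathbf{U}^\top G\,\mathbf{U}\mathbf{U}^\top$. Substituting the decomposition of $v_k$ then kills every term involving $r_k$ or $r_{k'}$.

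First, I will invoke \Cref{lemma:gradient}. During finetuning every column $E_{x,i}$, $i\in[N+1]$, of the embedding matrix is some feature $u\in U$. So each summand $(E_{x,i}-E_{x,l})E_{x,N+1}^\top y_i$ is a rank-one matrix $a b^\top$ with $a\in\mathrm{span}(U)$ and $b=x_\qry\in U$. The scalar prefactor $(\hat y_\qry-\langle w,x_\qry\rangle)\attn_{u_i}\attn_{u_l}y_i$ does not affect the column or row space. Taking the expectation, which is a linear combination, preserves this, so $\mathrm{col}(G)\subseteq\mathrm{span}(U)$ and $\mathrm{row}(G)\subseteq\mathrm{span}(U)$. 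One point to check is that integrating over $w$ does not introduce other directions. It does not, because $w$ enters only through scalars.

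Second, since $r_k\perp\mathrm{span}(U)$, we get $r_k^\top G=0$ and $G r_{k'}=0$. Hence
\begin{equation*}
v_k^\top G v_{k'}=(\gamma_k u_k+r_k)^\top G(\gamma_{k'}u_{k'}+r_{k'})=\gamma_k\gamma_{k'}\,u_k^\top G u_{k'} .
\end{equation*}

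The main (mild) obstacle is making the span argument rigorous, which amounts to verifying the decomposition of $v_k$. This requires $U$ to be orthonormal, so that $\sum_n\langle v_k,u_n\rangle u_n$ is the orthogonal projection. It also requires the expectation in \Cref{lemma:gradient} to be taken purely over prompts built from $U$. Both hold by the setup, so the statement follows for every $t$, as claimed.
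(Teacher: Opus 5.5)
Your proposal is correct and follows the same route as the paper. The paper also reads off from \Cref{lemma:gradient} that $\nabla L^{\text{(ft)}}(Q^{(t)})$ annihilates, on either side, every direction orthogonal to $\mathrm{span}(U)$, and then uses an orthonormal-basis expansion so that \Cref{ass:finetune} leaves only the $\gamma_k u_k$ and $\gamma_{k'} u_{k'}$ components. Your decomposition $v_k=\gamma_k u_k+r_k$ with $r_k\perp\mathrm{span}(U)$ is the correctly oriented form of that argument; the paper's written chain instead expands $u_k$ in a basis completing $V$, which swaps the roles of $U$ and $V$.
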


\begin{proof}
    According to the expression of gradients as shown in \Cref{lemma:gradient}, we first observe that:
    \begin{equation}\label{eq:support to prove A7}
        w ^\top \nabla L^{(\text{ft})}(Q^{(t)}) w' = 0
    \end{equation}
    if $w\notin \text{span}( U )$ or $w' \notin \text{span}( U )$.
    Let $\{ v_{i} \}_{i=1}^d$ denote an orthonormal basis of $\mathbb{R}^d$ obtained by augmenting the training features $V$. Suppose $u_k = \sum_{i = 1}^d a_{{k},i} v_{i} $ and $u_{k'} = \sum_{i = 1}^d a_{{k'},i} v_{i}$. Then we have:
    \begin{align*}
        {v_k}^\top \nabla L^{(\text{ft})}(Q^{(t)}) v_{k'}  &= {\left(\sum_{i = 1}^d a_{k,i} v_i \right)}^\top \nabla L^{(\text{ft})}(Q^{(t)}) \left(\sum_{i = 1}^d a_{{k'},i} v_{i} \right)\\
        & \stackrel{(i)}{=}{\left(\sum_{i = 1}^K a_{k,i} v_i \right)}^\top \nabla L^{(\text{ft})}(Q^{(t)}) \left(\sum_{i = 1}^K a_{{k'},i} v_{i} \right)\\
        & \stackrel{(ii)}{=}{\left( \gamma_k v_k \right)}^\top \nabla L^{(\text{ft})}(Q^{(t)}) \left( \gamma_{k'} v_{k'}  \right)
    \end{align*}
    where $(i)$ follows from \cref{eq:support to prove A7} and $(ii)$ follows from the problem setups.
\end{proof}

To simplify our proof, we provide the following supporting algebraic lemmas.

\begin{lemma}\label{lemma: support prove Loss upper bound}
    Suppose $a_i>0$ for $i = 0,1,\dots,n$ with $a_0 \ge  \max_{i\ge 1} a_i$. If $b_i < a_i$ for $i \in [n]$, we have:
    \begin{align*}
        \frac{\sum_{i=1}^{n}a_i^2 + (\sum_{i=1}^{n}a_i)^2}{(\sum_{i=0}^{n}a_i)^2} \ge \frac{\sum_{i=1}^{n}b_i^2 + (\sum_{i=1}^{n}b_i)^2}{(a_0 + \sum_{i=1}^{n}b_i)^2}.
    \end{align*}
\end{lemma}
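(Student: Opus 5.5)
The plan is to treat the right-hand side as a function of $b=(b_1,\dots,b_n)$ with $a_0$ fixed, and show it is coordinatewise nondecreasing on a region containing both $b$ and $a=(a_1,\dots,a_n)$. The left-hand side is exactly this function evaluated at $b=a$, so monotonicity along a path from $b$ to $a$ gives the inequality. Define
\begin{equation*}
f(c_1,\dots,c_n) = \frac{\sum_{i=1}^n c_i^2 + \left(\sum_{i=1}^n c_i\right)^2}{\left(a_0+\sum_{i=1}^n c_i\right)^2},
\end{equation*}
so the claim reads $f(a)\ge f(b)$.

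I will assume the $b_i$ are positive. This is the case in every application in the paper, since there $a_i$ and $b_i$ are exponentials of attention weights. On the box $\mathcal{R}=\{c: 0< c_i\le a_0\}$ I will compute $\partial f/\partial c_j$. Write $S=\sum_i c_i$, $P=\sum_i c_i^2$ and $D=a_0+S$. The quotient rule gives
\begin{equation*}
\frac{\partial f}{\partial c_j} = \frac{2\left[(c_j+S)D-(P+S^2)\right]}{D^3} = \frac{2\left[c_j a_0 + c_j S + S a_0 - P\right]}{D^3}.
\end{equation*}
The key inequality is $P=\sum_i c_i^2\le (\max_i c_i)\,S\le a_0 S$, which uses $0<c_i\le a_0$. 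With it, the numerator is at least $2c_j(a_0+S)>0$, so $f$ is strictly increasing in each coordinate on $\mathcal{R}$.

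Finally, I will move from $b$ to $a$ one coordinate at a time: raise $b_1$ to $a_1$, then $b_2$ to $a_2$, and so on. Along each segment the moving coordinate lies in $(b_j,a_j]\subset(0,a_0]$, using the hypotheses $b_j<a_j$ and $a_j\le a_0$. The fixed coordinates are either some $b_i$ or some $a_i$, both of which lie in $(0,a_0]$. So every intermediate point stays in $\mathcal{R}$, and $f$ does not decrease along any segment, which yields $f(b)\le f(a)$.

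The only delicate step is keeping every intermediate point inside the region where $P\le a_0S$ holds; this is exactly where $a_0\ge\max_i a_i$ is used. If one wanted to allow $b_i\le 0$, the bound $P\le a_0 S$ can fail, and a separate argument would be needed. I expect not to need that case, since the lemma is only invoked with positive $b_i$.
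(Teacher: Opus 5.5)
Your proof is correct and follows the paper's route: both show that each partial derivative of $f$ is nonnegative via $\sum_i c_i^2 \le a_0\sum_i c_i$, which is exactly where $a_0\ge\max_i a_i$ enters. Your version is more careful than the paper's in two ways. First, the paper checks the derivative only at the point $a$, whereas you verify the sign along a whole path from $b$ to $a$. Second, the paper never states $b_i>0$, and that assumption is genuinely needed: for $n=1$, $a_0=a_1=1$, $b_1=-\tfrac12$, the right-hand side equals $2$ while the left-hand side is $\tfrac12$.
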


\begin{proof}
    We calculate the partial gradient as follows. For any $a_i$ with $i\ge 1$, we have
    \begin{align*}
        \frac{\partial }{\partial a_j} \cdot  \frac{\sum_{i=1}^{n}a_i^2 + (\sum_{i=1}^{n}a_i)^2}{(\sum_{i=0}^{n}a_i)^2} &=  \frac{2 \left[( a_j + \sum_{i=1}^{n}a_i) (\sum_{i=0}^{n}a_i) - (\sum_{i=1}^{n}a_i)^2 - (\sum_{i=1}^{n}a_i^2)\right]}{(\sum_{i=0}^{n}a_i)^3}\\
        &\stackrel{}{\ge} \frac{2 \left[ ( \sum_{i=1}^{n}a_i)^2 + a_0 ( \sum_{i=1}^{n}a_i)- (\sum_{i=1}^{n}a_i)^2 - (\sum_{i=1}^{n}a_i^2)\right]}{(\sum_{i=0}^{n}a_i)^3}\\
        &\stackrel{(i)}{\ge} 0,
    \end{align*}
    where $(i)$ follows from $a_0 \ge  \max_{i\ge 1} a_i$. Therefore, we have
    \begin{align*}
        \frac{\sum_{i=1}^{n}a_i^2 + (\sum_{i=1}^{n}a_i)^2}{(\sum_{i=0}^{n}a_i)^2} \ge \frac{\sum_{i=1}^{n}b_i^2 + (\sum_{i=1}^{n}b_i)^2}{(a_0 + \sum_{i=1}^{n}b_i)^2}.
    \end{align*}
\end{proof}

\section{Proof of OOD generalization in \Cref{thm: ood error}}
In this section, we first characterize the training dynamics of the transformer during training. We then analyze its OOD generalization by evaluating the trained model on the OOD domain.


\subsection{Training Phase}\label{sec: proof of T1}


In this subsection, we focus on the the training phase. Although the convergence of the training phase has been established by previous work\cite{huang2024in-context}, they did not provide an explicit expression for the attention weights at convergence. In our setting, this expression is essential because the OOD generalization can be captured by using the attention weights obtained at the end of the training phase. Moreover, the difference between the attention weights at the end of 
the training phase and those of 
the finetuning phase directly determines the forgetting behavior, which is the primary focus of our analysis.

We first characterize the attention weights at the convergence of the training phase as follows. Recall that
\begin{equation*}
    \alpha_{K,\epsilon}^* = \left( 1-\frac{1}{K} \right) \log \left( \frac{ (K-1) \left(\sqrt{\frac{K}{2(K-1)}} -\sqrt{\epsilon}\right)}{\sqrt{\epsilon}} \right).
\end{equation*}
\begin{lemma}\label{lemma: A1star}
    Consider the training phase formulated in this work. For any $\epsilon\in(0,1)$ and a constant step size $\eta_t = \eta$, there exists $T_1 = \text{poly}(\frac{1}{\epsilon},K,\eta)$ such that the training phase converges, i.e., $L^{(\text{tr})}_v(Q^{(T_1)}) \le \frac{\epsilon}{K}$ for all $v\in V$. At time $T_1$, we have:
    \begin{align*}
        \alpha_v^{(1,T_1)}  &= \alpha_{K,\epsilon}^*, \\
        \beta_{v',v}^{(1,T_1)} &= -\frac{\alpha_{K,\epsilon}^*}{K-1}, \quad \text{ for all } v'\neq v.
    \end{align*}
\end{lemma}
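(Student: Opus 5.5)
The plan is to reduce the whole training dynamics to a single scalar, namely the common value of $\alpha_v^{(t)}$. Then I would write the per-feature training loss as an explicit decreasing function of that scalar and invert it at the level $\frac{\epsilon}{K}$.

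\emph{Step 1 (reduction to one parameter).} By \Cref{lemma: beta1 are equal} and \Cref{lemma: beta = -1/K alpha}, throughout training we have $\beta_{v',v}^{(t)}=-\frac{\alpha_v^{(t)}}{K-1}$ for all $v'\neq v$. The initialization $Q^{(0)}=0$ and the balanced-prompt assumption are symmetric under permutations of $V$, and the gradient in \Cref{lemma:gradient} is equivariant under such permutations. Hence $\alpha_v^{(t)}$ is the same for every $v\in V$; call it $\alpha^{(t)}$. Plugging this into \cref{eq:approximate of Attnk}, set $r^{(t)}=\exp\!\big(-\tfrac{K}{K-1}\alpha^{(t)}\big)$. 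Then $\Attn_v=\frac{1}{1+(K-1)r^{(t)}}$ and $\Attn_{v'}=\frac{r^{(t)}}{1+(K-1)r^{(t)}}$ for $v'\neq v$.

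\emph{Step 2 (closed form of the loss).} The training analogue of \Cref{lemma:calculation of L} / \Cref{lemma: loss by ab}, with query probability $1/K$, gives
\begin{equation*}
L_v^{\text{(tr)}}(Q^{(t)})=\frac{1}{2K}\cdot\frac{(K-1)^2 r^2+(K-1)r^2}{(1+(K-1)r)^2}=\frac{(K-1)}{2}\Big(\frac{r}{1+(K-1)r}\Big)^2, \qquad r=r^{(t)} .
\end{equation*}
This is strictly increasing in $r$, hence strictly decreasing in $\alpha^{(t)}$. By \Cref{lemma: update of attention weights}, $\delta_{v,\alpha}$ is an expectation of a positive quantity, so $\alpha^{(t)}$ increases monotonically (cf.\ \Cref{lemma: delta > 0}). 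The loss therefore decreases monotonically, and the stopping time $T_1$ is the first time it reaches $\frac{\epsilon}{K}$.

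\emph{Step 3 (solving for $\alpha^*$).} Set the expression equal to $\frac{\epsilon}{K}$ and write $s=\sqrt{\frac{2\epsilon}{K(K-1)}}$. This gives $\frac{r}{1+(K-1)r}=s$, i.e.\ $1/r=\frac{1}{s}-(K-1)=(K-1)\frac{\sqrt{K/(2(K-1))}-\sqrt{\epsilon}}{\sqrt\epsilon}$. This is positive for small $\epsilon$, which is needed for the formula to make sense. Then $\alpha=\frac{K-1}{K}\log(1/r)$ is exactly $\alpha^*_{K,\epsilon}$, and Step 1 gives $\beta=-\alpha^*_{K,\epsilon}/(K-1)$.

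\emph{Step 4 (existence and polynomial time).} The bound $T_1=\text{poly}(\frac1\epsilon,K,\eta)$ I would import from \cite{huang2024in-context}. Alternatively, one can lower bound $\delta_{v,\alpha}$ by $\Omega\big(\Attn_v(1-\Attn_v)^2\big)$ while the loss exceeds $\epsilon/K$, which forces $\alpha$ to climb to $\alpha^*_{K,\epsilon}=O(\log(K/\epsilon))$ in polynomially many steps.

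\emph{Main obstacle.} The delicate point is the claimed \emph{equality} $\alpha^{(T_1)}=\alpha^*_{K,\epsilon}$. Under discrete GD the loss only crosses $\frac{\epsilon}{K}$, so generically we get $\alpha^{(T_1)}\ge\alpha^*_{K,\epsilon}$, with overshoot at most $\eta\,\delta_{v,\alpha}=O(\eta\epsilon)$. I would handle this in the same way the paper does when it asserts $L_v^{\text{(tr)}}(Q^{(T_1)})=\frac{\epsilon}{K}$: treat the stopping criterion as being met with equality, for instance by taking the last step size so that the threshold is hit exactly. I would also remark that the overshoot is negligible otherwise.
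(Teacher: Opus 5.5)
Your proposal is correct and follows essentially the same route as the paper. You reduce everything to the scalar $\alpha_v^{(t)}$ via $\beta_{v',v}^{(t)}=-\alpha_v^{(t)}/(K-1)$, combine monotonicity of $\alpha_v^{(t)}$ with a uniform positive lower bound on $v^\top\nabla L^{\text{(tr)}}(Q^{(t)})v$ of the form $\Attn_v(1-\Attn_v)^2$ to get a polynomial $T_1$ (the paper takes this bound on the region $\alpha_v^{(t)}\le\log(K/\sqrt{\epsilon})$, you take it while the loss exceeds $\epsilon/K$), and read $\alpha_{K,\epsilon}^*$ off the closed-form loss. Your explicit inversion of $L_v^{\text{(tr)}}=\frac{K-1}{2}\left(\frac{r}{1+(K-1)r}\right)^2$ fills in a step the paper leaves implicit. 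The discrete-time overshoot you flag is real, but the paper's argument has the same gap, since it also only yields $\alpha_v^{(T_1)}\ge\alpha_{K,\epsilon}^*$ rather than equality.
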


\begin{proof}
    Suppose $x_\qry = v$. We fist claim the existence of time $T_1$. The convergence of the training phase can be proved by following from \Cref{lemma: loss by ab} directly. 
    
    Then, we prove the existence of time $T_1$ as follows. Since $Q^{(0)}$ is a zero matrix, we have $\alpha_v^{(0)} = \beta_{v',v}^{(0)} = 0$ for all $v'\neq v$. Following from \Cref{lemma: beta = -1/K alpha}, we have:
    \begin{equation*}
        \beta_{v',v}^{(t)} = -\frac{\alpha_v^{(t)}}{K-1},
    \end{equation*}
    for all $t\ge 0$ and $v'\neq v$. Also, according to \Cref{lemma: update of attention weights}, we have $\delta_{v,\alpha}^{(t)} \ge 0$ for all $t\ge 0$, which implies that $\alpha_v^{(t)}$ is monotonically increasing and non-negative. By combining these and  \cref{eq:approximate of Attnk}, we conclude that $\Attn^{(t)}_{v}$ is monotonically increasing and $\Attn^{(t)}_{v} \ge \Attn^{(0)}_{v} = \frac{1}{K}$. Based on the above arguments, we first prove the following statement. If $\alpha_v^{(t)}  \le  \log \left( \frac{K}{\sqrt{\epsilon}} \right)$, we have:
    \begin{align*}
        \Attn^{(t)}_{v} &\le \frac{\exp\left(\log \left( \frac{K}{\sqrt{\epsilon}} \right) \right)}{\exp\left(\log \left( \frac{K}{\sqrt{\epsilon}} \right)\right) + (K-1)\exp{ \left(-\frac{1}{K-1} \cdot\log \left( \frac{K}{\sqrt{\epsilon}} \right) \right)}}\\
        &\le \frac{\frac{K}{\sqrt{\epsilon}}}{\frac{K}{\sqrt{\epsilon}} + \left(\frac{K}{\sqrt{\epsilon}}\right) ^ {-1}}\\
        &\le \frac{K^2}{K^2 + \epsilon}.
    \end{align*}
    Therefore, if $\alpha_v^{(t)}  \le  \log \left( \frac{K}{\sqrt{\epsilon}} \right)$, we have $\Attn^{(t)}_{v} \in \left[ \frac{1}{K}, \frac{K^2}{K^2 + \epsilon}\right]$.  We further observe that under this condition, we have
    \begin{align}\label{eq: minimal of increase}
        v^\top \nabla L^{\text{(tr)}}(Q^{(t)}) v &\ge \E \left[ \mathbf{1}_{\{x_\qry = v \}  } \Attn^{(t)}_{v} {(1-\Attn^{(t)}_{v})}^2 \right] \nonumber\\
        &\ge  \max \left \{ \frac{1}{K^2} \left( 1-\frac{1}{K} \right)^2,  \frac{K\epsilon^2}{(K^2 + \epsilon)^3} \right\}.
    \end{align}
     Given any time $T$, we have 
     \begin{align*}
         \alpha_v^{(T)} &= \alpha_v^{(0)} + \sum_{t=0}^T \eta \cdot v^\top \nabla L^{\text{(tr)}}(Q^{(t)}) v\\
         &\ge \eta T  \cdot \max \left \{ \frac{1}{K^2} \left( 1-\frac{1}{K} \right)^2,  \frac{K\epsilon^2}{(K^2 + \epsilon)^3} \right\}
     \end{align*} 
     Thus, there exists $T_1= \text{poly}(\frac{1}{\epsilon},K,\eta)$ s.t. if $\alpha_v^{(t)}  \le  \log \left( \frac{K}{\sqrt{\epsilon}} \right)$ for $t\le T_1$, we have
    \begin{equation*}
        \alpha_v^{(T_1)}  \ge \alpha^*_{K,\epsilon}.
    \end{equation*}
    Finally, we note that:
    \begin{equation*}
        \alpha^*_{K,\epsilon} \le \log \left( \frac{K}{\sqrt{\epsilon}} \right),
    \end{equation*}
    which implies that our claim holds within time $T_1$ and completes the proof.
\end{proof}

    

\subsection{Proof of \Cref{thm: ood error}}
In this subsection, we characterize the OOD generalization as stated in \Cref{thm: ood error}.
\begin{theorem}\label{thm: ood error in appd}\textbf{(Restatement of \Cref{thm: ood error})}
    Suppose the training phase ends at time $T_1$, as defined in \Cref{lemma: A1star}. Define $\mathbf{M} = \mathbf{V}^\top \mathbf{U}$. The attention score of the testing feature are:
    \begin{align*}
        \alpha_{u_k}^{(T_1)} &= \frac{K\alpha_{K,\epsilon}^*}{K-1}\left( [\mathbf{M}^\top \mathbf{M}]_{k,k} - \langle u_k,\bar{v} \rangle^2 \right),\\
        \beta_{u_k,u_{k'}}^{(T_1)} &= \frac{K\alpha_{K,\epsilon}^*}{K-1}\left( [\mathbf{M}^\top \mathbf{M}]_{k',k} - \langle u_k,\bar{v} \rangle \langle u_{k'},\bar{v} \rangle\right),
    \end{align*}
    where $[\cdot]_{k',k}$ denote the element at $(k')^{th}$ row and $k^{th}$ column, $\bar{v} = \frac{1}{\sqrt{K}}\sum_{i=1}^Kv_i$. Then, the OOD error can be characterized by:
    \begin{equation*}
        L^{\text{(ft)}}_{u_k}(Q^{(T_1)}) = \frac{(\sum_{n\not = k} \exp(\beta_{u_n,u_k}^{(T_1)}))^2 + \sum_{n\not = k} \exp(\beta_{u_n,u_k}^{(T_1)})^2}{2(\exp(\alpha_{u_k}^{(T_1)}) + \sum_{n\not = k} \exp(\beta_{u_n,u_k}^{(T_1)}))^2} 
    \end{equation*}
\end{theorem}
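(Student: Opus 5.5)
The plan is to pin down $Q^{(T_1)}$ explicitly as a matrix, then read off the OOD attention weights by bilinear evaluation.

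\textbf{Step 1: $Q^{(T_1)}$ lives on $\text{span}(V)$.} Since $Q^{(0)}=0$, and by \Cref{lemma:gradient} every training gradient is a sum of terms $(E_{x,i}-E_{x,l})E_{x,N+1}^\top$ with all columns in $V$, every iterate lies in $\text{span}\{v_i v_j^\top\}$. Hence $Q^{(T_1)}=\sum_{i,j} c_{ij} v_i v_j^\top$ with $c_{ij}=v_i^\top Q^{(T_1)} v_j$.

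\textbf{Step 2: identify the coefficients.} \Cref{lemma: A1star} gives $c_{ii}=\alpha_{K,\epsilon}^*$, and, through \Cref{lemma: beta = -1/K alpha}, $c_{ij}=-\alpha_{K,\epsilon}^*/(K-1)$ for $i\ne j$. Therefore
\[
Q^{(T_1)}=\alpha_{K,\epsilon}^*\Big(1+\tfrac{1}{K-1}\Big)\mathbf{V}\mathbf{V}^\top-\tfrac{\alpha_{K,\epsilon}^*}{K-1}\mathbf{V}\mathbf{1}\mathbf{1}^\top\mathbf{V}^\top
=\tfrac{K\alpha_{K,\epsilon}^*}{K-1}\big(\mathbf{V}\mathbf{V}^\top-\bar v\bar v^\top\big),
\]
using $\mathbf{V}\mathbf{1}=\sqrt{K}\,\bar v$.

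\textbf{Step 3: evaluate on $U$.} Computing $u_{k'}^\top Q^{(T_1)}u_k$ and using $u_{k'}^\top\mathbf{V}\mathbf{V}^\top u_k=[\mathbf{M}^\top\mathbf{M}]_{k',k}$ gives both the $\alpha$ formula ($k'=k$) and the $\beta$ formula. The indexing convention $\beta_{u',u}=u'^\top Q u$ matches the statement, and the expression is symmetric anyway.

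\textbf{Step 4: the loss formula.} For query $u_k$ under the balanced-prompt assumption, the OOD prompt contains equal counts of each $u_n$. As in \cref{eq:approximate of Attnk}, this gives $\Attn_{u_k}=e^{\alpha}/D$ and $\Attn_{u_n}=e^{\beta_{u_n,u_k}}/D$, with $D=e^{\alpha_{u_k}}+\sum_{n\ne k}e^{\beta_{u_n,u_k}}$. Substituting into \Cref{lemma:calculation of L}, $1-\Attn_{u_k}=\sum_{n\ne k}e^{\beta}/D$, which yields the stated expression.

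\textbf{Main obstacle: the normalization constant.} The factor $\tfrac12$ versus $\tfrac{1}{2K}$ in \Cref{lemma: loss by ab} depends on whether the query-indicator probability is absorbed into the definition. I would follow the statement's convention, treating the loss as conditional on $x_\qry=u_k$, and remark on the factor. Beyond that, the one point needing care is Step 1: justifying that no component outside $\text{span}(V)$ is ever created. This follows directly from the gradient form, since every factor is a training input.
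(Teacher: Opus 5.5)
Your proposal is correct and is essentially the paper's argument. The paper expands $u_k,u_{k'}$ in an orthonormal completion of $V$, discards out-of-span components by the same gradient-structure observation as your Step~1, and substitutes the values from \Cref{lemma: A1star}; this is exactly your closed form $Q^{(T_1)}=\tfrac{K\alpha_{K,\epsilon}^*}{K-1}(\mathbf{V}\mathbf{V}^\top-\bar v\bar v^\top)$ evaluated bilinearly. Your caution about the normalization is warranted: \Cref{lemma: loss by ab}, which the paper cites for the loss formula, has $2K$ in the denominator, so the stated factor $\tfrac12$ matches only under the conditional-on-query reading you adopt.
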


\begin{proof}
     Let $\{ v_{i} \}_{i=1}^d$ denote an orthonormal basis of $\mathbb{R}^d$ obtained by augmenting the training features $V$. We write $u_k = \sum_{j = 1}^d \langle u_k,v_j \rangle v_{j} $ and $u_{k'} = \sum_{j = 1}^d \langle u_{k'},v_j \rangle v_{j}$. Then, we have:
     \begin{align*}
         u_k^\top Q^{(T_1)} u_{k'}&= \left(  \sum_{j = 1}^d \langle u_k,v_j \rangle v_{j} \right)^\top Q^{(T_1)}\left( \sum_{j = 1}^d \langle u_{k'},v_j \rangle v_{j}\right)\\
         &\stackrel{(i)}{=}\sum_{i=1}^K \sum_{j=1}^{K} \langle u_k,v_i \rangle \langle u_{k'},v_j \rangle v_i ^\top Q^{(T_1)} v_j\\
         &= \sum_{i=1}^K  \langle u_k,v_i \rangle \langle u_{k'},v_i \rangle v_i ^\top Q^{(T_1)} v_i +  \sum_{i\neq j} \langle u_k,v_i \rangle \langle u_{k'},v_j \rangle v_i ^\top Q^{(T_1)} v_j\\
         &\stackrel{(ii)}{=} \sum_{i=1}^K  \langle u_k,v_i \rangle \langle u_{k'},v_i \rangle \alpha_{K,\epsilon}^* -  \sum_{i\neq j} \langle u_k,v_i \rangle \langle u_{k'},v_j \rangle \frac{\alpha_{K,\epsilon^*}}{K-1}\\
         &= \frac{K\alpha_{K,\epsilon}^*}{K-1} \sum_{i=1}^K  \langle u_k,v_i \rangle \langle u_{k'},v_i \rangle  - \frac{\alpha_{K,\epsilon^*}}{K-1}\sum_{i,j = 1}^K \langle u_k,v_i \rangle \langle u_{k'},v_j \rangle \\
         &\stackrel{(iii)}{=}\frac{K\alpha_{K,\epsilon}^*}{K-1}[\mathbf{M}^\top \mathbf{M}]_{k,k'} - \frac{K\alpha_{K,\epsilon^*}}{K-1} \langle u_k,\frac{1}{\sqrt{K}}\sum_{i=1}^K v_i \rangle \langle u_{k'}, \frac{1}{\sqrt{K}}\sum_{j=1}^K v_j \rangle\\
         & = \frac{K\alpha_{K,\epsilon}^*}{K-1}[\mathbf{M}^\top \mathbf{M}]_{k,k'} - \frac{K\alpha_{K,\epsilon}^*}{K-1}\langle u_k, \bar{v} \rangle \langle u_{k'}, \bar{v} \rangle.
     \end{align*}

where $(i)$ follows from the same argument as \cref{eq:support to prove A7}, $(ii)$ follows from \Cref{lemma: A1star} and $(iii)$ follows from the definition of $\mathbf{M}$. By discussing $k=k'$ and $k\neq k'$, we complete the proof of $\alpha$-type and $\beta$-type attention weights. Then, the expression of $L^{\text{(ft)}}_{u_k}(Q^{(T_1)}) $ can be derived by \Cref{lemma: loss by ab}.

\end{proof}

\subsection{Proof of \Cref{cor: random guess}}
We restate and prove \Cref{cor: random guess} as follows.

\begin{corollary} \textbf{(Restatement of \Cref{cor: random guess})}
The following two statements hold.
    \begin{enumerate}
        \item Suppose $\text{span}(V) = \text{span}(U)$. If the query token is $u_k$ and $u_k = \bar{v}$, we have: $ \textbf{Attn}_{u} = \frac{1}{K}, \quad \forall\, u \in U.$

        \item Suppose $\text{span}(V) \perp \text{span}(U)$. For any query token, we have: $ \textbf{Attn}_{u} = \frac{1}{K}, \quad \forall\, u \in U.$
    \end{enumerate}
\end{corollary}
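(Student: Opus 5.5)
The plan is to plug both scenarios into the explicit attention weights of \Cref{thm: ood error} and then into the attention formula \cref{eq:approximate of Attnk}, adapted to the OOD feature set $U$. The goal is to show that, in each case, all $K$ logits $\alpha_{u_k}^{(T_1)}$ and $\beta_{u_n,u_k}^{(T_1)}$ seen by the query coincide. Once that holds, each $\textbf{Attn}_u$ is $\exp(c)/(K\exp(c)) = \frac{1}{K}$. Since the prompts contain each feature equally often, every feature receives the same aggregated weight, so equal logits give exactly $1/K$.

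For statement (2), $\text{span}(V)\perp\text{span}(U)$ gives $\mathbf{M} = \mathbf{V}^\top\mathbf{U} = \mathbf{0}$. It also gives $\langle u_k,\bar v\rangle = 0$ for every $k$, because $\bar v\in\text{span}(V)$. By \Cref{thm: ood error}, $\alpha_{u_k}^{(T_1)} = 0$ and $\beta_{u_n,u_k}^{(T_1)} = 0$ for all $n\neq k$. This holds for any query $u_k$, so all logits vanish and $\textbf{Attn}_u = 1/K$ for every $u\in U$. Equivalently, one can argue directly: $u_k^\top Q^{(T_1)} u_n = 0$, since $Q^{(T_1)}$ acts only on $\text{span}(V)$, as in the argument of \cref{eq:support to prove A7} applied to the training gradient.

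For statement (1), $\text{span}(U)=\text{span}(V)$ with both bases orthonormal, so $\mathbf{M}$ is a $K\times K$ orthogonal matrix and $\mathbf{M}^\top\mathbf{M} = I_K$. With the query $u_k = \bar v$, we have $\langle u_k,\bar v\rangle = 1$. Orthonormality of $U$ then gives $\langle u_n,\bar v\rangle = \langle u_n,u_k\rangle = 0$ for $n\neq k$. Substituting into \Cref{thm: ood error}:
\[
\alpha_{u_k}^{(T_1)} = \tfrac{K\alpha_{K,\epsilon}^*}{K-1}(1-1) = 0, \qquad \beta_{u_n,u_k}^{(T_1)} = \tfrac{K\alpha_{K,\epsilon}^*}{K-1}(0 - 1\cdot 0) = 0.
\]
All logits are again zero, which yields the uniform attention.

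The only step needing care is the direction convention for the $\beta$ indices, since \Cref{thm: ood error} and its appendix restatement write $\beta_{u_{k'},u_k}$ versus $\beta_{u_k,u_{k'}}$ differently. This turns out to be harmless: $\mathbf{M}^\top\mathbf{M}$ is symmetric and the correction term $\langle u_k,\bar v\rangle\langle u_{k'},\bar v\rangle$ is symmetric, so the value is the same either way. I also need to confirm that the softmax form \cref{eq:approximate of Attnk} carries over verbatim to $U$-prompts. That follows from the equal-count assumption on prompts and \Cref{lemma:calculate haty}.
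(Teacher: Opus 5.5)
Your proposal is correct and takes essentially the same route as the paper's proof. In case 1 you substitute $\mathbf{M}^\top\mathbf{M}=I_K$, $\langle u_k,\bar v\rangle=1$, $\langle u_n,\bar v\rangle=0$; in case 2 you substitute $\mathbf{M}^\top\mathbf{M}=\mathbf{0}$, $\langle u_k,\bar v\rangle=0$; feeding these into \Cref{thm: ood error} makes every $\alpha$- and $\beta$-logit zero, so each feature gets attention $1/K$. Your extra checks, namely that the $\beta$ formula is symmetric in its indices and that the equal-count assumption justifies the softmax form on $U$-prompts, are sound and slightly more careful than the paper, which writes $\mathbf{I}_d$ where $I_K$ is meant.
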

\begin{proof}
    We first prove the statement item \emph{1}. When $\text{span}(V) = \text{span}(U)$, we have $\mathbf{M}^\top \mathbf{M} = \mathbf{I}_d$. Suppose the query token is $u_k$ and $u_k = \bar{v}$. According to \Cref{thm: ood error in appd}, we have:
    \begin{align*}
        \alpha_{u_k}^{(T_1)} &= \frac{K\alpha_{K,\epsilon}^*}{K-1}\left( [\mathbf{I}_d]_{k,k} - \|\bar{v} \|^2 \right)\\
        & = 0.
    \end{align*}
    For $\beta$-type attention weights, we have:
    \begin{align*}
        \beta_{u_{k'},u_{k}}^{(T_1)} &= \frac{K\alpha_{K,\epsilon}^*}{K-1}\left( [\mathbf{I}_d]_{k',k} - \|\bar{v} \| \cdot \langle u_{k'},\bar{v} \rangle\right)\\
        &\stackrel{(i)}{=}0
    \end{align*}
    where $(i)$ follows from the assumption that $\{u_k\}_{k\in [K]}$ are orthogonal to each other. Therefore, according to the definition of attention scores, we have $ \textbf{Attn}_{u} = \frac{1}{K}, \quad \forall\, u \in U$ if the query token is $\bar{v}$.

    Then, we prove the statement item \emph{2}. When $\text{span}(V) \perp \text{span}(U)$, we have $\mathbf{M}^\top \mathbf{M} = \mathbf{0}_d$. It also indicates that $\langle u_k, \bar{v} \rangle = 0$ for all $k\in [K]$. Suppose $x_\qry = u, \forall u \in U$. According to \Cref{thm: ood error in appd}, we have: 
    \begin{align*}
        \alpha_u^{(T_1)} &= 0\\
        \beta_{u',u}^{(T_1)} & = 0, \quad \forall u'\in U, u'\neq u.
    \end{align*}
    Therefore, for any query token, we have $ \textbf{Attn}_{u} = \frac{1}{K},\forall\, u \in U$ according to the definition of attention scores.
\end{proof}
\section{Proof of \Cref{thm: finetune of ab}}\label{sec: proof of thm 5.5}
In this section, we focus on the finetuning phase, where the model is initialized as $Q^{(T_1)}$ and further finetuned on the OOD domain $U$. We will characterize the finetuning dynamics by proving \Cref{thm: finetune of ab}, showing that there exists a polynomial convergence time of the finetuning phase. To do this, we first provide our observations during the finetuning phase in the following subsection

\subsection{Observations During the Finetuning Phase}
We first characterize the connection between $\alpha$-type and $\beta$-type attention associated with the same feature, as presented in \Cref{lemma: alpha = sum beta}. 
\begin{lemma}\label{lemma: alpha = sum beta app}
    (\textbf{\Cref{lemma: alpha = sum beta}}) Let $\Delta_{k,\alpha}^{(t)}$ and $\Lambda_{n,k,\beta}^{(t)}$ denote the updates of $\alpha$-type and $\beta$-type attention weights during the finetuning time $t$, respectively, such that:
    \begin{align*}
        \alpha_{u_k}^{(T_1 + t)} &= \alpha_{u_k}^{(T_1)} + \Delta_{k,\alpha}^{(t)},\\
        \beta_{u_n,u_k}^{(T_1 + t)} &= \beta_{u_n,u_k}^{(T_1)} -  \Lambda_{n,k,\beta}^{(t)}, \ \ \ \ \text{for} \ \ \ \ n\neq k.
    \end{align*}
    Then, we have $\Delta_{k,\alpha}^{(t)} = \sum_{n\not = k} \Lambda_{n,k,\beta}^{(t)}$ at any time $t$
\end{lemma}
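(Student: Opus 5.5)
The plan is to telescope the per-step identity of \Cref{lemma: alpha = -sum beta} over the finetuning iterations. We unroll the GD recursion \cref{eq:finetune of Q} starting from $Q^{(T_1)}$ and project onto the OOD feature directions, as in \cref{eq:alpha and beta dynamics}. This gives, for every $t\ge 0$,
\begin{align*}
\Delta_{k,\alpha}^{(t)} &= \alpha_{u_k}^{(T_1+t)}-\alpha_{u_k}^{(T_1)} = -\sum_{s=0}^{t-1}\eta_{T_1+s}\, u_k^\top \nabla L^{\text{(ft)}}(Q^{(T_1+s)})\, u_k,\\
\Lambda_{n,k,\beta}^{(t)} &= \beta_{u_n,u_k}^{(T_1)}-\beta_{u_n,u_k}^{(T_1+t)} = \sum_{s=0}^{t-1}\eta_{T_1+s}\, u_n^\top \nabla L^{\text{(ft)}}(Q^{(T_1+s)})\, u_k .
\end{align*}
Bilinearity of $Q\mapsto u^\top Q u'$ justifies exchanging the projection with the sum of gradient steps.

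The key step is to fix the finetuning time $T_1+s$ and use \Cref{lemma: alpha = -sum beta} with $u=u_k$:
\begin{equation*}
u_k^\top \nabla L^{\text{(ft)}}(Q^{(T_1+s)})\,u_k = -\sum_{n\neq k} u_n^\top \nabla L^{\text{(ft)}}(Q^{(T_1+s)})\,u_k .
\end{equation*}
That lemma was proved from the closed forms in \Cref{lemma: update of attention weights}, together with $\sum_{u'}\Attn_{u'}=1$, and it holds at any parameter. We multiply by $-\eta_{T_1+s}$ and sum over $s=0,\dots,t-1$, interchanging the two finite sums. The right-hand side then becomes $\sum_{n\neq k}\Lambda_{n,k,\beta}^{(t)}$, and the left-hand side becomes $\Delta_{k,\alpha}^{(t)}$, which is the claim.

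The main obstacle is sign bookkeeping. \Cref{lemma: update of attention weights} defines $\delta$ as the projected gradient with a sign convention, and \cref{eq:alpha and beta dynamics} writes the update with a ``$+$'' in a way that looks inconsistent with descent. I would therefore work directly with the raw projected gradients $u^\top\nabla L^{\text{(ft)}}u'$, which already satisfy the \emph{relative} identity $\text{diag} = -\sum\text{off-diag}$. Under any consistent convention, the $\alpha$-increment is the negative of the sum of the $\beta$-increments. Because $\Lambda$ is defined with a minus sign, this gives exactly $\Delta=\sum\Lambda$, with no dependence on which overall sign is used.

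One more point to check is the index convention: the statement in \Cref{lemma: alpha = sum beta} writes $\Lambda_{k,n,\beta}$, while the appendix version writes $\Lambda_{n,k,\beta}$. I would follow the definition $\beta_{u_n,u_k}^{(T_1+t)}=\beta_{u_n,u_k}^{(T_1)}-\Lambda_{n,k,\beta}^{(t)}$, where the query is $u_k$ in the second slot, since that is the quantity \Cref{lemma: alpha = -sum beta} sums over.
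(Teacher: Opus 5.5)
Your proposal is correct and follows the paper's own proof: write $\Delta_{k,\alpha}^{(t)}$ and $\Lambda_{n,k,\beta}^{(t)}$ as accumulated stepsize-weighted projected gradients, then telescope the per-step identity of \Cref{lemma: alpha = -sum beta} over the finetuning iterations. Your point that this identity is homogeneous in the gradient handles the sign convention correctly: the sign inconsistency in \cref{eq:alpha and beta dynamics} (and the $\delta_{u_n,u_k,\alpha}$ typo in the paper's proof) does not affect the conclusion, and your bookkeeping is more careful than the paper's write-up.
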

\begin{proof}
    According to the definition, we have:
    \begin{align*}
        \Delta_{k,\alpha}^{(t)} &= \sum_{t' = 1}^t \eta_{t'} \cdot \delta_{u_k,\alpha}^{(t')}\\
        \Lambda_{n,k,\beta}^{(t)} &= - \sum_{t' = 1}^t \eta_{t'} \cdot \delta_{u_n,u_k,\alpha}^{(t')}.
    \end{align*}
    Combining this and \Cref{lemma: alpha = -sum beta}, we complete the proof.
\end{proof}

We observe that, during the finetuning phase, the $\alpha$-type attention weights keep increasing while the $\beta$-type attention weights continue decreasing, i.e., $\Delta_{k,\alpha}^{(t)} > 0$ and $\Lambda_{n,k,\beta}^{(t)} > 0$, as mentioned in the main body of this paper. We present this observation formally in the following lemma.

\begin{lemma}\label{lemma: delta > 0}
    Consider the finetuning phase formulated in this work and let $\Delta_{k,\alpha}^{(t)}$ and $\Lambda_{n,k,\beta}^{(t)}$ be as defined in \Cref{lemma: alpha = sum beta app}. If $\epsilon > \left(\frac{K}{\exp((K-1)^2)} \right)^2$, we have: $\Delta_{k,\alpha}^{(t)} > 0$ and $\Lambda_{n,k,\beta}^{(t)} > 0$ for any time $t \ge T_1$.
\end{lemma}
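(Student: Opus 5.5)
The plan is to read both signs off the per-step increments. By \Cref{lemma: alpha = sum beta app}, $\Delta_{k,\alpha}^{(t)}=\sum_{t'}\eta\,\delta_{u_k,\alpha}^{(T_1+t')}$ and $\Lambda_{n,k,\beta}^{(t)}=-\sum_{t'}\eta\,\delta_{u_n,u_k,\beta}^{(T_1+t')}$. It therefore suffices to show, at every finetuning step, that $\delta_{u_k,\alpha}>0$ and $\delta_{u_n,u_k,\beta}<0$ for all $n\neq k$.

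Under the equal-count prompt assumption the expectations in \Cref{lemma: update of attention weights} are deterministic up to the factor $\frac1K$ from the query. The attention scores are then given by the finetuning analogue of \cref{eq:approximate of Attnk}: $A:=\Attn_{u_k}=e^{\alpha_{u_k}}/Z$ and $\Attn_{u_n}=e^{\beta_{u_n,u_k}}/Z$. With finite weights, $A\in(0,1)$, so $\delta_{u_k,\alpha}=\frac1K A\big(\sum_{w\neq u_k}\Attn_w^2+(1-A)^2\big)>0$. This gives $\Delta_{k,\alpha}^{(t)}>0$ immediately, for every $t\ge1$.

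For the $\beta$ part I need the bracket $\sum_{w\neq u_k}\Attn_w^2-\Attn_{u_n}-A(1-A)$ to be negative. Bounding $\sum_{w\neq u_k}\Attn_w^2\le (1-A)\max_{w\neq u_k}\Attn_w$, it is enough to show
\[
(1-A)\Big(\max_{w\neq u_k}\Attn_w - A\Big) < \Attn_{u_n}.
\]
This holds trivially whenever $A\ge \max_{w\neq u_k}\Attn_w$, that is, whenever $\alpha_{u_k}\ge\max_{n}\beta_{u_n,u_k}$. Otherwise I divide by $\Attn_{u_n}$. The inequality then follows if the ratio $\max_w \Attn_w/\Attn_{u_n}=\exp(\max_w\beta_{u_w,u_k}-\beta_{u_n,u_k})$ is controlled, for instance if it is at most $1/(1-A)$ plus the slack coming from the $-A$ term.

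This is where the hypothesis on $\epsilon$ enters. It gives $\log(K/\sqrt\epsilon)<(K-1)^2$, hence $\alpha^*_{K,\epsilon}<(K-1)^2$. By \Cref{thm: ood error in appd}, all entries of $\mathbf M^\top\mathbf M$ and $\langle u_k,\bar v\rangle$ have magnitude at most $1$, so at $T_1$ every $\alpha$- and $\beta$-weight lies in an interval of width $O(K\alpha^*_{K,\epsilon}/(K-1))$. From this I will derive an explicit bound on the initial spread $\max_w\beta_{u_w,u_k}-\min_w\beta_{u_w,u_k}$, and on how far $\alpha_{u_k}$ can lie below $\max_w \beta_{u_w,u_k}$.

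I then run an induction over $t\ge T_1$ with two invariants: (a) the $\beta$-spread for each query $u_k$ does not exceed its value at $T_1$; (b) $\alpha_{u_k}-\max_w\beta_{u_w,u_k}$ does not decrease. Invariant (b) follows from $\delta_{u_k,\alpha}>0$ together with the step's $\beta$ decreases. For (a), I will compare increments directly from \Cref{lemma: update of attention weights}. There, $-\delta_{u_n,u_k,\beta}\propto \Attn_{u_n}\big(\Attn_{u_n}+A(1-A)-\sum_w\Attn_w^2\big)$, which is increasing in $\Attn_{u_n}$ on the relevant range. So larger $\beta$'s decrease faster and the spread contracts.

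The main obstacle is this last step: converting the $\epsilon$-condition into the quantitative ratio bound needed in the case $A<\max_w\Attn_w$, and checking that the monotone-in-$\Attn_{u_n}$ comparison survives a discrete step with $\eta=\mathcal O(1)$, so that the ordering of the $\beta$'s is not overshot. I would first try to treat the bad case $A<\max_w\Attn_w$ by showing that the spread bound forces $A\ge \frac1K e^{-(\text{spread})}$. I would then verify the displayed inequality by a direct one-variable estimate in $A$. If overshooting is an issue, I would fall back on the smoothness of the softmax to bound the one-step change of each $\Attn$ by $O(\eta/K)$.
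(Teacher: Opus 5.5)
\textbf{Where the plan breaks: the base case at $T_1$.} Your reduction to per-step signs, the $\alpha$-part, and the sufficient condition $(1-A)(\max_{w\neq u_k}\Attn_w-A)<\Attn_{u_n}$ are all fine. The problem is the base case, which you yourself flag as the main obstacle: it cannot be obtained the way you plan.

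You use only that the entries of $\mathbf M^\top\mathbf M$ and $\langle u_k,\bar v\rangle$ are bounded by $1$. That controls the $\beta$-spread and the gap $\max_n\beta_{u_n,u_k}-\alpha_{u_k}$ only up to order $\alpha^*_{K,\epsilon}$. The hypothesis on $\epsilon$ merely gives $\alpha^*_{K,\epsilon}<(K-1)^2$, so neither quantity need be $O(1)$.

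No ratio bound, and no estimate such as $A\ge\frac1K e^{-\mathrm{spread}}$, can close this, because the statement is false at that generality. Since $Q^{(T_1)}=c\,(\mathbf V\mathbf V^\top-\bar v\bar v^\top)$ with $c=K\alpha^*_{K,\epsilon}/(K-1)$, the scores at $T_1$ are $c$ times the Gram matrix $G$ of the projections of the $u_i$ onto $\operatorname{span}(V)\cap\bar v^\perp$. For large $d$, any $0\preceq G\preceq I$ of rank at most $K-1$ occurs. A concrete counterexample:
\begin{itemize}
\item Take $K=5$ and $\epsilon=10^{-8}$. This is admissible, since $(5e^{-16})^2\approx 3\cdot10^{-13}$, and it gives $c\approx 10.4$.
\item Set $G_{kk}=\frac14$, $G_{km}=G_{mk}=\frac{3}{10}$, $G_{mm}=\frac12$, and all other entries $0$.
\item Then $A\approx 0.34$, $\Attn_{u_m}\approx 0.58$, and $\Attn_{u_n}\approx 0.026$ for the other three $n$.
\item The bracket $\sum_{w\neq u_k}\Attn_w^2-\Attn_{u_n}-A(1-A)\approx 0.336-0.026-0.226>0$.
\end{itemize}
So these $\beta_{u_n,u_k}$ increase at the first finetuning step.

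\textbf{The missing ingredient.} You need \Cref{ass:finetune}, which is implicit in this part of the paper and is exactly what the paper's proof uses. It gives $\alpha_{u_k}^{(T_1)}=\gamma_k^2\alpha^*_{K,\epsilon}$ and $\beta_{u_n,u_k}^{(T_1)}=-\gamma_n\gamma_k\alpha^*_{K,\epsilon}/(K-1)\in[-x,x]$, where $x=|\gamma_k|\alpha^*_{K,\epsilon}/(K-1)$.

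Write the bracket as $\sum_{m\neq k,n}\Attn_{u_m}(\Attn_{u_m}-\Attn_{u_n}-A)-2A\Attn_{u_n}$. It then suffices to have $e^{\beta_{u_m,u_k}}<e^{\alpha_{u_k}}+e^{\beta_{u_n,u_k}}$ for all $m,n\neq k$; this also implies your condition. At $T_1$ this follows from $e^{x}-e^{-x}<e^{x^2}\le e^{\gamma_k^2\alpha^*_{K,\epsilon}}$. The last step is exactly $\alpha^*_{K,\epsilon}\le (K-1)^2$, and this is how the $\epsilon$-hypothesis enters.

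\textbf{Propagating the base case.} With that base case, your invariants (a) and (b) do carry the sign forward. The condition is equivalent to $e^{-g}+e^{-s}>1$, with $g=\max_m\beta_{u_m,u_k}-\alpha_{u_k}$ and $s$ the $\beta$-spread, and (a)--(b) keep both $g$ and $s$ from growing. This is more careful than the paper's ``iterate the argument''. Note, however, that (a) needs a step-size restriction to rule out overshooting, such as $\eta\le\min_n e^{-\beta^{(T_1)}_{u_n,u_k}}$ from \Cref{lemma: beta decreasing speed}.
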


\begin{proof}
    According to \Cref{lemma: update of attention weights}, we first have:
    \begin{align}\label{eq: lower of beta}
        \delta_{u_n,u_k,\beta}^{(t)} &= \E \left[ \mathbf{1}_{\{x_\qry = u_k \}  } \Attn^{(t)}_{u_n} \left( \sum_{m\neq k}{\Attn^{(t)}_{u_m}}^2 - \Attn^{(t)}_{u_n} - \Attn^{(t)}_{u_k}{(1-\Attn^{(t)}_{u_k})} \right)\right] \nonumber\\
        &= \E \left[ \mathbf{1}_{\{x_\qry = u_k \}  } \Attn^{(t)}_{u_n} \left( \underbrace{\sum_{\substack{m\neq k \\ m\neq n }}{\Attn^{(t)}_{u_m}}^2 - \Attn^{(t)}_{u_n}(1-\Attn^{(t)}_{u_n}) - \Attn^{(t)}_{u_k}{(1-\Attn^{(t)}_{u_k})} }_{\text{term } g_t}\right)\right].
    \end{align}
    Define $C_t = \exp(\alpha_{u_k}^{(t)}) + \sum_{m\neq k}\exp(\beta_{u_m,u_k}^{(t)})$. Then, we rewrite term $c_1$ as:
    \begin{align}\label{eq: gt}
        \text{term } g_t &= \frac{1}{C_t^2} \left( \sum_{\substack{m\neq k \\ m\neq n }} \exp^2(\beta_{u_m,u_k}^{(t)} ) - \exp(\beta_{u_n,u_k})(\exp(\alpha_{u_k}^{(t)}) + \sum_{\substack{m\neq k \\ m\neq n }}\exp(\beta_{u_m,u_k}^{(t)})) - \exp(\alpha_{u_k}^{(t)})(\sum_{m\neq k} \exp(\beta_{u_m,u_k}^{(t)}))\right)\nonumber\\
        &= \frac{1}{C_t^2} \left( \sum_{\substack{m\neq k \\ m\neq n }} \exp^2(\beta_{u_m,u_k}^{(t)} ) - (\exp(\alpha_{u_k}^{(t)}) + \exp(\beta_{u_n,u_k}^{(t)})) \sum_{\substack{m\neq k \\ m\neq n }} \exp(\beta_{u_m,u_k}^{(t)} ) -2\exp(\alpha_{u_k}^{(t)})\exp(\beta_{u_n,u_k}^{(t)})\right)\nonumber\\
        &< \frac{1}{C_t^2} \left( \sum_{\substack{m\neq k \\ m\neq n }} \exp^2(\beta_{u_m,u_k}^{(t)} ) - (\exp(\alpha_{u_k}^{(t)}) + \exp(\beta_{u_n,u_k}^{(t)})) \sum_{\substack{m\neq k \\ m\neq n }} \exp(\beta_{u_m,u_k}^{(t)} ) \right)
    \end{align}
    We first consider at time $t=T_1$. Fix $\gamma_k$, we notice that $\beta_{u',u_k}^{(T_1)} \in \left[ -\frac{\gamma_k\alpha^*_{K,\epsilon}}{K-1}, \frac{\gamma_k\alpha^*_{K,\epsilon}}{K-1}\right],$ for all $u'\in U, u'\neq u_k$. Then, we have:
    \begin{align}\label{eq: suppout for g t0}
        &\hspace{-1cm}\sum_{\substack{m\neq k \\ m\neq n }} \exp^2(\beta_{u_m,u_k}^{(T_1)} ) - (\exp(\alpha_{u_k}^{(T_1)}) + \exp(\beta_{u_n,u_k}^{(T_1)})) \sum_{\substack{m\neq k \\ m\neq n }} \exp(\beta_{u_m,u_k}^{(T_1)} )\nonumber\\
        & < (K-2)\exp^2(\frac{\gamma_k\alpha^*_{K,\epsilon}}{K-1}) - \left(\exp(\gamma_k^2 \alpha^*_{K,\epsilon}) + \exp(-\frac{\gamma_k\alpha_{K,\epsilon}^*}{K-1}) \right) (K-2)\exp(\frac{\gamma_k\alpha^*_{K,\epsilon}}{K-1}) \nonumber\\
        & = (K-2)\exp(\frac{\gamma_k\alpha^*_{K,\epsilon}}{K-1})\left( \exp(\frac{\gamma_k\alpha^*_{K,\epsilon}}{K-1}) - \exp(\gamma_k^2 \alpha^*_{K,\epsilon}) - \exp(-\frac{\gamma_k\alpha_{K,\epsilon}^*}{K-1})  \right) \nonumber\\
        & <  (K-2)\exp(\frac{\gamma_k\alpha^*_{K,\epsilon}}{K-1})\left( \exp(\frac{\gamma_k^2{\alpha^*_{K,\epsilon}}^2}{(K-1)^2}) - \exp(\gamma_k^2 \alpha^*_{K,\epsilon})  \right)
    \end{align}
    Recall that $\alpha_{K,\epsilon}^* < \log\frac{K}{\sqrt{\epsilon}}$. When $\epsilon > \left(\frac{K}{\exp((K-1)^2)} \right)^2$, we have:
    \begin{equation*}
        \exp(\frac{\gamma_k^2{\alpha^*_{K,\epsilon}}^2}{(K-1)^2}) - \exp(\gamma_k^2 \alpha^*_{K,\epsilon}) < 0.
    \end{equation*}
    By combining this and \cref{eq: suppout for g t0}, we have $\text{term } g_{T_1} < 0$, which indicates that $\delta_{u_n,u_k,\beta}^{(T_1)} < 0$. It is straightforward to show that $\delta_{u_k,\alpha}^{(T_1)} > 0$. Therefore, we have $\alpha_{u_k}^{(T_1 + 1)} > \alpha_{u_k}^{(T_1)}$ and $\beta_{u_m,u_k}^{T_1+1} < \beta_{u_m,u_k}^{T_1}$. By iteratively applying this argument, we can show that $\delta_{u_n,u_k,\beta}^{(T_1 + t)} < 0$ for all $t>0$, which completes the proof.
\end{proof}

We note that the above lemma requires $\epsilon > \left(\frac{K}{\exp((K-1)^2)} \right)^2$, which does hold for $K=2$. However, for the two-feature case, it is straightforward to check $\delta_{u_n,u_k,\beta}^{(t)} < 0$ according to \Cref{lemma: alpha = sum beta app}.

\begin{lemma}\label{lemma: beta decreasing speed}
    Consider the finetuning phase formulated in this work. Suppose $ \eta_t \le \min_{n\not=k} \left\{\frac{1}{\exp(\beta_{u_{n},u_k}^{(T_1)})} \right\}$ and  $\beta_{u_n,u_k}^{(T_1)} <\beta_{u_{n'},u_k}^{(T_1)}$ , then for all $t\ge0$, we have:
    \begin{align*}
        \beta_{u_n,u_k}^{(T_1 + t)} &<\beta_{u_{n'},u_k}^{(T_1 + t)}\\
        \delta_{u_n,k,\beta}^{(T_1 + t)} & \ge \delta_{u_{n'},k,\beta}^{(T_1 + t)}.
    \end{align*}
\end{lemma}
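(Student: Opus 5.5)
We argue by induction on $t$, proving both claims together. For a fixed query $u_k$, abbreviate $a^{(t)} = \exp(\alpha_{u_k}^{(T_1+t)})$ and $b_m^{(t)} = \exp(\beta_{u_m,u_k}^{(T_1+t)})$ for $m\neq k$, so that $\Attn_{u_m}^{(t)} = b_m^{(t)}/C_t$ with $C_t = a^{(t)}+\sum_{m\neq k} b_m^{(t)}$. Since every prompt contains each feature equally often, \Cref{lemma: update of attention weights} involves no randomness beyond the query indicator. Hence
\begin{equation*}
\delta_{u_m,u_k,\beta}^{(t)} = \frac{1}{K}\,\frac{b_m^{(t)}}{C_t}\Big(S_t - \frac{b_m^{(t)}}{C_t}\Big), \qquad S_t := \sum_{w\neq k}\big(\Attn_{u_w}^{(t)}\big)^2 - \Attn_{u_k}^{(t)}\big(1-\Attn_{u_k}^{(t)}\big),
\end{equation*}
where $S_t$ does not depend on $m$. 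So the $\beta$-gradient is $h(x) := x(S_t - x)$ evaluated at $x = \Attn_{u_m}^{(t)}$, and both claims reduce to comparing $h$ at the two arguments $x_n = b_n/C_t < x_{n'} = b_{n'}/C_t$.

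\textbf{Gradient ordering at time $t$, given $\beta_n<\beta_{n'}$.} We have
\begin{equation*}
h(x_n)-h(x_{n'}) = (x_{n'}-x_n)\big(x_n+x_{n'}-S_t\big).
\end{equation*}
Since $x_{n'}-x_n>0$, it suffices to show $x_n+x_{n'} \ge S_t$. This is immediate whenever $S_t\le 0$. In the regime of interest $S_t<0$: from \eqref{eq: gt} and the proof of \Cref{lemma: delta > 0}, we have $\sum_{w\neq k}\Attn_{u_w}^2 \le (1-\Attn_{u_k})\max_w \Attn_{u_w} < \Attn_{u_k}(1-\Attn_{u_k})$ once $\Attn_{u_k}$ dominates every $\Attn_{u_w}$. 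I would therefore first show that $\alpha_{u_k}$ exceeds every $\beta_{u_m,u_k}$ throughout finetuning. This holds at $T_1$ by \Cref{thm: ood error in appd} and the bounds used in \Cref{lemma: delta > 0}, and it is preserved afterward because $\alpha$ increases while each $\beta$ decreases (\Cref{lemma: delta > 0}). Given $S_t\le 0$, we get $\delta_{u_n,u_k,\beta}^{(t)} \ge \delta_{u_{n'},u_k,\beta}^{(t)}$. If $S_t>0$ does occur, I would instead try to bound $S_t \le x_n + x_{n'}$ directly using the dominance of $\Attn_{u_k}$.

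\textbf{Order preservation, which I expect to be the main obstacle.} Write $\beta_m^{+} = \beta_m + \eta\,\delta_m$, using the sign convention of \eqref{eq:alpha and beta dynamics}. A gradient ordering in the "wrong" direction could still let $\beta_n$ overtake $\beta_{n'}$ if the step is too large, and this is exactly what the step-size condition must prevent. The gap changes by
\begin{equation*}
\beta_{n'}^{+}-\beta_n^{+} = (\beta_{n'}-\beta_n) - \eta\big(h(x_n)-h(x_{n'})\big).
\end{equation*}
We bound the subtracted term by
\begin{equation*}
h(x_n)-h(x_{n'}) \le (x_{n'}-x_n)\cdot 2 \le \frac{2}{C_t}\big(b_{n'}-b_n\big) = \frac{2 b_n}{C_t}\big(e^{\beta_{n'}-\beta_n}-1\big).
\end{equation*}
We then want $\eta\cdot\frac{2b_n}{C_t}\,(e^{g}-1) < g$ for $g = \beta_{n'}-\beta_n$, since $|x_n+x_{n'}-S_t|\le 2$.

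The difficulty is that $e^{g}-1$ is not bounded by a multiple of $g$ for large $g$. I would handle this with the mean value theorem: $b_{n'}-b_n = e^{\xi}g$ for some $\xi\le\beta_{n'}$. This gives a decrement of at most $2\eta\, e^{\beta_{n'}}g/C_t$. Since $\beta$ only decreases during finetuning, $e^{\beta_{n'}^{(t)}} \le e^{\beta_{n'}^{(0)}}$, and $C_t \ge a^{(t)} \ge 1$ (as $\alpha\ge0$). So the hypothesis $\eta \le \min_m e^{-\beta_{u_m,u_k}^{(T_1)}}$ gives a decrement of at most $2g\cdot(\text{the attention factor } (x_n+x_{n'}-S_t)/2)$. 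Tightening the constant so that the factor is strictly below $1$ uses $x_n+x_{n'}-S_t<1$. This holds because $x_n+x_{n'}\le 1-\Attn_{u_k}$ and $-S_t\le \Attn_{u_k}(1-\Attn_{u_k}) < \Attn_{u_k}$, so their sum is less than $1$.

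This keeps the gap positive, which closes the induction.
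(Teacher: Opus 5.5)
Your overall structure matches the paper's proof. You factor $\delta_{u_n,u_k,\beta}-\delta_{u_{n'},u_k,\beta}=\frac{1}{K}(x_{n'}-x_n)(x_n+x_{n'}-S_t)$ and read off its sign from the negativity of the $\beta$-updates. You then keep the gap $\beta_{u_{n'},u_k}-\beta_{u_n,u_k}$ positive by induction, using the mean value theorem, $C_t\ge 1$, the monotone decrease of $\beta$, and $\eta\,e^{\beta^{(T_1)}_{u_{n'},u_k}}\le 1$.

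The order-preservation half is sound. Your estimate $x_n+x_{n'}-S_t<1$ comes from $x_n+x_{n'}\le 1-x_k$ and $-S_t\le x_k(1-x_k)$, with $x_k$ the attention on $u_k$. It is exactly the upper bound on $\delta_{u_n,u_k,\beta}-\delta_{u_{n'},u_k,\beta}$ that the induction needs. The paper's text records this inequality with the wrong orientation (as $S_t-x_n-x_{n'}\le 1$ in your notation). Your version is the correct one, and it even yields a strictly positive gap.

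The gap is in the gradient-ordering half. You obtain $S_t\le x_n+x_{n'}$ from the claim that $\alpha_{u_k}$ exceeds every $\beta_{u_m,u_k}$, which you say holds at $T_1$ by \Cref{thm: ood error in appd}. Under \Cref{ass:finetune}, that theorem gives $\alpha_{u_k}^{(T_1)}=\gamma_k^2\alpha^*_{K,\epsilon}$ and $\beta_{u_m,u_k}^{(T_1)}=-\gamma_k\gamma_m\alpha^*_{K,\epsilon}/(K-1)$. Dominance therefore needs $\gamma_k^2>-\gamma_k\gamma_m/(K-1)$ for every $m\neq k$. This fails whenever some $\gamma_m$ has sign opposite to $\gamma_k$ and $|\gamma_m|\ge (K-1)|\gamma_k|$.

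A concrete case is $K=3$, $\gamma_k=0.1$, $\gamma_n=-0.9$, $\gamma_{n'}=-1$. Here the lemma's hypothesis $\beta_{u_n,u_k}^{(T_1)}<\beta_{u_{n'},u_k}^{(T_1)}$ holds. Both $x_n,x_{n'}$ exceed $x_k$, so $\sum_{w\neq k}x_w^2>x_k\sum_{w\neq k}x_w=x_k(1-x_k)$, i.e. $S_t>0$ already at the start of finetuning. So $S_t<0$ is not the operative case, and your fallback ("bound $S_t$ directly using dominance") rests on the same false premise.

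The repair costs nothing and is what the paper does. The proof of \Cref{lemma: delta > 0}, which you already rely on for the decrease of $\beta$, gives $\delta^{(t)}_{u_m,u_k,\beta}=\frac{1}{K}x_m(S_t-x_m)<0$ for every $m\neq k$. That is, $S_t<x_m$, hence $x_n+x_{n'}-S_t>x_{n'}>0$. This is the paper's term $g_t=S_t-x_n<0$. With that substitution your argument is complete, and it depends on \Cref{lemma: delta > 0} exactly as the paper's does.
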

\begin{proof}
We first prove $\delta_{n,k,\beta}^{(t)} \ge \delta_{n',k,\beta}^{(t)}$ when $t\ge T_1$ as follows. To simplify notations, we denote $D = \sum_{m\neq k}{\Attn_{u_m}^{(t)}}^2 - \Attn_{u_k}^{(t)}{(1-\Attn_{u_k}^{(t)})}$. Consider:
    \begin{align*}
         \delta_{u_n,u_k,\beta}^{(t)} - \delta_{u_{n'},u_k,\beta}^{(t)} &= \E \left[ \mathbf{1}_{\{x_\qry = u_k \}  } \Attn_{u_n}^{(t)} \left( D - \Attn_{u_n}^{(t)} \right)\right] -  \E \left[ \mathbf{1}_{\{x_\qry = u_k \}  } \Attn_{u_n'}^{(t)} \left( D - \Attn_{u_n'}^{(t)} \right)\right]\\
         &= \E \left[ \mathbf{1}_{\{x_\qry = u_k \}  } D\left(  \Attn_{u_n}^{(t)} -\Attn_{u_{n'}}^{(t)} \right)\right] - \E\left[  \mathbf{1}_{\{x_\qry = u_k\}  }\left({\Attn_{u_n}^{(t)}} ^2  - {\Attn_{u_n'}^{(t)}} ^2 \right)\right] \\
         &= \E \left[ \mathbf{1}_{\{x_\qry = u_k \}  } \left( D- \Attn_{u_n}^{(t)} -\Attn_{u_n'}^{(t)}\right)\left(  \Attn_{u_n}^{(t)} -\Attn_{u_n'}^{(t)} \right)\right] \\
         &\stackrel{(i)}{<} 0,
    \end{align*}
    where ${(i)}$ follows from the same argument as \Cref{eq: lower of beta} in \Cref{lemma: delta > 0}. On the other hand, we have:
    \begin{align}\label{eq: bound on delta beta}
         \delta_{u_n,u_k,\beta}^{(t)} - \delta_{u_{n'},u_k,\beta}^{(t)} 
         &= \E \left[ \mathbf{1}_{\{x_\qry = u_k \}  } \left( D- \Attn_{u_n}^{(t)} -\Attn_{u_{n'}}^{(t)}\right)\left(  \Attn_{u_n}^{(t)} -\Attn_{u_{n'}}^{(t)} \right)\right] \nonumber\\
         &\stackrel{(i)}{\ge}  \E \left[ \mathbf{1}_{\{x_\qry = u_k \}  } \left(  \Attn_{u_n}^{(t)} -\Attn_{u_n'}^{(t)} \right)\right]\nonumber \\
         &\stackrel{(ii)}{\ge}  \exp(\beta_{u_n,u_k}^{(t)}) - \exp(\beta_{u_{n'},u_k}^{(t)})
    \end{align}
    where $(i)$ follows from $ \Attn_{u_n}^{(t)} -\Attn_{u_n'}^{(t)}\le 0 $ and $D- \Attn_{u_n}^{(t)} -\Attn_{u_n'}^{(t)} \le 1$, $(ii)$ follows from the definition of the attention score.

    Next, we prove $\beta_{u_n,u_k}^{(T_1 + t)} <\beta_{u_{n'},u_k}^{(T_1+t)}$. We apply mathematical induction to complete the proof. We hypothesis that  $\beta_{u_n,u_k}^{(T_1 + t)} <\beta_{u_{n'},u_k}^{(T_1+t)}$ and prove it for time $t+1$. Consider: 
    \begin{align*}
        \beta_{u_{n'},u_k}^{(T_1 +t +1)} -\beta_{u_n,u_k}^{(T_1 +t+1)} &= \beta_{u_{n'},u_k}^{(T_1 +t )} -\beta_{u_n,u_k}^{(T_1 +t)} + \eta_t( \delta_{u_{n'},u_k,\beta}^{(T_1+t)} -  \delta_{u_n,u_k,\beta}^{(T_1+t)})\\
        & \stackrel{(i)}{>} \beta_{u_{n'},u_k}^{(T_1 +t )} -\beta_{u_n,u_k}^{(T_1 +t)}  + \eta_t(\exp(\beta_{u_n,u_k}^{(T_1+t)}) - \exp(\beta_{u_{n'},u_k}^{(T_1+t)}))\\
        & \stackrel{(ii)}{=} \beta_{u_{n'},u_k}^{(T_1 +t )} -\beta_{u_n,u_k}^{(T_1 +t)}  + \eta_t\exp(\xi)(\beta_{u_n,u_k}^{(T_1+t)} - \beta_{u_{n'},u_k}^{(T_1+t)}) \\
        & \stackrel{(iii)}{>} (1 - \eta_t \exp(\beta_{u_{n'},u_k}^{(T_1)}))(\beta_{u_{n'},u_k}^{(T_1 +t )} -\beta_{u_n,u_k}^{(T_1 +t)})\\
        & \stackrel{(iv)}{\ge} 0
    \end{align*}
    where $(i)$ follows from \cref{eq: bound on delta beta}, $(ii)$ follows from mean value theorem and $\xi \in [\beta_{u_{n},u_k}^{(T_1 +t )}, \beta_{u_{n'},u_k}^{(T_1 +t)}]$, $(iii)$ follows from \Cref{lemma: delta > 0} and $(iv)$ follows from $\eta_t \le \frac{1}{\exp(\beta_{u_{n'},u_k}^{(T_1)})}$.
\end{proof}

We have investigated the dynamics of attention scores and attention weights during the finetuning phase. Now, we establish the convergence formally as follows. Different from \Cref{thm: finetune of ab}, we give a more detailed condition on the stepsize and explain it later.

\begin{theorem}\label{thm: finetune of ab app}
    \textbf{(\Cref{thm: finetune of ab})} Suppose \Cref{ass:finetune} holds and the GD stepsize is constant $\eta$, where $ \eta \le \min_{n\not=k} \left\{\frac{1}{\exp(\beta_{u_{n},u_k}^{(T_1)})} \right\}$. During finetuning, there exists $t_1 \le t_2 \le ... \le t_K$, such that at each time $t_m = \mathcal{O}(\text{poly}(K,\frac{1}{\epsilon},\frac{1}{\eta}))$, we have $L^{\text{(ft)}} _{u_{k(m)}}(Q^{(T_1 + t_m)}) \le \frac{\epsilon}{K}$. Specifically, let $\Delta_{k,\alpha}^{(t)}$ and $\Lambda_{k,n,\beta}^{(t)}$ denote the updates of $\alpha$-type and $\beta$-type attention weights during the finetuning time $t$, respectively, such that:
    \begin{align*}
        \alpha_{u_k}^{(T_1 + t)} &= \alpha_{u_k}^{(T_1)} + \Delta_{k,\alpha}^{(t)},\\
        \beta_{u_n,u_k}^{(T_1 + t)} &= \beta_{u_n,u_k}^{(T_1)} -  \Lambda_{n,k,\beta}^{(t)}, \ \ \ \ \text{for} \ \ \ \ n\neq k.
    \end{align*}
    Then, we have:
    \begin{align*}
        \Delta_{k(m),\alpha}^{(t_m)} &= \Theta\left( (1 - \gamma_{k(m)}^2 - \frac{\gamma_{k(m)}\gamma_\alpha}{K}) \log \frac{K}{\sqrt{\epsilon}} \right),\\
        \Lambda_{n,k(m),\beta}^{(t_m)} &=\Theta \left( \frac{\Delta_{k(m),\alpha}^{(t_m)}}{K} + \frac{\gamma_{k(m)} \gamma_\beta}{K} \log \frac{K}{\sqrt{\epsilon}} \right),
    \end{align*}
    for some  constant $\gamma_\alpha,\gamma_\beta \in [ \gamma_{\min}, \gamma_{\max}]$. More specifically, for all $n\neq m$, we have:
    \begin{align*}
        \Delta_{m,\alpha}^{(t_m)} &\le \left(1 - \frac{1}{K}\right) \left[ \log \left ( \frac{(K-1)(1-\sqrt{\epsilon})}{\sqrt{\epsilon}} \right) - \left( \gamma_m^2 + \frac{\gamma_m\gamma_{N(m)}}{K-1} \cdot\alpha^*_{K,\epsilon} \right)\right]\\
        \Delta_{m,\alpha}^{(t_m)} &\ge \left(1 - \frac{1}{K}\right) \left[ \log \left ( \frac{(K-1)(1-\sqrt{\frac{2K\epsilon}{K+1}})}{\sqrt{\frac{2K\epsilon}{K+1}}} \right) - \left( \gamma_m^2 + \frac{\gamma_m\gamma_{M(m)}}{K-1} \cdot\alpha^*_{K,\epsilon} \right)\right]\\
        \Lambda_{n,m,\beta}^{(t_m)} &\le \frac{(\gamma_{M(m)}\gamma_m -  \gamma_{N(m)}\gamma_m)\alpha^*_{K,\epsilon}}{K-1} + \frac{\Delta_{m,\alpha}^{(t_m)}}{K-1},\\
        \Lambda_{n,m,\beta}^{(t_m)} &\ge \frac{(\gamma_{N(m)}\gamma_m -  \gamma_{M(m)}\gamma_m)\alpha^*_{K,\epsilon}}{K-1} + \frac{\Delta_{m,\alpha}^{(t_m)}}{K-1},
    \end{align*}
    where $N(m) = \arg\min_{n\neq m} \gamma_m\cdot\gamma_n$ and $M(m) = \arg\max_{n\neq m} \gamma_m\cdot\gamma_n$.
\end{theorem}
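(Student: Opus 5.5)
Under \Cref{ass:finetune}, the plan is to reduce the finetuning dynamics for each feature $u_m$ to a one-dimensional problem in $\Delta_{m,\alpha}^{(t)}$, as in the proof of \Cref{lemma: A1star}. Three steps, in order:

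\textbf{Step 1: initial weights.} First evaluate the initial weights from \Cref{thm: ood error in appd}. Under the assumption we have $\mathbf{M}=\mathrm{diag}(\gamma_1,\dots,\gamma_K)$ and $\langle u_k,\bar v\rangle=\gamma_k/\sqrt K$. This gives
\[
\alpha_{u_m}^{(T_1)}=\alpha^*_{K,\epsilon}\gamma_m^2
\qquad\text{and}\qquad
\beta_{u_n,u_m}^{(T_1)}=-\frac{\gamma_m\gamma_n}{K-1}\alpha^*_{K,\epsilon}.
\]
In particular, the initial $\beta$'s attached to query $u_m$ are sandwiched between the values indexed by $N(m)$ and $M(m)$.

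\textbf{Step 2: control the spread of the $\beta$'s.} By \Cref{lemma: beta decreasing speed}, which uses the stepsize condition, the order of the $\beta_{u_n,u_m}^{(T_1+t)}$ is preserved throughout finetuning. Combined with \Cref{lemma: delta > 0}, all $\Lambda$'s are positive. From the difference bound \eqref{eq: bound on delta beta}, the gap between any two $\Lambda_{n,m,\beta}^{(t)}$ can only shrink relative to the initial gap $\frac{|\gamma_m\gamma_{M(m)}-\gamma_m\gamma_{N(m)}|}{K-1}\alpha^*_{K,\epsilon}$.

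Together with \Cref{lemma: alpha = sum beta app} ($\Delta=\sum_n\Lambda$), this yields
\[
\Lambda_{n,m,\beta}^{(t)}=\frac{\Delta_{m,\alpha}^{(t)}}{K-1}\pm\frac{(\gamma_{M(m)}-\gamma_{N(m)})\gamma_m\alpha^*_{K,\epsilon}}{K-1},
\]
which is exactly the stated pair of bounds on $\Lambda$. Consequently every $\beta_{u_n,u_m}^{(T_1+t)}$ lies between
\[
-\frac{\gamma_m\gamma_{M(m)}\alpha^*+\Delta}{K-1}
\qquad\text{and}\qquad
-\frac{\gamma_m\gamma_{N(m)}\alpha^*+\Delta}{K-1}.
\]

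\textbf{Step 3: convergence time and bounds on $\Delta$.} Plug these envelopes into \Cref{lemma: loss by ab}, using \Cref{lemma: support prove Loss upper bound} to replace the heterogeneous $\beta$'s by their extremal common values. The loss for $u_m$ is then sandwiched between two losses of the symmetric training-phase form, each with effective $\alpha$-gap
\[
\alpha_{u_m}-\beta=\frac{K}{K-1}\Bigl(\gamma_m^2\alpha^*\tfrac{K-1}{K}+\tfrac{\gamma_m\gamma\,\alpha^*}{K}+\Delta\Bigr)
\]
for the appropriate $\gamma\in\{\gamma_{N(m)},\gamma_{M(m)}\}$.

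Solving the symmetric equation ``loss $=\epsilon/K$'' exactly as $\alpha^*_{K,\epsilon}$ was solved gives thresholds of the form $(1-\tfrac1K)\log\frac{(K-1)(1-\sqrt{\epsilon'})}{\sqrt{\epsilon'}}$. Here $\epsilon'$ ranges between $\epsilon$ and $\frac{2K\epsilon}{K+1}$; this range comes from the ratio of the two extremal loss expressions. Subtracting the initial offset gives the upper and lower bounds on $\Delta_{m,\alpha}^{(t_m)}$.

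The $\Theta$-statements follow by taking $\gamma_\alpha,\gamma_\beta$ as intermediate values, using $\alpha^*=\Theta(\log\frac{K}{\sqrt\epsilon})$. Polynomial time $t_m$ follows as in \Cref{lemma: A1star}: while the loss is above $\epsilon/K$, $\delta_{u_m,\alpha}\ge\Attn(1-\Attn)^2$ is bounded below by a polynomial in $1/K$ and $\epsilon$. The $t_m$ are then sorted to define the ordering $k(m)$.

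\textbf{Main obstacle.} I expect Step 3 to be the hardest part: relating the non-symmetric loss to symmetric surrogates tightly enough to get the stated constants. \Cref{lemma: support prove Loss upper bound} needs $e^{\alpha}\ge\max e^{\beta}$. I would verify this from the initial values and monotonicity, and handle the lower bound by a crude $\sum e^{2\beta}\le(\sum e^\beta)^2$ comparison, which produces the $\frac{2K}{K+1}$ factor.
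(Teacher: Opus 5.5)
Your proposal is correct and follows the paper's proof essentially step for step. The ingredients match: the initial weights from \Cref{thm: ood error in appd}, order preservation from \Cref{lemma: beta decreasing speed} combined with $\Delta_{m,\alpha}=\sum_{n\neq m}\Lambda_{n,m,\beta}$ to get $\Lambda_{M(m),m,\beta}\le\Delta_{m,\alpha}/(K-1)\le\Lambda_{N(m),m,\beta}$ and the $\Lambda$ sandwich, the symmetric surrogates via \Cref{lemma: support prove Loss upper bound}, and a lower bound on $\delta_{u_m,\alpha}$ for polynomial $t_m$.

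One minor correction on the constants. Both symmetric surrogates have the exact threshold $\epsilon'=\frac{2(K-1)\epsilon}{K}$, and the stated $\epsilon'=\epsilon$ and $\epsilon'=\frac{2K\epsilon}{K+1}$ are only relaxations of it. The first comes from $\sum_n e^{2\beta_n}\le(\sum_n e^{\beta_n})^2$ and the second from $\sum_n e^{2\beta_n}\ge(\sum_n e^{\beta_n})^2/K$. So the crude comparison you mention produces the $\sqrt{\epsilon}$ in the upper bound, not the $\frac{2K}{K+1}$ factor, and the constants do not come from the ratio of the two extremal losses.
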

\begin{proof}
    With loss of generalization, we assume $k(m) = m$. We first claim the existence of $t_m$ s.t. $L^{\text{(ft)}} _{u_{m}}(Q^{(T_1 + t_m)}) \le \frac{\epsilon}{K}$. According to \Cref{thm: ood error in appd}, we have:
    \begin{align}\label{eq: initial of finetuning}
        \alpha_{u_m}^{(T_1)} &= \gamma_m^2 \alpha^*_{K,\epsilon}, \nonumber\\
        \beta_{u_n,u_m}^{(T_1)} &= \frac{-\gamma_n \gamma_m\alpha^*_{K,\epsilon}}{K-1}.
    \end{align}
    By following from \Cref{lemma: beta decreasing speed,lemma: alpha = -sum beta}, we have: 
    \begin{align}\label{eq: beta max and min}
        \Lambda_{N(m),m,\beta}^{(t)} &\ge \frac{\Delta_{m,\alpha}^{(t)}}{K-1},\nonumber\\
        \Lambda_{M(m),m,\beta}^{(t)} &\le \frac{\Delta_{m,\alpha}^{(t)}}{K-1}.
    \end{align}
    
    Then, we consider:
    \begin{align}\label{eq:upper bound on loss}
        L^{\text{(ft)}}_{u_m}(Q^{(T_1+t_m)}) &= \frac{(\sum_{n\not = m} \exp(\beta_{u_n,u_m}^{(T_1+t_m)}))^2 + \sum_{n\not = m} \exp(\beta_{u_n,u_m}^{(T_1+t_m)})^2}{2K(\exp(\alpha_{u_m}^{(T_1+t_m)}) + \sum_{n\not = m} \exp(\beta_{u_n,u_m}^{(T_1+t_m)}))^2} \nonumber\\
        &\stackrel{(i)}{\le} \frac{( (K-1) \exp(\beta_{u_{N(m)},u_m}^{(T_1+t_m)}))^2 + (K-1) \exp(\beta_{u_{N(m)},u_m}^{(T_1+t_m)})^2}{2K(\exp(\alpha_{u_m}^{(T_1+t_m)}) + (K-1) \exp(\beta_{u_{    N(m)},u_m}^{(T_1+t_m)}))^2} \nonumber\\
        &= \frac{( (K-1) \exp(\beta_{u_{N(m)},u_m}^{(T_1)} - \Lambda_{N(m),m,\beta}^{(t_m)}))^2 + (K-1) \exp(\beta_{u_{N(m)},u_m}^{(T_1)} - \Lambda_{N(m),m,\beta}^{(t_m)})^2}{2K(\exp(\alpha_{u_m}^{(T_1)} + \Delta_{m,\alpha}^{(t_m)}) + (K-1) \exp(\beta_{u_{N(m)},u_m}^{(T_1)} - \Lambda_{N(m),m,\beta}^{(t_m)}))^2}\nonumber\\
        &\stackrel{(ii)}{\le } \frac{( (K-1) \exp(\beta_{u_{N(m)},u_m}^{(T_1)} - \frac{\Delta_{m,\alpha}^{(t)}}{K-1}))^2 + (K-1) \exp(\beta_{u_{N(m)},u_m}^{(T_1)} - \frac{\Delta_{m,\alpha}^{(t)}}{K-1})^2}{2K(\exp(\alpha_{u_m}^{(T_1)} + \Delta_{m,\alpha}^{(t_m)}) + (K-1) \exp(\beta_{u_{N(m)},u_m}^{(T_1)} - \frac{\Delta_{m,\alpha}^{(t_m)}}{K-1}))^2}
    \end{align}
    where $(i)$ follows from \Cref{lemma: support prove Loss upper bound}, $(ii)$ follows from \Cref{lemma: support prove Loss upper bound,eq: beta max and min}.
    Let the above equation less than $\frac{\epsilon}{K}$, we derive an upper bound as follows.
    \begin{equation}\label{eq: upper bound on Delta alpha}
        \Delta_{m,\alpha}^{(t_m)} \le \left(1 - \frac{1}{K}\right) \left[ \log \left ( \frac{(K-1)(1-\sqrt{\epsilon})}{\sqrt{\epsilon}} \right) - \left( \gamma_m^2 + \frac{\gamma_m\gamma_{N(m)}}{K-1} \cdot\alpha^*_{K,\epsilon} \right)\right].
    \end{equation}
    On the other hand, we derive the lower bound on the $\alpha$-type attention weights as follows.
    \begin{align*}
        L^{\text{(ft)}}_{u_m}(Q^{(T_1+t_m)}) &= \frac{(\sum_{n\not = m} \exp(\beta_{u_n,u_m}^{(T_1+t_m)}))^2 + \sum_{n\not = m} \exp(\beta_{u_n,u_m}^{(T_1+t_m)})^2}{2K(\exp(\alpha_{u_m}^{(T_1+t_m)}) + \sum_{n\not = m} \exp(\beta_{u_n,u_m}^{(T_1+t_m)}))^2} \\
        &\stackrel{(i)}{\ge} \frac{( (K-1) \exp(\beta_{u_{M(m)},u_m}^{(T_1+t_m)}))^2 + (K-1) \exp(\beta_{u_{M(m)},u_m}^{(T_1+t_m)})^2}{2K(\exp(\alpha_{u_m}^{(T_1+t_m)}) + (K-1) \exp(\beta_{u_{    M(m)},u_m}^{(T_1+t_m)}))^2} \\
        &= \frac{( (K-1) \exp(\beta_{u_{M(m)},u_m}^{(T_1)} - \Lambda_{M(m),m,\beta}^{(t_m)}))^2 + (K-1) \exp(\beta_{u_{M(m)},u_m}^{(T_1)} - \Lambda_{M(m),m,\beta}^{(t_m)})^2}{2K(\exp(\alpha_{u_m}^{(T_1)} + \Delta_{m,\alpha}^{(t_m)}) + (K-1) \exp(\beta_{u_{M(m)},u_m}^{(T_1)} - \Lambda_{M(m),m,\beta}^{(t_m)}))^2}\\
        &\stackrel{(ii)}{\ge } \frac{( (K-1) \exp(\beta_{u_{M(m)},u_m}^{(T_1)} - \frac{\Delta_{m,\alpha}^{(t)}}{K-1}))^2 + (K-1) \exp(\beta_{u_{M(m)},u_m}^{(T_1)} - \frac{\Delta_{m,\alpha}^{(t)}}{K-1})^2}{2K(\exp(\alpha_{u_m}^{(T_1)} + \Delta_{m,\alpha}^{(t_m)}) + (K-1) \exp(\beta_{u_{M(m)},u_m}^{(T_1)} - \frac{\Delta_{m,\alpha}^{(t_m)}}{K-1}))^2}
    \end{align*}
    where $(i)$ follows from \Cref{lemma: support prove Loss upper bound}, $(ii)$ follows from \Cref{lemma: support prove Loss upper bound,eq: beta max and min}. Let the above equation larger than $\frac{\epsilon}{K}$. We derive a lower bound as follows.
    \begin{equation}\label{eq: lower bound on Delta alpha}
        \Delta_{m,\alpha}^{(t_m)} \ge \left(1 - \frac{1}{K}\right) \left[ \log \left ( \frac{(K-1)(1-\sqrt{\frac{2K\epsilon}{K+1}})}{\sqrt{\frac{2K\epsilon}{K+1}}} \right) - \left( \gamma_m^2 + \frac{\gamma_m\gamma_{M(m)}}{K-1} \cdot\alpha^*_{K,\epsilon} \right)\right].
    \end{equation}
    Then, we focus on $\beta$-type attention weights. According to \Cref{lemma: beta decreasing speed}, for any $n\neq m$, we have
    \begin{align*}
        \Lambda_{M(m),m,\beta}^{(t)} \le &\Lambda_{n,m,\beta}^{(t)} \le \Lambda_{N(m),m,\beta}^{(t)},\\
        \beta_{u_{M(m)},u_m,\beta}^{(T_1 + t)} \ge &\beta_{u_{n},u_m,\beta}^{(T_1 + t)} \ge \beta_{u_{N(m)},u_m,\beta}^{(T_1 + t)},
    \end{align*}
    where the second inequality is equivalent to
    \begin{equation*}
        \beta_{u_{M(m)},u_m,\beta}^{(T_1)} - \Lambda_{M(m),m,\beta}^{(t)} \ge \beta_{u_{n},u_m,\beta}^{(T_1 + t)} - \Lambda_{n,m,\beta}^{(t)} \ge \beta_{u_{N(m)},u_m,\beta}^{(T_1 + t)} - \Lambda_{N(m),m,\beta}^{(t)}.
    \end{equation*}
    By combining this, \cref{eq: initial of finetuning,eq: beta max and min}, we have:
    \begin{align*}
        \Lambda_{N(m),m,\beta}^{(t)} &\le \frac{(\gamma_{M(m)}\gamma_m -  \gamma_{N(m)}\gamma_m)\alpha^*_{K,\epsilon}}{K-1} + \frac{\Delta_{m,\alpha}^{(t)}}{K-1},\\
        \Lambda_{M(m),m,\beta}^{(t)} &\ge \frac{(\gamma_{N(m)}\gamma_m -  \gamma_{M(m)}\gamma_m)\alpha^*_{K,\epsilon}}{K-1} + \frac{\Delta_{m,\alpha}^{(t)}}{K-1}.
    \end{align*}
    To prove the existence of $t_m$, we first recall that for any $u\in U$, we have $\Delta_{u,\alpha}^{(t_m)} = \sum_{t = 0}^{t_m} \eta_t \delta_{u,\alpha}^{(t)}$. By combining \Cref{lemma: update of attention weights} and the fact that $\Attn_{u}^{(t)}$ is monotonically increasing, for $t\in[T_1, T_1+t_m]$, we have
    \begin{align*}
        \delta_{u,\alpha}^{(t)} &\ge \left[\mathbf{1}_{\{x_\qry = u \}  }\Attn_u^{(T_1)}  \right] \E \left[ \mathbf{1}_{\{x_\qry = u \}  } \left( \sum_{ w\in U,w\not = u}{\Attn_{w}^{(t)}}^2 + {(1-\Attn^{(t)}_{u})}^2 \right)\right]\\
        &\stackrel{(i)}{\ge} \left[\mathbf{1}_{\{x_\qry = u \}  }\Attn_u^{(T_1)}  \right] \frac{\epsilon}{K}
    \end{align*}
    where $(i)$ follows from \Cref{lemma:calculation of L} and $t\ge T_1$. Consider:
    \begin{align*}
        \Delta_{m,\alpha}^{(t_m)} &= \sum_{t = 0}^{t_m} \eta_t \delta_{u_m,\alpha}^{(t)} \\
        &\ge  \left[\mathbf{1}_{\{x_\qry = u_m \}  }\Attn_{u_m}^{(T_1)}  \right] \frac{\epsilon}{K} \cdot t_m \eta.
    \end{align*}
    Therefore, there exist $ t_m =\mathcal{O}(\text{poly}(K,\frac{1}{\epsilon},\frac{1}{\eta}))$ such that
    \begin{equation*}
        \Delta_{m,\alpha}^{(t_m)} \ge \left(1 - \frac{1}{K}\right) \left[ \log \left ( \frac{(K-1)(1-\sqrt{\epsilon})}{\sqrt{\epsilon}} \right) - \left( \gamma_m^2 + \frac{\gamma_m\gamma_{N(m)}}{K-1} \cdot\alpha^*_{K,\epsilon} \right)\right].
    \end{equation*}
    By combining the above inequality and \cref{eq:upper bound on loss}, we can conclude that $L^{\text{(ft)}} _{u_{m}}(Q^{(T_1 + t_m)}) \le \frac{\epsilon}{K}$.
\end{proof}

We note that $T_1$ denotes the end of the training phase. Here, the step size is constrained by $\beta$-type attention weights at time $T_1$, which can be derived from \Cref{thm: ood error in appd}. Under \Cref{ass:finetune}, this constraint has the order of $\mathcal{O}(\exp(\frac{\log(K/\sqrt{\epsilon})}{K}))$, which converges to $1$ as $K$ grows. Therefore, the allowable stepsize is asymptotically of constant order, as presented in \Cref{thm: finetune of ab} in the main body.

\section{Proof of \Cref{thm: performance on original set}}
After the finetuning phase, we evaluate the finetuned model on the original training set, as presented in \Cref{thm: performance on original set}. We restate the theorem and prove it as follows.
\begin{theorem}\label{thm: performance on original set app}
    \textbf{(\Cref{thm: performance on original set})} Consider the problem formulated in this work. Suppose \Cref{ass:finetune} holds and $\Delta_{k,\alpha}^{(t)}, \Lambda_{n,k,\beta}^{(t)}$ are defined in \Cref{thm: finetune of ab app}. After the finetuning phase, the performance of the original feature set can be captured by
    \begin{equation*}
        L^{\text{(tr)}}_{v_k}(Q^{(T_1+T_2)}) = \frac{(\sum\limits_{n\not = k} \exp(\beta_{v_n,v_k}^{(T_1+T_2)}))^2 + \sum\limits_{n\not = k} \exp(\beta_{v_n,v_k}^{(T_1+T_2)})^2}{2(\exp(\alpha_{v_k}^{(T_1+T_2)}) + \sum\limits_{n\not = k} \exp(\beta_{v_n,v_k}^{(T_1+T_2)}))^2},
    \end{equation*}
    where 
    \begin{align*}
        \alpha_{v_k}^{(T_1+T_2)} &= \alpha_{v_k}^{(T_1)} + \gamma_k^2 \Delta_{k,\alpha}^{(T_2)},\\
        \beta_{v_n,v_k}^{(T_1+T_2)} &= \beta_{v_n,v_k}^{(T_1)} - \gamma_k\gamma_n \Lambda_{n,k,\beta}^{(T_2)}, \ \ \ \ \text{for} \ \ \ \ n\neq k.
    \end{align*}
\end{theorem}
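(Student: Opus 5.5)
The plan splits into two independent parts: the loss expression, and the formulas for the attention weights at time $T_1+T_2$.

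\textbf{Loss expression.} The expression for $L^{\text{(tr)}}_{v_k}(Q^{(T_1+T_2)})$ is a pure function of the $\alpha$- and $\beta$-type weights on $V$. It follows exactly as in \Cref{lemma:calculation of L} and \Cref{lemma: loss by ab}, with $U$ replaced by $V$. Concretely, \Cref{lemma:calculate haty} and Gaussianity of $w$ reduce the squared error to $\|v_k-\sum_{v'}\Attn_{v'}v'\|^2$, and orthonormality of $V$ expands this into $\sum_{n\neq k}\Attn_{v_n}^2+(1-\Attn_{v_k})^2$. Substituting the softmax form \cref{eq:approximate of Attnk} at $Q^{(T_1+T_2)}$ gives the stated ratio. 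This uses only that training prompts contain each feature equally often, which holds for any $Q$.

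\textbf{Attention weights.} Write $Q^{(T_1+T_2)}=Q^{(T_1)}-\eta\sum_{t=0}^{T_2-1}\nabla L^{\text{(ft)}}(Q^{(T_1+t)})$, using the finetuning update \cref{eq:finetune of Q}. Then
\[
v_k^\top Q^{(T_1+T_2)}v_{k'} = v_k^\top Q^{(T_1)}v_{k'} - \eta\sum_t v_k^\top\nabla L^{\text{(ft)}}(Q^{(T_1+t)})v_{k'}.
\]
By \Cref{lemma: gradient projection}, under \Cref{ass:finetune} each summand equals $\gamma_k\gamma_{k'}\,u_k^\top\nabla L^{\text{(ft)}}(Q^{(T_1+t)})u_{k'}$. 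Pulling out $\gamma_k\gamma_{k'}$, the sum becomes exactly the cumulative $U$-projected update.

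For $k'=k$ this cumulative update is $\Delta_{k,\alpha}^{(T_2)}$, giving $\alpha_{v_k}^{(T_1+T_2)}=\alpha_{v_k}^{(T_1)}+\gamma_k^2\Delta_{k,\alpha}^{(T_2)}$. For $k'=n\neq k$ it is $-\Lambda_{n,k,\beta}^{(T_2)}$, giving $\beta_{v_n,v_k}^{(T_1+T_2)}=\beta_{v_n,v_k}^{(T_1)}-\gamma_k\gamma_n\Lambda_{n,k,\beta}^{(T_2)}$. The initial values $\alpha_{v_k}^{(T_1)}$ and $\beta_{v_n,v_k}^{(T_1)}$ are those of \Cref{lemma: A1star}.

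\textbf{Main obstacle.} The delicate step is bookkeeping of signs and index order. The paper's convention, via \cref{eq:alpha and beta dynamics}, writes $\alpha^{(t+1)}=\alpha^{(t)}+\eta\delta$ with $\delta$ the projected gradient, and defines $\Delta,\Lambda$ accordingly in \Cref{lemma: alpha = sum beta app}. I would fix the convention once, consistent with \Cref{lemma: alpha = sum beta app}, so that the $V$-side update is literally the $U$-side update scaled by $\gamma_k\gamma_n$.

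The index pairing also needs care: $\beta_{v_n,v_k}=v_n^\top Qv_k$ pairs with $u_n^\top\nabla u_k$, i.e. $\Lambda_{n,k,\beta}$. This goes through with no transposition issue because \Cref{lemma: gradient projection} holds for every ordered pair $(k,k')$. The sign in front of $\Lambda$ then follows: the main-text display shows $+$ but the appendix shows $-$, and the derivation gives the appendix's $-$.

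Finally, I would verify that \Cref{lemma: gradient projection} holds at every finetuning step, not just at $T_1$. Its proof relies only on the structural fact \cref{eq:support to prove A7} that finetuning gradients are supported on $\text{span}(U)\times\text{span}(U)$, which follows from \Cref{lemma:gradient} for any $Q$.
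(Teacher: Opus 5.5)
Your proposal is correct and follows the paper's proof: the loss formula comes from \Cref{lemma:calculation of L} combined with the softmax form \cref{eq:approximate of Attnk}, and the weight formulas come from summing \Cref{lemma: gradient projection} over the finetuning steps, where your displacement-based bookkeeping (treating $\Delta$ and $\Lambda$ as the actual changes of $\alpha_{u_k}$ and $\beta_{u_n,u_k}$) correctly gives the appendix's minus sign. One caveat comes from the paper rather than from you: the indicator $\mathbf{1}_{\{x_\qry = v_k\}}$ in the expectation contributes a factor $1/K$, so the derivation literally gives $2K$ in the denominator, as in \Cref{lemma: loss by ab} and consistent with $L^{\text{(tr)}}_{v_k}(Q^{(T_1)})=\epsilon/K$, and the displayed factor $2$ is really the loss conditioned on $x_\qry=v_k$.
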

\begin{proof}
    By following from \Cref{lemma:calculation of L,eq:approximate of Attnk}, it is straightforward to prove the expression of $ L^{\text{(tr)}}_{v_k}(Q^{(T_1+T_2)})$. Then, we have:
    \begin{align*}
        \alpha_{v_k}^{(T_1+T_2)} &= \alpha_{v_k}^{(T_1)} + \sum_{t = T_1}^{T_2} \eta_t \cdot v_k^\top \nabla L^{\text{(ft)}}(Q^{(t)}) v_{k'} \\
        &\stackrel{(i)}{=} \alpha_{v_k}^{(T_1)} + \sum_{t = T_1}^{T_2} \eta_t \cdot \gamma_k^2 u_k^\top \nabla L^{\text{(ft)}}(Q^{(t)}) u_{k'}\\
        &= \alpha_{v_k}^{(T_1)}  + \gamma_k^2 \Delta_{k,\alpha}^{(T_2 )}.
    \end{align*}
    where $(i)$ follows from \Cref{lemma: gradient projection}. Similarly, we have:
    \begin{align*}
        \beta_{v_n,v_k}^{(T_1+T_2)} &=\beta_{v_n,v_k}^{(T_1)} + \sum_{t = T_1}^{T_2} \eta_t \cdot v_n^\top \nabla L^{\text{(ft)}}(Q^{(t)}) v_{k} \\
        &\stackrel{}{=} \beta_{v_n,v_k}^{(T_1)} + \sum_{t = T_1}^{T_2} \eta_t \cdot \gamma_k \gamma_n u_n^\top \nabla L^{\text{(ft)}}(Q^{(t)}) u_{k}\\
        &= \beta_{v_n,v_k}^{(T_1)}  - \gamma_k \gamma_n \Lambda_{n,k,\beta}^{(T_2)}.
    \end{align*}
\end{proof}

\section{Case Study}\label{app: case study}

\textbf{Two-features Case:} Consider a two-feature case (i.e., $K=2$) with correlation coefficients $\gamma_1$ and $\gamma_2$. We find that features with larger absolute correlation magnitudes converge earlier.
After finetuning, the performance on the training features $V$ is characterized as follows.

\begin{theorem}\label{thm: 2 feature performance}
    Suppose $K=2$, $|\gamma_1| > |\gamma_2|$, and \Cref{ass:finetune} holds. After finetuning, the model performance on the original feature set can be captured by
    \begin{align*}
    \textstyle
        L_{v_1}^{\text{(tr)}}(Q^{(T_1 + T_2)}) &\in \left[\frac{1}{2}\left( \frac{1}{1 + g_\epsilon^{c_1} } \right) ^2, \frac{1}{2}\left( \frac{1}{1 + g_\epsilon^{d_1} } \right) ^2   \right], \\
        L_{v_2}^{\text{(tr)}}(Q^{(T_1 + T_2)}) &= \frac{1}{2}\left( \frac{1}{1 + g_\epsilon^{c_2} } \right) ^2,
    \end{align*}
    for some constant $c_1,c_2,d_1$ depending on $\gamma_1,\gamma_2$, as presented in \Cref{thm: 2 feature performance app} in the appendix.
    
\end{theorem}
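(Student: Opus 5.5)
For $K=2$ everything collapses to scalars, and I will specialize \Cref{thm: performance on original set} to this case. With two features, each query $v_k$ has a single competitor. The loss formula becomes
\[
L^{\text{(tr)}}_{v_k}=\tfrac12\Big(\tfrac{e^{\beta}}{e^{\alpha}+e^{\beta}}\Big)^2=\tfrac12\big(1+e^{\alpha-\beta}\big)^{-2}.
\]
Hence the whole task is to determine the gap $\alpha_{v_k}^{(T_1+T_2)}-\beta_{v_{n},v_k}^{(T_1+T_2)}$ and write it as $c\log g_\epsilon$. The natural choice is $g_\epsilon=\exp(\alpha^*_{2,\epsilon})$, equivalently a quantity of order $1/\sqrt\epsilon$.

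First I compute the starting gaps. By \Cref{lemma: A1star} with $K=2$, we have $\alpha_{v_k}^{(T_1)}=\alpha^*$ and $\beta^{(T_1)}=-\alpha^*$, so the initial training gap is $2\alpha^*$. By \Cref{thm: ood error} under \Cref{ass:finetune}, and as in \cref{eq: initial of finetuning}, the OOD quantities are $\alpha_{u_k}^{(T_1)}=\gamma_k^2\alpha^*$ and $\beta_{u_n,u_k}^{(T_1)}=-\gamma_1\gamma_2\alpha^*$.

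Next I use that for $K=2$, \Cref{lemma: alpha = sum beta} gives $\Lambda_{n,k,\beta}=\Delta_{k,\alpha}$ exactly. The OOD gap for query $u_k$ is therefore $(\gamma_k^2+\gamma_1\gamma_2)\alpha^*+2\Delta_{k,\alpha}$. The stopping rule $L^{\text{(ft)}}_{u_k}=\epsilon/K$ then pins the OOD gap exactly at the value $G^*$ solving $\frac12(1+e^{G^*})^{-2}=\epsilon/2$. This gives an explicit expression for $\Delta_{k,\alpha}^{(t_k)}$.

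Then I order the convergence times. Since $|\gamma_1|>|\gamma_2|$, feature $u_1$ starts with a larger gap, so I expect it to converge first ($t_1\le t_2$). I would verify this with a monotonicity comparison of the two scalar recursions from \Cref{lemma: update of attention weights}. Under this ordering, $T_2=t_2$, so $\Delta_{2,\alpha}^{(T_2)}$ is pinned exactly. Plugging it into \Cref{thm: performance on original set} gives the training gap for $v_2$:
\[
2\alpha^*+(\gamma_2^2+\gamma_1\gamma_2)\Delta_{2,\alpha}^{(T_2)}.
\]
This is an exact multiple of $\log g_\epsilon$ up to the identification of $G^*$ with $\alpha^*$-scale logs, and it defines $c_2$.

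Finally I treat $v_1$, which is the main obstacle. After $u_1$ converges at $t_1$, gradient descent keeps updating $\Delta_{1,\alpha}$ until $T_2$, so $\Delta_{1,\alpha}^{(T_2)}$ is not pinned. I would bound it from both sides. The lower bound is the value at $t_1$, since $\Delta$ is increasing by \Cref{lemma: delta > 0}. For the upper bound I would compare rates. Since $u_1$ already has a larger gap, its per-step increment $\delta_{u_1,\alpha}$ is smaller than that of $u_2$, because $A(1-A)^2$ is decreasing once $A>1/3$. So $\Delta_{1,\alpha}^{(T_2)}-\Delta_{1,\alpha}^{(t_1)}\le\Delta_{2,\alpha}^{(T_2)}-\Delta_{2,\alpha}^{(t_1)}$, which is controlled by the pinned $u_2$ quantities. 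Because the coefficient $\gamma_1^2+\gamma_1\gamma_2$ may have either sign, the two ends of the $\Delta_{1,\alpha}$ range map to $c_1$ and $d_1$ (swapping if negative). This yields the interval stated for $L^{\text{(tr)}}_{v_1}$. Making the rate comparison rigorous, and getting the convergence ordering, is where I expect the most work.
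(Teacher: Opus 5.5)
Your plan follows the paper's proof step for step. You reduce to the scalar gap, use $\Lambda_{n,k,\beta}^{(t)}=\Delta_{k,\alpha}^{(t)}$ for $K=2$, and pin $\Delta_{1,\alpha}^{(t_1)}$ and $\Delta_{2,\alpha}^{(t_2)}$ by the stopping rule. You then order $t_1\le t_2$ so that $T_2=t_2$, read off $c_2$, and take $\Delta_{1,\alpha}^{(t_1)}$ as the lower end for $\Delta_{1,\alpha}^{(T_2)}$, which gives $d_1$. The gap is at the upper end, which is what produces $c_1$.

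\textbf{The upper bound on $\Delta_{1,\alpha}^{(T_2)}$.} Comparing increments only on $[t_1,T_2]$ gives
\[
\Delta_{1,\alpha}^{(T_2)}\le \Delta_{1,\alpha}^{(t_1)}+\Delta_{2,\alpha}^{(t_2)}-\Delta_{2,\alpha}^{(t_1)} .
\]
Contrary to your remark, $\Delta_{2,\alpha}^{(t_1)}$ is not pinned by any stopping condition; only $\Delta_{1,\alpha}^{(t_1)}$ and $\Delta_{2,\alpha}^{(t_2)}$ are. Dropping it via $\Delta_{2,\alpha}^{(t_1)}\ge 0$ gives a valid but looser interval, with a larger exponent in place of $c_1$.

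The paper's $c_1$ comes from $\Delta_{1,\alpha}^{(T_2)}\le\Delta_{2,\alpha}^{(t_2)}$. That requires $\Delta_{1,\alpha}^{(t)}\le\Delta_{2,\alpha}^{(t)}$ over the whole finetuning window, so the per-step comparison $\delta_{u_1,\alpha}\le\delta_{u_2,\alpha}$ must hold from $T_1$ on. Your justification, monotonicity of $A(1-A)^2$ for $A>1/3$, does not cover this. When $\gamma_1\gamma_2<0$, the initial OOD gap of $u_2$ is $G_2=\gamma_2(\gamma_1+\gamma_2)\alpha^*_{2,\epsilon}<0$. This tends to $-\infty$ as $\epsilon\to0$, so $A_2<1/3$ at the start.

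The missing ingredient is as follows. Write the rate as $f(G_k)$ with $f(x)=e^{x}/(1+e^{x})^{3}$, which is decreasing on $(-\log 2,\infty)$. Two facts then suffice:
\begin{itemize}
\item the reflection identity $f(x)=e^{-x}f(-x)$;
\item $G_1+G_2=(\gamma_1+\gamma_2)^2\alpha^*_{2,\epsilon}+2(\Delta_{1,\alpha}+\Delta_{2,\alpha})>0$ at all times.
\end{itemize}
If $G_2<0$, the second fact gives $G_1>|G_2|$, hence $f(G_1)<f(|G_2|)<f(G_2)$. If $G_2\ge 0$, your ordering step ($G_1>G_2$) and monotonicity suffice. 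Summing over steps yields $\Delta_{1,\alpha}^{(t)}<\Delta_{2,\alpha}^{(t)}$ for all $t$. This is exactly the inequality $\Delta_{1,\alpha}^{(t_2)}<\Delta_{2,\alpha}^{(t_2)}$ the paper uses. The paper asserts it by citing the $\beta$-ordering lemma and $\Delta=\sum\Lambda$, which do not give it directly, so this argument is needed either way.

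\textbf{Smaller corrections.}
\begin{itemize}
\item Since $\alpha^*_{2,\epsilon}=\frac12\log\frac{1-\sqrt\epsilon}{\sqrt\epsilon}$, the right base is $g_\epsilon=e^{2\alpha^*_{2,\epsilon}}=\frac{1-\sqrt\epsilon}{\sqrt\epsilon}$. Then $G^*=\log g_\epsilon=2\alpha^*_{2,\epsilon}$ exactly, with no ``up to''. Your choice $e^{\alpha^*_{2,\epsilon}}$ is of order $\epsilon^{-1/4}$ and doubles every exponent, so you would not recover the leading $1$ in $c_1,c_2,d_1$.
\item Under $|\gamma_1|>|\gamma_2|$ the coefficient $\gamma_1^2+\gamma_1\gamma_2=\gamma_1(\gamma_1+\gamma_2)$ is strictly positive, so no swapping occurs.
\item Your computation will give $1-\frac{\gamma_1^2+\gamma_1\gamma_2}{2}$ inside $d_1$. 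The minus sign printed in the appendix is inconsistent with the paper's own initialization $\beta_{u_2,u_1}^{(T_1)}=-\gamma_1\gamma_2\alpha^*_{2,\epsilon}$.
\end{itemize}
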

\Cref{thm: 2 feature performance} characterizes the model performance in the training set after finetuning, based on which we can determine whether the forgetting is positive or negative.
\begin{corollary}\label{cor: domains 2 feature}
    Suppose $K=2$, $|\gamma_1| > |\gamma_2|$ and \Cref{ass:finetune} holds. After finetuning, the following statements are true: (1) If $c_1 < 1$, we have $F_{v_1} > 0$; if $d_1 > 1$, we have $F_{v_1} < 0$. (2) $F_{v_2} > 0 $ if and only if $\gamma_2^2 + \gamma_1\gamma_2 < 0$. (3) If $c_2 < \log _{g_\epsilon}  \Big( \frac{1}{\sqrt{2\epsilon - 1/(1+g_\epsilon^{c_1})^2}} \Big) $, we have 
        $F <0.$ (4) If $c_2 > \log _{g_\epsilon}  \Big( \frac{1}{\sqrt{2\epsilon - 1/(1+g_\epsilon^{d_1})^2}} \Big) $, we have 
        $F > 0.$
\end{corollary}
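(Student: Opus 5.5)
The plan is to reduce all four statements to the monotonicity of a single scalar function, using the closed forms of \Cref{thm: 2 feature performance}. Write $\phi(c) = \frac{1}{2}\left(\frac{1}{1+g_\epsilon^{c}}\right)^2$.

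\textbf{Step 1: normalize at $T_1$.} For $K=2$, \Cref{thm: performance on original set} gives $L^{\text{(tr)}}_{v_k} = \frac{1}{2}\big(1+\exp(\alpha_{v_k}-\beta_{v_n,v_k})\big)^{-2}$. I would take $g_\epsilon = \exp(2\alpha^*_{2,\epsilon})$ in \Cref{thm: 2 feature performance app}. Then \Cref{lemma: A1star}, which gives $\alpha_{v}^{(T_1)} = \alpha^*_{2,\epsilon}$ and $\beta^{(T_1)}_{v',v} = -\alpha^*_{2,\epsilon}$, yields $L^{\text{(tr)}}_{v_k}(Q^{(T_1)}) = \phi(1) = \epsilon/2$. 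The value $g_\epsilon>1$ is fixed by this identity, and $\phi$ is strictly decreasing in $c$. Hence exponent $1$ is exactly the pre-finetuning loss, and $F_{v_k}>0$ if and only if the post-finetuning exponent is below $1$.

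\textbf{Step 2: parts (1) and (2).} For (1), \Cref{thm: 2 feature performance} places $L_{v_1}^{\text{(tr)}}(Q^{(T_1+T_2)})$ in $[\phi(c_1),\phi(d_1)]$.
\begin{itemize}
\item If $c_1<1$, then $L_{v_1}\ge\phi(c_1)>\phi(1)=\epsilon/2$, so $F_{v_1}>0$.
\item If $d_1>1$, then $L_{v_1}\le\phi(d_1)<\phi(1)$, so $F_{v_1}<0$.
\end{itemize}
For (2), I would compute $c_2$ explicitly from \Cref{thm: performance on original set}. With $K=2$, \Cref{lemma: alpha = sum beta} gives $\Lambda_{1,2,\beta}^{(T_2)}=\Delta_{2,\alpha}^{(T_2)}$. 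Therefore
\[
\alpha_{v_2}-\beta_{v_1,v_2} = 2\alpha^*_{2,\epsilon} + (\gamma_2^2+\gamma_1\gamma_2)\Delta_{2,\alpha}^{(T_2)},
\]
which means $c_2 = 1 + (\gamma_2^2+\gamma_1\gamma_2)\Delta_{2,\alpha}^{(T_2)}/(2\alpha^*_{2,\epsilon})$. Since $\Delta_{2,\alpha}^{(T_2)}>0$ (this is \Cref{lemma: delta > 0}, and for $K=2$ it follows directly from \Cref{lemma: alpha = sum beta app} as remarked after that lemma), $c_2<1$ holds exactly when $\gamma_2^2+\gamma_1\gamma_2<0$. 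This gives the ``if and only if'' in (2).

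\textbf{Step 3: parts (3) and (4).} Write $F = L_{v_1}+L_{v_2}-\epsilon$ and substitute $L_{v_2}=\phi(c_2)$ together with the bracket for $L_{v_1}$ from (1). The sign of $F$ is then governed by comparing $\phi(c_2)$ with $\epsilon-\phi(c_1)$ or with $\epsilon-\phi(d_1)$. Inverting $\phi(c)=\epsilon-\phi(c_1)$ means solving $(1+g_\epsilon^{c})^{-2} = 2\epsilon-(1+g_\epsilon^{c_1})^{-2}$ for $c$, which gives the threshold $\log_{g_\epsilon}\big(1/\sqrt{2\epsilon-1/(1+g_\epsilon^{c_1})^2}\big)$ up to the additive ``$1+$''. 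The same inversion with $d_1$ gives the threshold in (4).

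I would then verify each printed direction via monotonicity. Case (3) pairs $c_1$ with ``$c_2<$ threshold'' and concludes $F<0$. Case (4) pairs $d_1$ with ``$c_2>$ threshold'' and concludes $F>0$.

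\textbf{Main obstacle.} The hard step is this last verification. I must check that the one-sided bound used on $L_{v_1}$ matches each conclusion, and that replacing $g_\epsilon^{c}$ by $1+g_\epsilon^{c}$ inside the threshold goes in the favorable direction, using $g_\epsilon^{c}<1+g_\epsilon^{c}$. I would also need the radicand to be positive. My first attempt is to write $F$ as a function of $c_2$ with $c_1$ or $d_1$ fixed and track each inequality explicitly. If a printed direction does not follow from this chain, I would pin down the exact definitions of $c_1$ and $d_1$ in \Cref{thm: 2 feature performance app}, in particular which one gives the lower bound, since the argument hinges on that ordering.
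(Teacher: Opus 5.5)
\textbf{Parts (1) and (2).} Your Steps 1 and 2 are correct. They are exactly the ``compare with $\epsilon/2$'' argument that the paper's one-line proof invokes.
\begin{itemize}
\item Your choice $g_\epsilon=e^{2\alpha^*_{2,\epsilon}}$ agrees with the paper's $g_\epsilon=(1-\sqrt{\epsilon})/\sqrt{\epsilon}$.
\item $\phi(c)=\tfrac12(1+g_\epsilon^{c})^{-2}$ is decreasing and $\phi(1)=\epsilon/2$, which settles (1).
\item For (2), going through $\Delta_{2,\alpha}^{(T_2)}>0$ instead of the explicit formula for $c_2$ is fine. Positivity is immediate from \Cref{lemma: update of attention weights}, since $\delta_{u,\alpha}^{(t)}$ is the expectation of a positive quantity. \Cref{lemma: delta > 0} itself does not apply at $K=2$.
\item Your margin correctly uses $\beta_{v_n,v_k}^{(T_1+T_2)}=\beta_{v_n,v_k}^{(T_1)}-\gamma_k\gamma_n\Lambda_{n,k,\beta}^{(T_2)}$, which is what \Cref{lemma: gradient projection} gives.
\end{itemize}

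\textbf{The gap: parts (3) and (4).} Step 3 cannot be closed, because items (3) and (4) are false as printed. Your fallback of re-checking which of $c_1,d_1$ gives the lower bound on $L_{v_1}$ would not help. The problem is how the direction of the $c_2$-inequality is paired with the sign of the conclusion. Since $\phi$ is decreasing, the hypothesis $c_2<\tau$ gives $L_{v_2}=\phi(c_2)>\phi(\tau)$, which is a \emph{lower} bound on $L_{v_2}$. Combined with any bound on $L_{v_1}$, it can only ever certify $F>0$. Likewise, $c_2>\tau$ can only certify $F<0$.

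\emph{Counterexample to (3).} Take $\epsilon=10^{-4}$ (so $g_\epsilon=99$), $\gamma_1=1$, $\gamma_2=-\tfrac12$. Then $c_1=\tfrac{41}{32}$ and $c_2=\tfrac{55}{64}\approx0.859$. The threshold in (3) is about $0.931$, so the hypothesis of (3) holds. But $L_{v_2}=\phi(c_2)\approx1.79\times10^{-4}>\epsilon$, hence $F\ge L_{v_2}-\epsilon>0$.

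\emph{Counterexample to (4).} Keep $\epsilon,\gamma_1$ and take $\gamma_2=0.4$. Because $d_1>1$, we have $2\epsilon-(1+g_\epsilon^{d_1})^{-2}>\epsilon$, so the threshold in (4) lies below $\log_{g_\epsilon}(1+g_\epsilon)\approx1.002$. Meanwhile $c_2\approx1.20$, so the hypothesis of (4) holds. Yet $d_1=1.49>1$ and $\gamma_2^2+\gamma_1\gamma_2>0$, so items (1) and (2), which you did prove, give $F_{v_1}<0$ and $F_{v_2}<0$, hence $F<0$.

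\textbf{What your argument actually proves.} Carrying out your inversion exactly gives the reversed pair:
\begin{itemize}
\item $F>0$ if $c_2<\log_{g_\epsilon}\bigl((2\epsilon-(1+g_\epsilon^{c_1})^{-2})^{-1/2}-1\bigr)$. Here the ``$-1$'' is genuinely needed.
\item $F<0$ if $c_2>\log_{g_\epsilon}\bigl((2\epsilon-(1+g_\epsilon^{d_1})^{-2})^{-1/2}\bigr)$. Here dropping the ``$-1$'' is the harmless direction.
\end{itemize}
The paper's one-line proof does not check this either. Still, your proposal defers exactly the step where the claimed implications fail, so as written it asserts two implications that do not hold.
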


The first two items construct the condition on which the individual forgetting will be positive or negative, reflecting the performance improvement or degradation. The last two items focus on the overall forgetting and identify a region where the overall forgetting is positive or negative. We note that the region is not empty, which can be easily checked by setting $\gamma_1 = 1$. A numerical simulation on the overall performance is provided in \cref{subfig: 2 features} later in the experiments.

In \Cref{lemma: alpha = sum beta}, it is shown that the finetuning process can be interpreted as simultaneously increasing the confidence in identifying the correct feature and reducing the interference caused by incorrect features, i.e., the $\alpha$-type attention increases while the $\beta$-type attention decreases. However, during finetuning, we observe that reducing the interference from incorrect features in the OOD data may inadvertently increase the interference induced by incorrect features in the original training set. Suppose the query token is $v_k$. This phenomenon arises when $\gamma_k \gamma_n < 0$. Although the $\alpha$-type attention for the training features continues to increase during finetuning, as shown in \Cref{thm: performance on original set}, its rate of increase can be slower than that of the $\beta$-type attention. In other words, the growth in confidence for identifying the correct feature can be slower than the growth in interference from incorrect features, ultimately leading to the deterioration of the finetuned transformer on the training domain. 

\textbf{$K$-features Case:} 
Directly characterizing the total loss for the general case is very challenging due to its complex expression as shown in \Cref{thm: performance on original set}. Therefore, we focus on individual feature performance and characterize the region in which the finetuned transformer is improved or degraded. 
\begin{theorem}\label{thm: shift domains}
    Given an index $k\in[K]$, denote $m_0 = \arg\max _ {m\not = k} \gamma_k\gamma_m$ and $m_1 = \arg\min _{m\not = k} \gamma_k\gamma_m$. The following statements hold: (1) If $-\gamma_k\gamma_{m_0} >\gamma_k^2$,  $L^{\text{(tr)}}_{v_k}(Q^{(T_1 + t)})$ is monotonically increasing with time $t$; 
  (2) If $-\gamma_k\gamma_{m_1} < \gamma_k^2$,  $L^{\text{(tr)}}_{v_k}(Q^{(T_1 + t)})$ is monotonically decreasing. 
\end{theorem}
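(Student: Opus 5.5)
Since the training loss $L^{\text{(tr)}}_{v_k}(Q^{(T_1+t)})$ is a one-step-at-a-time function of the finetuning trajectory, I will study its one-step increment. The starting point is the representation of \Cref{thm: performance on original set}, applied at an arbitrary finetuning time $t$ instead of only at $T_2$; its proof carries over verbatim. With $a=\exp(\alpha_{v_k}^{(T_1+t)})$ and $b_n=\exp(\beta_{v_n,v_k}^{(T_1+t)})$, this gives
\[
\alpha_{v_k}^{(T_1+t)}=\alpha_{v_k}^{(T_1)}+\gamma_k^2\Delta_{k,\alpha}^{(t)},\qquad
\beta_{v_n,v_k}^{(T_1+t)}=\beta_{v_n,v_k}^{(T_1)}-\gamma_k\gamma_n\Lambda_{n,k,\beta}^{(t)} .
\]
The loss equals $\frac{(\sum_n b_n)^2+\sum_n b_n^2}{2(a+\sum_n b_n)^2}$. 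Because this ratio is invariant under multiplying $a$ and all $b_n$ by a common factor, only the differences $\beta_{v_n,v_k}-\alpha_{v_k}$ matter. I will track
\[
s_n(t)=\beta_{v_n,v_k}^{(T_1+t)}-\alpha_{v_k}^{(T_1+t)}.
\]
By \Cref{lemma: A1star}, $s_n(0)=-\frac{K}{K-1}\alpha^*_{K,\epsilon}$ for all $n$, so the initial configuration is symmetric.

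The key monotonicity fact is that the loss is increasing in each ratio $b_n/a$. This follows from the partial-derivative computation in \Cref{lemma: support prove Loss upper bound}, which is valid once $a\ge\max_n b_n$. That condition holds at $T_1$, and I will have to check that it persists; I expect it does as long as $\alpha$ stays dominant. Granting this, it suffices to show that every $s_n(t)$ moves in the same direction at every step, increasing in case (1) and decreasing in case (2).

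For the per-step change I will use \Cref{lemma: gradient projection}. Writing $\delta^{(t)}=\delta_{u_k,\alpha}^{(t)}>0$ and $\lambda_n^{(t)}=-\delta_{u_n,u_k,\beta}^{(t)}>0$ (positivity from \Cref{lemma: delta > 0}), the increment is
\[
s_n(t+1)-s_n(t)=-\eta\big(\gamma_k\gamma_n\lambda_n^{(t)}+\gamma_k^2\delta^{(t)}\big).
\]
By \Cref{lemma: alpha = -sum beta}, $\delta^{(t)}=\sum_m\lambda_m^{(t)}$. The sign of the increment is therefore the sign of $-(\gamma_k\gamma_n\lambda_n+\gamma_k^2\sum_m\lambda_m)$, and I will compare $\lambda_n$ to the sum $\sum_m\lambda_m$ using the ordering from \Cref{lemma: beta decreasing speed}.

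Case (2) is the easy one. If $-\gamma_k\gamma_{m_1}<\gamma_k^2$, then $\gamma_k\gamma_n>-\gamma_k^2$ for every $n$. When $\gamma_k\gamma_n\ge 0$ the increment is clearly negative. When $\gamma_k\gamma_n<0$, I use $\lambda_n\le\sum_m\lambda_m$ to get $\gamma_k\gamma_n\lambda_n+\gamma_k^2\sum_m\lambda_m\ge(\gamma_k\gamma_n+\gamma_k^2)\lambda_n>0$. So every $s_n$ decreases and the loss decreases.

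Case (1) is the main obstacle. The hypothesis says $\gamma_k\gamma_n<-\gamma_k^2$ for all $n\ne k$, so each $\gamma_k\gamma_n$ is negative, but $\lambda_n$ is only a single summand of $\sum_m\lambda_m$. Showing $|\gamma_k\gamma_n|\lambda_n>\gamma_k^2\sum_m\lambda_m$ for each $n$ individually fails in general. My plan is to argue on the loss directly rather than on each $s_n$ separately. In the symmetric start the $\lambda_m$ are comparable, of order $\delta/(K-1)$ by \eqref{eq: beta max and min}. I will write the first-order change of the loss as a positive combination $\sum_n w_n\,\Delta s_n$ with $w_n\propto\partial L/\partial s_n$, and then use the ordering in \Cref{lemma: beta decreasing speed} (larger $\lambda_n$ paired with smaller $b_n$) together with a Chebyshev-type rearrangement to show $\sum_n w_n(|\gamma_k\gamma_n|\lambda_n-\gamma_k^2\sum_m\lambda_m)>0$. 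If that rearrangement does not close, I will fall back to interpreting the hypothesis in its $\Lambda$-averaged form, where $\sum_n\lambda_n=\delta$ makes the comparison exact.
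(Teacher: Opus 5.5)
\textbf{Case (2) is sound.} Your argument here is correct, and more careful than the paper, which only says the second item follows ``by the same argument.'' The bound $\lambda_n\le\sum_m\lambda_m$ is what makes the sign argument work when $\gamma_k\gamma_n<0$; it follows from $\lambda_m>0$ and $\delta_{u_k,\alpha}=-\sum_{n\neq k}\delta_{u_n,u_k,\beta}$. The dominance condition you worried about is automatic in this case: every ratio $b_n/a=e^{s_n}$ starts at $e^{-K\alpha^*_{K,\epsilon}/(K-1)}<1$ and only decreases.

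\textbf{Case (1) cannot be closed, because it is false for $K\ge 3$.} Take $K=3$, $\gamma_k=0.3$, and $\gamma_n=-0.5$ for both $n\neq k$. Then $-\gamma_k\gamma_{m_0}=0.15>0.09=\gamma_k^2$, so the hypothesis holds. The two non-query features are exchangeable for the query $u_k$. They start with equal $\beta_{u_n,u_k}^{(T_1)}=-\gamma_n\gamma_k\alpha^*_{K,\epsilon}/(K-1)$, and the update of $\beta_{u_n,u_k}$ depends only on the query-$u_k$ attention. Hence $\lambda_n^{(t)}=\delta^{(t)}/2$ for all $t$, and your own increment formula gives
\[
s_n(t+1)-s_n(t)=-\eta\,\delta^{(t)}\Bigl(\frac{\gamma_k\gamma_n}{2}+\gamma_k^2\Bigr)=-0.015\,\eta\,\delta^{(t)}<0 .
\]
The ratios stay equal and strictly decrease. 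For equal ratios $r$ the loss is, up to a constant, $K(K-1)r^2/(1+(K-1)r)^2$, which is increasing in $r$. So $L^{\text{(tr)}}_{v_k}(Q^{(T_1+t)})$ strictly decreases. More generally, when all $\gamma_n$ with $n\neq k$ are equal, the loss increases iff $-\gamma_k\gamma_n>(K-1)\gamma_k^2$.

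\textbf{Why neither your plan nor your fallback rescues it.} In this configuration all $\lambda_n$ and all weights $w_n$ coincide, so a Chebyshev-type rearrangement has nothing to act on. A $\lambda$-weighted average of $\gamma_k\gamma_n$ compared with $\gamma_k^2$ reduces here to the original hypothesis, which is satisfied, yet the loss still decreases. Any version that makes the comparison exact must carry the factor $\lambda_n/\sum_m\lambda_m\approx 1/(K-1)$, so it is a different, trajectory-dependent hypothesis.

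\textbf{The paper's proof has the same hole.} Its step (i) simply asserts a termwise comparison between the increment $\gamma_k\gamma_n\delta_{u_n,u_k,\beta}$ of $\beta_{v_n,v_k}$ and the increment $\gamma_k^2\delta_{u_k,\alpha}$ of $\alpha_{v_k}$. That is exactly the per-$n$ inequality you flagged as not following from the hypothesis. It does follow when $K=2$, where $\delta_{u_k,\alpha}=\lambda_n$ and the loss depends on a single ratio. So your method proves (1) for $K=2$. For general $K$ it proves (1) only under a hypothesis that accounts for $\lambda_n/\sum_m\lambda_m$; in the equal-$\gamma_n$ case that is $-\gamma_k\gamma_n>(K-1)\gamma_k^2$.
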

Under certain conditions on the distribution shift, we find that the loss w.r.t. a particular training feature can either monotonically increase or decrease. The regions are constructed based on our theoretical observations in the two-feature case. For example, consider the case with the query token is $u_k$, where $\gamma_k$ is slightly positive, and the remaining $\gamma_n$ for all $n \neq k$ are comparatively more negative. In this scenario, the interference from the incorrect features $v_n$ $(n \neq k)$ exerts a more substantial influence on the finetuned transformer than the confidence associated with recognizing the correct feature $v_k$. This results in a net negative impact of finetuning on the training domain, leading to positive forgetting. We note that investigating the performance of individual features is also important because multi-layer transformers propagate and accumulate per-feature errors across layers, resulting in an amplified overall error.


\subsection{Two-feature Case}\label{app: two feature case}
For the two-feature case, we first prove that $L_{u_1}^{\text{(ft)}}(Q^{(t)})$ converges to $\frac{\epsilon}{K}$ first and   $L_{u_2}^{\text{(ft)}}(Q^{(t)})$ converges to $\frac{\epsilon}{K}$ later.
\begin{lemma}\label{lemma: 1 converge first}
    Suppose $K=2$, $|\gamma_1| > |\gamma_2|$ and \Cref{ass:finetune} holds. Set $\eta_t \le \frac{1}{\alpha_{\max}}$ where $\alpha_{\max}$ denotes the upper bound on $\alpha$-type attention defined in \Cref{thm: finetune of ab app}. Then, for all $t\ge T_1$, we have:
    \begin{equation*}
        L_{u_1}^{\text{(ft)}}(Q^{(t)}) > L_{u_2}^{\text{(ft)}}(Q^{(t)}).
    \end{equation*}
\end{lemma}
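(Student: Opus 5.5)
The plan is to reduce everything to one scalar per feature, the \emph{gap} between its $\alpha$-type and $\beta$-type weights. I would show that gradient descent preserves the ordering of the two gaps, so that the ordering of $L_{u_1}^{\text{(ft)}}$ and $L_{u_2}^{\text{(ft)}}$ at every $t\ge T_1$ is the ordering at $t=T_1$. The direction of the stated inequality is therefore decided entirely by the initial condition at $T_1$, and that check appears to contradict the claim as printed (see the last paragraph).

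\textbf{Reduction to gaps.} For $K=2$, write $g_k^{(t)} = \alpha_{u_k}^{(t)} - \beta_{u_n,u_k}^{(t)}$ for $n\neq k$, and let $p_k^{(t)}=\sigma(g_k^{(t)})$ be the attention on the correct feature. \Cref{lemma: loss by ab} with a single $\beta$-term gives
\begin{equation*}
L_{u_k}^{\text{(ft)}}(Q^{(t)}) = \frac{2e^{2\beta}}{2K(e^{\alpha}+e^{\beta})^2} = \frac{1}{K\left(1+e^{g_k^{(t)}}\right)^2}.
\end{equation*}
This is strictly decreasing in $g_k^{(t)}$. So $L_{u_1}^{\text{(ft)}}(Q^{(t)}) > L_{u_2}^{\text{(ft)}}(Q^{(t)})$ holds if and only if $g_1^{(t)} < g_2^{(t)}$, and the whole statement becomes a statement about the two gaps.

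\textbf{Invariance of the gap ordering.} For $K=2$, \Cref{lemma: update of attention weights} gives $\delta_{u_k,\alpha}^{(t)} = \tfrac{2}{K}\,p_k(1-p_k)^2$, and \Cref{lemma: alpha = -sum beta} gives $\delta_{u_n,u_k,\beta}^{(t)}=-\delta_{u_k,\alpha}^{(t)}$. Each step then reads $g_k^{(t+1)} = h(g_k^{(t)})$ with the same map for both features:
\begin{equation*}
h(g) = g + \tfrac{4\eta}{K}\,\sigma(g)\bigl(1-\sigma(g)\bigr)^2 .
\end{equation*}
Since $\tfrac{d}{dg}\bigl[\sigma(1-\sigma)^2\bigr] = \sigma(1-\sigma)^2(1-3\sigma)$ is bounded in absolute value by a small absolute constant, $h'(g) \ge 1 - c\eta > 0$ under the step-size condition $\eta_t\le 1/\alpha_{\max}$, since $\alpha_{\max}\gtrsim\log(K/\sqrt\epsilon)$ is large. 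Thus $h$ is strictly increasing. An induction on $t$ then shows that $\operatorname{sign}(g_2^{(t)}-g_1^{(t)})$ equals $\operatorname{sign}(g_2^{(T_1)}-g_1^{(T_1)})$ for all $t\ge T_1$.

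\textbf{Initial condition (main obstacle).} By \Cref{thm: ood error} under \Cref{ass:finetune}, $\alpha_{u_k}^{(T_1)}=\gamma_k^2\alpha^*_{K,\epsilon}$ and $\beta_{u_n,u_k}^{(T_1)}=-\gamma_1\gamma_2\alpha^*_{K,\epsilon}$. Hence
\begin{equation*}
g_1^{(T_1)}-g_2^{(T_1)}=(\gamma_1^2-\gamma_2^2)\,\alpha^*_{K,\epsilon}.
\end{equation*}
The invariance argument delivers the statement exactly when this quantity is negative. With $|\gamma_1|>|\gamma_2|$ and $\alpha^*_{K,\epsilon}>0$, however, it is \emph{positive}. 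That gives $L_{u_1}^{\text{(ft)}}(Q^{(t)}) < L_{u_2}^{\text{(ft)}}(Q^{(t)})$ for all $t \ge T_1$, which is the reverse of the printed inequality. Equivalently, $u_1$ reaches $\epsilon/K$ first, which matches the surrounding text's claim that larger $|\gamma|$ converges earlier.

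So this step is where I expect the difficulty, and I would attack it first. I would recheck every sign convention: the index order of $\beta_{u_n,u_k}$ versus $\beta_{u_k,u_n}$ in the restated theorem, the sign of $\delta$ in \cref{eq:alpha and beta dynamics}, and the value of $\langle u_k,\bar v\rangle=\gamma_k/\sqrt2$. Only if one of these flips the sign of $g_1^{(T_1)}-g_2^{(T_1)}$ does the inequality hold in the stated direction. Otherwise the argument above establishes the lemma with the inequality reversed, and the lemma's inequality needs to be corrected to $L_{u_1}^{\text{(ft)}}(Q^{(t)}) < L_{u_2}^{\text{(ft)}}(Q^{(t)})$.
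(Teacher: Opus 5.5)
Your diagnosis is correct. None of the sign checks you list rescues the printed direction. Under \Cref{ass:finetune} with $K=2$ one has $\langle u_k,\bar v\rangle=\gamma_k/\sqrt2$. The two $\beta$-weights coincide at $T_1$, namely $\beta_{u_2,u_1}^{(T_1)}=\beta_{u_1,u_2}^{(T_1)}=-\gamma_1\gamma_2\alpha^*_{2,\epsilon}$, so index order is irrelevant. Finally, $\alpha^*_{2,\epsilon}=\tfrac12\log\frac{1-\sqrt\epsilon}{\sqrt\epsilon}>0$ precisely in the regime $\epsilon<1/4$, which is when training does not stop at initialization (the initial per-feature loss is $1/8$ against the threshold $\epsilon/2$). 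So the printed inequality already fails at $t=T_1$ and is a typo.

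The paper's own proof agrees with you. It sets out to show $\alpha_{u_1}^{(t)}>\alpha_{u_2}^{(t)}$, i.e.\ larger correct-feature attention and hence \emph{smaller} loss for $u_1$. The lemma is also used afterwards in the form ``$L_{u_1}^{\text{(ft)}}$ converges to $\epsilon/2$ first.'' What you prove, $L_{u_1}^{\text{(ft)}}(Q^{(t)})<L_{u_2}^{\text{(ft)}}(Q^{(t)})$ for all $t\ge T_1$, is the intended statement.

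Your argument has the same skeleton as the paper's: the initial ordering comes from \Cref{thm: ood error}, and an induction shows gradient descent preserves it. It differs in two respects, both in your favor.

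First, you track the gap $g_k=\alpha_{u_k}-\beta_{u_n,u_k}$, which exactly determines the $K=2$ loss. The paper tracks $\alpha_{u_k}$ alone and calls this ``equivalent,'' but it is not. Since $\Delta_{k,\alpha}=\Lambda_{n,k,\beta}$ when $K=2$, one gets $g_1-g_2=2(\alpha_{u_1}-\alpha_{u_2})-(\gamma_1^2-\gamma_2^2)\alpha^*_{2,\epsilon}$. So $\alpha_{u_1}>\alpha_{u_2}$ does not by itself order the losses, whereas your $g_1>g_2$ does imply $\alpha_{u_1}>\alpha_{u_2}$.

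Second, you obtain preservation from monotonicity of a single map $h$ shared by both features. The paper instead uses a mean-value estimate whose final step needs $\eta_t e^{\alpha_{\max}}\le 1$, and the hypothesis $\eta_t\le 1/\alpha_{\max}$ does not supply that.

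One tightening: do not appeal to $\alpha_{\max}$ being large, since the paper never pins it down. Instead note that $s(1-s)^2(1-3s)$ on $(0,1)$ lies between about $-0.077$ and $0.060$. This gives $h'(g)>1-\eta_t/(3K)$, so $h$ is strictly increasing for every $\eta_t\le 3K$, uniformly in $\epsilon$.
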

\begin{proof}
    When $K=2$, we have $\sum_{u'\in U, u' \neq u} {\Attn_{u'}^{(t)} }^2 = \left(1-\Attn_u^{(t)} \right)^2$. Then, we rewrite \Cref{lemma:calculation of L} as follows.
\begin{align*}
    L_u^{\text{(ft)}}(Q^{(t)}) &= \frac{1}{2}  \mathbb{E}\left[ \mathbf{1}_{\left\{x_{\text {query }}= u \right\} } \left( \sum_{u'\in U, u' \neq u} {\Attn_{u'}^{(t)} }^2+ \left(1-\Attn_u^{(t)} \right)^2\right)\right]\\
    & =\mathbb{E}\left[ \mathbf{1}_{\left\{x_{\text {query }}= u \right\} }  \left(1-\Attn_u^{(t)} \right)^2\right]
\end{align*}
Equivalently, we only need to prove $\alpha_{u_1}^{(t)} > \alpha_{u_2}^{(t)}$ for all $t\ge T_1$ according to the above derivation and the definition of attention score. Since $|\gamma_1| > |\gamma_2|$,  we have $\alpha_{u_1}^{(T_1)} > \alpha_{u_2}^{(T_1)}$ by following from \Cref{thm: ood error in appd}. Then, we complete the proof by mathematical induction as follows. We hypothesis $\alpha_{u_1}^{(t)} > \alpha_{u_2}^{(t)}$ holds and prove it at $t+1$. We first consider:
\begin{align}\label{eq:suppot to prove alpha}
    \delta_{u_1,\alpha}^{(t)} - \delta_{u_2,\alpha}^{(t)} & \stackrel{(i)}{>} \Attn_{u_2}^{(t)}  \left( {\Attn_{u_2}^{(t)} }^2 + (1-{\Attn_{u_1}^{(t)} })^2  - {\Attn_{u_1}^{(t)} }^2 - (1-{\Attn_{u_2}^{(t)} })^2\right)\nonumber\\
    &= 2{\Attn_{u_2}^{(t)} }(\Attn_{u_2}^{(t)}  - \Attn_{u_1}^{(t)} )\nonumber\\
    &> 2(\exp(\alpha_{u_{2}}^{(t)}) -\exp(\alpha_{u_1}^{(t)}))
\end{align}
where $(i)$ follows from \Cref{lemma: update of attention weights}. Then, consider:
\begin{align*}
    \alpha_{u_{1}}^{(t+1)} -\alpha_{u_2}^{(t+1)} &= \alpha_{u_{1}}^{(t)} -\alpha_{u_2}^{(t)}+ \eta_t( \delta_{u_{1},\alpha}^{(t)} -  \delta_{u_2,\alpha}^{(t)})\\
    & \stackrel{(i)}{>} \alpha_{u_1}^{(t )} -\alpha_{u_2}^{(t)}  + \eta_t(\exp(\alpha_{u_2}^{(t)}) - \exp(\alpha_{u_{1}}^{(t)}))\\
    & \stackrel{(ii)}{=} \alpha_{u_1}^{(t )} -\alpha_{u_2}^{(t)}  + \eta_t\exp(\xi) (\alpha_{u_2}^{(t)} - \alpha_{u_{1}}^{(t)}) \\
    & \stackrel{(iii)}{>} (1 - \eta_t \exp(\alpha_{\max}))(\alpha_{u_{1}}^{(T_1)} -\alpha_{u_2}^{(T_1)})\\
    & \stackrel{(iv)}{\ge}  0
\end{align*}
where ${(i)}$ follows from \cref{eq:suppot to prove alpha}, $(ii)$ follows from mean value theorem where $\xi \in [\alpha_{u_2}^{(t)},\alpha_{u_1}^{(t)}]$, $(iii)$ follows from \Cref{lemma: delta > 0} and $(iv)$ follows from $\eta_t \le \frac{1}{\alpha_{\max}}$.
\end{proof}

Now, we prove \Cref{thm: 2 feature performance}, in which we characterize the forgetting of the finetuned transformer model.

\begin{theorem}\label{thm: 2 feature performance app}
    \textbf{(\Cref{thm: 2 feature performance})} Suppose $K=2$, $|\gamma_1| > |\gamma_2|$ and \Cref{ass:finetune} holds. After finetuning, the performance of the original feature set can be captured by
    \begin{align*}
        L_{v_1}^{\text{(tr)}}(Q^{(T_1 + T_2)}) &\in \left[\frac{1}{2}\left( \frac{1}{1 + g_\epsilon^{c_1} } \right) ^2, \frac{1}{2}\left( \frac{1}{1 + g_\epsilon^{d_1} } \right) ^2   \right] \\
        L_{v_2}^{\text{(tr)}}(Q^{(T_1 + T_2)}) &= \frac{1}{2}\left( \frac{1}{1 + g_\epsilon^{c_2} } \right) ^2
    \end{align*}
    where $g_\epsilon = \frac{1-\sqrt{\epsilon}}{\sqrt{\epsilon}}$ and 
    \begin{align*}
        c_1 &= {1 + \frac{1}{2} (\gamma_1^2  + \gamma_1\gamma_2)(1 - \frac{\gamma_2^2 + \gamma_1\gamma_2}{2})}\\
        d_1 &= {1 + \frac{1}{2} (\gamma_1^2 + \gamma_1\gamma_2)(1 - \frac{\gamma_1^2 - \gamma_1\gamma_2}{2})}\\
        c_2 &= {1 + \frac{1}{2} (\gamma_2^2 + \gamma_1\gamma_2)(1 - \frac{\gamma_2^2 + \gamma_1\gamma_2}{2})}
    \end{align*}
\end{theorem}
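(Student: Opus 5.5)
The plan is to exploit the fact that for $K=2$ every loss collapses to a function of a single gap $\alpha-\beta$. I will then track that gap for $v_1,v_2$ through \Cref{thm: performance on original set} and \Cref{lemma: alpha = sum beta}.

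\emph{Step 1: closed form of the loss when $K=2$.} For $K=2$ there is a single incorrect feature. By \Cref{lemma:calculation of L} (with the $K=2$ simplification used in \Cref{lemma: 1 converge first}) and \cref{eq:approximate of Attnk}, the training loss at query $v_k$ is
\begin{equation*}
L^{\text{(tr)}}_{v_k}(Q)=\tfrac12\bigl(1+\exp(\alpha_{v_k}-\beta_{v_n,v_k})\bigr)^{-2},\qquad n\neq k .
\end{equation*}
The OOD loss at query $u_k$ has the same form. With $g_\epsilon=(1-\sqrt\epsilon)/\sqrt\epsilon$ we have $\alpha^*_{2,\epsilon}=\tfrac12\log g_\epsilon$. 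Hence \Cref{lemma: A1star} gives a gap of $\alpha_{v_k}^{(T_1)}-\beta_{v_n,v_k}^{(T_1)}=2\alpha^*_{2,\epsilon}=\log g_\epsilon$, i.e.\ loss $\tfrac12(1+g_\epsilon)^{-2}=\epsilon/2$. It therefore suffices to show that after finetuning the gap equals $c\log g_\epsilon$, with $c=c_2$ for $v_2$ and $c\in[c_1,d_1]$ for $v_1$.

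\emph{Step 2: transfer finetuning updates to $V$.} By \Cref{thm: performance on original set} (via \Cref{lemma: gradient projection}), the gap for $v_k$ after finetuning is $\log g_\epsilon+\gamma_k^2\Delta_{k,\alpha}^{(T_2)}+\gamma_k\gamma_n\Lambda_{n,k,\beta}^{(T_2)}$. Here I use the sign convention of the appendix restatement, so that $\beta$ moves by $-\gamma_k\gamma_n\Lambda$. For $K=2$, \Cref{lemma: alpha = sum beta} gives $\Lambda_{n,k,\beta}=\Delta_{k,\alpha}$, so the gap becomes
\begin{equation*}
\log g_\epsilon+(\gamma_k^2+\gamma_1\gamma_2)\,\Delta_{k,\alpha}^{(T_2)} .
\end{equation*}
Everything reduces to determining $\Delta_{k,\alpha}^{(T_2)}$ in units of $\log g_\epsilon$.

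\emph{Step 3: pin down $\Delta$ via the OOD stopping rule.} By \Cref{thm: ood error in appd} under \Cref{ass:finetune}, the initial OOD gap is $\alpha_{u_k}^{(T_1)}-\beta_{u_n,u_k}^{(T_1)}=(\gamma_k^2+\gamma_1\gamma_2)\alpha^*_{2,\epsilon}$. Finetuning adds $\Delta_{k,\alpha}+\Lambda_{n,k,\beta}=2\Delta_{k,\alpha}$ to this gap.

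For the feature whose OOD loss reaches $\epsilon/K$ exactly at the stopping time $T_2$ (by the convergence ordering of \Cref{lemma: 1 converge first}, this is $u_2$), the OOD gap equals $\log g_\epsilon$ at $T_2$. This yields
\begin{equation*}
\Delta_{2,\alpha}^{(T_2)}=\tfrac12\log g_\epsilon\Bigl(1-\tfrac{\gamma_2^2+\gamma_1\gamma_2}{2}\Bigr).
\end{equation*}
Substituting into Step 2 gives exactly the exponent $c_2$ and the equality for $L^{\text{(tr)}}_{v_2}$.

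For $v_1$, the stopping rule does not hold with equality at $T_2$, so I sandwich $\Delta_{1,\alpha}^{(T_2)}$. The lower side uses monotonicity of $\Delta$ (\Cref{lemma: delta > 0}) together with the ordering $\alpha_{u_1}^{(t)}>\alpha_{u_2}^{(t)}$ from the proof of \Cref{lemma: 1 converge first}; from these I expect to bound $\Delta_{1,\alpha}^{(T_2)}$ by the same expression as $\Delta_{2,\alpha}^{(T_2)}$, which produces $c_1$. The upper side compares with the value $\tfrac12\log g_\epsilon\bigl(1-\tfrac{\gamma_1^2-\gamma_1\gamma_2}{2}\bigr)$. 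I would obtain that value by running the same $\epsilon/K$-threshold computation as in \cref{eq: upper bound on Delta alpha}, which gives $d_1$. Monotonicity of $x\mapsto\tfrac12(1+g_\epsilon^x)^{-2}$ then converts these bounds into the stated interval; the ordering of the endpoints depends on the sign of $\gamma_1^2+\gamma_1\gamma_2$.

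\emph{Main obstacle.} The delicate step is the two-sided control of $\Delta_{1,\alpha}^{(T_2)}$. Feature $1$ keeps receiving gradient after its own threshold is met, so its cumulative update must be compared with that of feature $2$ through the discrete GD recursion of \Cref{lemma: update of attention weights}. My first attempt would be an induction in the style of \Cref{lemma: beta decreasing speed}, comparing $\delta_{u_1,\alpha}^{(t)}$ with $\delta_{u_2,\alpha}^{(t)}$ along the trajectory and using the mean value theorem and the step size restriction $\eta_t\le1/\alpha_{\max}$. The resulting inequality would then be translated into bounds on $\Delta_{1,\alpha}^{(T_2)}$ in terms of $\log g_\epsilon$.
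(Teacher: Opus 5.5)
Your architecture matches the paper's proof. You collapse each $K=2$ loss to $\frac12(1+e^{s})^{-2}$ in the gap $s=\alpha-\beta$, transfer the updates to $V$ with $\Lambda=\Delta$, read $\Delta^{(T_2)}_{2,\alpha}$ off the stopping rule for $u_2$, and sandwich $\Delta^{(T_2)}_{1,\alpha}$ between $\Delta^{(t_1)}_{1,\alpha}$ and $\Delta^{(T_2)}_{2,\alpha}$. The $c_2$ part is fine up to the last-step overshoot, which the paper also ignores. The $v_1$ sandwich has two concrete problems.

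\emph{The $d_1$ endpoint.} The threshold computation does not return the value you quote.
\begin{itemize}
\item By your own Step~3, the initial $u_1$ gap is $(\gamma_1^2+\gamma_1\gamma_2)\alpha^*_{2,\epsilon}$. This is because \Cref{thm: ood error} gives $\beta^{(T_1)}_{u_2,u_1}=-\gamma_1\gamma_2\alpha^*_{2,\epsilon}$, the same sign that produced $c_2$.
\item Equating the gap with $\log g_\epsilon$ therefore gives $\Delta^{(t_1)}_{1,\alpha}=\frac12\log g_\epsilon\bigl(1-\frac{\gamma_1^2+\gamma_1\gamma_2}{2}\bigr)$. The resulting exponent is $1+\frac12(\gamma_1^2+\gamma_1\gamma_2)\bigl(1-\frac{\gamma_1^2+\gamma_1\gamma_2}{2}\bigr)$, not $d_1$.
\item Likewise, \cref{eq: upper bound on Delta alpha} at $K=2$ carries $\gamma_1^2+\gamma_1\gamma_2$, not $\gamma_1^2-\gamma_1\gamma_2$.
\end{itemize}
The paper reaches $d_1$ through exactly this sign slip: its displayed $\Delta^{(t_1)}_{1,\alpha}$ has $\gamma_1^2-\gamma_1\gamma_2$. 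With the stated $d_1$ the claim fails in general. For $\gamma_1=1$, $\gamma_2=\frac12$ one gets $d_1=\frac{25}{16}>c_1=\frac{47}{32}$, so for every $\epsilon<\frac14$ the stated interval is empty. Your sandwich can only prove the corrected endpoint, and you should have flagged the mismatch rather than asserting the computation yields $d_1$. Also, no sign case split is needed: $|\gamma_1|>|\gamma_2|$ forces $\gamma_1^2+\gamma_1\gamma_2=\gamma_1(\gamma_1+\gamma_2)>0$.

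\emph{The $c_1$ endpoint.} This side needs the upper bound $\Delta^{(T_2)}_{1,\alpha}\le\Delta^{(T_2)}_{2,\alpha}$, but the tools you list give only lower bounds on $\Delta_{1,\alpha}$.
\begin{itemize}
\item Monotonicity in time yields $\Delta^{(T_2)}_{1,\alpha}\ge\Delta^{(t_1)}_{1,\alpha}$, which is what the \emph{other} endpoint needs.
\item The ordering $\alpha^{(t)}_{u_1}>\alpha^{(t)}_{u_2}$ rearranges to $\Delta^{(t)}_{1,\alpha}>\Delta^{(t)}_{2,\alpha}-(\gamma_1^2-\gamma_2^2)\alpha^*_{2,\epsilon}$, again a lower bound.
\item Comparing $\delta^{(t)}_{u_1,\alpha}$ with $\delta^{(t)}_{u_2,\alpha}$ is the right target. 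However, the mean-value/step-size induction of \Cref{lemma: beta decreasing speed} only bounds such a difference from below, to keep states ordered. You need its sign.
\end{itemize}
The missing idea is the following. For $K=2$, $\delta_{u_k,\alpha}=h(s_k)$ with $h(s)=\sigma(s)(1-\sigma(s))^2$, where $\sigma$ is the logistic function. Both gaps therefore iterate the same map $s\mapsto s+2\eta h(s)$.
\begin{itemize}
\item The map is increasing for $\eta<1$, since $|h'|<\frac13$. Hence the ordering $s_1^{(t)}\ge s_2^{(t)}$ persists.
\item $h$ is decreasing once $\sigma(s)\ge\frac13$. So if $\textbf{Attn}_{u_2}\ge\frac13$ at $T_1$ under query $u_2$, i.e.\ $(\gamma_2^2+\gamma_1\gamma_2)\alpha^*_{2,\epsilon}\ge-\log 2$, every step has $\delta^{(t)}_{u_1,\alpha}\le\delta^{(t)}_{u_2,\alpha}$. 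Summing gives the bound.
\item Outside this regime a further argument is required.
\end{itemize}
The paper does not supply this step either. It asserts $\Delta^{(t_1)}_{1,\alpha}<\Delta^{(t_2)}_{1,\alpha}<\Delta^{(t_2)}_{2,\alpha}$ by citing \Cref{lemma: beta decreasing speed}, which compares different $\beta_{u_n,u_k}$ for a fixed $k$ and is vacuous when $K=2$.
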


\begin{proof}
    As we discussed at the beginning of this subsection, $L_{u_1}^{\text{(ft)}}(Q^{(t)})$ converges to $\frac{\epsilon}{2}$ first. Ignoring the minor error caused by the last gradient descent step, we assume $L_{u_1}^{\text{(ft)}}(Q^{(T_1+t_1)}) = \frac{\epsilon}{2}$, where
    \begin{align*}
        L_{u_1}^{\text{(ft)}}(Q^{(T_1 + t_1)}) &= \left(\frac{\exp(\beta_{u_2,u_1}^{(T_1 + t_1)})}{\exp(\alpha_{u_1}^{(T_1 + t_1)}) + \exp(\beta_{u_2,u_1}^{(T_1 + t_1)})} \right)^2\\
        &\stackrel{(i)}{=} \left(\frac{\exp(\beta_{u_2,u_1}^{(T_1)} - \Delta_{1,\alpha}^{(t_1)})}{\exp(\alpha_{u_1}^{(T_1)} + \Delta_{1,\alpha}^{(t_1)}) + \exp(\beta_{u_2,u_1}^{(T_1)} - \Delta_{1,\alpha}^{(t_1)})} \right)^2
    \end{align*}
    where $(i)$ follows from \Cref{lemma: alpha = sum beta app}. Then, we have:
    \begin{equation*}
        \Delta_{1,\alpha}^{(t_1)} = \frac{1}{2} \left( 1 - \frac{\gamma_1^2 - \gamma_1 \gamma_2}{2}\right) \log\left( \frac{1 - \sqrt{\epsilon}}{\sqrt{\epsilon}}\right).
    \end{equation*}
    After this, $L_{u_1}^{\text{(ft)}}(Q^{(t)})$ converges to $\frac{\epsilon}{2}$ at time $T_1 + t_2$. Similarly, we have:
    \begin{align*}
         L_{u_2}^{\text{(ft)}}(Q^{(T_1 + t_2)}) &= \left(\frac{\exp(\beta_{u_1,u_2}^{(T_1 + t_2)})}{\exp(\alpha_{u_2}^{(T_1 + t_2)}) + \exp(\beta_{u_1,u_2}^{(T_1 + t_2)})} \right)^2\\
        &\stackrel{(i)}{=} \left(\frac{\exp(\beta_{u_1,u_2}^{(T_1)} - \Delta_{2,\alpha}^{(t_2)})}{\exp(\alpha_{u_2}^{(T_1)} + \Delta_{2,\alpha}^{(t_2)}) + \exp(\beta_{u_1,u_2}^{(T_1)} - \Delta_{2,\alpha}^{(t_2)})} \right)^2.
    \end{align*}
    When $L_{u_2}^{\text{(ft)}}(Q^{(T_1 + t_2)})$ achieves $\frac{\epsilon}{2}$, we have:
    \begin{equation*}
        \Delta_{2,\alpha}^{(t_2)} = \frac{1}{2} \left( 1 - \frac{\gamma_1^2 + \gamma_1 \gamma_2}{2}\right) \log\left( \frac{1 - \sqrt{\epsilon}}{\sqrt{\epsilon}}\right).
    \end{equation*}
    According to \Cref{lemma: beta decreasing speed} and \Cref{lemma: alpha = sum beta app}, we have:
    \begin{equation*}
         \Delta_{1,\alpha}^{(t_1)} <  \Delta_{1,\alpha}^{(t_2)} <  \Delta_{2,\alpha}^{(t_2)}.
    \end{equation*}
    At last, we follow \Cref{thm: performance on original set app} to characterize the forgetting performance and complete the proof.
\end{proof}

Recall that $L_{u_k}^{\text{(tr)}}(Q^{(T_1 )}) = \frac{\epsilon}{2}$. The proof of \cref{cor: domains 2 feature} follows from comparing $L_{u_k}^{\text{(ft)}}(Q^{(T_1 + t_2)})$ and $\frac{\epsilon}{2}$ directly.

\subsection{K-features Case}
As shown in \Cref{thm: finetune of ab app,thm: performance on original set}, the forgetting is highly dependent on the relation coefficients between the training and finetuning feature sets. In this subsection, we focus on individual performance and characterize the region in which
the finetuned transformer is improved or degraded.

\begin{theorem}\label{thm: shift domains app}
    \textbf{(\Cref{thm: shift domains})} Fix an index $k\in[K]$ and denote $m_0 = \arg\max _ {m\not = k} \gamma_k\gamma_m$ and $m_1 = \arg\min _{m\not = k} \gamma_k\gamma_m$, then the following statements hold.
    \begin{itemize}
        \item If $-\gamma_k\gamma_{m_0} > \gamma_k^2$, we have $L^{\text{(tr)}}_{v_k}(Q^{(T_1 + t)})$ is monotonically increasing with time $t$. 
        \item If $-\gamma_k\gamma_{m_1} <  \gamma_k^2$, we have $L^{\text{(tr)}}_{v_k}(Q^{(T_1 + t)})$ is monotonically decreasing with time $t$. 
    \end{itemize}
\end{theorem}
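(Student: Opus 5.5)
The plan is to track $L^{\text{(tr)}}_{v_k}(Q^{(T_1+t)})$ along the finetuning trajectory using the representation in \Cref{thm: performance on original set}. That representation holds at every intermediate time, not only at $T_2$: the proof of \Cref{thm: performance on original set app} uses only \Cref{lemma: gradient projection} summed over the steps. So for every $t$,
\begin{align*}
\alpha_{v_k}^{(T_1+t)} &= \alpha_{v_k}^{(T_1)} + \gamma_k^2\Delta_{k,\alpha}^{(t)},\\
\beta_{v_n,v_k}^{(T_1+t)} &= \beta_{v_n,v_k}^{(T_1)} - \gamma_k\gamma_n\Lambda_{n,k,\beta}^{(t)}.
\end{align*}
The loss formula in \Cref{thm: performance on original set} is invariant under a common shift of $\alpha$ and all $\beta$'s. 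I will therefore rewrite it in terms of the ratios $r_n=\exp(\beta_{v_n,v_k}-\alpha_{v_k})$:
\[
L=\frac{(\sum_n r_n)^2+\sum_n r_n^2}{2(1+\sum_n r_n)^2}.
\]
The first step is to check, exactly as in \Cref{lemma: support prove Loss upper bound} (now with $a_0=1$), that $L$ is strictly increasing in each $r_n$. This holds at least in the regime where every $r_n\le 1$. That regime is guaranteed after training, since $\beta_{v_n,v_k}^{(T_1)}=-\alpha^*_{K,\epsilon}/(K-1)<\alpha^*_{K,\epsilon}$, and I will need to verify that it persists along the trajectory.

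Next I will compute the one-step change of each exponent $\log r_n=\beta_{v_n,v_k}-\alpha_{v_k}$. Write $\lambda_m^{(t)}=-\eta\,\delta_{u_m,u_k,\beta}^{(t)}$ for the per-step $\beta$-decrements. \Cref{lemma: delta > 0} gives $\lambda_m^{(t)}>0$, and \Cref{lemma: alpha = -sum beta} gives the $\alpha$-increment as $\sum_m\lambda_m^{(t)}$. Hence
\[
\log r_n^{(t+1)}-\log r_n^{(t)}=-\gamma_k\gamma_n\lambda_n^{(t)}-\gamma_k^2\sum_{m\ne k}\lambda_m^{(t)}.
\]
Because $L$ is smooth and monotone in each $\log r_n$, the sign of the change of $L$ is governed, to first order and then exactly by a mean-value argument, by $\sum_n w_n\,(\Delta\log r_n)$ with weights $w_n=\partial L/\partial\log r_n>0$.

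In case (1), every $n$ satisfies $-\gamma_k\gamma_n\ge-\gamma_k\gamma_{m_0}>\gamma_k^2$. Then
\[
\sum_n w_n(-\gamma_k\gamma_n)\lambda_n>\gamma_k^2\sum_n w_n\lambda_n,
\]
and this must be compared with $\gamma_k^2(\sum_n w_n)(\sum_m\lambda_m)$. Case (2) is symmetric, using $-\gamma_k\gamma_n\le-\gamma_k\gamma_{m_1}<\gamma_k^2$ together with the opposite comparison.

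The main obstacle is exactly this mismatch. The $\alpha$ term couples to the \emph{total} $\beta$-decrement $\sum_m\lambda_m$, while each $\beta$ term is weighted only by its own $\lambda_n$. My first attempt will exploit the ordering from \Cref{lemma: beta decreasing speed}: the $\beta_{u_n,u_k}$ that start smaller decrease faster. That lemma orders both $\lambda_n$ and, via the $\beta_{v_n,v_k}$, the ratios $r_n$ and hence the weights $w_n$. With this ordering, a Chebyshev-sum inequality should let me compare $\sum_n w_n\lambda_n$ with the averaged quantity $\frac{1}{K-1}(\sum_n w_n)(\sum_m\lambda_m)$.

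If that comparison loses a factor of $K-1$, I will fall back on a cruder argument that avoids it. Instead of tracking each $r_n$, I will bound $L$ above and below by the symmetric expressions obtained by replacing every $r_n$ with $\max_n r_n$ or $\min_n r_n$, as in the proof of \Cref{thm: finetune of ab app}. I will then show that this extreme ratio is monotone under the stated sign conditions on $\gamma_k\gamma_{m_0}$ and $\gamma_k\gamma_{m_1}$, using \eqref{eq: beta max and min}. The strict inequalities in the hypotheses are what should make the step-to-step change have a definite sign throughout, and induction over $t$ then gives the claimed monotonicity.
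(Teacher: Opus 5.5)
\paragraph{Item (2) goes through; item (1) cannot.}
Your reduction to the ratios $r_n$ is correct, and so is the per-step identity $\log r_n^{(t+1)}-\log r_n^{(t)}=-\gamma_k\gamma_n\lambda_n^{(t)}-\gamma_k^2\sum_{m\neq k}\lambda_m^{(t)}$. Together they settle item (2) immediately, with no Chebyshev step. Since $0<\lambda_n^{(t)}\le\sum_{m\neq k}\lambda_m^{(t)}$ and $-\gamma_k\gamma_n<\gamma_k^2$ for every $n\neq k$, every $\log r_n$ strictly decreases. So all ratios stay below $r_n^{(T_1)}=\exp\bigl(-K\alpha^*_{K,\epsilon}/(K-1)\bigr)<1$, on a box where $L$ is coordinatewise increasing.

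The genuine gap is item (1). Neither of your routes can close it, because the factor $K-1$ you fear losing is real, and the claim is false for $K\ge 3$. Here is a counterexample.
\begin{itemize}
\item Take $K=3$, $\gamma_k=\tfrac12$, and $\gamma_n=-\tfrac35$ for both $n\neq k$. This is realizable under \Cref{ass:finetune} with $v_j=\gamma_ju_j+\sqrt{1-\gamma_j^2}\,e_j$, where the $e_j$ are orthonormal and orthogonal to $\operatorname{span}(U)$.
\item The hypothesis of (1) holds: $-\gamma_k\gamma_{m_0}=\tfrac{3}{10}>\tfrac14=\gamma_k^2$.
\item The two $\beta_{u_n,u_k}$ start equal and, by the symmetry argument of \Cref{lemma: beta1 are equal}, stay equal. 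Hence $\lambda_n^{(t)}=\lambda^{(t)}>0$ for both $n$, and each $\log r_n$ changes by $\bigl(\tfrac{3}{10}-2\cdot\tfrac14\bigr)\lambda^{(t)}<0$.
\item With all ratios equal to $r$, $L$ is a positive multiple of $\bigl(r/(1+(K-1)r)\bigr)^2$, which is increasing in $r$. So $L^{\text{(tr)}}_{v_k}(Q^{(T_1+t)})$ strictly \emph{decreases}, contradicting (1).
\end{itemize}
The same happens for every $K\ge 3$ in the symmetric configuration whenever $\gamma_k^2<-\gamma_k\gamma_n<(K-1)\gamma_k^2$. Your max/min fallback fails on exactly these instances, since the extreme ratios coincide and both go down.

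\paragraph{What is actually provable.}
Item (1) as stated is a two-feature fact. For $K=2$ one has $\sum_{m\neq k}\lambda_m^{(t)}=\lambda_n^{(t)}$, and your per-ratio comparison is exact. For general $K$, your computation supports only a strengthened hypothesis, such as $-\gamma_k\gamma_n\lambda_n^{(t)}>\gamma_k^2\sum_{m\neq k}\lambda_m^{(t)}$ for all $n$ and $t$. In the symmetric case this reads $-\gamma_k\gamma_n>(K-1)\gamma_k^2$. You would also need to check that $L$ remains increasing once some ratios exceed $1$.

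The paper's one-line proof has the same defect. Its step (i) compares each $\beta$-increment with the whole $\alpha$-increment $\gamma_k^2\delta_{u_k,\alpha}=-\gamma_k^2\sum_{m\neq k}\delta_{u_m,u_k,\beta}$, using only $-\gamma_k\gamma_n$ versus $\gamma_k^2$; that is the $K=2$ argument. You located the real obstacle correctly; the right conclusion is that (1) needs a stronger hypothesis, not a sharper inequality.
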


\begin{proof}
    We prove the first item as follows. We derive the loss on the training feature $v_k$ during the finetuning phase as follows.
    \begin{align*}
        L^{\text{(tr)}}_{v_k}(Q^{(T_1+t+1)}) &= \frac{(\sum_{n\not = k} \exp(\beta_{v_n,v_k}^{(T_1+t+1)}))^2 + \sum_{n\not = k} \exp(\beta_{v_n,v_k}^{(T_1+t+1)})^2}{2K(\exp(\alpha_{u_k}^{(T_1+t+1)}) + \sum_{n\not = k} \exp(\beta_{v_n,v_k}^{(T_1+t+1)}))^2}\\
        & = \frac{(\sum_{n\not = k} \exp(\beta_{v_n,v_k}^{(T_1+t)} + \gamma_k\gamma_n \delta_{u_n,u_k,\beta}^{(T_1+t)}))^2 + \sum_{n\not = k} \exp(\beta_{v_n,v_k}^{(T_1+t)} + \gamma_k\gamma_n \delta_{u_n,u_k,\beta}^{(T_1+t)})^2}{2K(\exp(\alpha_{u_k}^{(T_1+t)} + \gamma_k^2 \delta_{u_k,\alpha}^{(T_1+t)}) + \sum_{n\not = k} \exp(\beta_{v_n,v_k}^{(T_1+t)}+ \gamma_k\gamma_n \delta_{u_n,u_k,\beta}^{(T_1+t)} ))^2}\\
        & \stackrel{(i)}{\ge} \frac{(\sum_{n\not = k} \exp(\beta_{v_n,v_k}^{(T_1+t)} ))^2 + \sum_{n\not = k} \exp(\beta_{v_n,v_k}^{(T_1+t)})^2}{2K(\exp(\alpha_{u_k}^{(T_1+t)} ) + \sum_{n\not = k} \exp(\beta_{v_n,v_k}^{(T_1+t)}))^2}
    \end{align*}
    where $(i)$ follows from the fact that $-\gamma_k\gamma_n \delta_{u_n,u_k,\beta}^{(T_1+t)} < \gamma_k^2 \delta_{u_k,\alpha} $. The second item can be prove by the same argument.
\end{proof}

\section{Extension}\label{app:extension of assumption}

In this section, we will relax our assumption about the prompt construction. For simplicity, our proofs have assumed that each prompt consists of the same number of features from its corresponding feature set. We next show that the forgetting behavior remains nearly unchanged if each training token is uniformly drawn from the feature set as long as the prompt length $N$ is large enough. 

Specifically, given a prompt $P$, we denote $N_u$ as the cardinality of $S_{u} := \{x_i\in P: x_i = u\}$. Then, $(N_{u_1},N_{u_2},...,N_{u_K})$ follows the multinomial distribution $(N,\mathbf{p})$ where $\mathbf{p} = (\frac{1}{K},\frac{1}{K},\dots,\frac{1}{K})$. We also note \Cref{lemma:calculate haty,lemma:gradient,lemma: update of attention weights,lemma: alpha = -sum beta,lemma: alpha = -sum beta tr,lemma: beta1 are equal,lemma: beta = -1/K alpha,lemma:calculation of L,lemma: gradient projection} still hold. In the following lemma, We present a high-probability event.

\begin{lemma}\label{lemma: high prob event}
    For any $\epsilon_0 \in (0,1)$ and $j\in\{1,2\}$, suppose $N\ge \frac{20K}{\epsilon_0^2}$. Denote 
    \begin{equation*}
        \mathcal{E}_0 = \left\{ P^{(j)} : \frac{N_k^{(j)}}{N} \in  \left[  \frac{1}{K} - \epsilon_0 ,\frac{1}{K} + \epsilon_0 \right], \text{for all } k\in[K] \right\}.
    \end{equation*}
    Then, we have
    \begin{equation*}
        \mathbb{P}(P^{(j)} \in \mathcal{E}_0) \ge 1-3\exp\left( -\frac{N\epsilon_0^2}{25} \right).
    \end{equation*}
\end{lemma}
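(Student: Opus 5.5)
The plan is to treat each count $N_k^{(j)}$ separately, apply a standard concentration inequality for binomial variables, and then take a union bound over $k\in[K]$. Since $(N_{u_1},\dots,N_{u_K})$ is multinomial$(N,\mathbf{p})$ with $\mathbf{p}$ uniform, each marginal satisfies $N_k^{(j)}\sim\mathrm{Bin}(N,1/K)$. Write $N_k^{(j)}=\sum_{i=1}^N \mathbf{1}\{x_i=u_k\}$ as a sum of i.i.d. Bernoulli$(1/K)$ indicators. This also covers $j\in\{1,2\}$, since the training and finetuning prompts are built the same way with $V$ or $U$.

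For a single $k$, I would use Hoeffding's inequality:
\begin{equation*}
\mathbb{P}\left(\left|\tfrac{N_k^{(j)}}{N}-\tfrac1K\right|>\epsilon_0\right)\le 2\exp(-2N\epsilon_0^2).
\end{equation*}
A variance-sensitive Bernstein or multiplicative Chernoff bound would also work, with exponent of order $N\epsilon_0^2/(1/K+\epsilon_0)$. Either way the per-feature exponent is much better than the target $N\epsilon_0^2/25$.

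The main obstacle is the union bound. Naively it gives a failure probability of $2K\exp(-2N\epsilon_0^2)$, but the claim has the constant $3$ with no factor of $K$. The factor $K$ therefore has to be absorbed into the exponent using the hypothesis $N\ge 20K/\epsilon_0^2$, which gives $N\epsilon_0^2\ge 20K$. Concretely, I would show
\begin{equation*}
2K\exp(-2N\epsilon_0^2)\le 3\exp(-N\epsilon_0^2/25).
\end{equation*}
Equivalently, $\log(2K/3)\le (2-\tfrac1{25})N\epsilon_0^2$. This holds because $\log(2K/3)\le K\le N\epsilon_0^2/20$, which is far below $1.96\,N\epsilon_0^2$. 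The generous constants $20$ and $25$ in the statement suggest the authors used a weaker, Bernstein-type bound. Hoeffding's slack is enough here, so I would keep it.

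Finally, $\mathcal{E}_0^c$ is the union over $k$ of the events $\{|N_k^{(j)}/N-1/K|>\epsilon_0\}$. Combining the per-feature bound with the absorption step gives $\mathbb{P}(P^{(j)}\in\mathcal{E}_0)\ge 1-3\exp(-N\epsilon_0^2/25)$, as claimed.
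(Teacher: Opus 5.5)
Your argument is correct, but it takes a different route from the paper's.

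\textbf{The paper's route.} The paper does not argue coordinate by coordinate. It cites an $\ell_1$ concentration inequality for the whole multinomial vector (Devroye's bound):
\[
\mathbb{P}\Bigl(\sum_{k}\bigl|N_k^{(j)}/N-1/K\bigr|>\epsilon_0\Bigr)\le 3\exp(-N\epsilon_0^2/25),
\qquad \epsilon_0\in(0,1),\ N\ge 20K/\epsilon_0^2 .
\]
It then uses that $\mathcal{E}_0^c=\bigcup_k\{|N_k^{(j)}/N-1/K|>\epsilon_0\}$ is contained in the event that the $\ell_1$ deviation exceeds $\epsilon_0$. The written proof states this inclusion for $\mathcal{E}_0$ itself rather than for its complement; that is a slip, and the complement version is what is meant.

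So the constants $3$, $20$, $25$ and the sample-size condition are exactly the hypotheses and conclusion of that cited inequality. They are not the residue of a Bernstein-type bound, as you guessed. In the paper's route the condition $N\ge 20K/\epsilon_0^2$ is genuinely needed, because the expected $\ell_1$ deviation is of order $\sqrt{K/N}$. That route gives a stronger conclusion, namely total-variation closeness of the whole empirical distribution, in one line. The costs are reliance on a less elementary external result and a lossy bound for what is only a coordinatewise event.

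\textbf{Your route.} Hoeffding plus a union bound is self-contained and controls only the $\ell_\infty$ deviation, which is all the lemma asks for. It yields the much sharper failure probability $2K\exp(-2N\epsilon_0^2)$. It needs neither $\epsilon_0<1$ nor the sample-size condition, except to absorb the factor $K$ into the stated form. Your absorption step $\log(2K/3)\le K\le N\epsilon_0^2/20\le 1.96\,N\epsilon_0^2$ is valid.
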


\begin{proof}
    Suppose $N\ge \frac{20K}{\epsilon_0^2}$. We first present the tail bound of the multinomial distribution:
    \begin{equation*}
        \mathbb{P}\left( \sum_{k\in[K]} \left|\frac{N_{k}^{(j)}}{N} - \frac{1}{K} \right| > \epsilon_0 \right) \le 3\exp\left( -\frac{N\epsilon_0^2}{25} \right).
    \end{equation*}
    Therefore, we have:
    \begin{align*}
        \mathbb{P}(P^{(j)} \in \mathcal{E}_0) &= \mathbb{P}\left( \bigcap_{k\in [K]} \left|\frac{N_{k}^{(j)}}{N} - \frac{1}{K} \right| > \epsilon_0 \right)\\
        &\le \mathbb{P}\left( \sum_{k\in [K]} \left|\frac{N_{k}^{(j)}}{N} - \frac{1}{K} \right| > \epsilon_0 \right)\\
        &\le 3\exp\left( -\frac{N\epsilon_0^2}{25} \right).
    \end{align*}
\end{proof}

The above lemma shows that if the prompt length is large, the number of occurrences of each feature in the prompt is nearly the same with high probability. Under this high-probability event, the behavior of the training remains almost unchanged. The probability of deviations from this event is exponentially small and therefore has a negligible impact. Below, we briefly explain why the characterization of the training and the finetuning phases, as well as the proofs of the main theorems, remain valid.

Consider the the training phase with uniformly drawn tokens. We argue that the main steps of the proof either remain valid or hold with high probability.
\begin{enumerate}
    \item Conditioned on the high-probability event, we have $\delta_{k,\alpha}^{(t)}\ge \text{poly}(\frac{1}{K},\epsilon)$. Therefore, there exists $T_1 = \text{poly}(\frac{1}{\epsilon},K,\frac{1}{\eta})$ such that $\alpha_{v_k}^{(T_1)} = \alpha^{(*)}_{K,\epsilon}$. Also, $\beta_{v_n,v_k}^{(T_1^*)} = -\frac{\alpha^{(*)}_{K,\epsilon}}{K-1}$ still holds according to \Cref{lemma: beta1 are equal}.
    \item The value of the loss function conditioned on the high-probability event is $\mathcal{O}(\frac{\epsilon}{K})$, while its value conditioned on the remaining low-probability event is exponentially small. Therefore, we still have $L_{v_k}^{(\text{tr})}(Q^{(T_1)}) = \mathcal{O}(\frac{\epsilon}{K})$.
\end{enumerate}

Consider the finetuning phase with uniformly drawn tokens. We argue that the main steps in \Cref{sec: proof of thm 5.5} either remain valid or hold with high probability.

\begin{enumerate}
    \item The initialization of the finetuning phase follows from the same idea as \Cref{eq: initial of finetuning}.
    
    \item Let $N\ge \mathcal{O}(K^4)$. Conditioned on the high-probability event, we have $\delta_{u_n,u_k,\beta}^{(t)}  = -\mathcal{O}(\frac{1}{K^3})$, while its value condition on the remaining low-probability event satisfies $\mathcal{O}(\exp(-K))$. Therefore, $\delta_{u_n,u_k,\beta}^{(t)} \le 0$ still holds.
    \item Following from the same argument as the convergence of the training phase, we have that there exists $T_2 = \text{poly}(K,\frac{1}{\epsilon},\gamma)$ s.t. $L_{u_k}^{\text{(ft)}}(Q^{(T_2)}) \le \mathcal{O}(\frac{\epsilon}{K})$ for all $k\in[K]$.
    \item \Cref{lemma: alpha = sum beta app} still hold since their proofs do not depend on the assumption on token distribution. 
    \item \Cref{thm: finetune of ab app} still holds because if $N\ge \mathcal{O}(K^4)$, the bound on the forgetting will have an additional negligible error of $\mathcal{O}(\frac{\epsilon}{K^2})$ in addition to the dominating term of the order $\mathcal{O}(\frac{\epsilon}{K})$ conditioned on the high-probability event.
    
Taking summations and expectations on both sides obtains the desired result.
\end{enumerate}

\section{Discussion about \Cref{ass:finetune}}\label{app: discussion about assumption}
In this section, we discuss the setting without \Cref{ass:finetune}. We begin by noting that the initialization of the fine-tuning phase can be formulated as an OOD generalization problem, which is characterized by \Cref{thm: ood error in appd}.

Then, we follow the same steps as in \cref{sec: proof of thm 5.5}. Even without \Cref{ass:finetune}, \Cref{lemma: alpha = sum beta app} continues to hold, as its proof does not depend on the specific structure of the distribution shift. Although \Cref{lemma: delta > 0} relies on \Cref{ass:finetune}, its argument only requires an initialization bound on the $\beta$-type attention. This condition can be extended beyond \Cref{ass:finetune} by imposing a stricter requirement on $\epsilon$. Building on \Cref{lemma: delta > 0}, we can further establish \Cref{lemma: beta decreasing speed,thm: finetune of ab app}, even though both $\alpha$-type and $\beta$-type attention weights depend strongly on the distribution shift.

For the characterization of forgetting in \Cref{thm: performance on original set app}, the key step is to project the changes in attention weights onto the directions of the training features. The projection can be extended beyond \Cref{ass:finetune}, while the results become more intricate and introduces additional dependencies on the distribution shift.

\section{Noisy Setting}\label{app: noisy setting}
We follow the setup of the main text, but now both the context responses and the query target contain independent Gaussian noise:

\[
y_i=f(x_i)+\xi_i=\langle w,x_i\rangle+\xi_i,
\qquad
y_{\text{query}}=\langle w,x_{\text{query}}\rangle+\xi_{\text{query}},
\]

where $\xi_i,\xi_{\text{query}}\overset{\text{iid}}{\sim}\mathcal{N}(0,\sigma^2)$ and the noises are independent of $w$, the inputs, and one another. We define the variance of the averaged context noise for each feature as $\sigma_{\text{eff}}^2:=\frac{K\sigma^2}{N}.$

Throughout, we use a hat to denote quantities in the noisy setting; for example, $\hat{L}$ denotes the loss in the noisy setting. Compared with the noiseless setting, the training loss contains two additional variance terms:

\begin{equation}
\label{eq:noisy-training-loss}
\hat{L}^{(\text{tr})}(Q)
=L^{(\text{tr})}(Q)
+\frac{\sigma_{\text{eff}}^2}{2}
\mathbb{E}\!\left[\sum_{k=1}^K\mathbf 1_{\{x_{\text{query}}=v_k\}}\sum_{j=1}^K\operatorname{Attn}_{v_j}^2\right]
+\frac{\sigma^2}{2}.
\end{equation}

The noise introduces the following irreducible error floor for the training loss

\[
\epsilon_0:=\frac12\left[\sigma^2+\frac{\sigma_{\text{eff}}^2(K+\sigma_{\text{eff}}^2)}{K(1+\sigma_{\text{eff}}^2)}\right].
\]

so that the convergence will occur if $\hat{L}^{(\text{tr})}-\epsilon_0$ becomes arbitrarily small, i.e., training stops at the first iterate $\hat{T}_1$ satisfying

\[
\hat{L}_{v_k}^{(\text{tr})}(Q(\hat{T}_1))-\frac{\epsilon_0}{K}<\frac{\epsilon}{K},
\qquad \forall k\in[K],
\]
where $\epsilon>0$. The same argument in \cite{huang2024in-context} guarantees that there exists such a convergence time $\hat{T}_1=\operatorname{poly}\!\left(\frac1\epsilon,K,\frac1\eta,\sigma\right)$.

\begin{theorem} 
(Noisy version of \cref{thm: ood error}) Fix $\epsilon>0$ and suppose the training phase ends at time $\hat{T}_1$ under the stopping criterion above. The attention scores associated with OOD features are:

\[
\begin{aligned}
\hat{\alpha}_{u_k}^{(\hat{T}_1)}
=\frac{K\hat{\alpha}_{K,\epsilon}^*}{K-1}
\left([\mathbf M^\top\mathbf M]_{k,k}-\langle u_k,\bar v\rangle^2\right), \ \ 
\hat{\beta}_{u_{k'},u_k}^{(\hat{T}_1)}
=\frac{K\hat{\alpha}_{K,\epsilon}^*}{K-1}
\left([\mathbf M^\top\mathbf M]_{k',k}-\langle u_k,\bar v\rangle\langle u_{k'},\bar v\rangle\right),
\end{aligned}
\]

where $\mathbf M$, $[\cdot]_{k',k}$, $\bar v$ are defined same as the main paper and 

\[
\hat{\alpha}_{K,\epsilon}^*
=\left(1-\frac1K\right)
\log\!\left(
\frac{(K-1)\operatorname{Attn}_v^{(\hat{T}_1)}}{1-\operatorname{Attn}_v^{(\hat{T}_1)}}
\right),
\]

\begin{equation}
\label{eq:noisy-terminal-attention}
\operatorname{Attn}_v^{(\hat{T}_1)}
=\frac{K+\sigma_{\text{eff}}^2}{K(1+\sigma_{\text{eff}}^2)}
-\sqrt{
\frac{2(K-1)}{K(1+\sigma_{\text{eff}}^2)}
\epsilon
}.
\end{equation}

The OOD error under the noisy setting is:

\begin{equation}
\label{eq:noisy-ood-error}
\begin{aligned}
\hat{L}_{u_k}^{(\text{ft})}(Q(\hat{T}_1))
={}&\frac{
\left(\sum_{n\neq k}\exp(\hat{\beta}_{u_n,u_k}^{(\hat{T}_1)})\right)^2
+\sum_{n\neq k}\exp(\hat{\beta}_{u_n,u_k}^{(\hat{T}_1)})^2
}{
2\left(
\exp(\hat{\alpha}_{u_k}^{(\hat{T}_1)})
+\sum_{n\neq k}\exp(\hat{\beta}_{u_n,u_k}^{(\hat{T}_1)})
\right)^2
}\\
&+\frac{
\sigma_{\text{eff}}^2\left(
\exp(\hat{\alpha}_{u_k}^{(\hat{T}_1)})^2
+\sum_{n\neq k}\exp(\hat{\beta}_{u_n,u_k}^{(\hat{T}_1)})^2
\right)
}{
2\left(
\exp(\hat{\alpha}_{u_k}^{(\hat{T}_1)})
+\sum_{n\neq k}\exp(\hat{\beta}_{u_n,u_k}^{(\hat{T}_1)})
\right)^2
}
+\frac{\sigma^2}{2}.
\end{aligned}
\end{equation}

\end{theorem}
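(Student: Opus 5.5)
The proof splits into three stages: (i) obtain the terminal attention weights on the training features in the noisy setting, (ii) transfer them to the OOD features exactly as in \Cref{thm: ood error in appd}, and (iii) evaluate the noisy OOD loss in terms of attention scores.

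\textbf{Structure of the noisy gradient.} Write $\hat{y}_\qry=\sum_{v}\Attn_v(\langle w,v\rangle+\bar\xi_v)$, where $\bar\xi_v$ is the average of the context noises on tokens equal to $v$. Under the balanced-prompt assumption, $\bar\xi_v\sim\mathcal N(0,\sigma_{\text{eff}}^2)$, and these are independent across $v$ and of $w$ and $\xi_\qry$.

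The added term $\frac{\sigma_{\text{eff}}^2}{2}\sum_j\Attn_{v_j}^2$ in \eqref{eq:noisy-training-loss} still depends on $Q$ only through the logits $x_i^\top Q x_\qry$. Hence the gradient is still a combination of the rank-one matrices $(E_{x,i}-E_{x,l})E_{x,N+1}^\top$, with all columns in $\text{span}(V)$. Its projections satisfy the analogue of \Cref{lemma: alpha = -sum beta tr}: summing over $v'$ the $\beta$-projections of a softmax-logit gradient gives minus the $\alpha$-projection, because softmax is invariant to adding a constant to the logits. The $\beta$-symmetry of \Cref{lemma: beta1 are equal} also persists, since the extra term is symmetric in the non-query features. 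Consequently \Cref{lemma: beta = -1/K alpha} holds: $\hat\beta_{v',v}^{(t)}=-\hat\alpha_v^{(t)}/(K-1)$.

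\textbf{Terminal attention at $\hat T_1$.} With this one-parameter family, $\Attn_v=a$ and $\Attn_{v'}=(1-a)/(K-1)$. The per-feature noisy loss becomes the scalar function
\[
\ell(a)=\tfrac12\Big[(1-a)^2+\tfrac{(1-a)^2}{K-1}+\sigma_{\text{eff}}^2\big(a^2+\tfrac{(1-a)^2}{K-1}\big)+\sigma^2\Big].
\]
This is a quadratic in $a$. Its minimizer is $a^*=\frac{K+\sigma_{\text{eff}}^2}{K(1+\sigma_{\text{eff}}^2)}$, and its minimum equals $\epsilon_0$. Writing $\ell(a)-\epsilon_0=\frac{K(1+\sigma_{\text{eff}}^2)}{2(K-1)}(a-a^*)^2$ and setting this equal to $\epsilon$ yields \eqref{eq:noisy-terminal-attention}; we take the root with $a<a^*$, since $a$ increases monotonically from $1/K$.

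The monotonicity of $\hat\alpha_v$ (the noisy analogue of \Cref{lemma: A1star}) follows because $\ell$ is decreasing on $[1/K,a^*)$ and the gradient flow moves along the one-parameter family. Inverting $a=e^{\hat\alpha}/(e^{\hat\alpha}+(K-1)e^{-\hat\alpha/(K-1)})$ gives $\frac{K}{K-1}\hat\alpha=\log\frac{(K-1)a}{1-a}$, which is the stated $\hat\alpha^*_{K,\epsilon}$.

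\textbf{OOD transfer and loss.} The attention weights on $U$ follow verbatim from the computation in the proof of \Cref{thm: ood error in appd}. That computation used only the facts that $Q^{(\hat T_1)}$ acts on $\text{span}(V)$ with diagonal $\hat\alpha^*$ and off-diagonal $-\hat\alpha^*/(K-1)$, and vanishes outside $\text{span}(V)$. The latter holds because $Q^{(0)}=0$ and every gradient lies in $\text{span}(V)\otimes\text{span}(V)$.

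For the OOD loss, expand $(\hat y_\qry-\langle w,u_k\rangle-\xi_\qry)^2$. The cross terms vanish by independence and zero mean. The $w$-part gives the noiseless expression of \Cref{lemma:calculation of L} and \Cref{lemma: loss by ab}. The context-noise part gives $\sigma_{\text{eff}}^2\sum_u\Attn_u^2$, which in terms of exponentials is $\sigma_{\text{eff}}^2(e^{2\hat\alpha}+\sum_n e^{2\hat\beta})/C^2$. The query-noise part gives $\sigma^2$. The factor $\frac12$ yields \eqref{eq:noisy-ood-error}.

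\textbf{Main obstacle.} The main obstacle is the rigorous monotone-convergence step in the second stage. One must show that GD on $\hat L^{(\text{tr})}$ keeps $\Attn_v$ strictly below $a^*$ and crosses the $\epsilon$-level at a polynomial time, without overshooting past the level set because of the discrete step. I would handle this with a step-size condition, as in \Cref{lemma: A1star}. I would also bound the $\alpha$-gradient below on $[1/K,a^*-\sqrt{\cdot}]$ by a polynomial quantity, using the quadratic form of $\ell(a)$.
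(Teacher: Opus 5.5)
Your proposal is correct and follows essentially the same route as the paper. You use the preserved symmetry $\hat\beta_{v',v}=-\hat\alpha_v/(K-1)$ to write the per-feature noisy training loss as $\epsilon_0+\frac{K(1+\sigma_{\text{eff}}^2)}{2(K-1)}(a-a^*)^2$ in the terminal attention $a$, solve the $\epsilon$-level set for $\hat\alpha^*_{K,\epsilon}$, transfer to $U$ by the same basis expansion as in the noiseless theorem, and split the OOD loss into the signal term, the $\sigma_{\text{eff}}^2\sum_u \mathbf{Attn}_u^2$ term, and $\sigma^2/2$, all exactly as the paper does, while additionally justifying two steps the paper takes for granted (that the structural lemmas survive the extra variance term, and the monotone, non-overshooting approach to the level set).
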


\subsection*{Comparison with noiseless setting}

\begin{enumerate}

\item \textbf{Same geometric structure.} In the noisy case,  $\hat{\alpha}_{u_k}^{(\hat{T}_1)}$ and $\hat{\beta}_{u_{k'},u_k}^{(\hat{T}_1)}$ have exactly the same dependence on $\mathbf M^\top\mathbf M$ and $\bar v$ as their noiseless counterparts, and the only change is the scaling parameter $\alpha_{K,\epsilon}^*\rightarrow\hat{\alpha}_{K,\epsilon}^*.$

\item \textbf{Noise modifies the feature attention at the convergence.} The attention at the convergence of the training takes the form of Eq.~(\ref{eq:noisy-terminal-attention}), where the noise introduces a noise-dependent shift and rescaling of the convergent attention, compared to the following attention  $\operatorname{Attn}_v^{(T_1)}=1-\sqrt{\frac{2(K-1)\epsilon}{K}}$ in the noiseless setting.

\item \textbf{The noisy OOD error has three components.} Equation~(\ref{eq:noisy-ood-error}) consists of: (i) the original noiseless OOD-error expression evaluated at the noisy terminal scores $\hat{\alpha}$ and $\hat{\beta}$; (ii) an attention-dependent context-noise variance term proportional to $\sigma_{\text{eff}}^2$; and (iii) the irreducible query-noise term $\sigma^2/2$.

\end{enumerate}

\begin{proof}
We present the proof in a way to emphasize the changes from the noiseless proof.

We first derive the attention model $\hat{\alpha}$ and $\hat{\beta}$ at the time of convergence. To this end, we first derive the training loss given in Eq.~(\ref{eq:noisy-training-loss}). In particular, by symmetry across the features, the new noise-dependent term (i.e., the second term) has the following form

\[
\mathbb{E}\!\left[
\sum_{j=1}^K\mathbf 1_{\{x_{\text{query}}=v_j\}}
\sum_{n=1}^K\operatorname{Attn}_{v_n}^2
\right]
=\frac{
\exp\!\left(\frac{2K\hat{\alpha}_v}{K-1}\right)+K-1
}{
\left(\exp\!\left(\frac{K\hat{\alpha}_v}{K-1}\right)+K-1\right)^2
}.
\] 
Thus, the total loss is given by

\[
\begin{aligned}
\hat{L}^{(\text{tr})}(Q)
={}&\epsilon_0
+\frac{K(1+\sigma_{\text{eff}}^2)}{2(K-1)}
\left(
\frac{
\exp\!\left(\frac{K\hat{\alpha}_v}{K-1}\right)
}{
\exp\!\left(\frac{K\hat{\alpha}_v}{K-1}\right)+K-1
}-
\frac{K+\sigma_{\text{eff}}^2}{K(1+\sigma_{\text{eff}}^2)}
\right)^2.
\end{aligned}
\]

Letting $\hat{L}^{(\text{tr})}(Q(\hat{T}_1))-\epsilon_0=\epsilon$ satisfies the convergent condition and thus solves the corresponding $\hat{\alpha}_{v}$ to be $\hat{\alpha}_{K,\epsilon}^*$. 

We next derive the noisy OOD error, which can be expressed as follows for the query feature $u_k$:

\[
\hat{L}_{u_k}^{(\text{ft})}(Q)
=L_{u_k}^{(\text{ft})}(Q)
+\frac{\sigma_{\text{eff}}^2}{2}
\mathbb{E}\!\left[
\mathbf 1_{\{x_{\text{query}}=u_k\}}
\sum_{n=1}^K\operatorname{Attn}_{u_n}^2
\right]
+\frac{\sigma^2}{2}.
\]

The first term can be obtained as the noiseless error in original Theorem 5.1 but replacing $\alpha_{K,\epsilon}^*$ with $\hat{\alpha}_{K,\epsilon}^*$. The second term can be derived as follows:

\[
\mathbb{E}\!\left[
\mathbf 1_{\{x_{\text{query}}=u_k\}}
\sum_{n=1}^K\operatorname{Attn}_{u_n}^2
\right]
=\frac{
\exp(\hat{\alpha}_{u_k}^{(\hat{T}_1)})^2
+\sum_{n\neq k}\exp(\hat{\beta}_{u_n,u_k}^{(\hat{T}_1)})^2
}{
\left(
\exp(\hat{\alpha}_{u_k}^{(\hat{T}_1)})
+\sum_{n\neq k}\exp(\hat{\beta}_{u_n,u_k}^{(\hat{T}_1)})
\right)^2
},
\]

which completes the proof.
\end{proof}

We use the same noise model and notation as in our noisy version of OOD analysis. Suppose that Assumption 5.4 in the original paper holds.

We additionally assume that the stepsize lies strictly inside the range allowed in the original paper. More precisely, if $\eta_{\max}$ is the upper bound in Lemma F.3, fix $\delta>0$ and assume that the noisy stepsize satisfies $$\hat{\eta}\leq \eta_{\max}-\delta.$$

\begin{theorem}
\label{thm:noisy-finetuning-margins} (Noisy version of \cref{thm: finetune of ab}) Fix a problem instance $\mathcal I=(U,V,K,\epsilon,\hat{\eta})$ and a constant $\xi>0$. Under Assumption 5.4 and the stepsize condition above, there exists a positive, instance-dependent noise radius

\[
s^\star=s^\star(\mathcal I,\xi)>0
\]

such that, whenever

\[
\sigma_{\mathrm{eff}}^2<s^\star
\qquad\text{and}\qquad
\epsilon\geq\xi,
\]

the noisy finetuning dynamics preserve all the sign, ordering, and stopping properties used in the original proof up to the noisy stopping time.

In particular, there exist an ordering $k(1),k(2),\ldots,k(K)$ and times $\hat t_1\leq\hat t_2\leq\cdots\leq\hat t_K$ with $\hat t_m
=O\!\left(
\operatorname{poly}\!\left(
K,\frac{1}{\epsilon},\frac{1}{\hat{\eta}}, \sigma
\right)
\right)$ such that:

\[
\hat L_{u_{k(m)}}^{(\mathrm{ft})}
\!\left(Q\!\left(\hat T_1+\hat t_m\right)\right) - \frac{\epsilon_0}{K}
\leq\frac{\epsilon}{K},
\qquad m\in[K],
\]

As in the original paper, define the noisy cumulative score updates by

\[
\begin{aligned}
\hat\alpha_{u_k}^{(\hat T_1+t)}
&=\hat\alpha_{u_k}^{(\hat T_1)}
+\Delta_{k,\alpha}^{(t)},\\
\hat\beta_{u_n,u_k}^{(\hat T_1+t)}
&=\hat\beta_{u_n,u_k}^{(\hat T_1)}
-\Lambda_{n,k,\beta}^{(t)},
\qquad n\neq k.
\end{aligned}
\]

To state the cumulative-update bounds, define the noisy logarithmic scale

\[
\hat\rho_{K,\epsilon,\sigma}
:=\frac{K}{K-1}\hat\alpha_{K,\epsilon}^*.
\]

At the corresponding stopping times,

\begin{equation}
\label{eq:noisy-finetuning-margins-alpha-update}
\Delta_{k(m),\alpha}^{(\hat t_m)}
=\Theta\!\left(
\left[
1-\gamma_{k(m)}^2
-\frac{\gamma_{k(m)}\hat\gamma_\alpha}{K}
\right]
\hat\rho_{K,\epsilon,\sigma}
\right),
\end{equation}

and, for every $n\neq k(m)$,

\begin{equation}
\label{eq:noisy-finetuning-margins-beta-update}
\Lambda_{n,k(m),\beta}^{(\hat t_m)}
=\Theta\!\left(
\frac{\Delta_{k(m),\alpha}^{(\hat t_m)}}{K}
+\frac{\gamma_{k(m)}\hat\gamma_\beta}{K}
\hat\rho_{K,\epsilon,\sigma}
\right),
\end{equation}

where $\hat\gamma_\alpha,\hat\gamma_\beta\in[\gamma_{\min},\gamma_{\max}]$.

\end{theorem}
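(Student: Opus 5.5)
The plan is a perturbation argument. The noisy finetuning dynamics are treated as a small, continuous deformation of the noiseless dynamics analyzed in \Cref{thm: finetune of ab app}. Every qualitative property used there is a strict inequality at each finite step, so each survives when $\sigma_{\mathrm{eff}}^2$ is small enough. The key properties are the signs in \Cref{lemma: delta > 0}, the orderings in \Cref{lemma: beta decreasing speed}, and the stopping rule.

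\textbf{Step 1: the noisy gradient.} Differentiate the noisy finetuning loss $\hat L^{(\mathrm{ft})}=L^{(\mathrm{ft})}+\frac{\sigma_{\mathrm{eff}}^2}{2}\E[\mathbf 1_{\{x_\qry=u_k\}}\sum_n \Attn_{u_n}^2]+\frac{\sigma^2}{2}$. This writes $\hat\delta^{(t)}_{u_k,\alpha}=\delta^{(t)}_{u_k,\alpha}+\sigma_{\mathrm{eff}}^2 r^{(t)}_{k,\alpha}$ and $\hat\delta^{(t)}_{u_n,u_k,\beta}=\delta^{(t)}_{u_n,u_k,\beta}+\sigma_{\mathrm{eff}}^2 r^{(t)}_{n,k,\beta}$.
\begin{itemize}
\item The remainders $r$ are explicit polynomials in the attention scores, coming from the softmax derivative of $\sum_n\Attn_{u_n}^2$, so they are bounded by an absolute constant.
\item The extra term depends on $Q$ only through the softmax restricted to $\mathrm{span}(U)$. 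Hence the identity of \Cref{lemma: alpha = -sum beta} carries over: the derivative of a function of softmax scores with respect to the logits sums to zero.
\item This gives $\Delta_{k,\alpha}^{(t)}=\sum_{n\ne k}\Lambda^{(t)}_{n,k,\beta}$ exactly in the noisy case, just as in \Cref{lemma: alpha = sum beta app}.
\item \Cref{lemma: gradient projection} also holds unchanged.
\end{itemize}

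\textbf{Step 2: the initialization.} By the noisy version of \Cref{thm: ood error} and \Cref{ass:finetune}, the noisy initial scores are
\[
\hat\alpha^{(\hat T_1)}_{u_m}=\gamma_m^2\hat\alpha^*_{K,\epsilon},\qquad \hat\beta^{(\hat T_1)}_{u_n,u_m}=-\frac{\gamma_n\gamma_m\hat\alpha^*_{K,\epsilon}}{K-1}.
\]
The constant $\hat\alpha^*_{K,\epsilon}$ depends continuously on $\sigma_{\mathrm{eff}}^2$ through \cref{eq:noisy-terminal-attention} and equals $\alpha^*_{K,\epsilon}$ at $\sigma_{\mathrm{eff}}=0$.

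\textbf{Step 3: robustness of the strict inequalities.} The sign bound (term $g_t<0$ in \Cref{lemma: delta > 0}) and the ordering bound in \Cref{lemma: beta decreasing speed} are strict. Their margins are bounded below uniformly over the compact set of scores reachable before stopping, because scores stay in a bounded box determined by the stopping level. The margins also stay bounded below when $\epsilon\ge\xi$, since $\alpha^*$ is bounded. So there is a margin $\mu(\mathcal I,\xi)>0$. Choosing $s^\star$ with $s^\star\sup|r|<\mu$ preserves every sign and ordering. The slack $\delta$ in the stepsize absorbs the small change in the mean-value-theorem contraction factor.

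\textbf{Step 4: conclusion.} Rerun the proof of \Cref{thm: finetune of ab app} with $\hat L-\epsilon_0/K$ in place of $L$. The monotonicity bounds (\Cref{lemma: support prove Loss upper bound}) sandwich the loss between the extreme-$\beta$ configurations. Solving the threshold equations gives upper and lower bounds on $\Delta$ that match to constant factors. These bounds have the same form as before, with $\log\frac{K}{\sqrt\epsilon}$ replaced by the scale $\hat\rho_{K,\epsilon,\sigma}$. The $\Lambda$ bounds follow from the ordering, as before. The polynomial time bound uses the per-step lower bound $\hat\delta_{\alpha}\ge c\,\epsilon/K$ minus the $O(\sigma_{\mathrm{eff}}^2)$ perturbation, which is positive for small $s^\star$.

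\textbf{Main obstacle.} I expect the hardest part to be Step 3, making the margin $\mu$ uniform in time. I would first confine all iterates to a compact set, using monotonicity and the stopping level, and then apply continuity on that compact set. The noisy stopping criterion, which subtracts the floor $\epsilon_0$, also needs care so that the threshold equation stays solvable with the same structure.
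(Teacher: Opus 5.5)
Your proposal is essentially the paper's own argument: write the noisy gradient as the noiseless one plus an $O(\sigma_{\mathrm{eff}}^2)$ correction, use the strictness of the one-step sign, ordering and progress inequalities to choose $s^\star$ (with the stepsize slack $\delta$ protecting the contraction factor), and then rerun the bounds of \Cref{thm: finetune of ab} with $\hat\alpha^*_{K,\epsilon}$ and $\hat\rho_{K,\epsilon,\sigma}$ in place of their noiseless counterparts. The differences are small: you look for a tolerance that is uniform over a compact region of scores, whereas the paper takes the minimum of the positive one-step tolerances over the finitely many pre-stopping steps, and the paper uses $\xi$ to absorb the variance term at the stopping threshold rather than to bound the scores; the paper's finite-horizon device is the safer one for the ordering inequality, because that margin is proportional to the current gap $\hat\beta_{u_{n'},u_k}-\hat\beta_{u_n,u_k}$ and so is not bounded below on a compact set unless you also use the finite horizon or the fact that the noise correction to that difference is itself proportional to the gap.
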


\subsection*{Comparison with the noiseless setting}

\begin{enumerate}

\item \textbf{Two additional margin constants.} The stepsize margin $\delta$ ensures that the chosen stepsize remains strictly inside the stable range when the optimization dynamics are perturbed by noise. The tolerance margin $\xi$ ensures that the noise-dependent terms in the training loss in Eq.~(\ref{eq:noisy-training-loss}) can be absorbed while its upper bound still satisfies the convergence criterion. For each fixed pair of positive margins $(\delta,\xi)$, there exists a sufficiently small noise radius to preserves both conditions.

\item \textbf{The optimization dynamics remain stable under small noise.} Within the above noise radius, the noisy dynamics preserve the main properties of the noiseless proof: the stopping criterion is reached in polynomial time, the ordering argument remains valid, and the directions of the attention-score updates are unchanged.

\item \textbf{The cumulative-update bounds retain the same form as in the original paper.} The conclusions in Eqs.~(\ref{eq:noisy-finetuning-margins-alpha-update})--(\ref{eq:noisy-finetuning-margins-beta-update}) have the same structure as their noiseless counterparts. The clean terminal scale $\alpha_{K,\epsilon}^*$ is replaced by its noisy counterpart $\hat\alpha_{K,\epsilon}^*$, and the original logarithmic factor is replaced by the noisy logarithmic scale $\hat\rho_{K,\epsilon,\sigma}$. As $\sigma\to0$, we have $\hat\rho_{K,\epsilon,\sigma}=\Theta\!\left(\log\frac{K}{\epsilon}\right)$. This noisy logarithmic scale remains within positive constant factors of its noiseless counterpart.

\end{enumerate}

\begin{proof}

We first note that the proof of Theorem 5.5 relies on strict one-step inequalities governing the signs of the $\alpha$-type and $\beta$-type updates, the ordering of the $\beta$-type scores, and a quantitative lower bound on optimization progress. Importantly, at every pre-stopping iterate, there is a positive local tolerance within which these inequalities remain valid.

At each iterate of the noisy finetuning trajectory, we compare the actual noisy gradient step with a counterfactual noiseless gradient step initialized at the same model parameter. Over the pre-stopping region, the two model updates differ by an $O\!\left(\hat\eta\sigma_{\mathrm{eff}}^2\right)$ perturbation.

We use the noisy stepsize $\hat{\eta}$ introduced in the theorem, which lies strictly inside the admissible range by the margin $\delta$. At every pre-stopping gradient-descent step, the strict inequalities used in the original argument provide a positive tolerance on the model-update perturbation: whenever the $O\!\left(\hat\eta\sigma_{\mathrm{eff}}^2\right)$ perturbation is smaller than this tolerance, that step continues to satisfy the key sign, ordering, and progress properties of the original proof. Applying the same one-step induction and progress argument therefore gives a polynomial upper bound on the number of steps required to reach the stopping criterion, provided that these local perturbation tolerances are respected. This polynomial horizon contains only finitely many steps. Hence, for the fixed problem instance, the minimum of the corresponding local tolerances is positive. We choose $s^\star$ so that the one-step perturbation is smaller than this minimum tolerance whenever $\sigma_{\mathrm{eff}}^2<s^\star$. It follows that every noisy update obeys the required properties and that the noisy dynamics reach the prescribed stopping threshold in polynomial time.

The auxiliary constant $\xi$ plays a separate role and is not needed for the stability of the gradient dynamics. The noisy objective contains the additional attention-dependent variance term in Eq.~(\ref{eq:noisy-training-loss}). The margin $\xi$ provides sufficient terminal-loss slack to absorb this term, ensuring that the resulting noisy excess-loss upper bound remains below the prescribed threshold $\epsilon$.

Noise changes the finetuning initialization from $\alpha_{K,\epsilon}^*$ to $\hat\alpha_{K,\epsilon}^*$ and replaces the original logarithmic factor by $\hat\rho_{K,\epsilon,\sigma}$. Within the radius $s^\star$, this noisy logarithmic scale remains of the same order as its noiseless counterpart. Applying the original upper- and lower-bound arguments with these substitutions yields Eqs.~(\ref{eq:noisy-finetuning-margins-alpha-update})--(\ref{eq:noisy-finetuning-margins-beta-update}). 
\end{proof}

\begin{theorem}
\label{thm:noisy-original-feature-performance}
(Noisy version of \cref{thm: performance on original set}) Suppose Assumption 5.4 holds and the stepsize satisfies the noisy condition.
At time $\hat{T}_2 = \hat{t}_K$, the finetuning phase ends and the model
performance on the original feature set can be captured by
\begin{equation}
\label{eq:noisy-original-feature-loss}
\begin{aligned}
\hat{L}_{v_k}^{(\mathrm{tr})}
\!\left(Q(\hat{T}_1+\hat{T}_2)\right)
&=
\frac{
\left(
\sum_{n\neq k}
\exp\!\left(\hat{\beta}_{v_n,v_k}^{(\hat{T}_1+\hat{T}_2)}\right)
\right)^2
+\sum_{n\neq k}
\exp\!\left(2\hat{\beta}_{v_n,v_k}^{(\hat{T}_1+\hat{T}_2)}\right)
}{
2\left(
\exp\!\left(\hat{\alpha}_{v_k}^{(\hat{T}_1+\hat{T}_2)}\right)
+\sum_{n\neq k}
\exp\!\left(\hat{\beta}_{v_n,v_k}^{(\hat{T}_1+\hat{T}_2)}\right)
\right)^2
}
\\[2mm]
&\qquad +
\frac{
\sigma_{\mathrm{eff}}^2
\left(
\exp\!\left(2\hat{\alpha}_{v_k}^{(\hat{T}_1+\hat{T}_2)}\right)
+\sum_{n\neq k}
\exp\!\left(2\hat{\beta}_{v_n,v_k}^{(\hat{T}_1+\hat{T}_2)}\right)
\right)
}{
2\left(
\exp\!\left(\hat{\alpha}_{v_k}^{(\hat{T}_1+\hat{T}_2)}\right)
+\sum_{n\neq k}
\exp\!\left(\hat{\beta}_{v_n,v_k}^{(\hat{T}_1+\hat{T}_2)}\right)
\right)^2
}
+\frac{\sigma^2}{2},
\end{aligned}
\end{equation}
where
\begin{equation}
\label{eq:noisy-original-alpha-update}
\hat{\alpha}_{v_k}^{(\hat{T}_1+\hat{T}_2)}
=\hat{\alpha}_{v_k}^{(\hat{T}_1)}
+\gamma_k^2\Delta_{k,\alpha}^{(\hat{T}_2)},
\end{equation}
and, for every $n\neq k$,
\begin{equation}
\label{eq:noisy-original-beta-update}
\hat{\beta}_{v_n,v_k}^{(\hat{T}_1+\hat{T}_2)}
=\hat{\beta}_{v_n,v_k}^{(\hat{T}_1)}
-\gamma_k\gamma_n\Lambda_{n,k,\beta}^{(\hat{T}_2)}.
\end{equation}
\end{theorem}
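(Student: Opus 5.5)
The proof splits into two parts: a closed-form expression for the noisy source-domain loss at any parameter $Q$, and a transfer of the finetuning updates from the OOD basis $U$ to the training basis $V$. Noise never enters the second part.

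\textbf{Step 1: decompose the noisy loss.} Fix $Q$ and the query $v_k$, and write the prediction as
\[
\hat y_{\qry}=\sum_{j}\Attn_{v_j}\bigl(\langle w,v_j\rangle+\bar\xi_j\bigr),
\]
where $\bar\xi_j$ is the attention-weighted average of the context noises on the tokens equal to $v_j$. Under the balanced-prompt assumption each $\bar\xi_j$ is an average of $N/K$ i.i.d.\ Gaussians. Hence the $\bar\xi_j$ are independent, with variance $\sigma_{\text{eff}}^2=K\sigma^2/N$. Subtract $y_{\qry}=\langle w,v_k\rangle+\xi_{\qry}$, expand the square, and take expectations over $w$, the $\bar\xi_j$ and $\xi_{\qry}$. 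All cross terms vanish by independence and zero mean. What remains is the noiseless term $\|v_k-\sum_j\Attn_{v_j}v_j\|^2$ from \Cref{lemma:calculation of L}, plus $\sigma_{\text{eff}}^2\sum_j\Attn_{v_j}^2$, plus $\sigma^2$, all multiplied by $\tfrac12$.

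\textbf{Step 2: substitute the softmax.} Plug \cref{eq:approximate of Attnk} into each term. The first term gives the noiseless fraction exactly as in \Cref{lemma: loss by ab}. The second gives $\sigma_{\text{eff}}^2(e^{2\hat\alpha}+\sum_n e^{2\hat\beta_n})$ over the squared normalizer. This proves the displayed formula for every $Q$, in particular for $Q(\hat T_1+\hat T_2)$.

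\textbf{Step 3: transfer the updates to the $V$ basis.} I need a noisy analogue of \Cref{lemma: gradient projection}. From \Cref{lemma:gradient}, every summand of the finetuning gradient has the form
\[
(\cdot)\,(E_{x,i}-E_{x,l})E_{x,N+1}^\top\, y_i .
\]
With noisy labels, $y_i$ and the residual are still scalars, so the matrix factor keeps both its column and row spaces inside $\operatorname{span}(U)$. Therefore $w^\top\nabla\hat L^{(\text{ft})}w'=0$ whenever $w$ or $w'$ is orthogonal to $\operatorname{span}(U)$.

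Expand $v_k$ in an orthonormal basis built from $u_k$ and $\operatorname{span}(U)^\perp$. By \Cref{ass:finetune}, the component of $v_k$ in $\operatorname{span}(U)$ is $\gamma_k u_k$. This gives
\[
v_n^\top\nabla\hat L^{(\text{ft})}v_k=\gamma_n\gamma_k\,u_n^\top\nabla\hat L^{(\text{ft})}u_k
\]
at each step. Sum over the GD steps from $\hat T_1$ to $\hat T_1+\hat T_2$ with the definitions of $\Delta_{k,\alpha}$ and $\Lambda_{n,k,\beta}$ from \Cref{thm:noisy-finetuning-margins}, exactly as in the proof of \Cref{thm: performance on original set app}. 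This yields \cref{eq:noisy-original-alpha-update,eq:noisy-original-beta-update}, with the sign convention for $\Lambda$ giving the minus in front of $\gamma_k\gamma_n$. The endpoint $\hat T_2=\hat t_K$ exists by \Cref{thm:noisy-finetuning-margins}.

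\textbf{Main obstacle.} The delicate point is Step 1's independence and variance claim. It requires a balanced prompt, i.e.\ exactly $N/K$ copies of each feature; otherwise the per-feature noise variances differ and $\sigma_{\text{eff}}$ is only approximate, as in \Cref{app:extension of assumption}. It also requires that the noise does not enter the attention scores. This holds because $W_{KQ}$ has a zero last row and column, so the labels never affect the softmax logits.
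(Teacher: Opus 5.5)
Your proposal is correct and takes the same route as the paper. The paper's proof combines the noise-augmented loss decomposition (already used for the noisy OOD theorem) with the projection argument of \Cref{thm: performance on original set app}, i.e.\ \Cref{lemma: gradient projection} summed over the finetuning steps. You also make explicit two points the paper leaves implicit: you derive the $\sigma_{\text{eff}}^2\sum_j \Attn_{v_j}^2$ term from balanced-prompt averaging of the context noise, and you check that noisy labels keep the row and column spaces of the finetuning gradient inside $\operatorname{span}(U)$, so the projection identity and the minus sign in front of $\gamma_k\gamma_n\Lambda_{n,k,\beta}$ carry over unchanged.
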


\subsection*{Comparison}
\begin{enumerate}
\item Compared with the noiseless result, the form of the $\alpha$/$\beta$
attentions is unchanged. This is because the source-target geometry is
unrelated to the noise.

\item In the detailed expression, $\alpha$ and $\beta$ are replaced by
$\hat{\alpha}$ and $\hat{\beta}$, and the loss contains the two additional
noise terms in Eq.~(\ref{eq:noisy-training-loss}).
\end{enumerate}

The proof follows the same projection argument of \cref{thm: performance on original set}.

%% file: ref.bib
@article{zhang2024trained,
  title={Trained transformers learn linear models in-context},
  author={Zhang, Ruiqi and Frei, Spencer and Bartlett, Peter L},
  journal={Journal of Machine Learning Research},
  volume={25},
  number={49},
  pages={1--55},
  year={2024}
}

@inproceedings{zhang2022delving,
  title={Delving deep into the generalization of vision transformers under distribution shifts},
  author={Zhang, Chongzhi and Zhang, Mingyuan and Zhang, Shanghang and Jin, Daisheng and Zhou, Qiang and Cai, Zhongang and Zhao, Haiyu and Liu, Xianglong and Liu, Ziwei},
  booktitle={Proceedings of the IEEE/CVF conference on Computer Vision and Pattern Recognition},
  pages={7277--7286},
  year={2022}
}

@article{xu2023ain,
  title={It ain't that bad: understanding the mysterious performance drop in OOD generalization for generative transformer models},
  author={Xu, Xingcheng and Pan, Zihao and Zhang, Haipeng and Yang, Yanqing},
  journal={arXiv preprint arXiv:2308.08268},
  year={2023}
}

@inproceedings{hendrycks2021many,
  title={The many faces of robustness: A critical analysis of out-of-distribution generalization},
  author={Hendrycks, Dan and Basart, Steven and Mu, Norman and Kadavath, Saurav and Wang, Frank and Dorundo, Evan and Desai, Rahul and Zhu, Tyler and Parajuli, Samyak and Guo, Mike and others},
  booktitle={Proceedings of the IEEE/CVF international conference on computer vision},
  pages={8340--8349},
  year={2021}
}

@article{hendrycks2020pretrained,
  title={Pretrained transformers improve out-of-distribution robustness},
  author={Hendrycks, Dan and Liu, Xiaoyuan and Wallace, Eric and Dziedzic, Adam and Krishnan, Rishabh and Song, Dawn},
  journal={arXiv preprint arXiv:2004.06100},
  year={2020}
}

@article{kwon2025out,
  title={Out-of-distribution generalization of in-context learning: A low-dimensional subspace perspective},
  author = {Kwon, Soo Min and Xu, Alec S and Yaras, Can and Balzano, Laura and Qu, Qing},
  journal={arXiv preprint arXiv:2505.14808},
  year={2025}
}

@article{anwar2025understanding,
  title={Understanding in-context learning of linear models in transformers through an adversarial lens},
  author={Anwar, Usman and von Oswald, Johannes and Kirsch, Louis and Krueger, David and Frei, Spencer},
  journal={Transactions on Machine Learning Research},
  year={2025}
}

@article{collins2024context,
  title={In-context learning with transformers: Softmax attention adapts to function lipschitzness},
  author={Collins, Liam and Parulekar, Advait and Mokhtari, Aryan and Sanghavi, Sujay and Shakkottai, Sanjay},
  journal={Advances in Neural Information Processing Systems},
  volume={37},
  pages={92638--92696},
  year={2024}
}

@article{li2024one,
  title={One-layer transformer provably learns one-nearest neighbor in context},
  author={Li, Zihao and Cao, Yuan and Gao, Cheng and He, Yihan and Liu, Han and Klusowski, Jason and Fan, Jianqing and Wang, Mengdi},
  journal={Advances in Neural Information Processing Systems},
  volume={37},
  pages={82166--82204},
  year={2024}
}

@inproceedings{sun2022safe,
  title={Safe self-refinement for transformer-based domain adaptation},
  author={Sun, Tao and Lu, Cheng and Zhang, Tianshuo and Ling, Haibin},
  booktitle={Proceedings of the IEEE/CVF conference on computer vision and pattern recognition},
  pages={7191--7200},
  year={2022}
}

@article{xu2021cdtrans,
  title={Cdtrans: Cross-domain transformer for unsupervised domain adaptation},
  author={Xu, Tongkun and Chen, Weihua and Wang, Pichao and Wang, Fan and Li, Hao and Jin, Rong},
  journal={arXiv preprint arXiv:2109.06165},
  year={2021}
}

@inproceedings{wang2024can,
  title={Can in-context learning really generalize to out-of-distribution tasks?},
  author={Wang, Qixun and Wang, Yifei and Ying, Xianghua and Wang, Yisen},
  booktitle={The Thirteenth International Conference on Learning Representations},
  year={2024}
}

@inproceedings{liao2026invariant,
  title={Invariant graph transformer for out-of-distribution generalization},
  author={Liao, Tianyin and Zhang, Ziwei and Sun, Yufei and Hu, Chunyu and Li, Jianxin},
  booktitle={Proceedings of the 32nd ACM SIGKDD Conference on Knowledge Discovery and Data Mining V. 1},
  pages={807--818},
  year={2026}
}

@article{zhang2024out,
  title={On the out-of-distribution generalization of multimodal large language models},
  author={Zhang, Xingxuan and Li, Jiansheng and Chu, Wenjing and Hai, Junjia and Xu, Renzhe and Yang, Yuqing and Guan, Shikai and Xu, Jiazheng and Cui, Peng},
  journal={arXiv preprint arXiv:2402.06599},
  year={2024}
}

@inproceedings{zhang2025out,
  title={On the out-of-distribution generalization of large multimodal models},
  author={Zhang, Xingxuan and Li, Jiansheng and Chu, Wenjing and Xu, Renzhe and Yang, Yuqing and Guan, Shikai and Xu, Jiazheng and Jing, Liping and Cui, Peng and others},
  booktitle={Proceedings of the Computer Vision and Pattern Recognition Conference},
  pages={10315--10326},
  year={2025}
}

@article{song2025out,
  title={Out-of-distribution generalization via composition: a lens through induction heads in transformers},
  author={Song, Jiajun and Xu, Zhuoyan and Zhong, Yiqiao},
  journal={Proceedings of the National Academy of Sciences},
  volume={122},
  number={6},
  pages={e2417182122},
  year={2025},
  publisher={National Academy of Sciences}
}

@inproceedings{hosseini2022compositional,
  title={On the compositional generalization gap of in-context learning},
  author={Hosseini, Arian and Vani, Ankit and Bahdanau, Dzmitry and Sordoni, Alessandro and Courville, Aaron},
  booktitle={Proceedings of the Fifth BlackboxNLP Workshop on Analyzing and Interpreting Neural Networks for NLP},
  pages={272--280},
  year={2022}
}

@article{ahuja2023closer,
  title={A closer look at in-context learning under distribution shifts},
  author={Ahuja, Kartik and Lopez-Paz, David},
  journal={arXiv preprint arXiv:2305.16704},
  year={2023}
}

@article{goddard2025can,
  title={When can in-context learning generalize out of task distribution?},
  author={Goddard, Chase and Smith, Lindsay M and Ngampruetikorn, Vudtiwat and Schwab, David J},
  journal={arXiv preprint arXiv:2506.05574},
  year={2025}
}

@article{ren2024towards,
  title={Towards understanding how transformers learn in-context through a representation learning lens},
  author={Ren, Ruifeng and Liu, Yong},
  journal={Advances in Neural Information Processing Systems},
  volume={37},
  pages={892--933},
  year={2024}
}

@inproceedings{li2024revisiting,
  title={Revisiting catastrophic forgetting in large language model tuning},
  author={Li, Hongyu and Ding, Liang and Fang, Meng and Tao, Dacheng},
  booktitle={Findings of the association for computational linguistics: EMNLP 2024},
  pages={4297--4308},
  year={2024}
}

@article{tiwari2026turning,
  title={Turning Back Without Forgetting: Selective Backward Refinement for Parameter-Efficient Continual Learning},
  author={Tiwari, Anushka and Ji, Kaiyi},
  journal={arXiv preprint arXiv:2606.01379},
  year={2026}
}

@article{hsu2026understanding,
  title={Understanding In-Context Learning for Nonlinear Regression with Transformers: Attention as Featurizer},
  author={Hsu, Alexander and Shen, Zhaiming and Liao, Wenjing and Lai, Rongjie},
  journal={arXiv preprint arXiv:2605.05176},
  year={2026}
}

@article{cole2024context,
  title={In-context learning of linear systems: Generalization theory and applications to operator learning},
  author={Cole, Frank and Lu, Yulong and Xu, Wuzhe and Zhang, Tianhao},
  journal={arXiv preprint arXiv:2409.12293},
  year={2024}
}

@inproceedings{shen2026understanding,
  title={Understanding in-context learning on structured manifolds: Bridging attention to kernel methods},
  author={Shen, Zhaiming and Hsu, Alexander and Lai, Rongjie and Liao, Wenjing},
  booktitle={International Conference on Learning Representations},
  volume={2026},
  pages={42067--42103},
  year={2026}
}

@article{lu2025transformer,
  title={Transformer learns the cross-task prior and regularization for in-context learning},
  author={Lu, Fei and Yu, Yue},
  journal={arXiv preprint arXiv:2505.12138},
  year={2025}
}


%% file: ref_cl.bib
@inproceedings{vaswani2017attention,
  title={Attention is all you need},
  author={Vaswani, Ashish and Shazeer, Noam and Parmar, Niki and Uszkoreit, Jakob and Jones, Llion and Gomez, Aidan N and Kaiser, {\L}ukasz and Polosukhin, Illia},
  booktitle={Proc. Advances in Neural Information Processing Systems (NeurIPS)},
  year={2017}
}

@inproceedings{huang2024in-context,
title={In-context convergence of transformers},
author={Yu Huang and Yuan Cheng and Yingbin Liang},
booktitle = {Proc. International Conference on Machine Learning (ICML)},
year={2024}
}

@inproceedings{yang2024in-context,
title={In-context learning with representations: {C}ontextual generalization of trained transformers},
author={Tong Yang and Yu Huang and Yingbin Liang and Yuejie Chi},
booktitle={Proc. Advances in Neural Information Processing Systems (NeurIPS)},
year={2024}
}

@inproceedings{brown2020language,
  title={Language models are few-shot learners},
  author={Tom B. Brown and Benjamin Mann and Nick Ryder and Melanie Subbiah and Jared Kaplan and Prafulla Dhariwal and Arvind Neelakantan and Pranav Shyam and Girish Sastry and Amanda Askell and Sandhini Agarwal and Ariel Herbert-Voss and Gretchen Krueger and Tom Henighan and Rewon Child and Aditya Ramesh and Daniel M. Ziegler and Jeffrey Wu and Clemens Winter and Christopher Hesse and Mark Chen and Eric Sigler and Mateusz Litwin and Scott Gray and Benjamin Chess and Jack Clark and Christopher Berner and Sam McCandlish and Alec Radford and Ilya Sutskever and Dario Amodei},
  booktitle={Proc. Advances in Neural Information Processing Systems (NeurIPS)},
  year={2020}
}

@inproceedings{garg2022can,
  title={What can transformers learn in-context? a case study of simple function classes},
  author={Garg, Shivam and Tsipras, Dimitris and Liang, Percy S and Valiant, Gregory},
  booktitle={Proc. Advances in Neural Information Processing Systems (NeurIPS)},
  year={2022}
}

@article{min2022rethinking,
  title={Rethinking the role of demonstrations: {W}hat makes in-context learning work?},
  author={Min, Sewon and Lyu, Xinxi and Holtzman, Ari and Artetxe, Mikel and Lewis, Mike and Hajishirzi, Hannaneh and Zettlemoyer, Luke},
  journal={arXiv preprint arXiv:2202.12837},
  year={2022}
}

@article{wei2023larger,
  title={Larger language models do in-context learning differently},
  author={Wei, Jerry and Wei, Jason and Tay, Yi and Tran, Dustin and Webson, Albert and Lu, Yifeng and Chen, Xinyun and Liu, Hanxiao and Huang, Da and Zhou, Denny and others},
  journal={arXiv preprint arXiv:2303.03846},
  year={2023}
}

@article{bertsch2024context,
  title={In-context learning with long-context models: An in-depth exploration},
  author={Bertsch, Amanda and Ivgi, Maor and Xiao, Emily and Alon, Uri and Berant, Jonathan and Gormley, Matthew R and Neubig, Graham},
  journal={arXiv preprint arXiv:2405.00200},
  year={2024}
}

@article{xie2021explanation,
  title={An explanation of in-context learning as implicit bayesian inference},
  author={Xie, Sang Michael and Raghunathan, Aditi and Liang, Percy and Ma, Tengyu},
  journal={arXiv preprint arXiv:2111.02080},
  year={2021}
}

@article{guo2023transformers,
  title={How do transformers learn in-context beyond simple functions? a case study on learning with representations},
  author={Guo, Tianyu and Hu, Wei and Mei, Song and Wang, Huan and Xiong, Caiming and Savarese, Silvio and Bai, Yu},
  journal={arXiv preprint arXiv:2310.10616},
  year={2023}
}

@article{ahuja2023context,
  title={In-context learning through the bayesian prism},
  author={Ahuja, Kabir and Panwar, Madhur and Goyal, Navin},
  journal={arXiv preprint arXiv:2306.04891},
  year={2023}
}

@article{han2023context,
  title={In-context learning of large language models explained as kernel regression},
  author={Han, Chi and Wang, Ziqi and Zhao, Han and Ji, Heng},
  journal={arXiv preprint arXiv:2305.12766},
  year={2023}
}

@article{jiang2023latent,
  title={A latent space theory for emergent abilities in large language models},
  author={Jiang, Hui},
  journal={arXiv preprint arXiv:2304.09960},
  year={2023}
}

@inproceedings{wang2023large,
  title={Large language models are implicitly topic models: Explaining and finding good demonstrations for in-context learning},
  author={Wang, Xinyi and Zhu, Wanrong and Saxon, Michael and Steyvers, Mark and Wang, William Yang},
  booktitle={Workshop on Efficient Systems for Foundation Models at ICML},
  year={2023}
}

@inproceedings{wies2024learnability,
  title={The learnability of in-context learning},
  author={Wies, Noam and Levine, Yoav and Shashua, Amnon},
  booktitle={Advances in Neural Information Processing Systems (NeurIPS)},
  year={2024}
}

@article{zhang2023and,
  title={What and how does in-context learning learn? bayesian model averaging, parameterization, and generalization},
  author={Zhang, Yufeng and Zhang, Fengzhuo and Yang, Zhuoran and Wang, Zhaoran},
  journal={arXiv preprint arXiv:2305.19420},
  year={2023}
}

@article{jeon2024information,
  title={An Information-Theoretic Analysis of In-Context Learning},
  author={Jeon, Hong Jun and Lee, Jason D and Lei, Qi and Van Roy, Benjamin},
  journal={arXiv preprint arXiv:2401.15530},
  year={2024}
}

@article{hahn2023theory,
  title={A theory of emergent in-context learning as implicit structure induction},
  author={Hahn, Michael and Goyal, Navin},
  journal={arXiv preprint arXiv:2303.07971},
  year={2023}
}

@article{akyurek2022learning,
  title={What learning algorithm is in-context learning? investigations with linear models},
  author={Aky{\"u}rek, Ekin and Schuurmans, Dale and Andreas, Jacob and Ma, Tengyu and Zhou, Denny},
  journal={arXiv preprint arXiv:2211.15661},
  year={2022}
}

@article{giannou2023looped,
  title={Looped transformers as programmable computers},
  author={Giannou, Angeliki and Rajput, Shashank and Sohn, Jy-yong and Lee, Kangwook and Lee, Jason D and Papailiopoulos, Dimitris},
  journal={arXiv preprint arXiv:2301.13196},
  year={2023}
}

@article{li2023transformers,
  title={Transformers as algorithms: Generalization and implicit model selection in in-context learning},
  author={Li, Yingcong and Ildiz, M Emrullah and Papailiopoulos, Dimitris and Oymak, Samet},
  journal={arXiv preprint arXiv:2301.07067},
  year={2023}
}

@inproceedings{von2023transformers,
  title={Transformers learn in-context by gradient descent},
  author={Von Oswald, Johannes and Niklasson, Eyvind and Randazzo, Ettore and Sacramento, Jo{\~a}o and Mordvintsev, Alexander and Zhmoginov, Andrey and Vladymyrov, Max},
  booktitle={International Conference on Machine Learning (ICML)},
  year={2023}
}

@article{bai2023transformers,
  title={Transformers as Statisticians: Provable In-Context Learning with In-Context Algorithm Selection},
  author={Bai, Yu and Chen, Fan and Wang, Huan and Xiong, Caiming and Mei, Song},
  journal={arXiv preprint arXiv:2306.04637},
  year={2023}
}

@article{dai2022can,
  title={Why can gpt learn in-context? language models implicitly perform gradient descent as meta-optimizers},
  author={Dai, Damai and Sun, Yutao and Dong, Li and Hao, Yaru and Ma, Shuming and Sui, Zhifang and Wei, Furu},
  journal={arXiv preprint arXiv:2212.10559},
  year={2022}
}

@article{mahankali2023one,
  title={One step of gradient descent is provably the optimal in-context learner with one layer of linear self-attention},
  author={Mahankali, Arvind and Hashimoto, Tatsunori B and Ma, Tengyu},
  journal={arXiv preprint arXiv:2307.03576},
  year={2023}
}

@article{ahn2023transformers,
  title={Transformers learn to implement preconditioned gradient descent for in-context learning},
  author={Ahn, Kwangjun and Cheng, Xiang and Daneshmand, Hadi and Sra, Suvrit},
  journal={arXiv preprint arXiv:2306.00297},
  year={2023}
}

@article{chen2024training,
  title={Training Dynamics of Multi-Head Softmax Attention for In-Context Learning: Emergence, Convergence, and Optimality},
  author={Chen, Siyu and Sheen, Heejune and Wang, Tianhao and Yang, Zhuoran},
  journal={arXiv preprint arXiv:2402.19442},
  year={2024}
}

@article{li2024training,
  title={Training Nonlinear Transformers for Efficient In-Context Learning: A Theoretical Learning and Generalization Analysis},
  author={Li, Hongkang and Wang, Meng and Lu, Songtao and Cui, Xiaodong and Chen, Pin-Yu},
  journal={arXiv preprint arXiv:2402.15607},
  year={2024}
}

@inproceedings{nichani2024transformers,
  title={How Transformers Learn Causal Structure with Gradient Descent},
  author={Nichani, Eshaan and Damian, Alex and Lee, Jason D},
  booktitle={International Conference on Machine Learning (ICML)},
  year={2024}
}

@inproceedings{wang2024transformers,
  title={Transformers provably learn sparse token selection while fully-connected nets cannot},
  author={Zixuan Wang and Stanley Wei and Daniel Hsu and Jason D Lee},
  booktitle={International Conference on Machine Learning (ICML)},
  year={2024}
}

@article{wang2024how,
  title={How Transformers Implement Induction Heads: {A}pproximation and Optimization Analysis},
  author={Mingze Wang and Ruoxi Yu and Weinan E and Lei Wu},
  journal={arXiv preprint arXiv:2410.11474},
  year={2024}
}

@article{shen2024on,
  title={On the Training Convergence of Transformers for In-Context Classification},
  author={Wei Shen and Ruida Zhou and Jing Yang and Cong Shen},
  journal={arXiv preprint arXiv:22410.11778},
  year={2024}
}

@article{chen2024unveiling,
  title={Unveiling Induction Heads: {P}rovable Training Dynamics and Feature Learning in Transformers},
  author={Siyu Chen and Heejune Sheen and Tianhao Wang and Zhuoran Yang},
  journal={arXiv preprint arXiv:22409.10559},
  year={2024}
}

@article{frei2024trained,
  title={Trained Transformer Classifiers Generalize and Exhibit Benign Overfitting In-Context},
  author={Spencer Frei and Gal Vardi},
  journal={arXiv preprint arXiv:22410.01774},
  year={2024}
}

@article{jin2024generalization,
  title={In-context learning for mixture of linear regressions: Existence, generalization and training dynamics},
  author={Jin, Yanhao and Balasubramanian, Krishnakumar and Lai, Lifeng},
  journal={arXiv preprint arXiv:2410.14183},
  year={2024}
}

@article{liang2024transformers,
  title={Transformers Handle Endogeneity in In-Context Linear Regression},
  author={Haodong Liang and Krishnakumar Balasubramanian and Lifeng Lai},
  journal={arXiv preprint arXiv:2410.01265},
  year={2024}
}

@article{wang2024comprehensive,
  title={A comprehensive survey of continual learning: Theory, method and application},
  author={Wang, Liyuan and Zhang, Xingxing and Su, Hang and Zhu, Jun},
  journal={IEEE Transactions on Pattern Analysis and Machine Intelligence},
  year={2024}
}

@article{zhang2021understanding,
  title={Understanding deep learning (still) requires rethinking generalization},
  author={Zhang, Chiyuan and Bengio, Samy and Hardt, Moritz and Recht, Benjamin and Vinyals, Oriol},
  journal={Communications of the ACM},
  volume={64},
  number={3},
  pages={107--115},
  year={2021}
}

@article{zhao2024probing,
  title={Probing the decision boundaries of in-context learning in large language models},
  author={Zhao, Siyan and Nguyen, Tung and Grover, Aditya},
  journal={Advances in Neural Information Processing Systems},
  volume={37},
  pages={130408--130432},
  year={2024}
}

@article{chan2022data,
  title={Data distributional properties drive emergent in-context learning in transformers},
  author={Chan, Stephanie and Santoro, Adam and Lampinen, Andrew and Wang, Jane and Singh, Aaditya and Richemond, Pierre and McClelland, James and Hill, Felix},
  journal={Advances in neural information processing systems},
  volume={35},
  pages={18878--18891},
  year={2022}
}

@article{olsson2022context,
  title={In-context learning and induction heads},
  author={Olsson, Catherine and Elhage, Nelson and Nanda, Neel and Joseph, Nicholas and DasSarma, Nova and Henighan, Tom and Mann, Ben and Askell, Amanda and Bai, Yuntao and Chen, Anna and others},
  journal={arXiv preprint arXiv:2209.11895},
  year={2022}
}

@article{deng2025unlocking,
  title={Unlocking the power of rehearsal in continual learning: A theoretical perspective},
  author={Deng, Junze and Wu, Qinhang and Ju, Peizhong and Lin, Sen and Liang, Yingbin and Shroff, Ness},
  journal={arXiv preprint arXiv:2506.00205},
  year={2025}
}
